\documentclass[11pt]{article}
\usepackage{mathrsfs,amsmath,amsfonts,amssymb,bm,bbm,dsfont,pifont,amscd,stmaryrd,euscript,amsthm,color,epsfig,xr,yfonts,tikz,verbatim}

\usepackage{tikz-cd} 
\usepackage{authblk}
\usepackage[utf8]{inputenc} 
\usepackage[T1]{fontenc}    
\usepackage{url}            
\usepackage{booktabs}       
\usepackage{amsfonts}  
\usepackage{amsmath} 
\usepackage{nicefrac}       
\usepackage{microtype}      
\usepackage{xcolor}
\usepackage{hyperref}  
\usepackage{graphicx}
\usepackage{array}
\usepackage{subcaption,mathrsfs}
\usepackage{natbib}
\usepackage{wrapfig}
\usepackage{tikz}
\usepackage[makeroom]{cancel}
\usepackage{algorithm}
\usepackage{algpseudocode}

\newenvironment{talign*}
 {\csname align*\endcsname}
 {\endalign}

\allowdisplaybreaks
\usetikzlibrary{decorations.markings}

\definecolor{mydarkblue}{rgb}{0,0.08,0.45}

\hypersetup{ %
    pdftitle={},
    pdfauthor={},
    pdfsubject={},
    pdfkeywords={},
    pdfborder=0 0 0,
    pdfpagemode=UseNone,
    colorlinks=true,
    linkcolor=mydarkblue,
    citecolor=mydarkblue,
    filecolor=mydarkblue,
    urlcolor=mydarkblue,
    pdfview=FitH
}

\makeatletter
\newcommand\subsubsubsection{\@startsection{subsubsubsection}{4}{\z@}%
  {-3.25ex\@plus -1ex \@minus -.2ex}%
  {1.5ex \@plus .2ex}%
  {\normalfont\normalsize\bfseries}}
\newcommand\subsubsubsectionmark[1]{}

\makeatother

\definecolor{ifred}{RGB}{190,30,45}
\definecolor{idblue}{RGB}{25,94,160}
\definecolor{sgreen}{RGB}{24,140,69}

\newcommand{\mr}[1]{\textcolor{ifred}{#1}}
\newcommand{\mb}[1]{\textcolor{idblue}{#1}}
\newcommand{\mg}[1]{\textcolor{sgreen}{#1}}
\newcommand{\nocolourb}[1]{\textcolor{black}{#1}}

\usepackage[utf8]{inputenc}
\usepackage{graphicx}
\usepackage{caption}
\usepackage{subcaption}
\usepackage{wrapfig}
\usepackage{hyperref}
\usepackage{amsfonts,amsmath,amsthm,amssymb,stmaryrd,bbm,bm}
\usepackage{mathrsfs}
\usepackage{mathtools}
\usepackage{enumitem}
\usepackage{thmtools}
\usepackage{url}
\usepackage{booktabs}
\usepackage{fancyhdr}
\usepackage{cleveref}
\usepackage{ninecolors}
\usepackage{xspace}
\usepackage{soul}
\usepackage{tabularx}
\usepackage{nicefrac}
\usepackage{microtype}
\usepackage{algorithm}
\usepackage{algpseudocode}

\usepackage{tikz}
\usepackage{pgfplots} 

\usetikzlibrary{positioning,decorations.pathreplacing,calc}

\usepackage[dvipsnames]{xcolor}
\definecolor{dgreen}{rgb}{0.00,0.49,0.00}
\definecolor{dblue}{rgb}{0,0.08,0.75}
\hypersetup{
    colorlinks = true,
    citecolor=dgreen,
    urlcolor=dblue,
    linkcolor=dblue
}

\newcommand{\esssup}{\operatorname*{ess\,sup}}

\newcommand{\N}{\mathbb{N}}

\newcommand{\R}{\mathbb{R}}

\newcommand{\cX}{\mathcal{X}}

\newcommand{\cZ}{\mathcal{Z}}
\newcommand{\cH}{\mathcal{H}}

\newcommand{\cN}{\mathcal{N}}

\newcommand{\cG}{\mathcal{G}}

\def\sH{{\mathbb{H}}}

\renewcommand{\P}{\mathbb{P}}

\newcommand{\tod}{\rightsquigarrow}

\DeclareMathOperator{\Tr}{Tr}

\DeclareMathOperator*{\argmin}{arg\,min}
\newcommand{\ddllm}{\mathrm{d}}
\newcommand{\opllm}{\mathrm{op}}
\newcommand{\ipllm}[2]{\left\langle #1,#2\right\rangle}
\newcommand{\normllm}[1]{\left\lVert #1\right\rVert}

\newtheorem{definition}{Definition}

\newtheorem{prop}{Proposition}
\newtheorem{theorem}{Theorem}
\newtheorem{lemma}{Lemma}
\newtheorem{asst}{Assumption}
\newtheorem{corollary}{Corollary}
\newtheorem{rem}{Remark}

\newtheorem{requirement}{Requirement}

\newcommand{\asstpartsbreak}{\leavevmode\par}
\newlist{asstparts}{enumerate}{1}
\setlist[asstparts,1]{
    label=(\Alph*),
    ref=\theasst\Alph*,
    leftmargin=*,
    before=\asstpartsbreak
}
\newlist{reqparts}{enumerate}{1}
\setlist[reqparts,1]{
    label=(\Alph*),
    ref=\therequirement\Alph*,
    leftmargin=*,
    before=\asstpartsbreak
}

\crefname{asst}{assumption}{assumptions}
\crefname{asstpartsi}{assumption}{assumptions}
\Crefname{asstpartsi}{Assumption}{Assumptions}
\crefname{requirement}{requirement}{requirements}
\Crefname{requirement}{Requirement}{Requirements}
\crefname{reqpartsi}{requirement}{requirements}
\Crefname{reqpartsi}{Requirement}{Requirements}
\crefname{equation}{Eq.}{Eqs.}

\makeatletter
\DeclareRobustCommand\onedot{\futurelet\@let@token\@onedot}
\def\@onedot{\ifx\@let@token.\else.\null\fi\xspace}
\makeatother

\def\iid{i.i.d\onedot}

\renewcommand{\emph}[1]{\textbf{\textit{#1}}}

\newcommand{\eps}{\epsilon}

\newcommand{\lr}[1]{\left(#1\right)}
\newcommand{\lra}[1]{\left\langle #1 \right\rangle}

\newcommand{\Tt}{^\mathsf{\scriptscriptstyle T}} 
\newcommand{\inv}{^{-1}}

\def\1{\bm{1}}

\DeclareMathAlphabet{\mathsfit}{\encodingdefault}{\sfdefault}{m}{sl}
\SetMathAlphabet{\mathsfit}{bold}{\encodingdefault}{\sfdefault}{bx}{n}

\def\sH{{\mathbb{H}}}

\newcommand{\E}{\mathbb{E}}

\newcommand{\Var}{\mathrm{Var}}

\newcolumntype{Y}{>{\centering\arraybackslash}X}

\newcommand{\hsic}{\mathrm{HSIC}}

\newcommand{\blue}[1]{{\color{black}#1}}

\title{Semiparametrically Efficient Inference for Kernel Measures of Noise Heterogeneity}

\author[1]{Jakub Wornbard$^{*}$}
\author[1]{Zikai Shen$^{*}$}
\author[1]{Dimitri Meunier}
\author[1]{Arthur Gretton}

\affil[1]{University College London}

\date{}
\begin{document}

\maketitle
\setcounter{footnote}{0}
\renewcommand{\thefootnote}{\fnsymbol{footnote}} 
\footnotetext[1]{Equal contribution.}
\begin{abstract}
We develop semiparametrically efficient inference for kernel measures of noise heterogeneity in additive noise models. In many applications, the regression function is estimated using flexible machine learning methods. Downstream procedures based on the resulting residuals can then inherit first-stage bias: regression error may induce spurious dependence between covariates and residuals, invalidating the assumptions needed for standard analysis. We construct a novel Hilbert-valued one-step estimator of the kernel covariance operator between covariates and residuals. Our estimator yields bootstrap-calibrated tests for residual independence and goodness of fit in additive noise models, while also providing asymptotically efficient confidence intervals for the kernel dependence measure under noise heterogeneity. The framework extends to settings with additional covariates, enabling inference on distributional heterogeneity of residual noise across treatment groups. Simulations show improved calibration and power relative to naive plug-in residual methods.

\end{abstract}

\section{Introduction}
 \vspace{-0.2cm}
Many statistical and machine learning workflows use residuals to assess whether a fitted model has captured all systematic structure in the data. In additive noise models, this idea takes a particularly simple form: after regressing an outcome on covariates, the remaining noise should be independent of the variables whose effect has been explained. More broadly, one may ask whether the distribution of residual noise is homogeneous across covariates, treatment groups, or other features of interest. Kernel dependence measures provide a flexible way to quantify such residual noise heterogeneity, since they can detect general distributional differences beyond changes in mean or variance.

Classical nonparametric independence testing assumes that the variables to be tested are directly observed \citep{gretton2007kernel,heller2013consistent, berrett2019nonparametric, shekhar2023permutation}. In residual-based problems, however, the noise is not observed directly. It is generated by a first-stage regression estimator, often fit using flexible machine learning methods. This creates a multi-stage inference problem: downstream kernel procedures may inherit bias from the first-stage nuisance estimator. In this work, we bridge two strands of the statistical inference literature. On the one hand, recent work combines machine learning with semiparametric efficiency theory to estimate finite-dimensional target parameters \citep{chernozhukov2018double, hines2022demystifying, kennedy2024semiparametric}. On the other hand, reproducing kernel Hilbert space (RKHS)-based dependence measures have become standard tools for nonparametric dependence testing \citep{gretton2005measuring, gretton2007kernel,ChwSejGre14, zhang2018large,albert2022adaptive, shekhar2023permutation} and related tasks in machine learning \citep{song2007supervised, SheJegGre09, peters2013causal,li2021self}. This raises the question of whether semiparametric debiasing can be used to construct valid kernel inference procedures when the variables entering the kernel statistic are themselves estimated.

To make the exposition concrete, we begin with the additive noise model studied by \citet{peters2013causal} in the context of causal discovery. The data consists of $n$ i.i.d. pairs $(x_i,y_i)_{i=1}^n$ drawn from the same distribution as $(X,Y) \sim P$. The null hypothesis is that $P$ is correctly specified by an additive noise model: there exists a function $m$, whose functional form is unrestricted, and a centered noise variable $\epsilon$ independent of $X$ such that
\(
    Y = m(X) + \epsilon.
\)
The null is not a restriction on the functional form of the regression function; rather, it is the requirement that the regression residual be independent of the covariates.

If the true residuals $\epsilon_i=y_i-m(x_i)$ were observed, goodness of fit could be assessed by testing whether $X$ and $\epsilon$ are independent. In practice, $m$ is unknown, so an investigator estimates it by $\widehat m$ and constructs residuals $\widehat \epsilon_i = y_i-\widehat m(x_i)$. 
A natural plug-in approach is then to apply an off-the-shelf nonparametric independence test to the sample $\{(x_i,\widehat\epsilon_i)\}_{i=1}^n$.

We focus on kernel dependence measures based on the RKHS covariance operator between covariates and residuals. The squared Hilbert--Schmidt norm of this operator is the Hilbert--Schmidt Independence Criterion (HSIC), a standard kernel measure of dependence \citep{gretton2005measuring}. With characteristic kernels \citep{SriGreFukLanetal10,sriperumbudur2011universality}, this squared norm vanishes if and only if the covariate and residual noise are independent. Thus, in the oracle setting where $\epsilon$ is observed, HSIC provides a natural test statistic for additive-noise goodness of fit and, more generally, for residual noise heterogeneity.

The setting with plug-in estimator $\widehat{m}$ poses significant challenges. Estimated residuals satisfy $ \widehat \epsilon_i  = \epsilon_i + m(x_i)-\widehat m(x_i)$, 
so regression error can induce dependence between $X_i$ and $\widehat\epsilon_i$ even when the additive noise model is correctly specified. As a simple illustration of the challenges for hypothesis testing, suppose the data is split into two disjoint folds: one fold is used to estimate $m$, and the other is used to construct the independence test. If the fitting fold is small, $\widehat m$ may have substantial prediction error, and the kernel test may interpret this first-stage error as evidence of dependence, leading to inflated Type I error. If the fitting fold is large, the residuals better approximate the oracle residuals, but the holdout sample used for testing is small, reducing power. These observations highlight the practical difficulty of controlling Type I error while retaining power when nuisance functions are estimated. Related empirical failures have been observed in prior work on conditional independence testing \citep{pogodin2024practical, wang2026practical, he2026on}. Sample splitting removes the direct dependence between the fitted regression algorithm and the test sample, but conditional on the fitted function the residual still contains the systematic error \(m(X)-\widehat m(X)\), which is a function of \(X\).

The discussion above motivates a debiased estimator for kernel residual dependence. A standard route to debiasing is to use the efficient influence function of a target functional \citep{bickel1993efficient, kennedy2024semiparametric}. However, direct scalar debiasing of HSIC presents a complication: the empirical unbiased HSIC is a degenerate U-statistic under the null \citep{lee1990ustatistics}. At the population level, this first-order degeneracy appears as a vanishing efficient influence function: the first-order efficient influence function of
\[
    \mathrm{HSIC}(X,Y-\mathbb E[Y\mid X])
\]
is identically zero under the null. This is not a pathology of HSIC, but a consequence of its squared-norm structure: at independence, the underlying kernel covariance operator is zero, and the derivative of \(x\mapsto \|x\|^2\) vanishes at zero. Therefore, first-order scalar debiasing of HSIC cannot remove the first-stage bias that matters for testing near the null. The insufficiency of first-order linearization for Maximum Mean  Discrepancy based test statistic has been observed in \citet{luedtke2019omnibus, williamson2023general}. It has also been observed in a related setting by the concurrent work \citet{agarwal2026sinkhorn}. 

Our key idea is to debias before taking the squared norm. We construct a Hilbert-valued one-step estimator of the kernel covariance operator, building on recent work on Hilbert-valued semiparametric efficiency theory \citep{luedtke2024one, morzywolek2025inference}. We then obtain scalar kernel dependence measures by mapping the debiased RKHS-valued estimator to $\mathbb R$ through its squared Hilbert--Schmidt norm. This operator-level construction yields a calibrated test under the null, where scalar HSIC is degenerate, while also giving efficient confidence intervals for the value of the kernel dependence measure under alternatives.

\textbf{Contributions.}
We propose a Hilbert-valued one-step estimator for the kernel covariance operator between covariates and residuals, whose squared norm gives the corresponding kernel dependence measure. We prove asymptotic linearity and weak convergence of the estimator under nonparametric nuisance estimation.
These results yield a bootstrap-calibrated independence test with asymptotic validity under the null and consistency against fixed alternatives.
Under alternatives, our approach also provides asymptotically efficient confidence intervals for the value of the kernel dependence measure. We further extend the framework to settings with additional covariates, enabling covariate-adjusted inference on distributional heterogeneity of residual noise across treatment groups. Finally, we conduct simulations showing that the proposed method improves calibration and power relative to naive plug-in residual methods.

\textbf{Outline.} In \Cref{sec:setup}, we introduce the problem setup and the RKHS covariance operator
target. In \Cref{sec: estimator}, we derive the Hilbert-valued efficient influence function, define the one-step estimator, and establish asymptotic linearity, weak convergence, and
bootstrap calibration. \Cref{sec:experiments} presents synthetic experiments illustrating calibration, power, and confidence-interval coverage. \Cref{sec: discussion} discusses limitations and future directions.

\section{Problem Setup}
\label{sec:setup}
 \vspace{-0.2cm}
We study independence testing between random variables $X$ and $Y$, after adjusting for a covariate, denoted as $W$. We apply our method to two practical settings, namely goodness-of-fit testing for additive noise models and testing for noise homogeneity across different treatment groups, see \Cref{sec:experiments}. We refer to these as \emph{Settings 1} and \emph{2} respectively. In the former case, $X=W$, and our data consists of $(X,Y)$-pairs. In the latter case, $X$ is a treatment indicator, and $W = (X,T)$ where $T$ are additional covariates. We do not test for homogeneity with respect to the covariates $T$. For the remainder of the paper, we make the following assumption, although our theory readily generalizes to the case where $X$ is not measurable with respect to the covariate $W$ we adjust on. 
\begin{asst}
\label{asst:nested_covariates}
There exists a measurable map $\pi:\mathcal W\to\mathcal X$ such that
$X=\pi(W)$.
\end{asst}
Let $\mathcal O=\mathcal X\times\mathcal W\times\mathbb R^{d_y}$, and let
$\mathcal M$ denote a nonparametric model of probability laws on $\mathcal O$. We write $P_0\in\mathcal M$ for the true data-generating distribution. For $P\in\mathcal M$, define $m_P(w)\doteq\E_P[Y\mid W=w]$ and $\xi_P(w,y)\doteq y-m_P(w)$.
We test 
\(
    H_0:P_0\in \{P\in\mathcal M:\xi_P(W,Y)\perp X\}.
\)
In \emph{Setting 2}, testing for $H_0$ addresses the question of whether the marginal distribution of the residual is independent of $X$, after adjusting for $X$ and additional covariates $T$. 

We next introduce necessary RKHS concepts. Let $k:\cX\times\cX\to\mathbb R$ be a symmetric positive definite kernel and let $\cH_k$ be its reproducing kernel Hilbert space (RKHS) \citep{berlinet2011reproducing}. We denote the canonical feature map of $\cH_k$ as $\varphi_k(x)=k(\cdot,x)$. Let $l:\mathbb R^{d_y}\times\mathbb R^{d_y}\to\mathbb R$ be a symmetric positive definite kernel with RKHS $\cH_l$ and feature map $\varphi_l(u)=l(\cdot,u)$. We refer the reader to Appendix~\ref{sec:notations} for additional background on RKHSs. We assume all RKHSs mentioned are separable. This is satisfied, e.g. if the reproducing kernel is continuous and the domain is separable. 

Throughout, we assume the kernels are measurable and satisfy the moment conditions needed for the following Bochner expectations to exist; for example, the following assumption is sufficient.
\begin{asst}[Kernel boundedness]
	\label{asst:kernel_boundedness}
	$\sup_{x\in\mathcal X}k(x,x)\leq \kappa_k^2$ and $\sup_{u\in\mathbb R^{d_y}}l(u,u)\leq \kappa_l^2$.
\end{asst}

The dependence between $X$ and the residual $\xi_P(W,Y)$ is encoded by first featurizing them with infinite dimensional kernel features and then taking the expectation of their outer product. In particular, we use the language of kernel mean embeddings \citep{smola2007hilbert}. We denote the kernel mean embeddings of $X$ and $\xi_P(W,Y)$ as $\mu_{X,P}\doteq\mathbb E_P[\varphi_k(X)]\in\cH_k$ and $\mu_{\xi,P}\doteq\mathbb E_P[\varphi_{l}(\xi_P(W,Y))]\in\cH_l$. For characteristic kernels, testing for dependence between $X$ and $\xi_P(W,Y)$ nonparametrically is equivalent to testing whether the kernel mean embedding of the tensor product feature map $\mathbb{E}_P[\varphi_{k}(X)\otimes \varphi_{l}(\xi_P(W,Y))]$ is equal to  $\mu_{X,P} \otimes \mu_{\xi, P}$ \citep{gretton2015simpler}. This is algebraically equivalent to testing for $\Psi(P) = 0$, where
\begin{align*}
    \Psi(P) = \mathbb{E}_P \left[ \widetilde\varphi_{k,P}(X)\otimes \widetilde{\varphi}_{l,P}(\xi_P(W,Y))\right] \in  \cH_{kl},
\end{align*}
where we define the \textit{centered} feature maps $\widetilde{\varphi}_{k,P}(x) \doteq \varphi_k(x) - \mu_{X,P}$ and $\widetilde{\varphi}_{l,P}(\xi_P(w,y)) \doteq \varphi_{l}(\xi_P(w,y)) - \mu_{\xi, P}$. $\cH_{kl}\doteq\cH_k\otimes\cH_l$ denotes the tensor product RKHS induced by the product kernel $(x,u),(x',u')\mapsto k(x,x')l(u,u')$. 
$\|\Psi(P)\|^2_{\mathcal H_{kl}}$ is the Hilbert--Schmidt Independence Criterion
(HSIC) between $X$ and $Y-m_P(W)$ \citep{gretton2005measuring}. We define the  hypothesis test
\[
    H_0:\Psi(P_0)=0
    \qquad
    \text{against}
    \qquad
    H_1:\Psi(P_0)\neq 0.
\]
Throughout, a subscript $0$ denotes evaluation at the true distribution $P_0$; for example, $m_0:=m_{P_0}$ and $\Psi_0:=\Psi(P_0)$.

\section{One-step Estimator}
\label{sec: estimator}
 \vspace{-0.2cm}
The key challenge is that regression error can be indistinguishable from dependence between the estimated residual and the covariate \(X\) in the downstream hypothesis test.
Semiparametric efficiency theory offers an attractive possibility of debiased two-stage learning, by computing an efficient influence function (EIF) and then removing the first order bias of the first stage estimator. However, the scalar-valued HSIC is not regular at the null $H_0$ as its EIF at the null is identically zero.
The correct object to debias is thus the Hilbert-valued cross-covariance operator $\Psi(P)$, and we should compute the squared norm \emph{after} debiasing. We now derive the EIF of the cross-covariance.
Let $L^2_0(P_0)$ denote the space of square-integrable functions with zero mean under $P_0$.

\begin{asst}[Model, moments, and differentiability] \label{asst:asst_1}
	Let $\mathcal M$ be a nonparametric model with tangent space
	$\dot{\mathcal M}_{P_0}=L^2_0(P_0)$. Assume that
	$\mathbb E_0\|Y\|^2<\infty$, and that $u\mapsto \varphi_l(u)$ is continuously Fréchet differentiable as a map
	$\mathbb R^{d_y}\to\mathcal H_l$, with
	$
		D\varphi_l(u)[a] = \sum_{j=1}^{d_y} a_j\,\partial_j\varphi_l(u), a\in\mathbb R^{d_y}.
	$
	Moreover, for each $j$, we assume that  $\sup_{u\in\mathbb R^{d_y}}\|\partial_j\varphi_l(u)\|_{\mathcal H_l}<\infty$.
\end{asst}

\textbf{Intuition about the form of the EIF.}
The correction term in the EIF can be understood from a first-order expansion of the
residual feature map. Let
\(
    \widehat\delta(w) \doteq \widehat m(w)-m_0(w).
\)
Then
\[
    \varphi_l\{Y-\widehat m(W)\}
    =
    \varphi_l\{\xi_0(W,Y)-\widehat\delta(W)\}
    \approx
    \varphi_l\{\xi_0(W,Y)\}
    -
    \sum_{j=1}^{d_y}
    \widehat\delta_j(W)
    \partial_j\varphi_l\{\xi_0(W,Y)\}.
\]
Therefore, conditional on \(W\), the
first-order shift in the residual feature is
\[
 \text{first-order shift}=
    -
    \sum_{j=1}^{d_y}
    \widehat\delta_j(W)
    v_{0,j}(W),
    \qquad
    v_{0,j}(w)
    \doteq
    \mathbb E_0[
        \partial_j\varphi_l\{\xi_0(W,Y)\}
        \mid W=w
    ].
\]
This is the component of the plug-in error that can create spurious dependence with
\(X\). The EIF subtracts
\(
    \sum_{j=1}^{d_y}\xi_{0,j}(W,Y)v_{0,j}(W).
\)
For the plug-in residual \(\widehat\xi_j=\xi_{0,j}-\widehat\delta_j(W)\), this correction
changes by
\(
    +
    \sum_{j=1}^{d_y}
    \widehat\delta_j(W)v_{0,j}(W),
\)
which cancels the conditional first-order Taylor bias above. Equivalently, after the
correction, the leading perturbation is proportional to
\(
    \sum_{j=1}^{d_y}
    \widehat\delta_j(W)
    \left[
        \partial_j\varphi_l\{\xi_0(W,Y)\}
        -
        v_{0,j}(W)
    \right],
\)
whose conditional mean given \(W\) is zero. Thus the first-order effect of regression
error is removed, leaving only higher-order remainder terms.

\begin{theorem}[Efficient influence function]
	\label{thm:cco_eif}
	Suppose \Cref{asst:nested_covariates,asst:asst_1,asst:kernel_boundedness} hold. Then
	$\Psi:\mathcal M\to\mathcal H_{kl}$ is pathwise differentiable at $P_0$. For any regular parametric submodel $t\mapsto P_t$ through $P_0$ with score $s\in L^2_0(P_0)$, $\dot\Psi_0(s) \doteq \lim_{t\to 0}\frac{1}{t}(\Psi(P_t)-\Psi(P_0))=
		\mathbb E_0[\phi_0(O)s(O)] \in \mathcal H_{kl}$,
	where for $o=(x,w,y)$, $\phi_0(o) \doteq \widetilde\varphi_{k,0}(x)\otimes \bigg[\widetilde\varphi_{l,0}(\xi_0(w,y)) - \sum_{j=1}^{d_y} \xi_{0,j}(w,y)v_{0,j}(w)\bigg] - \Psi_0$. $\phi_0$ belongs to $L^2_0(P_0;\mathcal H_{kl})$, therefore it is the $\mathcal H_{kl}$-valued EIF of $\Psi$ at $P_0$ in the sense of \citet{luedtke2024one}.
\end{theorem}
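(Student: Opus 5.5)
We plan to compute the pathwise derivative directly by expanding $\Psi(P_t)$ along a regular submodel and differentiating each ingredient separately. First we rewrite the target in uncentered form,
\[
\Psi(P)=\mathbb E_P\big[\varphi_k(X)\otimes\varphi_l(\xi_P(W,Y))\big]-\mu_{X,P}\otimes\mu_{\xi,P}.
\]
This shows that $\Psi$ is built from three primitives. The first two are $\mu_{X,P}$ and the joint embedding $\Gamma(P)\doteq\mathbb E_P[\varphi_k(X)\otimes\varphi_l(\xi_P)]$. The third is $\mu_{\xi,P}$, which is just $\Gamma$ with $\varphi_k$ replaced by the constant $1$.

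By the product rule for bilinear maps on Hilbert spaces, it suffices to find the derivatives of $\mu_{X,P}$ and of functionals of the form $G_h(P)\doteq\mathbb E_P[h(X)\,\varphi_l(Y-m_P(W))]$, for $h$ either $\varphi_k$ or $1$. The derivative of $\mu_{X,P}$ is standard: it equals $\mathbb E_0[\widetilde\varphi_{k,0}(X)s]$. This follows because $\varphi_k$ is bounded by Assumption~\ref{asst:kernel_boundedness}, and the Bochner integral commutes with the quadratic-mean differentiability of $P_t$.

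For $G_h$ we split
\[
G_h(P_t)-G_h(P_0)=\big(\mathbb E_{P_t}-\mathbb E_0\big)\big[h\,\varphi_l(\xi_0)\big]+\mathbb E_{P_t}\big[h(X)\{\varphi_l(Y-m_t(W))-\varphi_l(Y-m_0(W))\}\big].
\]
The first term, divided by $t$, converges to $\mathbb E_0[(h\varphi_l(\xi_0)-G_h(P_0))s]$. For the second term we Taylor expand using Assumption~\ref{asst:asst_1}. The Fréchet derivative is continuous with bounded partials, so by the mean value inequality the increment equals
\[
-\sum_j (m_{t,j}-m_{0,j})(W)\,\partial_j\varphi_l(\xi_0)+o(|m_t-m_0|)
\]
in $\mathcal H_l$-norm. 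Conditioning on $W$ turns the leading term into $-\mathbb E_0\big[h(X)\sum_j \dot m_j(W)\,v_{0,j}(W)\big]$, where we use Assumption~\ref{asst:nested_covariates} to pull $h(X)=h(\pi(W))$ out of the conditional expectation. Here $\dot m$ is the derivative of $m_t$. The classical regression-function calculation gives $\dot m_j(w)=\mathbb E_0[\xi_{0,j}\,s\mid W=w]$, which needs $\mathbb E_0\|Y\|^2<\infty$. Substituting this and using the tower property yields the contribution $-\mathbb E_0\big[h(X)\sum_j\xi_{0,j}v_{0,j}(W)\,s\big]$.

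We then assemble the pieces for $h=\varphi_k$ and $h=1$, together with $-\dot\mu_X\otimes\mu_\xi-\mu_X\otimes\dot\mu_\xi$, and collect terms. The correction term with $h=1$ is centered by the $-\mu_{X,0}\otimes(\cdot)$ piece. This produces $\widetilde\varphi_{k,0}(X)\otimes[\cdot]$, and the resulting expression should reduce exactly to $\mathbb E_0[\phi_0 s]$. We expect the constant $\Psi_0$ to appear because $\mathbb E_0[s]=0$. We then check square integrability: $\|\phi_0\|$ is bounded by $2\kappa_k\big(2\kappa_l+\sum_j|\xi_{0,j}|\sup\|\partial_j\varphi_l\|\big)$, which lies in $L^2$ by the moment condition. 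We also check $\mathbb E_0\phi_0=0$: the correction term has zero mean because $\mathbb E_0[\xi_0\mid W]=0$ and $X$ is $W$-measurable, and the remaining part has mean $\Psi_0$. Since the tangent space is all of $L^2_0$, $\phi_0$ is then the EIF in the sense of \citet{luedtke2024one}.

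The main obstacle is making the $m_t$-perturbation step rigorous for a general quadratic-mean-differentiable submodel. There are two issues: $m_t-m_0$ need not be pointwise small, and $1/t$ times the remainder must vanish in Bochner norm. We would handle this by restricting first to submodels with bounded scores, $dP_t=(1+ts)\,dP_0$. For these, $m_t-m_0=t\,\mathbb E_0[\xi_0 s\mid W]/(1+t\,\mathbb E_0[s\mid W])$ explicitly. Dominated convergence, using the bounded partials and the uniform continuity of $D\varphi_l$ on compacts, then controls the remainder. Finally we extend to general scores by a density argument, which works because the candidate derivative map is continuous in $s\in L^2_0$.
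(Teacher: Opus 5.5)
Your core computation is correct, and it reaches $\phi_0$ by a genuinely different route from the paper.

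\textbf{How the two routes differ.} The paper never works with an explicit path. It writes $\Psi=\Psi_1-\Psi_2\otimes\Psi_3$ and expresses $\Psi_1$ as a composition of primitives from \citet{luedtke2025simplifying}: constant map, conditional mean, lift, two pointwise operations, marginal mean. It back-propagates adjoints to read off the efficient influence operator, cites Theorem~1 there for pathwise differentiability, and combines the three pieces by a product rule and an adjoint computation. It also does this without \Cref{asst:nested_covariates}. The resulting correction is $\sum_j\xi_{0,j}(w,y)F_j(w)$ with $F_j(w)=\mathbb E_0[\widetilde\varphi_{k,0}(X)\otimes\partial_j\varphi_l(\xi_0)\mid W=w]$, which collapses to your form when $X=\pi(W)$. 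Your argument would give the same $F_j$ if you did not pull $h(X)$ out of the conditional expectation.

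Your direct calculation along $dP_t=(1+ts)\,dP_0$ is more elementary and self-contained. The closed form for $m_t-m_0$ plus dominated convergence shows exactly where $\mathbb E_0\|Y\|^2<\infty$ and the bounded partials of $\varphi_l$ are used. You also actually verify $\phi_0\in L^2_0(P_0;\mathcal H_{kl})$, which the paper's proof only assumes. The paper's route buys modularity and the non-nested formula for free, but defers all regularity questions to the cited framework.

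\textbf{The step that fails: the final density extension.} Continuity of $s\mapsto\mathbb E_0[\phi_0 s]$ determines the derivative map on all of $L^2_0(P_0)$ once it is known on a dense set of scores. It cannot, however, transfer the expansion $\Psi(P_t)=\Psi_0+t\,\mathbb E_0[\phi_0 s]+o(t)$ to other paths, because $\Psi(P_t)$ is not a function of the score.

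With only $\mathbb E_0\|Y\|^2<\infty$, the expansion in fact fails along some quadratic-mean-differentiable paths. Take $d_y=1$ and $P_t=(1-t^3)P_0+t^3Q_t$, where $Q_t$ keeps the law of $W$ (hence of $X=\pi(W)$) and sets $Y=t^{-2}$. Each $P_t$ has a finite second moment. By convexity, the squared Hellinger distance between $P_t$ and $P_0$ is at most $2t^3=o(t^2)$, so the path is QMD with score $0$. Yet $m_{P_t}=m_0+t-t^3m_0$, and $t^{-1}\{\Psi(P_t)-\Psi_0\}\to-\mathbb E_0[\widetilde\varphi_{k,0}(X)\otimes v_{0,1}(W)]$. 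This is nonzero in general, e.g.\ under heteroscedastic alternatives with $W=X$.

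\textbf{The fix.} Drop the density step. State the result, as is standard \citep[Section~25.3]{van2000asymptotic}, as pathwise differentiability relative to the tangent set generated by your bounded-score linear paths. That set is dense in $L^2_0(P_0)$, so $\phi_0$ is still the unique EIF and nothing downstream changes. Differentiability along every QMD path would require extra restrictions on the model, which neither your proposal nor the theorem's hypotheses supply.
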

A slightly more general form of \Cref{thm:cco_eif} is proved in Appendix~\ref{sec:EIF_appendix}. 

\subsection{One-step estimator and asymptotic normality}
\label{subsec: one-step}
We now use the EIF to perform  one-step debiasing for $\Psi_0$. We refer to the debiased estimator as a one-step estimator
\citep{schick1986asymptotically, luedtke2024one}. We will establish Hilbert-valued asymptotic normality of this estimator under a set of
reasonable assumptions. We may use the asymptotic normality theorem to construct a confidence interval for any scalar-valued transformation of
$\Psi_0$. We choose the RKHS norm of $\Psi_0$, which is sufficient for testing for whether $\Psi_0=0$. A hypothesis test
with Type I error control directly leads to a confidence interval for the true value $\Psi_0$. We refer to such a test as calibrated. By including
all values of $\Psi_0$ not rejected by a test with Type I error control, one obtains a confidence set with the desired coverage level. We leverage
this duality to simultaneously obtain a confidence interval for $\|\Psi_0\|_{\cH_{kl}}$ and a hypothesis test for $H_0$.

We assume the dataset is $\{o_i = (x_i,w_i,y_i)\}_{i=1}^n$, where $n$ is even. Let $I_1=\{1,\dots,n/2\}$ and $I_2=\{n/2+1,\dots,n\}$. We fit the
nuisance functions on $I_2$ and evaluate the one-step estimator on $I_1$. Write $\widehat\xi_i\doteq y_i-\widehat m(w_i)$, where $\widehat m$ is
\textit{any} machine learning estimator for $m_0$. Define the empirical kernel mean embeddings $\widehat{\mu}_X \doteq \frac{2}{n}\sum_{i\in
		I_2}\varphi_k(x_i)$ and $\widehat{\mu}_{\xi} \doteq \frac{2}{n}\sum_{i\in I_2}\varphi_l(\widehat\xi_i)$.

For $j=1,\dots,d_y$, we estimate the $\mathcal H_{l}$-valued nuisance $v_{0,j}$ by vector-valued kernel ridge regression
\citep{caponnetto2007optimal,park2020measure,li2024towards}. Let $\mathcal H_{ql}$ denote the vector-valued RKHS induced by the kernel
$q(\cdot,\cdot)\mathrm{Id}_{\mathcal H_{l}}$. For details on vector-valued kernels and their associated RKHS, we refer the reader to
\citet{li2024towards} and Appendix~\ref{app:statistical_learning_theory}. We define, for each $j=1,\dots,d_y$,
\begin{align*}
	\widehat v_j =\argmin_{v \in\mathcal H_{ql}} \frac{2}{n}\sum_{i\in I_2} \left\|(\partial_j\varphi_l)(\widehat\xi_i) - v(w_i)\right\|_{\mathcal H_{l}}^2 + \lambda_{j,n}\|v\|_{\mathcal H_{ql}}^2.
\end{align*}
For $i\in I_1$, define $\widehat M_i \doteq \varphi_l(\widehat\xi_i)-\widehat\mu_\xi -\sum_{j=1}^{d_y}\widehat\xi_{i,j}\widehat v_j(w_i)$.
The one-step estimator evaluated on $I_1$ is
\begin{equation} \label{eq: two_fold_os}
	\widehat\Psi_n \doteq  \frac{2}{n}\sum_{i\in I_1}
	\left(\varphi_k(x_i)-\widehat\mu_X\right)\otimes\widehat M_i.
\end{equation}

\begin{rem}[Cross-fitting]
\label{rem:cross_fitting}
	To simplify our exposition and notations, in the main document we work exclusively with a sample split estimator $\widehat\Psi_n$, introduced above in \Cref{eq: two_fold_os}. Since $\widehat{\Psi}_n$ is constructed by evaluating half the samples, it leads to a loss of statistical power. To remedy this, we construct a $K$-fold cross-fitted estimator, where we use all $K$ folds except one for fitting, one fold for evaluation, and average over all $K$ choices of the evaluation fold. We emphasize that all simulation experiments and theoretical guarantees in our work apply to the $K$-fold cross-fitted estimator, as in done in the Appendix. The cross-fitted estimator allows us remove restrictive Donsker-type conditions on the function class \citep{schick1986asymptotically, chernozhukov2018double}. We provide generalized results for the cross-fitted estimator in the Appendix, denoted with \emph{reverse hat} notation as $\check{\Psi}_n$. We develop the core asymptotic results for those in \Cref{sec:EIF_appendix} assuming $K=2$. Obtaining analogues for higher values is not harder but requires more careful notation. This simplification is standard in the literature \citep{chernozhukov2018double}
\end{rem}

\begin{asst}[Main kernel differentiability and Lipschitz continuity assumption]\label{asst: differentiability_kernel_y}
	We assume that the kernel $l:\mathbb R^{d_y}\times\mathbb R^{d_y}\to\mathbb R$ is such that, for all $j\in[d_y]$, the mixed partial derivative $\partial_j\partial_{j+d_y}l$ exists and is continuous, where the derivative is taken with respect to the $j$th coordinate of the first argument and the $j$th coordinate of the second argument. We further assume that there exists a global constant $L>0$ such that $\|\varphi_l(u)-\varphi_l(v)\|_{\mathcal H_l}\leq L\|u-v\|_{\mathbb R^{d_y}}$ for all $u,v\in\mathbb R^{d_y}$. We assume that $\varphi_l:\mathbb R^{d_y}\to\mathcal H_l$ is twice Fréchet differentiable and that its second derivative is uniformly bounded: there exists $L_2<\infty$ such that $\|D^2\varphi_l(u)\|_{\mathrm{op}}\leq L_2$ for all $u\in\mathbb R^{d_y}$.
\end{asst}
We provide an assumption stated solely in terms of partial derivatives of $l$ that implies \Cref{asst: differentiability_kernel_y} in \Cref{ass:regularity_l_appendix} in the Appendix. This allows us to show that \Cref{asst: differentiability_kernel_y} is satisfied by popular kernels, including Gaussian kernels and sufficiently smooth Matérn kernels. By \citet[Lemma 4.34, Corollary 4.36]{steinwart2008support}, \Cref{asst: differentiability_kernel_y} in particular implies that, for each $j=1,\dots,d_y$, the partial derivative $\partial_j\varphi_l:\mathbb R^{d_y}\to\mathcal H_l$ exists and is continuous, and for all $y,y'\in\mathbb R^{d_y}$ we have $ \langle\partial_j\varphi_l(y),\partial_j\varphi_l(y')\rangle_{\mathcal H_l} =
	\partial_j\partial_{j+d_y}l(y,y')$.

\begin{requirement}[Nuisance-level requirements]
	\label{asst: nuisance_level}
	\begin{reqparts}
		\item\label{asst: boundedness_reg_fn}
		There exists a constant $C>0$ such that $\|m_0\|_{L^\infty(P_{0,W};\mathbb R^{d_y})}\leq C$ and 		$\esssup_{w\in\mathrm{supp}(P_{0,W})}\mathbb E_0[\|Y-m_0(W)\|_{\mathbb R^{d_y}}^2\mid W=w]\leq C.$

		\item\label{asst: consistency}
		We have $\|\widehat m-m_0\|_{L^\infty(P_{0,W};\mathbb R^{d_y})}=o_p(1)$, $\|\widehat m-m_0\|_{L^2(P_{0,W};\mathbb R^{d_y})}=o_p(1)$, $\|\widehat{v}-~v_0\|_{L^2(P_{0,W};\mathcal H_{l}^{d_y})}=o_p(1)$, and $\|\widehat\mu_{\xi}-\mu_{\xi,0}\|_{\mathcal H_l} = o_p(1)$.
		\item\label{asst: suff_cond_an}
		We have $\|\widehat m-m_0\|^2_{L^2(P_{0,W};\mathbb R^{d_y})}
			+\|\widehat m-m_0\|_{L^2(P_{0,W};\mathbb R^{d_y})}\|\widehat v-v_0\|_{L^2(P_{0,W};\mathcal H_{l}^{d_y})} = o_p(n^{-1/2})$.
	\end{reqparts}
\end{requirement}
\begin{theorem}[Asymptotic linearity and weak convergence of the sample-split one-step estimator]
	\label{thm: asymp_linearity_main}
	Suppose the conclusion of \Cref{thm:cco_eif} holds and let $\phi_0$ be given as in \Cref{thm:cco_eif}. Under \Cref{asst: differentiability_kernel_y} and \Cref{asst: nuisance_level}, the sample-split one-step estimator $\widehat \Psi_n$ is regular and asymptotically linear:
	\begin{align}
		\label{eq:hilbert_valued_clt_main}
		\sqrt{n/2}\left(\widehat \Psi_n-\Psi_0\right) = \sqrt{\frac{2}{n}}\sum_{i \in I_1}\phi_0(o_i)+o_p(1)
		\rightsquigarrow \mathbb H,
	\end{align}
	where $\mathbb H$ is a tight $\cH_{kl}$-valued Gaussian random variable such that, for each $h\in\cH_{kl}$, $\langle\mathbb H,h\rangle_{\cH_{kl}}\sim\mathcal N(0,\mathbb E_0[\langle\phi_0(X,W,Y),h\rangle_{\cH_{kl}}^2])$.
\end{theorem}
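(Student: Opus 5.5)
We follow the standard one-step argument, carried out in the Hilbert space $\mathcal H_{kl}$ and conditional on the nuisance fold $I_2$. Write $\mathbb P_n^{1}$ for the empirical measure on $I_1$ and let $\widehat\phi(o)\doteq(\varphi_k(x)-\widehat\mu_X)\otimes\widehat M(o)$ denote the estimated (uncentred) influence term. Then $\widehat\Psi_n=\mathbb P_n^1\widehat\phi$. Since $\phi_0+\Psi_0$ is the oracle version of $\widehat\phi$, the usual decomposition gives
\[
\widehat\Psi_n-\Psi_0=(\mathbb P_n^1-P_0)\phi_0+(\mathbb P_n^1-P_0)\big[\widehat\phi-\phi_0-\Psi_0\big]+\big(P_0\widehat\phi-\Psi_0\big).
\]
The first term is an average of i.i.d.\ mean-zero elements of $L^2_0(P_0;\mathcal H_{kl})$. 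By the Hilbert-space CLT, applicable because $\mathcal H_{kl}$ is separable and $E_0\|\phi_0\|^2<\infty$ by \Cref{thm:cco_eif}, the term $\sqrt{n/2}(\mathbb P_n^1-P_0)\phi_0$ converges weakly to the tight Gaussian $\mathbb H$ with the stated covariance. The proof therefore reduces to showing that the second (empirical process) term and the third (remainder) term are both $o_p(n^{-1/2})$. Regularity then follows from asymptotic linearity with the EIF, using the usual LeCam third lemma argument in the Hilbert setting of \citet{luedtke2024one}.

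\textbf{Empirical process term.} Conditional on $I_2$, $\widehat\phi$ is a fixed function. The $I_1$ observations are i.i.d., so the conditional second moment of $\sqrt{n/2}(\mathbb P_n^1-P_0)[\widehat\phi-\phi_0-\Psi_0]$ in $\mathcal H_{kl}$ is at most $E_0\|\widehat\phi-\phi_0-\Psi_0\|^2_{\mathcal H_{kl}}$. By Chebyshev and conditional-to-unconditional convergence, it suffices to show that this $L^2(P_0;\mathcal H_{kl})$ distance is $o_p(1)$. We expand the tensor difference as $a\otimes b-a_0\otimes b_0=(a-a_0)\otimes b+a_0\otimes(b-b_0)$ and use $\|a\otimes b\|=\|a\|\|b\|$. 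Each piece is then bounded by one of the following: $\|\widehat\mu_X-\mu_{X,0}\|$; $L\|\widehat m-m_0\|_{L^2}$, via Lipschitz continuity of $\varphi_l$; $\|\widehat\mu_\xi-\mu_{\xi,0}\|$; $\|\widehat\xi_j\widehat v_j-\xi_{0,j}v_{0,j}\|$, controlled by $\|\widehat m-m_0\|_{L^\infty}\|\widehat v\|+\|\xi_0\|\,\|\widehat v-v_0\|$. The last bound uses the conditional variance bound in \Cref{asst: boundedness_reg_fn}, so that $E[\|\xi_0\|^2\|\widehat v-v_0\|^2(W)]\le C\|\widehat v-v_0\|_{L^2}^2$. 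All of these are $o_p(1)$ by \Cref{asst: consistency} and kernel boundedness. The $L^\infty$ consistency of $\widehat m$ is exactly what handles the cross term, where $\widehat v$ is only $L^2$-controlled.

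\textbf{Remainder (main obstacle).} We compute $P_0\widehat\phi-\Psi_0$ explicitly. Write $\widehat\delta=\widehat m-m_0$, so that $\widehat\xi=\xi_0-\widehat\delta(W)$. First, the centring errors contribute $-(\widehat\mu_X-\mu_{X,0})\otimes E_0[\widehat M]$ plus terms of the form $E_0[\widetilde\varphi_{k,0}(X)]\otimes(\cdot)=0$. Since $E_0[\widehat M]$ is $o_p(1)$, the first piece is a product of an $O_p(n^{-1/2})$ and an $o_p(1)$ term, hence negligible. The $\widehat\mu_\xi$ error is paired with $E_0[\widetilde\varphi_{k,0}(X)]=0$ and vanishes. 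The key step is a second-order Taylor expansion of $\varphi_l(\xi_0-\widehat\delta)$ using \Cref{asst: differentiability_kernel_y}:
\[
\varphi_l(\widehat\xi)=\varphi_l(\xi_0)-\sum_j\widehat\delta_j(W)\partial_j\varphi_l(\xi_0)+R,\qquad \|R\|\le \tfrac{L_2}{2}\|\widehat\delta(W)\|^2 .
\]
Condition on $W$ and use $E_0[\xi_0\mid W]=0$. The linear term then becomes $-\sum_j\widehat\delta_j v_{0,j}$. The correction $-\sum_j\widehat\xi_j\widehat v_j$ has conditional mean $+\sum_j\widehat\delta_j\widehat v_j$. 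Together these leave $\sum_j\widehat\delta_j(\widehat v_j-v_{0,j})$. Tensoring with $\widetilde\varphi_{k,0}(X)$, which is $W$-measurable by \Cref{asst:nested_covariates} and bounded by $2\kappa_k$, and applying Cauchy--Schwarz gives
\[
\|P_0\widehat\phi-\Psi_0\|\lesssim \|\widehat m-m_0\|_{L^2}^2+\|\widehat m-m_0\|_{L^2}\|\widehat v-v_0\|_{L^2}+o_p(n^{-1/2}),
\]
which is $o_p(n^{-1/2})$ by \Cref{asst: suff_cond_an}. The delicate points are the use of nested covariates to pull $\widetilde\varphi_{k,0}(X)$ outside the conditional expectation, and the careful bookkeeping of the $\widehat\mu_X$ cross term. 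Combining the three pieces with Slutsky's lemma in $\mathcal H_{kl}$ yields \eqref{eq:hilbert_valued_clt_main}.
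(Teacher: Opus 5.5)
Your proposal is correct and follows essentially the same route as the paper. You use the same three-term decomposition, control the empirical-process term by sample splitting plus $L^2(P_0;\mathcal H_{kl})$-consistency of the estimated uncentred EIF (as in \Cref{lem:foldwise_D_consistency}), and bound the drift term by a second-order Taylor expansion of $\varphi_l$ combined with $\mathbb E_0[\xi_0\mid W]=0$ and \Cref{asst:nested_covariates} (as in \Cref{lem:foldwise_drift_bound}). The only difference is bookkeeping: you keep the true centring $\widetilde\varphi_{k,0}$ in the main term and push all centring error into $(\widehat\mu_X-\mu_{X,0})\otimes P_0\widehat M$, whereas the paper isolates $(\mu_{X,0}-\widehat\mu_X)\otimes(\mu_{\xi,0}-\widehat\mu_\xi)$ and keeps the estimated centring inside the Taylor term; both yield the same $o_p(n^{-1/2})$ remainder.
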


\textbf{When is \Cref{asst: nuisance_level} satisfied?} \Cref{asst: consistency,asst: suff_cond_an} are a set of sufficient conditions that hold under more fundamental assumptions, such as membership in a Hölder smoothness class.  $\widehat{m}$ is a standard nonparametric regression estimator for $m_0$. Its minimax rate of convergence is well understood \citep{tsybakov2004introduction}. Both $\widehat{v}$ and $\widehat{\mu}_{\xi}$ depend on the nonparametric first stage $\widehat{m}$. In the second stage, $\widehat{v}$ is obtained from performing a vector-valued kernel ridge regression, while $\widehat{\mu}_{\xi}$ is obtained from computing a sample average. In the former case, the convergence rates depend on the smoothness of the vector-valued target $v_{0}$ and the capacity of the vector-valued RKHS $\cH_{ql}$. In the latter case, the second stage error is a Monte Carlo error. We emphasize that thanks to the second order nature of the nuisance rates in \Cref{asst: suff_cond_an}, nowhere do we require a nuisance function to converge at parametric rate $n^{-\frac{1}{2}}$. In Appendix~\ref{app:statistical_learning_theory}, we provide the necessary background on vector-valued kernel learning, state and prove technical theorems on the convergence rates of $\widehat{v}$ and $\widehat{\mu}_{\xi}$, and provide a concrete example  where \Cref{asst: nuisance_level} is satisfied.

\vspace{-0.2cm} 
\subsection{Debiased confidence set and hypothesis test}\label{sec:main_hyp_test}
\vspace{-0.2cm} 
We now  construct a
confidence interval estimator for $\|\Psi_0\|_{\cH_{kl}}$, and then convert it into a hypothesis test for the null $H_0$ that $\Psi_0 = 0$ with desired Type I error control. We introduce two
methods, both of which rely on the asymptotic normality result of \Cref{thm: asymp_linearity_main}. By the continuous mapping theorem applied to
\Cref{eq:hilbert_valued_clt_main}, we deduce that $(n/2)\|\widehat\Psi_n-\Psi_0\|_{\mathcal H_{kl}}^2\rightsquigarrow\|\mathbb H\|_{\mathcal
	H_{kl}}^2$. 
    We can approximate $\|\mathbb{H}\|_{\cH_{kl}}^2$ by the
bootstrapped distribution of $\|\widehat{\Psi}_n - \widehat{\Psi}_n^{(b)}\|_{\cH_{kl}}^2$ \citep{efron1992bootstrap}. We  briefly sketch our procedure, and refer the reader
to Appendix~\ref{sec:inference} for a formal exposition. For $b\in [B]$, we let $o_{1}^{(b)}, \dots, o_{\frac{n}{2}}^{(b)} \overset{\text{i.i.d.}}{\sim}
	\frac{2}{n}\sum_{i\in I_1}\delta_{o_i}$, conditionally on the original data. Let $\zeta_n$ denote the $(1-\alpha)$-quantile of $\left\|\sqrt{n/2}
	(\widehat{\Psi}_n - \widehat{\Psi}_n^{(b)})\right\|_{\cH_{kl}}^2$, where $\widehat{\Psi}_n^{(b)}$ is given as in \Cref{eq: two_fold_os} except
evaluated on the $b$th bootstrap sample. Then $\zeta_n$ converges to the $(1-\alpha)$-quantile $\zeta_{1-\alpha}$ of
$\|\mathbb{H}\|^2_{\mathcal{H}_{kl}}$, provided the estimated EIF is a consistent estimator for the true EIF. We refer the reader to \citet[Theorem
	4]{luedtke2024one} for the proof. In simulations, we replace \(\zeta_n\) by its finite-\(B\) approximation \(\widehat\zeta_n\).

Armed with a consistent estimate ${\zeta}_n$ for the $(1-\alpha)$-quantile of $\|\mathbb{H}\|^2_{\mathcal{H}_{kl}}$, we propose the following
confidence interval for $\|\Psi_0\|_{\cH_{kl}}$:
\begin{align}
	\label{eq: curlyI_zetan}
	\mathcal{I}_{\zeta_n} \doteq  \left[\left\{\|\widehat\Psi_n\|_{\mathcal H_{kl}}-\sqrt{2{\zeta}_n/n}\right\}_+,\ \|\widehat\Psi_n\|_{\mathcal H_{kl}}+\sqrt{2{\zeta}_n/n}\right].
\end{align}
It is a $(1-\alpha)$-confidence interval for $\|\Psi_0\|_{\mathcal H_{kl}}$, due to the reverse triangle inequality $\left|\|\widehat\Psi_n\|_{\mathcal H_{kl}}-\|\Psi_0\|_{\mathcal H_{kl}}\right|\leq \|\widehat\Psi_n-\Psi_0\|_{\mathcal H_{kl}}$. It is asymptotically exact at the null in the sense that, when $\Psi_0=0$, the reverse triangle inequality becomes an equality. We obtain a calibrated hypothesis test for $H_0$ by rejecting if and only if $0 \not\in \mathcal{I}_{\zeta_n}$. In general, we both expect and observe in simulations that \Cref{eq: curlyI_zetan} behaves conservatively when the null does not hold. We emphasize the key role of our main theorem \Cref{thm: asymp_linearity_main} in the validity of this construction.

However, when $\Psi_0 \neq 0$, we may construct a confidence interval that performs even better in terms of asymptotic variance. We observe the following decomposition
\begin{align}
	\sqrt{n/2}\left(\|\widehat\Psi_n\|^2_{\cH_{kl}} - \|\Psi_0\|^2_{\cH_{kl}}\right) & =\sqrt{n/2}\left\langle \widehat\Psi_n - \Psi_0, \widehat\Psi_n + \Psi_0\right\rangle_{\cH_{kl}}\nonumber                                              \\
	                                                                                 & = \sqrt{n/2}\|\widehat\Psi_n - \Psi_0\|^2_{\cH_{kl}} + \sqrt{2n}\langle \widehat\Psi_n - \Psi_0, \Psi_0\rangle_{\cH_{kl}}.\label{eq:delta_decomp_main}
\end{align}
For fixed $\Psi_0 \neq 0$, the second term  is asymptotically normal with mean zero with asymptotic variance $\mathbb E_0[4\langle\phi_0(X,W,Y),\Psi_0\rangle_{\cH_{kl}}^2]$, whereas the first term is $O_p(1/\sqrt{n})$. As a result, we have a CLT for $\|\widehat\Psi_n\|^2_{\cH_{kl}}$, namely  $\sqrt{n/2}\left(\|\widehat\Psi_n\|^2_{\cH_{kl}} - \|\Psi_0\|^2_{\cH_{kl}}\right) \rightsquigarrow \mathcal{N}(0, \mathbb E_0[4\langle\phi_0(X,W,Y),\Psi_0\rangle_{\cH_{kl}}^2])$. Since $\widehat{\Psi}_n$ is the one-step estimator for $\Psi_0$, our method is an example of functional delta method.  The fact that this vanishes at $\Psi_0 = 0$ is precisely what prevents us from debiasing the scalar-valued HSIC directly. We define the following confidence interval
\begin{align*}
	\mathcal{I}_{\delta}^{1-\alpha} \doteq \left[\|\widehat\Psi_n\|_{\mathcal H_{kl}}^2-2z_{1-\alpha/2}\sqrt{\frac{2\widehat\sigma_n^2}{n}},\;\|\widehat\Psi_n\|_{\mathcal H_{kl}}^2+2z_{1-\alpha/2}\sqrt{\frac{2\widehat\sigma_n^2}{n}}
	\right],
\end{align*}
where $z_{1-\alpha/2}$ is the $(1-\alpha/2)$-quantile of a standard normal distribution, and $\widehat{\sigma}_n$ is an estimator for $\mathbb E_0[\langle\phi_0(X,W,Y),\Psi_0\rangle_{\cH_{kl}}^2]$. From \citet[Theorem 25.47]{van2000asymptotic}, we know that $\mathbb E_0[4\langle\phi_0(X,W,Y),\Psi_0\rangle_{\cH_{kl}}^2]$ is the semiparametric efficiency bound for estimating $\|\Psi_0\|^2_{\cH_{kl}}$ with $\frac{n}{2}$ samples, which suggests that when $\Psi_0\neq 0$ holds, then $\mathcal{I}_{\delta}^{1-\alpha}$ may be much less wide than $\mathcal{I}_{\zeta_n}$ while  being asymptotically calibrated. We confirm this empirically in \Cref{sec:experiments}. In the Appendix, we construct such an estimator in \Cref{eq:sigma_hat_estimator_sq}, and establish its consistency in \Cref{prop: sigma_consistency}.

As we have access to an asymptotically always valid confidence interval, we can adopt the following approach. We define a data-adaptive union confidence interval
that is $\mathcal{I}_{\delta}^{1-\alpha}$ augmented with the singleton set $\{0\}$ if $0\in \mathcal{I}_{\zeta_n}$, for $\mathcal{I}_{\zeta_n}$ defined in
Eq.~\eqref{eq: curlyI_zetan} (see \citet[Theorem 4]{morzywolek2025inference} for a similar approach). We would nevertheless like to understand the
validity of $\mathcal{I}_{\delta}^{1-\alpha}$ in finite samples, where the sample size $n$ is small and the true $\Psi_0$ is close to zero. This is
motivated by the asymptotics where we consider a sequence of experiments where $\Psi_n \asymp \frac{1}{\sqrt{n}}$. Along this sequence, the two terms
in \Cref{eq:delta_decomp_main} are of the same order of magnitude. Therefore a confidence interval based on the asymptotic variance of the second
term may not be trustworthy. We propose a lightweight data-driven diagnostic that determines when it is plausible that $2\sqrt{n/2}\langle \Psi_0,
	\widehat \Psi_n-\Psi_0\rangle_{\cH_{kl}} \gg_p \sqrt{n/2}\|\widehat \Psi_n - \Psi_0\|^2_{\cH_{kl}}$. The details of our construction can be found in
Appendix~\ref{sec:finite_sample_delta_validity}.

\vspace{-0.2cm}
\section{Experiments} \label{sec:experiments}
\vspace{-0.2cm}
We verify the validity of the proposed procedures on a range of synthetic experiments, where we are able to explicitly control the extent to which noise homogeneity is violated, and to adjust the
statistical difficulty of all the intermediate procedures.

\textbf{Baselines.}

\setlength{\intextsep}{2pt}
\setlength{\columnsep}{6pt}

\begin{wrapfigure}[16]{r}{0.28\textwidth}
    \vspace{-10pt}
    \centering
    \includegraphics[
        width=0.94\linewidth,
        trim={0mm 0mm 0mm 6mm},
        clip
    ]{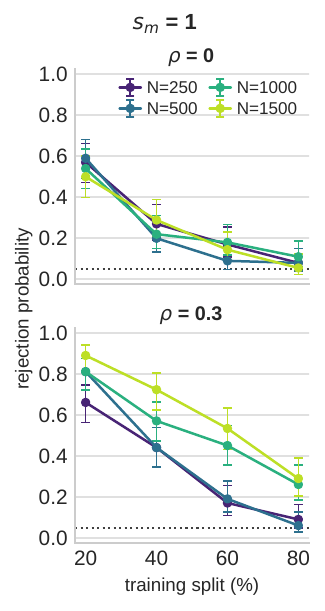}
    \captionsetup{font=small,skip=1pt}
    \caption{Split fit-test baseline at $s_m=1,s_\eps=0.25$.}
    \label{fig:x_dependent_residuals_split_test_rejection_probability}
    \vspace{-12pt}
\end{wrapfigure}

We compare against plug-in residual procedures that estimate $m$ and then apply standard HSIC inference without the EIF correction.
The split-sample baseline follows the ANM diagnostic of \citet{peters2013causal}: fit $\hat m$ on a training split, compute residuals $Y_i-\hat m(W_i)$ on held-out observations, and run the permutation HSIC test \citep{gretton2007kernel}.
We report several training fractions because the split controls a calibration--power tradeoff.

We also consider a full-sample, cross-fitted plug-in permutation test.
Let $I_1,\ldots, I_K$ be the folds, and write $\hat m^{(-k)}$ for the regression estimator trained on all observations outside fold $\mathcal I_k$.
For $i\in\mathcal I_k$, define the out-of-fold residual $\hat\xi_i=Y_i-\hat m^{(-k)}(W_i)$, form Gram matrices $K_{ij}=k(X_i,X_j)$ and $L_{ij}=l(\hat\xi_i,\hat\xi_j)$, 
and perform a full-sample permutation test using these.
This uses all observations and avoids in-sample residuals, but remains heuristic because the permutation null does not account for first-stage estimation error.

For confidence intervals, we also compare against plug-in methods, which use the same bootstrap or delta-method construction as our estimator, but with the EIF correction term omitted. This helps us evaluate the effect of bias correction on the performance.

\vspace{-0.2cm}
\paragraph{\texorpdfstring{$X$}{X}-dependent residuals.}We first evaluate goodness-of-fit testing for the scalar additive-noise model
\(
    Y=m(X)+\epsilon,\,\epsilon\perp X .
\)
Data are generated from
\[
    X\sim\mathrm{Unif}[-3,3],\qquad
    Y=m(X)+\sigma(X)Z,\qquad
    Z\sim\mathcal N(0,1).
\]
Both $m$ and $\log\sigma$ are sampled as truncated random Fourier series.
The decay parameters $s_m$ and $s_\epsilon$ control smoothness, and hence first-stage estimation difficulty: $s_m$ is the exponent in the power-law decay rate of the Fourier coefficients of $m$, and $s_\epsilon$ plays an analogous role in $\log\sigma(X)$. $\rho$ controls the strength of residual heterogeneity. Smaller values of those lead to rougher regression functions and less regular variance structure in the noise. $\rho$ multiplies the non-constant $X$-dependent term in $\log\sigma(X)$: $\rho=0$ gives homogeneous noise and larger $\rho$ yields stronger $X$-dependence in the residual distribution.
In each setting, we estimate $\sqrt{\hsic(X,\epsilon)}$ using Gaussian kernels with bandwidths chosen by the median heuristic.
We keep $d_x=d_y=1$ in the reported experiments to isolate first-stage residual-estimation bias from the additional effects of high-dimensional kernel testing and nuisance estimation.
The exact data-generating mechanism is given in Appendix~\ref{sec:experiment_details_synthetic_data}.

\Cref{fig:x_dependent_residuals_rejection_probability} reports rejection curves over $\rho$ for a representative setting.
At $\rho=0$, the debiased bootstrap test is close to nominal, while the cross-fitted permutation baseline is conservative.
As $\rho$ increases, the debiased test gains power without requiring a split-ratio choice. So does the baseline,
though it attains lower power. \Cref{fig:x_dependent_residuals_split_test_rejection_probability} shows the split-sample plug-in baseline for the same synthetic residual experiment.
Under the null, rejection should be near the nominal $5\%$ line, but small training splits leave enough residual-estimation bias to inflate rejection.
Under the alternative, larger training splits fit the regression better but leave fewer observations for the HSIC test, so power can fall sharply.
This calibration--power tradeoff is the tuning problem that \Cref{fig:x_dependent_residuals_rejection_probability} avoids: the debiased test uses the full sample and gains power without requiring a split-ratio choice.

\par\vspace{-\parskip}
\noindent\begin{minipage}{\linewidth}
    \centering
    \includegraphics[width=0.99\linewidth,trim={2mm 0mm 0mm 7mm},clip]{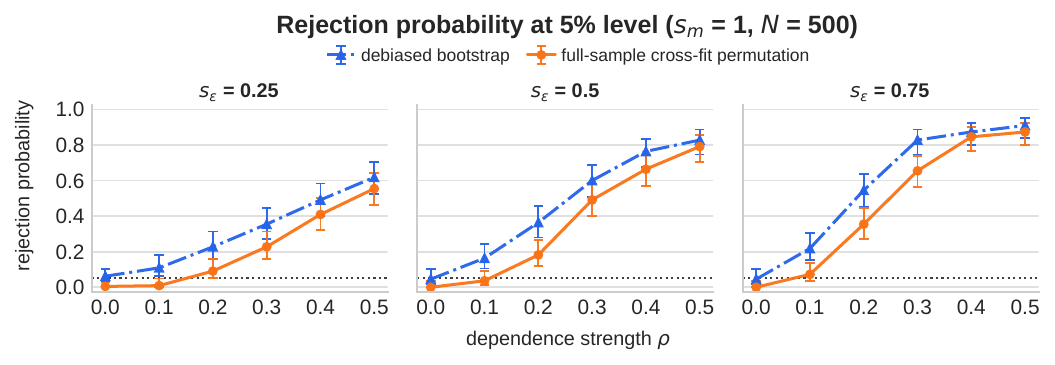}
    \captionsetup{font=small,skip=1pt,hypcap=false}
    \captionof{figure}{Rejection probabilities at $s_m=1$ and $n=500$ across noise-dependence levels and smoothness parameters $s_\eps$ of $\sigma(x)$, for the debiased CI of \Cref{eq: curlyI_zetan} and the cross-fitted permutation baseline.}
    \label{fig:x_dependent_residuals_rejection_probability}
\end{minipage}
\par\vspace{-\parskip}

\paragraph{Residual heterogeneity with additional covariates.}
\begin{wrapfigure}[19]{r}{0.28\textwidth}
    \centering
    \includegraphics[width=0.94\linewidth,trim=0mm 0mm 0mm 0mm,clip]{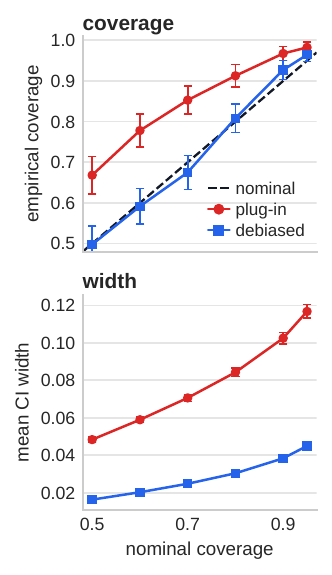}
    \captionsetup{font=small,skip=1pt}
    \caption{Delta-method CI calibration for $\Psi_0$ when $W=(X,T)$.}
    \label{fig:covariate_confidence_set_coverage}
\end{wrapfigure}
We next consider a setting where the magnitude of $\Psi_0$, rather than only the null
$\Psi_0=0$, is scientifically meaningful. Let $W=(X,T)$, where $X\in\{0,1\}$ is a binary group
indicator and $T$ is an additional covariate. Writing
$
    R = Y-\E[Y\mid X,T],
$
the target measures whether the covariate-adjusted residual law differs across the two groups.

This connection is especially transparent with the discrete kernel
$k(x,x')=\1\{x=x'\}$ on $X$. When the two groups are balanced,
\[
    \hsic(X,R)
    =
    \frac18
    \mathrm{MMD}_l^2\!\left(
        \mathcal L(R\mid X=0),
        \mathcal L(R\mid X=1)
    \right),
\]
is the Maximum Mean Discrepancy \citep{gretton2007kernel} associated with the residual kernel $l$, between the laws of the residuals under $X=0$ and under $X=1$. We evaluate this target in a synthetic model
\[
    Y=m(X,T)+\sigma_X Z,\qquad Z\sim\mathcal N(0,1),
\]
where $m$ is a finite trigonometric series in $T$, and $l$ is Gaussian. The null has equal residual
scales across groups, while the alternative has $\sigma_0\neq\sigma_1$. Full data-generating details
are given in Appendix~\ref{sec:experiment_details_covariate_diagnostics}.

\Cref{fig:covariate_confidence_set_coverage} confirms that confidence intervals based on plug-in $\Psi$ are highly suboptimal in this setting. They are very conservative and too wide, while the delta-method
CIs track their nominal coverage near-perfectly and are substantially narrower.
\paragraph{Causal discovery by two-way goodness-of-fit testing.} Additive-noise causal discovery compares the two regression directions by fitting
\[
    Y=m_{Y|X}(X)+\epsilon_{Y|X}
    \qquad\text{and}\qquad
    X=m_{X|Y}(Y)+\epsilon_{X|Y},
\]
and testing whether the fitted residuals are independent of the corresponding covariate
\citep{peters2013causal}. In the ideal additive-noise setting, the causal direction should pass this
goodness-of-fit test, while the reverse direction in general should fail, except for special non-identifiable
 cases such as linear $m$ and Gaussian noise. 
This requires calibration under a fitted-residual null and power against a non-additive reverse direction.

We study a synthetic bivariate example with known structural direction $X\to Y$. The structural
function is nonlinear and resembles the low-dimensional cause-effect pairs used in the T\"ubingen
cause-effect pair benchmark \citep{mooij2016distinguishing}. We consider two regimes. In the first,
the forward model has homogeneous additive noise, so the ANM null holds for the regression of
$Y$ on $X$ and fails in the reverse direction. In the second, the structural direction remains
$X\to Y$, but the forward residual law depends on $X$; in this case both ANM goodness-of-fit
nulls should be rejected. Full data-generating details and example datasets are given in
Appendix~\ref{sec:experiment_details_causal_arrow_violin}. 
\Cref{fig:causal_arrow_rejection_probability} shows that the debiased bootstrap gives the most favourable two-way goodness-of-fit profile, and higlights the failure mode of split-sample tests. The latter only become calibrated under the null when 80\% of the sample is used for fitting, but the remaining small testing sample leads to very low power.
\par\vspace{-\parskip}
\noindent\begin{minipage}{\linewidth}
    \centering
    \includegraphics[width=0.99\linewidth,trim=2mm 2mm 0mm 7mm,clip]{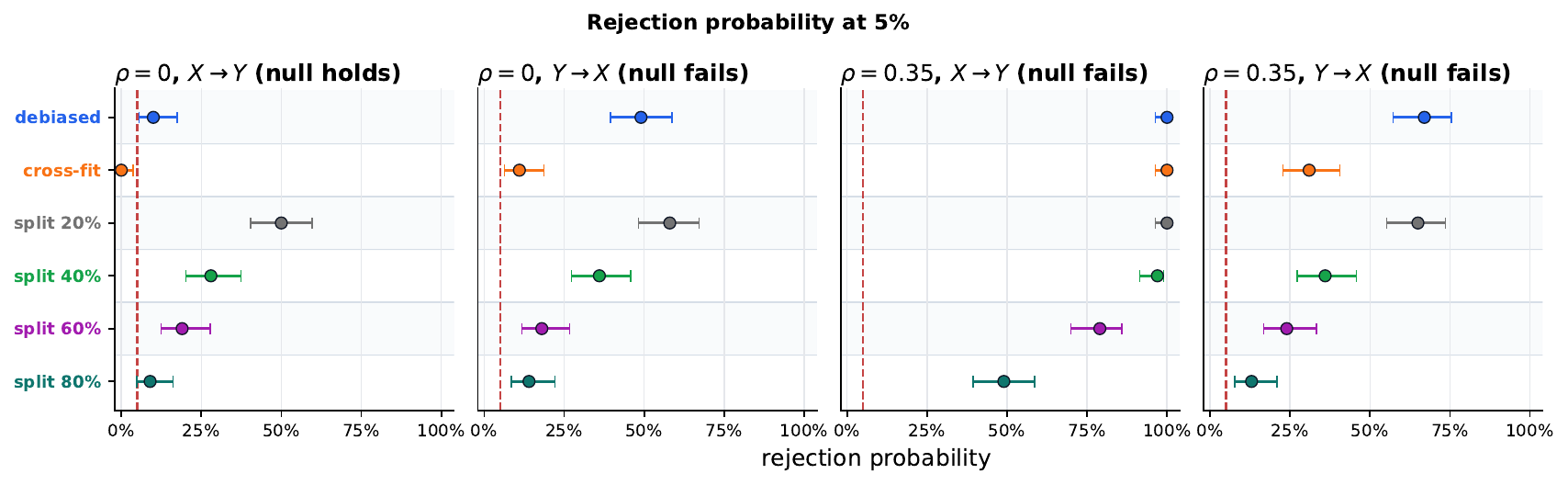}
    \captionsetup{font=small,skip=1pt,hypcap=false}
    \captionof{figure}{Rejection probabilities for nominal level-$5\%$ tests in the causal discovery experiment. ``debiased'' is the test base on the \Cref{eq: curlyI_zetan} ``cross-fit'' is the proposed permutation baseline which utilises the entire sample and cross-fits the nuisances, and``split $x$\%'' refers to tests where $x$\% of the sample is used for fiting the nuisances and the rest for a permutation test.}
    \label{fig:causal_arrow_rejection_probability}
\end{minipage}
\par\vspace{-\parskip}

\section{Discussion}
\label{sec: discussion}
\vspace{-0.2cm}

We developed an operator-level debiasing approach for kernel measures of residual noise heterogeneity. 
Rather than applying a semiparametric correction directly to scalar HSIC, whose first-order derivative 
degenerates at  independence, we target the RKHS cross-covariance operator between covariates 
and residuals. The resulting Hilbert-valued one-step estimator yields a bootstrap-calibrated test of 
residual independence and a confidence interval for the root-HSIC signal $\|\Psi_0\|_{H_{kl}}$. When dependence holds, the same estimator gives delta-method confidence intervals for $\|\Psi_0\|_{H_{kl}}^2$, 
attaining the semiparametric efficiency bound for this scalar functional. In synthetic experiments, the 
debiased procedure improves calibration and power relative to plug-in residual baselines.

A number of areas warrant further investigation.
First, it would be valuable to develop a non-asymptotic theory for our estimators and confidence sets. A second direction is suggested by our findings in Appendix~\ref{sec:inference}, where centering the delta-method confidence
sets at U-statistics leads to better finite sample performance despite being asymptotically equivalent. The same idea does not trivially extend to our more conservative
confidence sets based on the triangle inequality, but it would be interesting to understand how their small-sample performance can be optimized. Beyond this,
there are many alternative methods of constructing everywhere-valid confidence intervals from an efficient estimator of $\Psi$. It would be interesting to investigate
the optimal way of constructing a confidence set which has good coverage when $\Psi=0$ and reduces to the derived asymptotically efficient one
when $\Psi\neq 0$.

\bibliographystyle{plainnat}
\bibliography{bibliography}     

@article{gretton2015simpler,
  title={A simpler condition for consistency of a kernel independence test},
  author={Gretton, Arthur},
  journal={arXiv preprint arXiv:1501.06103},
  year={2015}
}

@article{agarwal2026sinkhorn,
  title={Sinkhorn Treatment Effects: A Causal Optimal Transport Measure},
  author={Agarwal, Medha and Luedtke, Alex},
  journal={arXiv preprint arXiv:2605.08485},
  year={2026}
}

@InProceedings{ChwSejGre14,
  author = 	 {K. Chwialkowski and D. Sejdinovic and A. Gretton},
  title = 	 {A Wild Bootstrap for Degenerate Kernel Tests},
   booktitle = {NeurIPS},
  year = 	 {2014}
}

@book{adams2003sobolev,
  author    = {Robert A. Adams and John J. F. Fournier},
  title     = {Sobolev spaces},
  edition   = {2nd},
  volume    = {140},
  publisher = {Elsevier Science \& Technology},
  year      = {2003},
  address   = {Chantilly},
  note      = {Web}
}

@InProceedings{chen2025nested,
  title = 	 {Nested expectations with kernel quadrature},
  author =       {Chen, Zonghao and Naslidnyk, Masha and Briol, Francois-Xavier},
  booktitle = 	 {Proceedings of the 42nd International Conference on Machine Learning},
  pages = 	 {8760--8793},
  year = 	 {2025},
  editor = 	 {Singh, Aarti and Fazel, Maryam and Hsu, Daniel and Lacoste-Julien, Simon and Berkenkamp, Felix and Maharaj, Tegan and Wagstaff, Kiri and Zhu, Jerry},
  volume = 	 {267},
  month = 	 {13--19 Jul},
  publisher =    {PMLR}
}

@article{luedtke2019omnibus,
  title={An omnibus non-parametric test of equality in distribution for unknown functions},
  author={Luedtke, Alex and Carone, Marco and van der Laan, Mark J},
  journal={Journal of the Royal Statistical Society Series B: Statistical Methodology},
  volume={81},
  number={1},
  pages={75--99},
  year={2019},
  publisher={Oxford University Press}
}

@article{williamson2023general,
  title={A general framework for inference on algorithm-agnostic variable importance},
  author={Williamson, Brian D and Gilbert, Peter B and Simon, Noah R and Carone, Marco},
  journal={Journal of the American Statistical Association},
  volume={118},
  number={543},
  pages={1645--1658},
  year={2023},
  publisher={Taylor \& Francis}
}

@article{SheJegGre09,
  Author =	 {H. Shen and S. Jegelka and A. Gretton},
  Issue =	 9,
  Journal =	 {IEEE Transactions on Signal Processing},
  Pages =	 {3498-3511},
  Title =	 {Fast Kernel-Based Independent Component Analysis},
  Volume =	 57,
  Year =	 2009
}

@article{albert2022adaptive,
  author  = {Albert, M\'{e}lisande and Laurent, B\'{e}atrice and Marrel, Amandine and Meynaoui, Anouar},
  title   = {Adaptive test of independence based on {HSIC} measures},
  journal = {The Annals of Statistics},
  volume  = {50},
  number  = {2},
  pages   = {858--879},
  year    = {2022},
  month   = {apr}
}

@inproceedings{smola2007hilbert,
  title={A Hilbert space embedding for distributions},
  author={Smola, Alex and Gretton, Arthur and Song, Le and Sch{\"o}lkopf, Bernhard},
  booktitle={International conference on algorithmic learning theory},
  pages={13--31},
  year={2007},
  organization={Springer}
}

@article{SriGreFukLanetal10,
  Author =	 {B. Sriperumbudur and A. Gretton and K. Fukumizu and
                  B. Sch{\"o}lkopf and G. Lanckriet},
  Journal =	 {Journal of Machine Learning Research},
  Pages =	 {1517-1561},
  Title =	 {Hilbert Space Embeddings and Metrics on Probability
                  Measures},
  Volume =	 11,
  Year =	 2010
}

@article{luedtke2025simplifying,
  title        = {Simplifying debiased inference via automatic differentiation and probabilistic programming},
  url          = {https://par.nsf.gov/biblio/10649160},
  doi          = {10.1093/jrsssb/qkaf052},
  journal      = {Journal of the Royal Statistical Society Series B: Statistical Methodology},
  publisher    = {Oxford Academic},
  author       = {Luedtke, Alex},
  year = {2025}
}

@article{smale2004shannon,
  title={Shannon sampling and function reconstruction from point values},
  author={Smale, Steve and Zhou, Ding-Xuan},
  journal={Bulletin of the American Mathematical Society},
  volume={41},
  number={3},
  pages={279--305},
  year={2004}
}

@book{tsybakov2004introduction,
  title={Introduction to nonparametric estimation},
  author={Tsybakov, Alexandre B},
  publisher={Springer},
  year={2009}
}

@article{de2006discretization,
  title={Discretization error analysis for Tikhonov regularization},
  author={De Vito, Ernesto and Rosasco, Lorenzo and Caponnetto, Andrea},
  journal={Analysis and Applications},
  volume={4},
  number={01},
  pages={81--99},
  year={2006},
  publisher={World Scientific}
}

@article{smale2005shannon,
  title={Shannon sampling {II}: Connections to learning theory},
  author={Smale, Steve and Zhou, Ding-Xuan},
  journal={Applied and Computational Harmonic Analysis},
  volume={19},
  number={3},
  pages={285--302},
  year={2005},
  publisher={Elsevier}
}

@article{steinwart2012mercer,
  title={Mercer’s theorem on general domains: On the interaction between measures, kernels, and RKHSs},
  author={Steinwart, Ingo and Scovel, Clint},
  journal={Constructive Approximation},
  volume={35},
  number={3},
  pages={363--417},
  year={2012},
  publisher={Springer}
}

@article{luedtke2024one,
author = {Alex Luedtke and Incheoul Chung},
title = {{One-step estimation of differentiable Hilbert-valued parameters}},
volume = {52},
journal = {The Annals of Statistics},
number = {4},
publisher = {Institute of Mathematical Statistics},
pages = {1534 -- 1563},
year = {2024},
doi = {10.1214/24-AOS2403},
URL = {https://doi.org/10.1214/24-AOS2403}
}

@incollection{efron1992bootstrap,
  title={Bootstrap methods: another look at the jackknife},
  author={Efron, Bradley},
  booktitle={Breakthroughs in statistics: Methodology and distribution},
  pages={569--593},
  year={1992},
  publisher={Springer}
}

@inproceedings{song2007supervised,
  title={Supervised feature selection via dependence estimation},
  author={Song, Le and Smola, Alex and Gretton, Arthur and Borgwardt, Karsten M and Bedo, Justin},
  booktitle={Proceedings of the 24th international conference on Machine learning},
  pages={823--830},
  year={2007}
}

@article{zhang2018large,
  title={Large-scale kernel methods for independence testing},
  author={Zhang, Qinyi and Filippi, Sarah and Gretton, Arthur and Sejdinovic, Dino},
  journal={Statistics and Computing},
  volume={28},
  number={1},
  pages={113--130},
  year={2018},
  publisher={Springer}
}

@inproceedings{rahimi2007random,
  title={Random Features for Large-Scale Kernel Machines},
  author={Rahimi, Ali and Recht, Benjamin},
  booktitle={Advances in Neural Information Processing Systems},
  volume={20},
  pages={1177--1184},
  year={2007}
}

@inproceedings{williams2001using,
  title={Using the Nystr{\"o}m Method to Speed Up Kernel Machines},
  author={Williams, Christopher K. I. and Seeger, Matthias},
  booktitle={Advances in Neural Information Processing Systems},
  volume={13},
  pages={682--688},
  year={2001},
  publisher={MIT Press}
}

@article{gretton2007kernel,
  title={A kernel statistical test of independence},
  author={Gretton, Arthur and Fukumizu, Kenji and Teo, Choon and Song, Le and Sch{\"o}lkopf, Bernhard and Smola, Alex},
  journal={Advances in neural information processing systems},
  volume={20},
  year={2007}
}

@article{berrett2019nonparametric,
  title={Nonparametric independence testing via mutual information},
  author={Berrett, Thomas B and Samworth, Richard J},
  journal={Biometrika},
  volume={106},
  number={3},
  pages={547--566},
  year={2019},
  publisher={Oxford University Press}
}

@article{heller2013consistent,
  title={A consistent multivariate test of association based on ranks of distances},
  author={Heller, Ruth and Heller, Yair and Gorfine, Malka},
  journal={Biometrika},
  volume={100},
  number={2},
  pages={503--510},
  year={2013},
  publisher={Oxford University Press}
}

@article{shekhar2023permutation,
  title={A permutation-free kernel independence test},
  author={Shekhar, Shubhanshu and Kim, Ilmun and Ramdas, Aaditya},
  journal={Journal of Machine Learning Research},
  volume={24},
  number={369},
  pages={1--68},
  year={2023}
}

@article{kennedy2024semiparametric,
  title={Semiparametric doubly robust targeted double machine learning: a review},
  author={Kennedy, Edward H},
  journal={Handbook of statistical methods for precision medicine},
  pages={207--236},
  year={2024},
  publisher={Chapman and Hall/CRC}
}

@article{hines2022demystifying,
  title={Demystifying statistical learning based on efficient influence functions},
  author={Hines, Oliver and Dukes, Oliver and Diaz-Ordaz, Karla and Vansteelandt, Stijn},
  journal={The American Statistician},
  volume={76},
  number={3},
  pages={292--304},
  year={2022},
  publisher={Taylor \& Francis}
}

@article{chernozhukov2018double,
  title={Double/debiased machine learning for treatment and structural parameters},
  author={Chernozhukov, Victor and Chetverikov, Denis and Demirer, Mert and Duflo, Esther and Hansen, Christian and Newey, Whitney and Robins, James},
  journal={The Econometrics Journal},
  volume={21},
  number={1},
  year={2018},
  publisher={The Econometrics Journal}
}

@article{schick1986asymptotically,
  title={On asymptotically efficient estimation in semiparametric models},
  author={Schick, Anton},
  journal={The Annals of Statistics},
  pages={1139--1151},
  year={1986},
  publisher={JSTOR}
}

@article{li2021self,
  title={Self-supervised learning with kernel dependence maximization},
  author={Li, Yazhe and Pogodin, Roman and Sutherland, Danica J and Gretton, Arthur},
  journal={Advances in Neural Information Processing Systems},
  volume={34},
  pages={15543--15556},
  year={2021}
}

@article{park2020measure,
  title={A measure-theoretic approach to kernel conditional mean embeddings},
  author={Park, Junhyung and Muandet, Krikamol},
  journal={Advances in neural information processing systems},
  volume={33},
  pages={21247--21259},
  year={2020}
}

@article{li2024towards,
  title={Towards optimal sobolev norm rates for the vector-valued regularized least-squares algorithm},
  author={Li, Zhu and Meunier, Dimitri and Mollenhauer, Mattes and Gretton, Arthur},
  journal={Journal of Machine Learning Research},
  volume={25},
  number={181},
  pages={1--51},
  year={2024}
}

@book{aubin2000applied,
  title={Applied functional analysis},
  author={Aubin, Jean-Pierre},
  year={2000},
  publisher={John Wiley \& Sons}
}

@inproceedings{gretton2005measuring,
  title={Measuring statistical dependence with Hilbert-Schmidt norms},
  author={Gretton, Arthur and Bousquet, Olivier and Smola, Alex and Sch{\"o}lkopf, Bernhard},
  booktitle={International conference on algorithmic learning theory},
  pages={63--77},
  year={2005},
  organization={Springer}
}

@book{van2000asymptotic,
  title={Asymptotic statistics},author={ {van der Vaart}, A. W.},
  volume={3},
  year={2000},
  publisher={Cambridge university press},
  collection={Cambridge Series in Statistical and Probabilistic Mathematics},
  series={Cambridge Series in Statistical and Probabilistic Mathematics},
   place={Cambridge}
}

@article{morzywolek2025inference,
  title={Inference on Local Variable Importance Measures for Heterogeneous Treatment Effects},
  author={Morzywolek, Pawel and Gilbert, Peter B and Luedtke, Alex},
  journal={arXiv preprint arXiv:2510.18843},
  year={2025}
}

@book{steinwart2008support,
  title={Support vector machines},
  author={Steinwart, Ingo and Christmann, Andreas},
  year={2008},
  publisher={Springer Science \& Business Media}
}

@book{berlinet2011reproducing,
  title={{Reproducing Kernel {H}ilbert Spaces in Probability and Statistics}},
  author={Berlinet, Alain and Thomas-Agnan, Christine},
  year={2011},
  publisher={Springer}
}

@article{carmeli2006vector,
  title={Vector valued reproducing kernel {H}ilbert spaces of integrable functions and {M}ercer theorem},
  author={Carmeli, Claudio and De Vito, Ernesto and Toigo, Alessandro},
  journal={Analysis and Applications},
  volume={4},
  number={04},
  pages={377--408},
  year={2006},
  publisher={World Scientific}
}

@article{carmeli2010vector,
  title={Vector valued reproducing kernel {H}ilbert spaces and universality},
  author={Carmeli, Claudio and De Vito, Ernesto and Toigo, Alessandro and Umanit{\'a}, Veronica},
  journal={Analysis and Applications},
  volume={8},
  number={01},
  pages={19--61},
  year={2010},
  publisher={World Scientific}
}

@book{lee1990ustatistics,
  author    = {Lee, A. J.},
  title     = {U-Statistics: Theory and Practice},
  edition   = {1},
  publisher = {Routledge},
  year      = {1990},
  doi       = {10.1201/9780203734520},
  url       = {https://doi.org/10.1201/9780203734520}
}

@article{he2026on,
  title={On the Hardness of Conditional Independence Testing In Practice},
  author={He, Zheng and Pogodin, Roman and Li, Yazhe and Deka, Namrata and Gretton, Arthur and Sutherland, Danica J},
  journal={arXiv preprint arXiv:2512.14000},
  year={2025}
}

@inproceedings{wang2026practical,
  title={Practical Kernel Selection for Kernel-based Conditional Independence Test},
  author={Wang, Wenjie and Gong, Mingming and Huang, Biwei and Bailey, James and Han, Bo and Zhang, Kun and Liu, Feng},
  booktitle={The Thirty-ninth Annual Conference on Neural Information Processing Systems},
  year={2025}
}

@article{pogodin2024practical,
  title={Practical kernel tests of conditional independence},
  author={Pogodin, Roman and Schrab, Antonin and Li, Yazhe and Sutherland, Danica J and Gretton, Arthur},
  journal={arXiv preprint arXiv:2402.13196},
  year={2024}
}

@article{fischer2020sobolev,
  title={Sobolev norm learning rates for regularized least-squares algorithms},
  author={Fischer, Simon and Steinwart, Ingo},
  journal={Journal of Machine Learning Research},
  volume={21},
  number={205},
  pages={1--38},
  year={2020}
}

@article{caponnetto2007optimal,
  title={Optimal rates for the regularized least-squares algorithm},
  author={Caponnetto, Andrea and De Vito, Ernesto},
  journal={Foundations of Computational Mathematics},
  volume={7},
  number={3},
  pages={331--368},
  year={2007},
  publisher={Springer}
}

@inproceedings{lietal2022optimal,
 author = {Li, Zhu and Meunier, Dimitri and Mollenhauer, Mattes and Gretton, Arthur},
 booktitle = {Advances in Neural Information Processing Systems},
 pages = {4433--4445},
 title = {Optimal Rates for Regularized Conditional Mean Embedding Learning},
 volume = {35},
 year = {2022}
}

@inproceedings{lin2018optimal,
  title={Optimal distributed learning with multi-pass stochastic gradient methods},
  author={Lin, Junhong and Cevher, Volkan},
  booktitle={International Conference on Machine Learning},
  pages={3092--3101},
  year={2018},
  organization={PMLR}
}

@article{lin2020optimal,
  title={Optimal rates for spectral algorithms with least-squares regression over {H}ilbert spaces},
  author={Lin, Junhong and Rudi, Alessandro and Rosasco, Lorenzo and Cevher, Volkan},
  journal={Applied and Computational Harmonic Analysis},
  volume={48},
  number={3},
  pages={868--890},
  year={2020},
  publisher={Elsevier}
}

@book{bickel1993efficient,
  title={Efficient and adaptive estimation for semiparametric models},
  author={Bickel, Peter J and Klaassen, Chris AJ and Bickel, Peter J and Ritov, Ya’acov and Klaassen, J and Wellner, Jon A and Ritov, YA'Acov},
  volume={4},
  year={1993},
  publisher={Johns Hopkins University Press Baltimore}
}

@book{serfling2009approximation,
  title={Approximation theorems of mathematical statistics},
  author={Serfling, Robert J},
  year={2009},
  publisher={John Wiley \& Sons}
}

@article{sriperumbudur2011universality,
  title={Universality, characteristic kernels and {RKHS} embedding of measures},
  author={Sriperumbudur, Bharath K and Fukumizu, Kenji and Lanckriet, Gert RG},
  journal={Journal of Machine Learning Research},
  volume={12},
  number={Jul},
  pages={2389--2410},
  year={2011}
}

@article{peters2013causal,
author = {Peters, Jonas and Mooij, Joris and Janzing, Dominik and Schölkopf, Bernhard},
year = {2013},
month = {09},
pages = {},
title = {Causal Discovery with Continuous Additive Noise Models},
volume = {15},
journal = {Journal of Machine Learning Research},
doi = {10.15496/publikation-1672}
}

@article{mooij2016distinguishing,
  author  = {Mooij, J. M. and Peters, J. and Janzing, D. and Zscheischler, J. and Sch{\"o}lkopf, B.},
  title   = {Distinguishing Cause from Effect Using Observational Data: Methods and Benchmarks},
  journal = {Journal of Machine Learning Research},
  volume  = {17},
  number  = {32},
  pages   = {1--102},
  year    = {2016}
}

@inproceedings{schrab2022efficient,
  author    = {Schrab, Antonin and Kim, Ilmun and Guedj, Benjamin and Gretton, Arthur},
  title     = {Efficient Aggregated Kernel Tests Using Incomplete U-Statistics},
  booktitle = {Advances in Neural Information Processing Systems},
  year      = {2022}
}

\section{Guide to appendix}
\label{sec:guide}

In this section, we provide a guide to the contents of the appendix. At the top of each section, we provide a table listing the \blue{notation} used in that section, with the exception of \Cref{sec:notations}, where we provide a table for \blue{notation} used across the paper.

Sample splitting allows us to avoid restrictive Donsker-type complexity restrictions on \blue{the function class}, and significantly \blue{simplifies} proofs. We refer the reader to \citet[Section 4.2]{kennedy2024semiparametric} for an exposition on sample splitting and cross-fitting. The main drawback of \blue{the sample-splitting} estimator is that it only uses half of the total \blue{sample} for evaluation, \blue{so} its asymptotic variance is two times larger than the semiparametric efficiency bound for the full sample. The virtue of the \blue{sample-splitting} estimator is that it is significantly less notationally burdensome to present, as there is no need to perform bookkeeping for the fold indices. In order to strike a balance between pedagogical clarity and generality, we opted to present the \blue{sample-splitting} estimator, as it already reveals the essential features of our debiased procedure and the nuisance requirements (\Cref{asst: consistency,asst: suff_cond_an}).

As explained in \Cref{rem:cross_fitting}, by swapping the role of evaluation and fitting samples, we can recover full statistical efficiency by using a $K$-fold cross-fitted estimator. The prototypical example is the $2$-fold cross-fitted estimator. It is standard in theoretical works on semiparametric efficiency theory to present guarantees for the $2$-fold cross-fitted estimator, as the extension of the proofs to the $K$-fold cross-fitted one presents no essential difficulty while requiring more burdensome notation, see e.g. \citet[Section 2.5]{luedtke2024one}. Furthermore, the asymptotic variance \blue{of the} $2$-fold cross-fitted estimator already attains the semiparametric efficiency bound.

The contents of the appendix can therefore be summarized as follows. In \Cref{sec:notations}, we introduce additional background material.  In \Cref{sec:EIF_appendix} we prove \Cref{thm:cco_eif} in the main text, in the more general setting where \Cref{asst:nested_covariates} may not hold. Our results \blue{therein reduce} immediately to \Cref{thm:cco_eif} when \Cref{asst:nested_covariates} does in fact hold.

In \Cref{sec: appendix_one_step}, we generalize the guarantees in \citet{luedtke2024one} for generic Hilbert-valued parameters to a two-fold cross-fitted variant of the estimator presented in \Cref{eq: two_fold_os}. This allows us to exhibit the required rates on the nuisance functions, and show that they are exactly given by \Cref{asst: consistency,asst: suff_cond_an}, with a straightforward notational extension from the \blue{sample-splitting} to \blue{the cross-fitting} case.

The $K$-fold cross-fitted estimator often enjoys superior \blue{finite-sample} performance to the $2$-fold cross-fitted estimator, and as a result, in all of our empirical simulations we use \blue{a} $K$-fold cross-fitted estimator for $K\geq 2$. To support this and to ensure reproducibility of our research, we derive the \blue{closed-form} solution for the $K$-fold cross-fitted estimator in \Cref{sec: computation}, where we are fully explicit about how to perform tensor slice operations that \blue{let} us perform cross-fitting in full generality.

In \Cref{sec:inference}, we provide supplementary theoretical results to the two methods of constructing confidence intervals proposed in \Cref{sec:main_hyp_test} in the main text, for the general $K$-fold cross-fitting setting. In particular, we provide a \blue{closed-form} estimator $\widehat{\sigma}^2$ for the asymptotic variance in the delta method in the $K$-fold cross-fitting case, fully reusing expressions computed in \Cref{sec: computation}. We then establish the consistency \blue{of} $\widehat{\sigma}^2$. In \Cref{sec:finite_sample_delta_validity}, we construct the data-adaptive diagnostic for the trustworthiness of the delta method introduced in \Cref{sec:main_hyp_test}. In \Cref{app:statistical_learning_theory}, we provide a detailed introduction \blue{to} statistical learning theory for vector-valued kernel ridge regression, which allows us to obtain convergence rates for the nuisance $\widehat{v}_j$, and thereby show when \Cref{asst: consistency,asst: suff_cond_an} is satisfied by more fundamental assumptions on Sobolev smoothness.

In \Cref{sec:experiment_details}, we provide all experimental results not included in the main text and \blue{the experimental} configuration.

\section{Notation and background}
\label{sec:notations}

\begin{center}
\small
\setlength{\tabcolsep}{5pt}
\renewcommand{\arraystretch}{1.25}
\begin{tabularx}{\linewidth}{@{}>{\raggedright\arraybackslash}p{0.28\linewidth}>{\raggedright\arraybackslash}X@{}}
\toprule
Notation & Meaning \\
\midrule
$O=(X,W,Y), o_i=(x_i,w_i,y_i)$ & Observation and realized observation. \\
\midrule
$\mathcal M$, $P_0$, $P$ & Nonparametric model, true law, and generic law. Except for $H_0$, subscript $0$ always denotes an object indexed by the true data generating process. \\
\midrule
$m_P$, $\xi_P$ & Regression function $m_P(w)=\mathbb E_P[Y\mid W=w]$ and $\xi_P(w,y)=y-m_P(w)$. \\
\midrule
$k,l$; $\mathcal H_k,\mathcal H_l,\mathcal H_{kl}$ & Reproducing kernels on $\mathcal{X}$ and $\mathbb{R}^{d_y}$; corresponding reproducing kernel Hilbert spaces; tensor-product RKHS $\cH_{k}\otimes \cH_l$.\\
\midrule
$\varphi_k$, $\varphi_l$, $\varphi_{\xi,P}$ & Canonical feature maps, with $\varphi_{\xi,P}(w,y)=\varphi_l\{\xi_P(w,y)\}$. \\
\midrule
$\mu_{X,P}$, $\mu_{\xi,P}$ & Kernel mean embeddings $\mathbb E_P[\varphi_k(X)]$ and $\mathbb E_P[\varphi_{\xi,P}(W,Y)]$. \\
\midrule
$\widetilde\varphi_{k,P}$, $\widetilde\varphi_{\xi,P}$ & Centered feature maps obtained by subtracting the corresponding kernel mean embeddings. \\
\midrule
$\Psi(P)$, $\Psi_0$ & RKHS cross-covariance operator between $X$ and the residual $\xi_P(W,Y)$; $\Psi_0=\Psi(P_0)$. \\
\midrule
$\hsic$ & Signal magnitude $\theta_0=\|\Psi_0\|_{\mathcal H_{kl}}$; its square is HSIC. \\
\midrule
$\partial_j\varphi_l$ & Derivative of the residual feature map with respect to residual coordinate $j$. \\
\midrule
$\widehat{\cdot}$, $\check{\cdot}$, ${\cdot}^{(-k)}$ & Estimated object, cross-fitted object, and object fitted without fold $k$. \\
\bottomrule
\end{tabularx}
\end{center}

This table records only notation used across the paper. Section-specific notation is introduced again, locally, in the appendix section where it is used.

\subsection{Background on semiparametric efficiency theory}
\label{sec: review_sp_stat}

\textbf{Pathwise differentiability for Hilbert-valued parameter.} Let $\mathcal{P}$ be a collection of distributions defined on a common complete metric space $(\mathcal{Z}, \mathcal{B})$. Let $\mathcal{H}$ denote a reproducing kernel Hilbert space over a set $\mathcal{T}$ and $\nu : \mathcal{P} \rightarrow \mathcal{H}$ a parameter whose value is to be estimated. We let the feature map of $\mathcal{H}$ be denoted as $\phi$. The parameter $\nu$ is said to be pathwise differentiable at $P\in \mathcal{P}$ if and only if there exists a continuous linear operator $\dot{\nu}_P : \dot{\mathcal{P}}_P \rightarrow \mathcal{H}$ such that for all parametric submodels $\{P_{\epsilon}: \epsilon \in [0,\delta)\}$ with score function $s$ and centred at $P$ with $P_0 = P$, we have
\begin{align*}
    \left\|\nu(P_{\epsilon}) - \nu(P) - \epsilon \dot{\nu}_P(s)\right\|_{\mathcal{H}} = o(\epsilon). 
\end{align*}
The operator $\dot{\nu}_P$ is called the local parameter of $\nu$ at $P$ and its Hermitian adjoint $\dot{\nu}_P^{\ast}$ is called the efficient influence operator. We refer the reader to \citet[Section 2.2]{luedtke2024one} and \Cref{sec: review_sp_stat} in the Appendix for a full definition of parametric submodel and score function. 

\textbf{Hilbert-valued one-step estimator.} In the notation of the above paragraph, we define $\phi_P : \mathcal{Z} \to \mathcal{H}$ as follows for each $t\in \mathcal{T}$:
\begin{align*}
    \phi_P(z)(t) = \dot{\nu}_P^{\ast}(\phi(t))(z), \qquad \text{$P$-a.s. $z$}.
\end{align*}
If $\phi_P$ is square integrable in the sense of $\|\phi_P\|_{L^2(P;\mathcal{H})} < \infty$, then we say that $\phi_P$ is the EIF of $\nu$. In this case, we can then construct a one-step estimator of the form $\nu(\widehat{P}) + P_n \phi_{\widehat{P}}$, where $P_n$ denotes the sample average and $\widehat{P}$ is an initial data-driven estimate of the data-generating distribution $P_0$. 

\subsection{Background on reproducing kernel Hilbert spaces}
\begin{asst}[Regularity assumptions on $l$]
\label{ass:regularity_l_appendix}
The kernel $l$ satisfies the following conditions.
\begin{enumerate}[label=(K\arabic*), leftmargin=2.4em]
    \item $l$ is symmetric, positive definite, and bounded on the diagonal:
    \[
        \kappa_0^2 := \sup_{u\in\R^d} l(u,u)<\infty.
    \]
    \item For every pair of multi-indices $\alpha,\beta\in\N_0^d$ with $|\alpha|\le 2$ and $|\beta|\le 2$, the derivative $\partial_1^\alpha\partial_2^\beta l$ exists and is continuous on $\R^d\times\R^d$.
    \item The first and second diagonal derivative seminorms are finite:
    \begin{align*}
        \kappa_1^2
        &:= \sup_{u\in\R^d}\sup_{\normllm{a}=1}
        \partial_{1,a}\partial_{2,a}l(u,u)<\infty, \\
        \kappa_2^2
        &:= \sup_{u\in\R^d}\sup_{\normllm{a}=\normllm{b}=1}
        \partial_{1,a}\partial_{1,b}\partial_{2,a}\partial_{2,b}l(u,u)<\infty.
    \end{align*}
\end{enumerate}
\end{asst}

Derivatives with respect to the first and second arguments of $l$ are denoted by $\partial_1$ and $\partial_2$.  For multi-indices $\alpha,\beta\in\N_0^d$, write
\[
    \partial_1^\alpha \partial_2^\beta l(u,v)
    := \frac{\partial^{|\alpha|+|\beta|}}{\partial u_1^{\alpha_1}\cdots\partial u_d^{\alpha_d}\partial v_1^{\beta_1}\cdots\partial v_d^{\beta_d}}l(u,v),
\]
whenever this derivative exists.  For directions $a,b,c,e\in\R^d$, write, for example,
\[
    \partial_{1,a}\partial_{2,b}l(u,v)
    := \sum_{j,r=1}^d a_j b_r\,\partial_{1,j}\partial_{2,r}l(u,v).
\]

\begin{lemma}[Feature-map consequences]\label{lem:kernel-to-feature}
Let $\phi(u):=l(\cdot,u)$ be the canonical feature map into $\cH_l$.  
Under \Cref{ass:regularity_l_appendix}, $\phi$ is twice continuously Fr\'echet differentiable and, for all
$u,v\in\R^d$ and $a,b\in\R^d$,
\[
    \ipllm{D\phi(u)[a]}{D\phi(v)[b]}_{\cH_l}
    =
    \partial_{1,a}\partial_{2,b}l(u,v),
\]
and
\[
    \ipllm{D^2\phi(u)[a,b]}{D^2\phi(v)[a,b]}_{\cH_l}
    =
    \partial_{1,a}\partial_{1,b}\partial_{2,a}\partial_{2,b}l(u,v).
\]
Consequently,
\[
    \sup_u\|\phi(u)\|_{\cH_l}\le \kappa_0,\qquad
    \sup_u\|D\phi(u)\|_{\opllm}\le \kappa_1,\qquad
    \sup_u\|D^2\phi(u)\|_{\opllm}\le \kappa_2.
\]
Hence, for all $u,v,h\in\R^d$,
\[
    \|\phi(u)-\phi(v)\|_{\cH_l}\le \kappa_1\|u-v\|,
\]
\[
    \|D\phi(u)-D\phi(v)\|_{\opllm}\le \kappa_2\|u-v\|,
\]
and
\[
    \|\phi(u+h)-\phi(u)-D\phi(u)[h]\|_{\cH_l}
    \le \frac{\kappa_2}{2}\|h\|^2.
\]
In coordinates,
\[
    \ipllm{\partial_j\phi(u)}{\partial_r\phi(v)}_{\cH_l}
    =
    \partial_{1,j}\partial_{2,r}l(u,v).
\]
\end{lemma}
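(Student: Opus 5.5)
The plan is to work entirely through the reproducing property of $\mathcal H_l$, applied to derivatives of the kernel. I would first show that $\phi$ is Fréchet differentiable with $D\phi(u)[a]=\partial_{2,a}l(\cdot,u)$, and then handle the second derivative, the norm bounds, and the Taylor estimates in that order.

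\textbf{First derivative.} By (K2), the function $\partial_{2,a}l(\cdot,u)$ lies in $\mathcal H_l$. This is \citet[Lemma 4.34, Corollary 4.36]{steinwart2008support}, which gives $\partial_{2,j}l(\cdot,u)\in\mathcal H_l$ together with the reproducing identity $\langle f,\partial_{2,j}l(\cdot,u)\rangle=\partial_j f(u)$ for all $f\in\mathcal H_l$. Applying this twice yields
\[
\langle \partial_{2,a}l(\cdot,u),\partial_{2,b}l(\cdot,v)\rangle_{\mathcal H_l}=\partial_{1,a}\partial_{2,b}l(u,v).
\]
To upgrade from directional to Fréchet differentiability, I expand the squared norm
$\|\phi(u+h)-\phi(u)-\partial_{2,h}l(\cdot,u)\|^2$. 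Every term is a kernel evaluation or a derivative of $l$, so the expression equals a combination of $l$, $\partial_1 l$, $\partial_2 l$ and $\partial_1\partial_2 l$ at the points $u$ and $u+h$. A Taylor expansion of $l$ in both arguments, using continuity of the mixed derivatives of order at most $(1,1)$, shows this is $o(\|h\|^2)$. Continuity of $D\phi$ follows the same way from
\[
\|D\phi(u)[a]-D\phi(v)[a]\|^2=\partial_{1,a}\partial_{2,a}l(u,u)-2\partial_{1,a}\partial_{2,a}l(u,v)+\partial_{1,a}\partial_{2,a}l(v,v)\to0 .
\]
For the second derivative I repeat the argument with $\partial_{2}^{\beta}l(\cdot,u)$ for $|\beta|=2$. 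Steinwart–Christmann's result covers derivatives up to order $m$ when $\partial_1^\alpha\partial_2^\alpha l$ exists and is continuous for $|\alpha|\le m$, and (K2) supplies $m=2$. This gives $D^2\phi(u)[a,b]=\partial_{2,a}\partial_{2,b}l(\cdot,u)$ and the stated inner-product formula. The coordinate identity is the special case $a=e_j$, $b=e_r$.

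\textbf{Norm bounds.} The bound $\|\phi(u)\|^2=l(u,u)\le\kappa_0^2$ is immediate. Next, $\|D\phi(u)[a]\|^2=\partial_{1,a}\partial_{2,a}l(u,u)\le\kappa_1^2$ for unit $a$, so $\|D\phi(u)\|_{\mathrm{op}}\le\kappa_1$. For the second derivative, $\|D^2\phi(u)[a,b]\|^2\le\kappa_2^2$ for unit $a,b$. I read the operator norm of the bilinear map as the supremum over unit $a,b$, which gives $\|D^2\phi(u)\|_{\mathrm{op}}\le\kappa_2$ directly. The polarization subtlety of symmetric versus general bilinear forms does not arise under this convention, since the supremum in $\kappa_2$ already ranges over independent unit $a,b$.

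\textbf{Lipschitz and Taylor bounds.} These follow from the mean value inequality for Banach-valued $C^1$ maps:
\[
\|\phi(u)-\phi(v)\|\le\sup_t\|D\phi(v+t(u-v))\|_{\mathrm{op}}\|u-v\|\le\kappa_1\|u-v\|,
\]
and the same argument applied to $D\phi$ gives the $\kappa_2$-Lipschitz bound on $D\phi$. For the quadratic remainder I write
\[
\phi(u+h)-\phi(u)-D\phi(u)[h]=\int_0^1(1-t)\,D^2\phi(u+th)[h,h]\,dt,
\]
a Bochner integral that is well defined by continuity of $D^2\phi$. Its norm is at most $\int_0^1(1-t)\kappa_2\|h\|^2\,dt=\tfrac{\kappa_2}{2}\|h\|^2$.

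\textbf{Main obstacle.} The delicate step is establishing Fréchet rather than merely Gâteaux differentiability, together with continuity of $D^2\phi$. I would handle it as above by reducing every $\mathcal H_l$-norm to derivatives of $l$ and invoking their continuity from (K2), so that no derivative of order higher than the assumed ones is ever needed.
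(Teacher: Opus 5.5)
Your proposal is correct and follows essentially the same route as the paper. Both use the RKHS derivative-reproducing theorem of Steinwart--Christmann up to order two for differentiability and the inner-product identities, both evaluate those identities on the diagonal $u=v$ to get the three norm bounds, and both use the Banach-valued mean value inequality and the integral form of Taylor's theorem for the Lipschitz and remainder estimates. Your explicit squared-norm expansion, which upgrades directional to Fr\'echet differentiability using only continuity of $\partial_1\partial_2 l$, is extra detail the paper leaves implicit; it also follows from continuity of the partial derivatives on the finite-dimensional domain $\R^d$.
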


\begin{proof}
The differentiability and inner-product identities follow from the RKHS differentiability theorem for kernels with continuous mixed derivatives, applied up to order two. The bounds are obtained by evaluating the identities at $u=v$:
\[
\|D\phi(u)[a]\|_{\cH_l}^2
=
\partial_{1,a}\partial_{2,a}l(u,u)
\le \kappa_1^2\|a\|^2,
\]
and
\[
\|D^2\phi(u)[a,b]\|_{\cH_l}^2
=
\partial_{1,a}\partial_{1,b}\partial_{2,a}\partial_{2,b}l(u,u)
\le \kappa_2^2\|a\|^2\|b\|^2.
\]
The Lipschitz and Taylor bounds follow from the Banach-valued fundamental theorem of calculus.
\end{proof}

\begin{prop}[Gaussian and Sobolev kernels satisfy \Cref{ass:regularity_l_appendix}]
The following standard kernels satisfy \Cref{ass:regularity_l_appendix}.
\begin{enumerate}[label=(\roman*), leftmargin=2.2em]
    \item For the Gaussian kernel
    \[
        l_\sigma(u,v)=\exp\{-\|u-v\|^2/(2\sigma^2)\},
    \]
    one may take
    \[
        \kappa_0=1,\qquad
        \kappa_1=\sigma^{-1},\qquad
        \kappa_2=\sqrt{3}\,\sigma^{-2}.
    \]

    \item For the Sobolev/Bessel-potential kernel
    \[
        l_{s,\rho}(u,v)
        =
        \frac{1}{(2\pi)^d}
        \int_{\R^d}e^{i\omega^\top(u-v)}
        (1+\rho^2\|\omega\|^2)^{-s}\,\ddllm\omega,
    \]
    \Cref{ass:regularity_l_appendix} holds whenever
    \[
        s>d/2+2.
    \]
    Equivalently, Mat\'ern kernels with smoothness \(\nu>2\) satisfy
    \Cref{ass:regularity_l_appendix} when their spectral density is proportional to
    \[
        (\lambda^2+\|\omega\|^2)^{-(\nu+d/2)}.
    \]
\end{enumerate}
\end{prop}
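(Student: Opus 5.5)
Both kernels are translation invariant, $l(u,v)=g(u-v)$, so I will reduce every condition in \Cref{ass:regularity_l_appendix} to properties of the single function $g$ at the origin. For multi-indices $\alpha,\beta$ the chain rule gives
\[
\partial_1^\alpha\partial_2^\beta l(u,v)=(-1)^{|\beta|}(\partial^{\alpha+\beta}g)(u-v).
\]
Hence (K2) follows once $g$ is $C^4$. On the diagonal $u=v$ the quantities in (K3) become:
\begin{itemize}
\item $-\partial_a^2 g(0)$ for $\kappa_1^2$;
\item $\partial_a^2\partial_b^2 g(0)$ for $\kappa_2^2$, since the sign is $(-1)^2=1$.
\end{itemize}
Both are independent of $u$, so the suprema over $u$ are trivial and only the suprema over unit directions $a,b$ need computing. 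For (K1), symmetry is immediate from $g$ being even. Positive definiteness follows from Bochner's theorem, because in both cases $g$ is the Fourier transform of a positive integrable density. Finally $\kappa_0^2=g(0)$.

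\textbf{Gaussian kernel.} Here $g(z)=\exp\{-\|z\|^2/(2\sigma^2)\}$ is real-analytic, so (K2) holds and $\kappa_0=1$. I will read the derivatives at $0$ off the Taylor expansion
\[
g(z)=1-\frac{\|z\|^2}{2\sigma^2}+\frac{\|z\|^4}{8\sigma^4}+O(\|z\|^6).
\]
\begin{itemize}
\item The quadratic term gives $-\partial_a^2 g(0)=\|a\|^2/\sigma^2$, so $\kappa_1=\sigma^{-1}$.
\item For the quartic term, the symmetric fourth derivative of $(z^\top z)^2$ in directions $a,a,b,b$ equals $8\{\|a\|^2\|b\|^2+2(a^\top b)^2\}$. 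This yields
\[
\partial_a^2\partial_b^2 g(0)=\frac{\|a\|^2\|b\|^2+2(a^\top b)^2}{\sigma^4}.
\]
By Cauchy--Schwarz this is at most $3/\sigma^4$ for unit $a,b$, with equality when $a=b$. Hence $\kappa_2=\sqrt3\,\sigma^{-2}$.
\end{itemize}

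\textbf{Sobolev kernel.} Write $\widehat g(\omega)=(1+\rho^2\|\omega\|^2)^{-s}$. The main step is justifying differentiation under the integral up to total order four. Each derivative brings down a factor $i\omega_j$, so I need
\[
\int\|\omega\|^4(1+\rho^2\|\omega\|^2)^{-s}\,\ddllm\omega<\infty .
\]
In polar coordinates the integrand behaves like $r^{4-2s+d-1}$ at infinity, so the integral is finite exactly when $2s-4>d$, that is, $s>d/2+2$. This condition also gives integrability of $\widehat g$ itself, which is needed for Bochner and for $\kappa_0^2=g(0)<\infty$.

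Dominated convergence then shows that $\partial^\gamma g$ exists, is continuous and is bounded for $|\gamma|\le4$, which gives (K2). It also gives the diagonal values
\[
-\partial_a^2 g(0)=(2\pi)^{-d}\int (a^\top\omega)^2\widehat g\,\ddllm\omega,
\qquad
\partial_a^2\partial_b^2 g(0)=(2\pi)^{-d}\int (a^\top\omega)^2(b^\top\omega)^2\widehat g\,\ddllm\omega.
\]
Bounding $(a^\top\omega)^2(b^\top\omega)^2\le\|\omega\|^4$ for unit vectors shows both are finite uniformly in $a,b$, which is (K3).

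\textbf{Matérn equivalence.} Note that $(\lambda^2+\|\omega\|^2)^{-(\nu+d/2)}=\lambda^{-2s}(1+\lambda^{-2}\|\omega\|^2)^{-s}$ with $s=\nu+d/2$ and $\rho=\lambda^{-1}$. The Matérn case is therefore a rescaled Sobolev kernel, and $s>d/2+2$ is equivalent to $\nu>2$.

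I expect no real obstacle. The only points needing care are the fourth-derivative combinatorics for the constant $\sqrt3$, and the dominated-convergence step that makes $s>d/2+2$ the sharp condition.
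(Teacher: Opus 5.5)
Your proposal is correct and follows essentially the same route as the paper. For the Gaussian you compute the diagonal mixed derivatives directly, getting $\sigma^{-2}$ and $\sigma^{-4}\{1+2(a^\top b)^2\}\le 3\sigma^{-4}$; for the Sobolev/Mat\'ern case you differentiate under the integral, which is justified by $\int\|\omega\|^4(1+\rho^2\|\omega\|^2)^{-s}\,\mathrm{d}\omega<\infty$, i.e.\ $s>d/2+2$, and your translation-invariance reduction and Taylor-expansion bookkeeping only spell out what the paper calls ``direct differentiation.''
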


\begin{proof}
For the Gaussian kernel, smoothness and boundedness are immediate. Direct differentiation gives, for unit $a,b$,
\[
    \partial_{1,a}\partial_{2,a}l_\sigma(u,u)=\sigma^{-2},
\]
and
\[
    \partial_{1,a}\partial_{1,b}\partial_{2,a}\partial_{2,b}l_\sigma(u,u)
    =
    \sigma^{-4}\{1+2(a^\top b)^2\}
    \le 3\sigma^{-4}.
\]

For the Sobolev kernel, positive definiteness follows from its nonnegative spectral density. Derivatives up to order two in each argument are obtained by differentiating under the integral. This is justified whenever
\[
    \int_{\R^d}\|\omega\|^4(1+\rho^2\|\omega\|^2)^{-s}\,\ddllm\omega<\infty,
\]
which is equivalent to $s>d/2+2$. The same fourth-moment bound gives $\kappa_1<\infty$ and $\kappa_2<\infty$. For the Mat\'ern family, $s=\nu+d/2$, so the condition is $\nu>2$.
\end{proof}

\section{Efficient influence function of the cross-covariance operator}
\label{sec:EIF_appendix}

We prove \Cref{thm:cco_eif} in the main text. In this work, this section is the only section where we do not impose \Cref{asst:nested_covariates}. This is because it is essentially for free to compute the EIF in the more general setting where $X$ does not have to be $W$-measurable. At the end of this section, we derive the simplification when $X$ is $W$-measurable, which recovers the statement of \Cref{thm:cco_eif}. 
\begin{proof}[Proof of \Cref{thm:cco_eif} in the general setting where we drop \Cref{asst:nested_covariates}]
Following a standard convention in semiparametric statistical theory, we use the subscript $0$ to denote any objects indexed by the true distribution $P_0$. We have 
\begin{align*}
    &\mathbb{E}_{P}\left[\widetilde{\phi}_{X,P}(X) \otimes \widetilde{\varphi}_{\xi,P}(W, Y)\right]\\
    &= \mathbb{E}_{P}[(\varphi_k(X) - \mathbb{E}_{P}[\varphi_k(X')])\otimes (\varphi_{\xi, P}(W, Y) - \mathbb{E}_{P}[\varphi_{\xi, P}(W', Y')])]\\
    &= \textcolor{purple}{\mathbb{E}_{P}[\varphi_k(X) \otimes \varphi_{\xi, P}(W, Y)]} - \textcolor{blue}{\mathbb{E}_{P}[\varphi_k(X)]} \otimes \textcolor{teal}{\mathbb{E}_{P}[\varphi_{\xi, P}(W,Y)]}. 
\end{align*}
We define the following RKHS-valued parameters on $\mathcal{M}$,
\begin{align*}
    \textcolor{purple}{\Psi_1(P)} &:= \mathbb{E}_P[\varphi_k(X)\otimes \varphi_l(Y - \mathbb{E}_P[Y\mid W])] = \mathbb{E}_P[\varphi_k(X)\otimes \varphi_{\xi, P}(W,Y)]\\
    \textcolor{blue}{\Psi_2(P)} &:= \mathbb{E}_P[\varphi_k(X)]\\
    \textcolor{teal}{\Psi_3(P)} &:= \mathbb{E}_P[\varphi_l(Y - \mathbb{E}_P[Y\mid W])] = \mathbb{E}_P[\varphi_{\xi, P}(W,Y)]\\
    \textcolor{violet}{\Psi(P)} &= \textcolor{purple}{\Psi_1(P)} - \textcolor{blue}{\Psi_2(P) \otimes \textcolor{teal}{\Psi_3(P)}}. 
\end{align*}
We write $\textcolor{purple}{\Psi_1(P)}$ using the computation graph of primitives
\begin{align*}
    \theta_1:  \mathcal{M}\times \emptyset &\rightarrow L^2(P_0 ; \mathbb{R}^{d_y})\\
    (P,) &\mapsto ((x,w,y)\mapsto y)\\
    \theta_2 : \mathcal{M}\times L^2(P_0;\mathbb{R}^{d_y}) &\rightarrow L^2(P_{0,W};\mathbb{R}^{d_y})\\
    (P, h_1)&\mapsto \mathbb{E}_P[h_1(X,W,Y)\mid W=w]\\
    \theta_3 : \mathcal{M}\times L^2(P_{0,W};\mathbb{R}^{d_y}) &\rightarrow L^2(P_{0};\mathbb{R}^{d_y})\\
    (P,h_2) &\mapsto ((x,w,y)\mapsto h_2(w))\\
    \theta_4 :  \mathcal{M}\times L^2(P_{0};\mathbb{R}^{d_y}) &\rightarrow L^2(P_{0};\mathbb{R}^{d_y})\\
    (P,h_3) &\mapsto ((x,w,y)\mapsto y - h_3(x,w,y))\\
    \theta_5 :  \mathcal{M}\times L^2(P_{0};\mathbb{R}^{d_y}) &\rightarrow L^2(P_{0};\mathcal{H}_{kl})\\
    (P,h_4) &\mapsto ((x,w,y)\mapsto \varphi_k(x) \otimes \varphi_{l}(h_4(x,w,y)))\\
    \theta_6 : \mathcal{M}\times L^2(P_{0};\mathcal{H}_{kl})  &\rightarrow \mathcal{H}_{kl}\\
    (P, h_5) &\mapsto \mathbb{E}_P[h_5(X,W,Y)].
\end{align*}
We let $h_1 = \theta_1(P,)$, $h_2 = \theta_2(P, h_1)$, $h_3 = \theta_3(P,h_2)$, $h_4 = \theta_4(P,h_3)$, $h_5 = \theta_5(P,h_4)$, $h_6 = \theta_6(P, h_5)$. Concretely, we have
\begin{align*}
    h_1(x,w,y) &= y\\
    h_2(w) &= \mathbb{E}_P[Y\mid W=w]\\
    h_3(x,w,y) &= h_2(w) = \mathbb{E}_P[Y\mid W=w]\\
    h_4(x,w,y) &= y - \mathbb{E}_P[Y\mid W=w]\\
    h_5(x,w,y) &= \varphi_k(x)\otimes \varphi_l (y - \mathbb{E}_P[Y\mid W=w])\\
    h_6 &= \mathbb{E}_P\left[\varphi_k(X)\otimes \varphi_l (Y - \mathbb{E}_P[Y\mid W])\right]. 
\end{align*}
We have $\textcolor{purple}{\Psi_1(P)} = h_6$. Note that all of the above primitives are contained in \citet[Table 1]{luedtke2025simplifying}. They are respectively \emph{constant map}, \emph{conditional mean}, \emph{lift to new domain}, \emph{pointwise operation}, \emph{pointwise operation} and \emph{marginal mean}. For $\textcolor{purple}{\Psi_1(P)}$, we initialize the adjoint variables $f_i = 0$ for $0\leq i\leq 5$ and let $f_6 \in \mathcal{H}_{kl}$ be user specified. We use the notation convention in \citet{luedtke2024one} to use subscript $0$ to denote objects associated with the true data generating process. We let $\dot{\mathcal{M}}_0$ denote the tangent space of the statistical model $\mathcal{M}$ at $P_0$, which we assume to be the space $L^2_0(P_0)$ of mean-zero square integrable functions. This corresponds to a nonparametric statistical model. We now describe the backpropagation procedure. 

\begin{enumerate}
    \item $j = 6$. $h_6$ is a \emph{marginal mean}. 
    We have that the pathwise derivative of $\theta_6$ at $(P_0, h_5)$ is given by
    \begin{align*}
        D[\theta_6](P_0, h_5) &: \dot{\mathcal{M}}_0 \oplus L^2(P_0;\mathcal{H}_{kl}) \rightarrow \mathcal{H}_{kl}. 
    \end{align*}
    We backpropagate $f_6$ through $\theta_6$ to $f_0, f_5$ as follows:
    \begin{align*}
        \begin{pmatrix}
            f_0\\ f_5
        \end{pmatrix} += D[\theta_6](P_0, h_5)^{\ast}(f_6) =  \begin{pmatrix}(x,w,y)\mapsto\langle f_6, h_5(x,w,y) - \mathbb{E}_0[h_5(X,W,Y)]\rangle_{\mathcal{H}_{kl}}\\
        (x,w,y)\mapsto f_6\end{pmatrix}. 
    \end{align*}
    \item $j = 5$. We have
    \begin{align*}
        D[\theta_5](P_0, h_4) &: \dot{\mathcal{M}}_0\oplus L^2(P_{0};\mathbb{R}^{d_y}) \rightarrow L^2(P_{0};\mathcal{H}_{kl}). 
    \end{align*}
    We backpropagate $f_5$ through $\theta_5$ to $f_4$ as follows:
    \begin{align*}
        \begin{pmatrix}
            f_0\\ f_4
        \end{pmatrix} +=  D[\theta_5](P_0, h_4)^{\ast}(f_5) = \begin{pmatrix}
            0\\
            (x,w,y)\mapsto (\langle f_5(x,w,y), \varphi_k(x)\otimes (\partial_j\varphi_{l})(h_4(x,w,y))\rangle_{\mathcal{H}_{kl}})_{j=1}^{d_y}
        \end{pmatrix},
    \end{align*}
    where $\partial_j\varphi_{l}$ refers to the $j$th-entry of the gradient of $\varphi_{l} : \mathbb{R}^{d_y}\to \mathcal{H}_l$. 
    \item $j = 4$. We have
    \begin{align*}
        D[\theta_4](P_0, h_3) : \dot{\mathcal{M}}_0 \oplus L^2(P_{0};\mathbb{R}^{d_y}) \to L^2(P_0;\mathbb{R}^{d_y}).
    \end{align*}
    We backpropagate $f_4$ through $\theta_4$ to $f_3$ as follows:
    \begin{align*}
        \begin{pmatrix}
            f_0\\ f_3
        \end{pmatrix} +=  D[\theta_4](P_0, h_3)^{\ast}(f_4) &=\begin{pmatrix}
            0\\
            (x,w,y) \mapsto -f_{4}(x,w,y)
        \end{pmatrix}.
    \end{align*}
    \item $j=3$. We have
    \begin{align*}
        D[\theta_3](P_0, h_2): \dot{\mathcal{M}}_0 \oplus L^2(P_{0,W};\mathbb{R}^{d_y}) \to L^2(P_0;\mathbb{R}^{d_y}). 
    \end{align*}
    We backpropagate $f_3$ through $\theta_3$ to $f_2$ as follows:
    \begin{align*}
        \begin{pmatrix}
            f_0\\ f_2
        \end{pmatrix} +=  D[\theta_3](P_0, h_2)^{\ast}(f_3) &= \begin{pmatrix}
            0\\
            w \mapsto \mathbb{E}_0[f_{3}(X,W,Y)\mid W=w]
        \end{pmatrix}.
    \end{align*}
    \item $j=2$. We have
    \begin{align*}
        D[\theta_2](P_0, h_1) : \dot{\mathcal{M}}_0 \oplus L^2(P_{0};\mathbb{R}^{d_y}) \to L^2(P_{0,W};\mathbb{R}^{d_y}).
    \end{align*}
    We backpropagate $f_2$ through $\theta_2$ to $f_1$ as follows:
    \begin{align*}
        \begin{pmatrix}
            f_0\\ f_1
        \end{pmatrix} +=  D[\theta_2](P_0, h_1)^{\ast}(f_2) = \begin{pmatrix}
            (x,w,y)\mapsto f_2(w)^{\top}(h_1(x,w,y) - \mathbb{E}_0[h_1(X,W,Y)\mid W=w])\\
            (x,w,y)\mapsto f_2(w)
        \end{pmatrix}. 
    \end{align*}
    \item $j=1$. We have 
    \begin{align*}
        D[\theta_1](P_0): \dot{\mathcal{M}}_0 \to L^2(P_0;\mathbb{R}^{d_y}).
    \end{align*}
    We backpropagate $f_1$ through $\theta_1$ to $f_0$ as follows:
    \begin{align*}
        f_0 += D[\theta_1](P_0)^{\ast}(f_1) =0. 
    \end{align*}
\end{enumerate}
We have 
\begin{align*}
    f_5(x,w,y) &= f_6\\
    f_4(x,w,y) &= (\langle f_6, \varphi_k(x) \otimes (\partial_j \varphi_l)(h_4(x,w,y))\rangle_{\cH_{kl}})_{j=1}^{d_y}\\
    f_3(x,w,y) &= -(\langle f_6, \varphi_k(x) \otimes (\partial_j \varphi_l)(h_4(x,w,y))\rangle_{\cH_{kl}})_{j=1}^{d_y}\\
    f_2(w) &= \left(-\mathbb{E}_0\left[\langle f_6, \varphi_k(X) \otimes (\partial_j \varphi_l)(h_4(X,W,Y))\rangle_{\cH_{kl}}\mid W=w\right]\right)_{j=1}^{d_y}\\
    f_1(x,w,y) &= \left(-\mathbb{E}_0\left[\langle f_6, \varphi_k(X) \otimes (\partial_j \varphi_l)(h_4(X,W,Y))\rangle_{\cH_{kl}}\mid W=w\right]\right)_{j=1}^{d_y}.
\end{align*}
Finally, summing the adjoint contributions for $f_0$ yields
\begin{align*}
    f_0(x,w,y) &= \langle f_6, h_5(x,w,y) - \mathbb{E}_0[h_5(X,W,Y)]\rangle_{\cH_{kl}}\\
                &+ f_2(w)^{\top}\left(h_1(x,w,y) - \mathbb{E}_0[h_1(X,W,Y)\mid W=w]\right)\\
                &= \langle f_6, \varphi_k(x)\otimes \varphi_l(y - \mathbb{E}_0[Y\mid W=w]) - \mathbb{E}_0\left[\varphi_k(X)\otimes \varphi_l(Y - \mathbb{E}_0[Y\mid W])\right]\rangle_{\cH_{kl}}\\
                &- \sum_{j=1}^{d_y}\mathbb{E}_0\left[\langle f_6, \varphi_k(X)\otimes (\partial_j \varphi_l)(Y - \mathbb{E}_0[Y\mid W])\rangle_{\cH_{kl}} \mid W=w\right](y_j - \mathbb{E}_0[Y_j\mid W=w])\\
                &= \langle f_6, \varphi_k(x)\otimes \varphi_l(y - \mathbb{E}_0[Y\mid W=w]) - \mathbb{E}_0\left[\varphi_k(X)\otimes \varphi_l(Y - \mathbb{E}_0[Y\mid W])\right]\rangle_{\cH_{kl}}\\
                &- \mathbb{E}_0\left[\langle f_6, \varphi_k(X)\otimes (\nabla \varphi_l)(Y - \mathbb{E}_0[Y\mid W])\rangle_{\cH_{kl}} \mid W=w\right]^{\top}(y - \mathbb{E}_0[Y\mid W=w]),
\end{align*}    
where $(\nabla \varphi_l)(Y - \mathbb{E}_0[Y\mid W])$ denotes a tuple in $\cH_l^{d_y}$, and the inner product with $f_6$ is understood componentwise.

Recall we defined $m_0(w)=\mathbb{E}_0[Y\mid W=w]$, $\xi_0(w,y)=y-m_0(w)$, and $\varphi_{\xi,0}(w,y)=\varphi_l\{\xi_0(w,y)\}$. We write $\mu_{X,0}:= \mathbb{E}_{0}[\varphi_k(X)]$ and $\mu_{\xi, 0}:= \mathbb{E}_0[\varphi_{\xi, 0}(W,Y)]$. For each $j = 1,\dots,d_y$, define
\begin{align*}
    v_j(w) &:= \mathbb{E}_0[(\partial_j\varphi_{l})(\xi_0(W,Y))\mid W=w]\in\mathcal{H}_l,\\
    a_j(w) &:= \mathbb{E}_0[\varphi_k(X)\otimes(\partial_j\varphi_{l})(\xi_0(W,Y))\mid W=w]\in\mathcal{H}_{kl}.
\end{align*}
By \citet[Theorem 1]{luedtke2025simplifying}, $\textcolor{purple}{\Psi_1(P)}$ is pathwise differentiable at $P_0$, and its efficient influence operator $\dot{\textcolor{purple}{\Psi}}_{1,0}^{\ast} : \mathcal{H}_{kl} \to \dot{\mathcal{M}}_0$ is given by 
\begin{align*}
    \dot{\textcolor{purple}{\Psi}}_{1,0}^{\ast}(f)(x,w,y) &= \left\langle f, \varphi_k(x)\otimes \varphi_{\xi, 0}(w,y) - \mathbb{E}_0[\varphi_k(X)\otimes \varphi_{\xi, 0}(W,Y)]\right\rangle_{\mathcal{H}_{kl}}\\
    &\quad - \sum_{j=1}^{d_y}\xi_{0,j}(w,y)\langle f,a_j(w)\rangle_{\mathcal{H}_{kl}}\\
    &= \langle f, \Gamma_1(x,w,y)\rangle_{\mathcal{H}_{kl}},
\end{align*}
where we define
\begin{align*}
    \Gamma_1(x,w,y) := \varphi_k(x)\otimes \varphi_{\xi, 0}(w,y) - \mathbb{E}_0[\varphi_k(X)\otimes \varphi_{\xi, 0}(W,Y)] - \sum_{j=1}^{d_y}\xi_{0,j}(w,y)a_j(w).
\end{align*}

Note $\textcolor{blue}{\Psi_2(P)}$ is a \emph{kernel mean embedding} (see \citet[C.2.5]{luedtke2025simplifying}), since 
\begin{align*}
    \textcolor{blue}{\Psi_2(P)} = \int_{\mathcal{X}} k_X(x,\cdot)P_X(dx).
\end{align*} 
By \citet[Lemma S11]{luedtke2025simplifying}, $\textcolor{blue}{\Psi_2(P)}$ is pathwise differentiable at $P_0$, and its efficient influence operator $\dot{\textcolor{blue}{\Psi}}^{\ast}_{2,0} : \mathcal{H}_{k}\to \dot{\mathcal{M}}_0$ is given by
\begin{align*}
    \dot{\textcolor{blue}{\Psi}}^{\ast}_{2,0}(f)(x,w,y) = f(x) - \mathbb{E}_0[f(X)] = \langle f, \Gamma_2(x)\rangle_{\mathcal{H}_k},
\end{align*}
where we define
\begin{align*}
    \Gamma_2(x):= \varphi_k(x) - \mu_{X,0}. 
\end{align*}
Note that $\textcolor{teal}{\Psi_3(P)}$ can be obtained by modifying the primitive $\theta_5$ of $\textcolor{purple}{\Psi_1(P)}$ to 
\begin{align*}
    \widetilde\theta_5 :  \mathcal{M}\times L^2(P_{0};\mathbb{R}^{d_y}) &\rightarrow L^2(P_{0};\mathcal{H}_{l})\\
    (P,h_4) &\mapsto ((x,w,y)\mapsto  \varphi_{l}(h_4(x,w,y))).
\end{align*}

We conclude that $\textcolor{teal}{\Psi_3(P)}$ is pathwise differentiable at $P_0$, with efficient influence operator $\dot{\textcolor{teal}{\Psi}}_{3,0}^{\ast} : \mathcal{H}_{l}\to \dot{\mathcal{M}}_0$ given by
\begin{align*}
    \dot{\textcolor{teal}{\Psi}}_{3,0}^{\ast}(f)(x,w,y) &= \langle f, \varphi_{\xi, 0}(w,y) - \mu_{\xi,0}\rangle_{\mathcal{H}_{l}} - \sum_{j=1}^{d_y}\xi_{0,j}(w,y)\langle f,v_j(w)\rangle_{\mathcal{H}_{l}}\\
    &= \left\langle f, \Gamma_3(x,w,y)\right\rangle_{\mathcal{H}_{l}},
\end{align*}
where we define
\begin{align*}
    \Gamma_3(x,w,y):=\varphi_{\xi, 0}(w,y) - \mu_{\xi,0} - \sum_{j=1}^{d_y}\xi_{0,j}(w,y)v_j(w). 
\end{align*}
By product rule, for $s\in L^2_0(P_0)$, we have
\begin{align*}
    \dot{\textcolor{violet}{\Psi}}_{0}[s] = \dot{\textcolor{purple}{\Psi}}_{1,0}[s] - \dot{\textcolor{blue}{\Psi}}_{2,0}[s] \otimes \mu_{\xi,0} - \mu_{X,0} \otimes \dot{\textcolor{teal}{\Psi}}_{3,0}[s] \in \mathcal{H}_{kl}.
\end{align*}
Let $f\in \mathcal{H}_{kl}$ be arbitrary. We have
\begin{align*}
    \langle f, \dot{\textcolor{violet}{\Psi}}_{0}[s]\rangle_{\mathcal{H}_{kl}} &= \langle \dot{\textcolor{violet}{\Psi}}_{0}^{\ast}f, s\rangle_{L^2(P_{0})}\\
    &= \langle f, \dot{\textcolor{purple}{\Psi}}_{1,0}[s] - \dot{\textcolor{blue}{\Psi}}_{2,0}[s] \otimes \mu_{\xi,0} - \mu_{X,0} \otimes \dot{\textcolor{teal}{\Psi}}_{3,0}[s]\rangle_{\mathcal{H}_{kl}}.
\end{align*}
We have, for all elementary tensors $u\otimes v\in \mathcal{H}_{kl}$, 
\begin{align*}
    \langle u\otimes v, \dot{\textcolor{blue}{\Psi}}_{2,0}[s] \otimes \mu_{\xi,0}\rangle_{\mathcal{H}_{kl}} &= \langle u, \dot{\textcolor{blue}{\Psi}}_{2,0}[s]\rangle_{\mathcal{H}_k} \langle v, \mu_{\xi,0}\rangle_{\mathcal{H}_l}\\
    &= \langle \dot{\textcolor{blue}{\Psi}}_{2,0}^{\ast}u,s\rangle_{L^2(P_{0})}\langle v, \mu_{\xi,0}\rangle_{\mathcal{H}_l}\\
    &= \langle \dot{\textcolor{blue}{\Psi}}_{2,0}^{\ast}(\mathrm{Id}_{\mathcal{H}_k}\otimes_{\mathrm{op}} \mu_{\xi,0}^{\ast})(u\otimes v), s\rangle_{L^2(P_0)}.
\end{align*}
Hence, extending by linearity and continuity, we have for all $f\in \mathcal{H}_{kl}$,
\begin{align*}
    \langle f, \dot{\textcolor{blue}{\Psi}}_{2,0}[s] \otimes \mu_{\xi,0}\rangle_{\mathcal{H}_{kl}} &= \langle \dot{\textcolor{blue}{\Psi}}_{2,0}^{\ast}(\mathrm{Id}_{\mathcal{H}_k}\otimes_{\mathrm{op}} \mu_{\xi,0}^{\ast})f, s\rangle_{L^2(P_0)}\\
    \langle f,\mu_{X,0} \otimes \dot{\textcolor{teal}{\Psi}}_{3,0}[s]\rangle_{\mathcal{H}_{kl}} &= \langle \dot{\textcolor{teal}{\Psi}}_{3,0}^{\ast}( \mu_{X,0}^{\ast}\otimes_{\mathrm{op}}\mathrm{Id}_{\mathcal{H}_l})f, s\rangle_{L^2(P_0)}.
\end{align*}
Therefore, 
\begin{align*}
    \dot{\textcolor{violet}{\Psi}}_{0}^{\ast}(f)(x,w,y) &= \dot{\textcolor{purple}{\Psi}}_{1,0}^{\ast}(f)(x,w,y) - \dot{\textcolor{blue}{\Psi}}_{2,0}^{\ast}(\mathrm{Id}_{\mathcal{H}_k}\otimes_{\mathrm{op}} \mu_{\xi,0}^{\ast})f(x,w,y) - \dot{\textcolor{teal}{\Psi}}_{3,0}^{\ast}( \mu_{X,0}^{\ast}\otimes_{\mathrm{op}}\mathrm{Id}_{\mathcal{H}_l})f(x,w,y)\\
    &= \langle f, \Gamma_1(x,w,y)\rangle_{\mathcal{H}_{kl}} - \langle (\mathrm{Id}_{\mathcal{H}_k}\otimes_{\mathrm{op}} \mu_{\xi,0}^{\ast})f, \Gamma_2(x)\rangle_{\mathcal{H}_k} - \langle ( \mu_{X,0}^{\ast}\otimes_{\mathrm{op}}\mathrm{Id}_{\mathcal{H}_l})f, \Gamma_3(x,w,y)\rangle_{\mathcal{H}_l}\\
    &= \langle f, \Gamma_1(x,w,y) - (\mathrm{Id}_{\mathcal{H}_k}\otimes_{\mathrm{op}} \mu_{\xi,0}^{\ast})^{\ast}\Gamma_2(x) - ( \mu_{X,0}^{\ast}\otimes_{\mathrm{op}}\mathrm{Id}_{\mathcal{H}_l})^{\ast}\Gamma_3(x,w,y)\rangle_{\mathcal{H}_{kl}}. 
\end{align*}
Assuming that $\Gamma_1(x,w,y) - (\mathrm{Id}_{\mathcal{H}_k}\otimes_{\mathrm{op}} \mu_{\xi,0}^{\ast})^{\ast}\Gamma_2(x) - ( \mu_{X,0}^{\ast}\otimes_{\mathrm{op}}\mathrm{Id}_{\mathcal{H}_l})^{\ast}\Gamma_3(x,w,y)$ belongs to $L^2_0(P_0;\mathcal{H}_{kl})$, the parameter $\textcolor{violet}{\Psi}$ has an EIF at $P_0$ in the sense of \citet{luedtke2024one}, given by
\begin{align*}
    \phi_0(x,w,y) = \Gamma_1(x,w,y) - (\mathrm{Id}_{\mathcal{H}_k}\otimes_{\mathrm{op}} \mu_{\xi,0}^{\ast})^{\ast}\Gamma_2(x) - ( \mu_{X,0}^{\ast}\otimes_{\mathrm{op}}\mathrm{Id}_{\mathcal{H}_l})^{\ast}\Gamma_3(x,w,y). 
\end{align*}
We compute
\begin{align*}
    (\mathrm{Id}_{\mathcal{H}_k}\otimes_{\mathrm{op}} \mu_{\xi,0}^{\ast})^{\ast}\Gamma_2(x) &= (\mathrm{Id}_{\mathcal{H}_k}\otimes_{\mathrm{op}} \mu_{\xi,0}^{\ast})^{\ast}(\varphi_k(x) - \mu_{X,0})\\
    &= (\varphi_k(x) - \mu_{X,0})\otimes \mu_{\xi,0}\\
    &= \widetilde{\phi}_{X,0}(x)\otimes \mu_{\xi,0},
\end{align*}
where the second line can be verified by a simple adjoint identity. Similarly, we have
\begin{align*}
    ( \mu_{X,0}^{\ast}\otimes_{\mathrm{op}}\mathrm{Id}_{\mathcal{H}_l})^{\ast}\Gamma_3(x,w,y) &= ( \mu_{X,0}^{\ast}\otimes_{\mathrm{op}}\mathrm{Id}_{\mathcal{H}_l})^{\ast}\left(\varphi_{\xi, 0}(w,y) - \mu_{\xi,0} - \sum_{j=1}^{d_y}\xi_{0,j}(w,y)v_j(w)\right)\\
    &= \mu_{X,0}\otimes \left(\varphi_{\xi, 0}(w,y) - \mu_{\xi,0} - \sum_{j=1}^{d_y}\xi_{0,j}(w,y)v_j(w)\right). 
\end{align*}
Hence we have
\begin{align*}
    \mathcal{H}_{kl}\ni\phi_0(x,w,y) &= \varphi_k(x)\otimes \varphi_{\xi, 0}(w,y) - \mathbb{E}_0[\varphi_k(X)\otimes \varphi_{\xi, 0}(W,Y)] - \sum_{j=1}^{d_y}\xi_{0,j}(w,y)a_j(w)\\
    &\quad - \widetilde{\phi}_{X,0}(x)\otimes \mu_{\xi,0} - \mu_{X,0}\otimes \left(\varphi_{\xi, 0}(w,y) - \mu_{\xi,0} - \sum_{j=1}^{d_y}\xi_{0,j}(w,y)v_j(w)\right).
\end{align*}
Define
\begin{align*}
    F_j(w) &:= a_j(w)-\mu_{X,0}\otimes v_j(w)\\
    &= \mathbb{E}_0[\{\varphi_k(X)-\mu_{X,0}\}\otimes(\partial_j\varphi_l)(\xi_0(W,Y))\mid W=w]\in\mathcal{H}_{kl}.
\end{align*}
Since
\begin{align*}
    \textcolor{violet}{\Psi}_0 = \mathbb{E}_0[\varphi_k(X)\otimes \varphi_{\xi,0}(W,Y)]-\mu_{X,0}\otimes\mu_{\xi,0},
\end{align*}
we obtain
\begin{align*}
    \phi_0(x,w,y) = \widetilde{\phi}_{X,0}(x)\otimes\left(\varphi_{\xi, 0}(w,y)-\mu_{\xi,0}\right)-\sum_{j=1}^{d_y}\xi_{0,j}(w,y)F_j(w)-\textcolor{violet}{\Psi}_0. 
\end{align*}
\end{proof}

\section{Asymptotic linearity of the two-fold cross-fitted estimator}
\label{sec: appendix_one_step}
Fix $P\in\mathcal M$. Recall that
\[
m_P(w)=\mathbb E_P[Y\mid W=w],
\qquad
\xi_P(w,y)=y-m_P(w),
\]
and
\[
\mu_{X,P}=\mathbb E_P[\varphi_k(X)],
\qquad
\mu_{\xi,P}=\mathbb E_P[\varphi_l\{\xi_P(W,Y)\}].
\]
For $j=1,\ldots,d_y$, define
\[
v_{P,j}(w)
\doteq
\mathbb E_P[
(\partial_j\varphi_l)\{\xi_P(W,Y)\}
\mid W=w]
\in\mathcal H_l,
\qquad
v_P=(v_{P,1},\ldots,v_{P,d_y}).
\]
Set
\[
\eta_P=(m_P,v_P,\mu_{X,P},\mu_{\xi,P}).
\]
For $\eta=(m,v,\mu_X,\mu_\xi)$, define
\[
D_\eta(x,w,y)
\doteq
\{\varphi_k(x)-\mu_X\}
\otimes
\left[
\varphi_l\{y-m(w)\}-\mu_\xi
-
\sum_{j=1}^{d_y}\{y_j-m_j(w)\}v_j(w)
\right].
\]
Under \Cref{asst:nested_covariates}, the EIF in \Cref{thm:cco_eif} satisfies
\[
\phi_0(o)=D_{\eta_0}(o)-\Psi_0.
\]

Let $I_1,I_2$ be an equal split of $[n]$, and define
\[
\mathbb P_{n,r}\doteq \frac2n\sum_{i\in I_r}\delta_{o_i},
\qquad r=1,2.
\]
Let
\[
\widehat\eta_r
=
(\widehat m_r,\widehat v_r,\widehat\mu_{X,r},\widehat\mu_{\xi,r}),
\]
be fitted on $I_r$, where
\[
\widehat\mu_{X,r}
=
\mathbb P_{n,r}\varphi_k(X),
\qquad
\widehat\mu_{\xi,r}
=
\mathbb P_{n,r}\varphi_l\{Y-\widehat m_r(W)\}.
\]
The two-fold cross-fitted estimator is
\[
\check\Psi_n
\doteq
\frac12\{
\mathbb P_{n,1}D_{\widehat\eta_2}
+
\mathbb P_{n,2}D_{\widehat\eta_1}
\}.
\]

\begin{lemma}[Foldwise $L^2$-consistency of the estimated non-centered EIF]
\label{lem:foldwise_D_consistency}
Suppose \Cref{asst:kernel_boundedness,asst:asst_1,asst:nested_covariates,asst: differentiability_kernel_y,asst: boundedness_reg_fn} hold. Let
$\widehat\eta_r=(\widehat m_r,\widehat v_r,\widehat\mu_{X,r},\widehat\mu_{\xi,r})$, $r=1,2$, and define
\[
D_\eta(x,w,y)
\doteq
\{\varphi_k(x)-\mu_X\}
\otimes
\left[
\varphi_l\{y-m(w)\}-\mu_\xi
-\sum_{j=1}^{d_y}\{y_j-m_j(w)\}v_j(w)
\right].
\]
If, for $r=1,2$,
\[
\|\widehat m_r-m_0\|_{L^\infty(P_{0,W})}
+
\|\widehat m_r-m_0\|_{L^2(P_{0,W})}
+
\|\widehat v_r-v_0\|_{L^2(P_{0,W})}
=o_p(1),
\]
then
\[
\|\widehat\mu_{X,r}-\mu_{X,0}\|_{\mathcal H_k}=O_p(n^{-1/2}),
\qquad
\|\widehat\mu_{\xi,r}-\mu_{\xi,0}\|_{\mathcal H_l}=o_p(1),
\]
and
\[
\|D_{\widehat\eta_r}-D_{\eta_0}\|_{L^2(P_0;\mathcal H_{kl})}=o_p(1).
\]
\end{lemma}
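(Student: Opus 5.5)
The plan is to treat the three conclusions in order. Each estimated nuisance is conditionally fixed given its fitting fold, and the resulting error bounds feed into a term-by-term decomposition of $D_{\widehat\eta_r}-D_{\eta_0}$.

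\textbf{Step 1: the two mean embeddings.} The bound $\|\widehat\mu_{X,r}-\mu_{X,0}\|_{\mathcal H_k}=O_p(n^{-1/2})$ follows from a Hilbert-space variance calculation. The summands $\varphi_k(x_i)-\mu_{X,0}$ are i.i.d., centered, and bounded by $2\kappa_k$ (\Cref{asst:kernel_boundedness}). Hence $\mathbb E\|\widehat\mu_{X,r}-\mu_{X,0}\|^2\le 4\kappa_k^2\cdot 2/n$, and Markov's inequality gives the rate. For $\widehat\mu_{\xi,r}$ we split
\[
\widehat\mu_{\xi,r}-\mu_{\xi,0}=\mathbb P_{n,r}\bigl[\varphi_l\{Y-\widehat m_r(W)\}-\varphi_l\{\xi_0(W,Y)\}\bigr]+(\mathbb P_{n,r}-P_0)\varphi_l\{\xi_0\}.
\]
The second term is $O_p(n^{-1/2})$ by the same variance argument. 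For the first term, the Lipschitz bound of \Cref{asst: differentiability_kernel_y} gives a norm of at most $L\,\mathbb P_{n,r}\|\widehat m_r(W)-m_0(W)\|\le L\|\widehat m_r-m_0\|_{L^\infty(P_{0,W})}$. This inequality holds almost surely on the sample, since $w_i\in\mathrm{supp}(P_{0,W})$ a.s.; in fact, this is exactly why the sup-norm hypothesis is used rather than $L^2$, because $\widehat m_r$ is fitted on the same fold. The first term is therefore $o_p(1)$.

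\textbf{Step 2: decomposing $D_{\widehat\eta_r}-D_{\eta_0}$.} Write $A_\eta(x)=\varphi_k(x)-\mu_X$ and $B_\eta(w,y)=\varphi_l\{y-m(w)\}-\mu_\xi-\sum_j\{y_j-m_j(w)\}v_j(w)$. Then
\[
D_{\widehat\eta}-D_{\eta_0}=(A_{\widehat\eta}-A_{\eta_0})\otimes B_{\widehat\eta}+A_{\eta_0}\otimes(B_{\widehat\eta}-B_{\eta_0}),
\]
and $\|a\otimes b\|=\|a\|\|b\|$. The first piece equals $(\mu_{X,0}-\widehat\mu_{X,r})\otimes B_{\widehat\eta}$. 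Its $L^2(P_0)$ norm is $\|\widehat\mu_{X,r}-\mu_{X,0}\|\cdot\|B_{\widehat\eta}\|_{L^2(P_0)}$, so it suffices that $\|B_{\widehat\eta}\|_{L^2}=O_p(1)$; this follows from the remaining steps together with $\|B_{\eta_0}\|_{L^2}<\infty$. Since $\|A_{\eta_0}\|\le 2\kappa_k$, the second piece reduces to showing $\|B_{\widehat\eta}-B_{\eta_0}\|_{L^2(P_0;\mathcal H_l)}=o_p(1)$, where the $L^2(P_0)$ norm is taken conditionally on the fitting fold, with $O$ an independent copy.

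\textbf{Step 3: the residual-feature term, and the main obstacle.} Expand
\[
B_{\widehat\eta}-B_{\eta_0}=[\varphi_l(\widehat\xi)-\varphi_l(\xi_0)]-(\widehat\mu_\xi-\mu_{\xi,0})-\sum_j\bigl[\widehat\xi_j(\widehat v_j-v_{0,j})+(\widehat\xi_j-\xi_{0,j})v_{0,j}\bigr],
\]
where $\widehat\xi_j-\xi_{0,j}=-\widehat\delta_j(W)$. The terms bound as follows.
\begin{itemize}
\item The first term is at most $L\|\widehat\delta\|_{L^2}$ by the Lipschitz bound.
\item The second term is constant and handled by Step 1.
\item In the last term, $\|v_{0,j}(w)\|\le\sup\|\partial_j\varphi_l\|<\infty$ by \Cref{asst:asst_1}, so $\widehat\delta_j v_{0,j}$ is $O(\|\widehat\delta\|_{L^2})$.
\end{itemize}
The delicate term is $\widehat\xi_j(\widehat v_j-v_{0,j})$, a product of a residual with unbounded support and an $L^2$-small nuisance error. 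This is where \Cref{asst: boundedness_reg_fn} enters. Conditioning on $W$ and using $|\widehat\xi_j|\le|\xi_{0,j}|+|\widehat\delta_j(W)|$,
\[
\mathbb E_0\bigl[\widehat\xi_j^2\|\widehat v_j(W)-v_{0,j}(W)\|^2\bigr]\le 2\bigl(C+\|\widehat\delta\|_{L^\infty}^2\bigr)\|\widehat v_j-v_{0,j}\|_{L^2(P_{0,W})}^2 .
\]
This is $o_p(1)$ because $\|\widehat\delta\|_{L^\infty}=o_p(1)$, so the factor $C+\|\widehat\delta\|_{L^\infty}^2$ stays bounded in probability. The step relies on the conditional second-moment bound being uniform in $w$, which lets us factor the conditional expectation; it is the only nontrivial point. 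Collecting the bounds yields $\|D_{\widehat\eta_r}-D_{\eta_0}\|_{L^2(P_0;\mathcal H_{kl})}=o_p(1)$ for each $r$.
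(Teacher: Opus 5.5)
Apart from one step, this is the paper's proof. You use the same Lipschitz-plus-LLN bound for $\widehat\mu_{\xi,r}$ and the same product telescoping of $D_{\widehat\eta_r}-D_{\eta_0}$; the paper pairs $A_{\widehat\eta_r}-A_{\eta_0}$ with $M_{\eta_0}$, which spares it your extra $O_p(1)$ bound on $\|B_{\widehat\eta}\|_{L^2}$. You also use \Cref{asst: boundedness_reg_fn} the same way for the residual-weighted $v$-error; the paper splits it as $\xi_0^\top(\widehat v_r-v_0)+(\widehat m_r-m_0)^\top\widehat v_r$ rather than your $\widehat\xi^\top(\widehat v_r-v_0)-\widehat\delta^\top v_0$. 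In that $L^2(P_0)$ part, both arguments use the sup-norm hypothesis only at an independent copy $O$, where it is legitimate.

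The step that fails is your justification of the in-fold bound $\mathbb P_{n,r}\|\widehat m_r(W)-m_0(W)\|\le\|\widehat m_r-m_0\|_{L^\infty(P_{0,W})}$.

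\begin{itemize}
\item For a function chosen independently of $w_i$ this holds almost surely. But $\widehat m_r$ is fitted on $I_r$, and an essential supremum ignores $P_{0,W}$-null sets, such as the finite set $\{w_i:i\in I_r\}$ when $P_{0,W}$ is atomless. Knowing $w_i\in\operatorname{supp}(P_{0,W})$ does not help.
\item Concretely, take $\widehat m_r=m_0+c\,\mathbf{1}_{\{w_i:\,i\in I_r\}}$ with $c\neq0$, together with any $L^2$-consistent $\widehat v_r$. This satisfies every hypothesis of the lemma, with zero $L^\infty$ and $L^2$ error.
\item Yet $\widehat\mu_{\xi,r}=\mathbb P_{n,r}\varphi_l\{\xi_0-c\}$ converges to the embedding of the law of $\xi_0-c$, which differs from $\mu_{\xi,0}$ for characteristic $l$. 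Then $\|D_{\widehat\eta_r}-D_{\eta_0}\|_{L^2(P_0;\mathcal H_{kl})}$ does not tend to zero unless $\varphi_k(X)$ is a.s. constant.
\end{itemize}

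So the lemma as stated needs an extra ingredient. The paper's proof writes this bound without comment and has the same hole. Any one of the following repairs it:
\begin{itemize}
\item Your support argument is correct once $\widehat m_r-m_0$ has a continuous version (e.g. kernel ridge regression with a continuous kernel and continuous $m_0$). Then the $P_{0,W}$-essential supremum equals the supremum over $\operatorname{supp}(P_{0,W})$.
\item Assume the in-fold empirical rate $\|\widehat m_r-m_0\|_{L^2(\mathbb P_{n,r})}=o_p(1)$, as \Cref{lem:muxi_consistency} and \cref{eq:m_rate_empirical} do.
\item Compute $\widehat\mu_{\xi,r}$ on observations not used to fit $\widehat m_r$.
\end{itemize}
With that repair, the rest of your argument goes through.
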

\begin{proof}
The bound for $\widehat\mu_{X,r}$ follows from \Cref{asst:kernel_boundedness}. For $\widehat\mu_{\xi,r}$,
\[
\begin{aligned}
\|\widehat\mu_{\xi,r}-\mu_{\xi,0}\|_{\mathcal H_l}
&\leq
\left\|
\mathbb P_{n,r}
\{\varphi_l(Y-\widehat m_r(W))-\varphi_l(Y-m_0(W))\}
\right\|_{\mathcal H_l} \\
&\quad+
\left\|
\mathbb P_{n,r}\varphi_l(Y-m_0(W))
-
P_0\varphi_l(Y-m_0(W))
\right\|_{\mathcal H_l}.
\end{aligned}
\]
The first term is bounded by
$L\|\widehat m_r-m_0\|_{L^\infty(P_{0,W})}$, and the second is $o_p(1)$
by \Cref{asst:kernel_boundedness}. Hence
$\|\widehat\mu_{\xi,r}-\mu_{\xi,0}\|_{\mathcal H_l}=o_p(1)$.

Write
\[
A_{\eta}(X)\doteq \varphi_k(X)-\mu_X,
\qquad
M_{\eta}(O)\doteq
\varphi_l\{Y-m(W)\}-\mu_\xi
-\sum_{j=1}^{d_y}\{Y_j-m_j(W)\}v_j(W).
\]
Then
\[
D_{\widehat\eta_r}-D_{\eta_0}
=
(A_{\widehat\eta_r}-A_{\eta_0})\otimes M_{\eta_0}
+
A_{\widehat\eta_r}\otimes(M_{\widehat\eta_r}-M_{\eta_0}).
\]
By \Cref{asst:kernel_boundedness,asst:asst_1,asst: boundedness_reg_fn},
$M_{\eta_0}\in L^2(P_0;\mathcal H_l)$. Therefore
\[
\|(A_{\widehat\eta_r}-A_{\eta_0})\otimes M_{\eta_0}\|_{L^2(P_0)}
\lesssim
\|\widehat\mu_{X,r}-\mu_{X,0}\|_{\mathcal H_k}
=o_p(1).
\]
Moreover, $\|A_{\widehat\eta_r}\|_{\mathcal H_k}\leq 2\kappa_k$, and
\[
\begin{aligned}
\|M_{\widehat\eta_r}-M_{\eta_0}\|_{L^2(P_0)}
&\lesssim
\|\widehat m_r-m_0\|_{L^2(P_{0,W})}
+
\|\widehat\mu_{\xi,r}-\mu_{\xi,0}\|_{\mathcal H_l} \\
&\quad+
\|\xi_0^\top(\widehat v_r-v_0)\|_{L^2(P_0)}
+
\|(\widehat m_r-m_0)^\top \widehat v_r\|_{L^2(P_0)} .
\end{aligned}
\]
The third term is $o_p(1)$ by \Cref{asst: boundedness_reg_fn} and
$\|\widehat v_r-v_0\|_{L^2(P_{0,W})}=o_p(1)$. The last term is $o_p(1)$, since
$\widehat v_r=v_0+o_p(1)$ in $L^2(P_{0,W})$ and
$v_0\in L^\infty(P_{0,W};\mathcal H_l^{d_y})$ by \Cref{asst:asst_1}.
Thus $\|M_{\widehat\eta_r}-M_{\eta_0}\|_{L^2(P_0)}=o_p(1)$, proving the claim.
\end{proof}
\begin{lemma}[Upper bound for the remainder term in each fold]
\label{lem:foldwise_drift_bound}
Suppose \Cref{asst:kernel_boundedness,asst:asst_1,asst:nested_covariates,asst: differentiability_kernel_y} hold. For $\eta=(m,v,\mu_X,\mu_\xi)$, define
\[
R_0(\eta)\doteq P_0D_\eta-\Psi_0.
\]
Then, for $r=1,2$,
\[
\begin{aligned}
\|R_0(\widehat\eta_r)\|_{\mathcal H_{kl}}
&\lesssim
\|\widehat\mu_{X,r}-\mu_{X,0}\|_{\mathcal H_k}
\|\widehat\mu_{\xi,r}-\mu_{\xi,0}\|_{\mathcal H_l} \\
&\quad+
\|\widehat m_r-m_0\|_{L^2(P_{0,W})}^2 \\
&\quad+
\|\widehat m_r-m_0\|_{L^2(P_{0,W})}
\|\widehat v_r-v_0\|_{L^2(P_{0,W})}.
\end{aligned}
\]
Consequently, if
\[
\|\widehat m_r-m_0\|_{L^2(P_{0,W})}^2
+
\|\widehat m_r-m_0\|_{L^2(P_{0,W})}
\|\widehat v_r-v_0\|_{L^2(P_{0,W})}
=o_p(n^{-1/2}),
\]
then $R_0(\widehat\eta_r)=o_p(n^{-1/2})$.
\end{lemma}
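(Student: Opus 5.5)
The plan is to expand $R_0(\eta)=P_0D_\eta-\Psi_0$ algebraically for a generic $\eta=(m,v,\mu_X,\mu_\xi)$. Write $\delta\doteq m-m_0$ and $\widetilde\varphi_{k,0}(x)=\varphi_k(x)-\mu_{X,0}$. Every term should then be either exactly zero by a conditional-mean argument, or a product of two nuisance errors. Applied with $\eta=\widehat\eta_r$, this gives the stated bound. Since $\widehat\eta_r$ is fitted on the other fold, the expectation $P_0$ treats $\widehat\eta_r$ as fixed, so it suffices to prove a deterministic inequality for fixed $\eta$.

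\textbf{Step 1 (split the $X$-factor).} Write $\varphi_k(X)-\mu_X=\widetilde\varphi_{k,0}(X)+(\mu_{X,0}-\mu_X)$ and $M_\eta(O)=\varphi_l\{\xi_0-\delta(W)\}-\mu_\xi-\sum_j\{\xi_{0,j}-\delta_j(W)\}v_j(W)$. Then
\[
P_0D_\eta=\mathbb E_0[\widetilde\varphi_{k,0}(X)\otimes M_\eta(O)]+(\mu_{X,0}-\mu_X)\otimes \mathbb E_0[M_\eta(O)].
\]
In the first term, the constant $\mu_\xi$ drops out because $\mathbb E_0\widetilde\varphi_{k,0}(X)=0$. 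The term $\sum_j\xi_{0,j}v_j(W)$ also vanishes: by \Cref{asst:nested_covariates}, $\widetilde\varphi_{k,0}(X)\otimes v_j(W)$ is $W$-measurable and $\mathbb E_0[\xi_{0}\mid W]=0$. Bochner integrability is guaranteed by \Cref{asst:kernel_boundedness,asst:asst_1} and the conditional second moment of $\xi_0$.

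\textbf{Step 2 (first-order cancellation).} Subtract $\Psi_0=\mathbb E_0[\widetilde\varphi_{k,0}(X)\otimes\varphi_l(\xi_0)]$ from the first term. The difference is $\mathbb E_0[\widetilde\varphi_{k,0}(X)\otimes T(O)]$, where
\[
T=\varphi_l(\xi_0-\delta)-\varphi_l(\xi_0)+D\varphi_l(\xi_0)[\delta]-D\varphi_l(\xi_0)[\delta]+\sum_j\delta_j v_j .
\]
Conditioning on $W$ (again using that $X$ is $W$-measurable) replaces $D\varphi_l(\xi_0)[\delta]=\sum_j\delta_j\partial_j\varphi_l(\xi_0)$ by $\sum_j\delta_j v_{0,j}(W)$. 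We are then left with two pieces:
\begin{itemize}
\item the second-order Taylor term $\varphi_l(\xi_0-\delta)-\varphi_l(\xi_0)+D\varphi_l(\xi_0)[\delta]$, bounded in norm by $\tfrac{L_2}{2}\|\delta(W)\|^2$ via \Cref{asst: differentiability_kernel_y};
\item the cross term $\sum_j\delta_j(W)\{v_j(W)-v_{0,j}(W)\}$.
\end{itemize}
Since $\|\widetilde\varphi_{k,0}\|\le 2\kappa_k$, Jensen's inequality and Cauchy--Schwarz give the bound $\lesssim\|\delta\|_{L^2}^2+\|\delta\|_{L^2}\|v-v_0\|_{L^2}$.

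\textbf{Step 3 (the mean-embedding term).} Apply the same computation to $\mathbb E_0[M_\eta]$. It equals
\[
(\mu_{\xi,0}-\mu_\xi)+\mathbb E_0[\varphi_l(\xi_0-\delta)-\varphi_l(\xi_0)+\textstyle\sum_j\delta_jv_j],
\]
and by the same Taylor-plus-conditioning argument the bracketed expectation is $O(\|\delta\|_{L^2}^2+\|\delta\|_{L^2}\|v-v_0\|_{L^2})$. Multiplying by $\mu_{X,0}-\mu_X$, whose norm is bounded by $2\kappa_k$, produces the product $\|\mu_X-\mu_{X,0}\|\,\|\mu_\xi-\mu_{\xi,0}\|$ plus terms already of the stated form. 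Collecting the pieces gives the displayed inequality.

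The consequence follows by combining three facts. First, $\|\widehat\mu_{X,r}-\mu_{X,0}\|=O_p(n^{-1/2})$ and $\|\widehat\mu_{\xi,r}-\mu_{\xi,0}\|=o_p(1)$ by \Cref{lem:foldwise_D_consistency}. Second, their product is therefore $o_p(n^{-1/2})$. Third, the remaining terms are $o_p(n^{-1/2})$ by hypothesis.

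I expect the main obstacle to be the justification in Step 2 of exchanging conditional expectation with the Hilbert-valued tensor product and the derivative. Concretely, one must show $\mathbb E_0[\widetilde\varphi_{k,0}(X)\otimes\partial_j\varphi_l(\xi_0)\delta_j(W)]=\mathbb E_0[\widetilde\varphi_{k,0}(X)\otimes v_{0,j}(W)\delta_j(W)]$. I would handle this via the pull-out property of Bochner conditional expectations for $W$-measurable bounded operators $h\mapsto \widetilde\varphi_{k,0}(X)\delta_j(W)\otimes h$, together with the uniform bound on $\partial_j\varphi_l$ from \Cref{asst:asst_1}.
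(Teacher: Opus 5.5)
Your proposal is correct and follows essentially the same route as the paper's proof. Both arguments:
\begin{itemize}
\item kill the $\sum_j\xi_{0,j}v_j$ term using $\mathbb E_0[\xi_0\mid W]=0$ and the $W$-measurability of $X$;
\item add and subtract $\sum_j\delta_j(W)\partial_j\varphi_l(\xi_0)$ and condition on $W$ to turn it into $\sum_j\delta_j(W)v_{0,j}(W)$;
\item bound the Taylor remainder via $L_2$ and the term $\sum_j\delta_j(W)\{v_j(W)-v_{0,j}(W)\}$ by Cauchy--Schwarz;
\item invoke \Cref{lem:foldwise_D_consistency} for the product of the mean-embedding errors.
\end{itemize}
The only difference is cosmetic. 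The paper keeps $\varphi_k\{\pi(W)\}-\mu_X$ intact, isolates $(\mu_{X,0}-\mu_X)\otimes(\mu_{\xi,0}-\mu_\xi)$ exactly, and does the Taylor-plus-conditioning step once. You instead split off $\mu_{X,0}-\mu_X$, repeat that step for $\mathbb E_0[M_\eta]$, and absorb the extra term using $\|\mu_{X,0}-\mu_X\|_{\mathcal H_k}\le 2\kappa_k$.
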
\begin{proof}
Let $\delta=m-m_0$, $\xi_0=Y-m_0(W)$, and
$A_{\mu_X}(W)=\varphi_k\{\pi(W)\}-\mu_X$. By
\Cref{asst:nested_covariates}, $A_{\mu_X}(X)=A_{\mu_X}(W)$ almost surely.
Using $\mathbb E_0[\xi_0\mid W]=0$,
\[
\begin{aligned}
R_0(\eta)
&=
(\mu_{X,0}-\mu_X)\otimes(\mu_{\xi,0}-\mu_\xi) \\
&\quad+
P_0\left[
A_{\mu_X}(W)
\otimes
\{\varphi_l(\xi_0-\delta(W))-\varphi_l(\xi_0)\}
\right] \\
&\quad+
P_0\left[
A_{\mu_X}(W)
\otimes
\sum_{j=1}^{d_y}\delta_j(W)v_j(W)
\right].
\end{aligned}
\]
Add and subtract
$\sum_{j=1}^{d_y}\delta_j(W)(\partial_j\varphi_l)(\xi_0)$. Since
\[
v_{0,j}(W)
=
\mathbb E_0[(\partial_j\varphi_l)(\xi_0)\mid W],
\]
the linear terms combine to
\[
P_0\left[
A_{\mu_X}(W)
\otimes
\sum_{j=1}^{d_y}\delta_j(W)\{v_j(W)-v_{0,j}(W)\}
\right].
\]
The Taylor remainder is bounded by \Cref{asst: differentiability_kernel_y}:
\[
\left\|
\varphi_l(\xi_0-\delta)
-
\varphi_l(\xi_0)
+
\sum_{j=1}^{d_y}\delta_j(\partial_j\varphi_l)(\xi_0)
\right\|_{\mathcal H_l}
\leq
L_2\|\delta\|_{\mathbb R^{d_y}}^2.
\]
Since $\|A_{\mu_X}(W)\|_{\mathcal H_k}\leq 2\kappa_k$, Cauchy-Schwarz gives
\[
\|R_0(\eta)\|_{\mathcal H_{kl}}
\lesssim
\|\mu_{X,0}-\mu_X\|_{\mathcal H_k}\|\mu_{\xi,0}-\mu_\xi\|_{\mathcal H_l}
+
\|m-m_0\|_{L^2(P_{0,W})}^2
+
\|m-m_0\|_{L^2(P_{0,W})}\|v-v_0\|_{L^2(P_{0,W})}.
\]
Taking $\eta=\widehat\eta_r$ gives the result. The final statement follows from
$\|\widehat\mu_{X,r}-\mu_{X,0}\|_{\mathcal H_k}=O_p(n^{-1/2})$ and
$\|\widehat\mu_{\xi,r}-\mu_{\xi,0}\|_{\mathcal H_l}=o_p(1)$, as shown in
\Cref{lem:foldwise_D_consistency}.
\end{proof}

\textbf{Analysis of the cross-fitted one-step estimator.}
\begin{theorem}[Two-fold cross-fitted asymptotic linearity]
\label{thm:appendix_two_fold_cf_asymp_linearity}
Let $o_1,\ldots,o_n\sim P_0^n$, and let $\check\Psi_n$ be defined above. Suppose \Cref{asst:kernel_boundedness,asst:asst_1,asst:nested_covariates,asst: differentiability_kernel_y,asst: boundedness_reg_fn} hold. In place of \Cref{asst: consistency,asst: suff_cond_an}, assume that, for $r=1,2$,
\[
\|\widehat m_r-m_0\|_{L^\infty(P_{0,W})}
+
\|\widehat m_r-m_0\|_{L^2(P_{0,W})}
+
\|\widehat v_r-v_0\|_{L^2(P_{0,W})}
=o_p(1),
\]
and
\[
\|\widehat m_r-m_0\|_{L^2(P_{0,W})}^2
+
\|\widehat m_r-m_0\|_{L^2(P_{0,W})}
\|\widehat v_r-v_0\|_{L^2(P_{0,W})}
=o_p(n^{-1/2}).
\]
Then
\[
\check\Psi_n-\Psi_0
=
\mathbb P_n\phi_0+o_p(n^{-1/2}),
\]
where $\phi_0=D_{\eta_0}-\Psi_0$ is the EIF in \Cref{thm:cco_eif}. Hence
\[
\sqrt n(\check\Psi_n-\Psi_0)
=
\frac1{\sqrt n}\sum_{i=1}^n\phi_0(o_i)+o_p(1)
\rightsquigarrow \mathbb H
\quad\text{in }\mathcal H_{kl},
\]
where $\mathbb H$ is a mean-zero Gaussian element in $\cH_{kl}$ with
\[
\mathrm{Var}\{\langle \mathbb H,h\rangle_{\mathcal H_{kl}}\}
=
\mathbb E_0[\langle \phi_0(O),h\rangle_{\mathcal H_{kl}}^2],
\qquad h\in\mathcal H_{kl}.
\]
\end{theorem}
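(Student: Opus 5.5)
The plan is the standard one-step decomposition, applied fold by fold. For each fold $r$, with $r'$ the other fold, write
\[
\mathbb P_{n,r'}D_{\widehat\eta_r}-\Psi_0
=
(\mathbb P_{n,r'}-P_0)D_{\eta_0}
+(\mathbb P_{n,r'}-P_0)\bigl(D_{\widehat\eta_r}-D_{\eta_0}\bigr)
+R_0(\widehat\eta_r).
\]
Here $R_0(\eta)=P_0D_\eta-\Psi_0$ is as in \Cref{lem:foldwise_drift_bound}, and $P_0D_{\widehat\eta_r}$ means integration over a fresh observation with $\widehat\eta_r$ held fixed. Because $P_0D_{\eta_0}=\Psi_0$, the first term equals $(\mathbb P_{n,r'}-P_0)\phi_0$. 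Averaging the two folds gives
\[
\tfrac12\bigl\{(\mathbb P_{n,1}-P_0)\phi_0+(\mathbb P_{n,2}-P_0)\phi_0\bigr\}=\mathbb P_n\phi_0 ,
\]
using $P_0\phi_0=0$. It therefore suffices to show that the empirical-process term and the drift term are each $o_p(n^{-1/2})$ for each $r$.

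\textbf{Drift term.} This follows directly from \Cref{lem:foldwise_drift_bound}. The rate hypothesis in the theorem is exactly the condition in that lemma's final statement. The cross term $\|\widehat\mu_{X,r}-\mu_{X,0}\|\,\|\widehat\mu_{\xi,r}-\mu_{\xi,0}\|$ is $O_p(n^{-1/2})\,o_p(1)=o_p(n^{-1/2})$ by \Cref{lem:foldwise_D_consistency}. Hence $R_0(\widehat\eta_r)=o_p(n^{-1/2})$.

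\textbf{Empirical-process term.} This is the step that needs care, since the process is Hilbert-valued and $\widehat\eta_r$ is random. Condition on the fitting fold $I_r$. Then $\widehat\eta_r$ is fixed, and $G_i\doteq D_{\widehat\eta_r}(o_i)-D_{\eta_0}(o_i)$, $i\in I_{r'}$, are i.i.d. $\mathcal H_{kl}$-valued. One point must be checked first: $\widehat\mu_{X,r}$ and $\widehat\mu_{\xi,r}$ are built from fold $I_r$, so conditional independence holds. The $G_i$ are also Bochner square-integrable, because $\varphi_k,\varphi_l$ are bounded, the residual has a bounded conditional second moment by \Cref{asst: boundedness_reg_fn}, and the $\widehat v_r$ are in $L^2$. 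Orthogonality of centered i.i.d. terms in a Hilbert space then gives
\[
\mathbb E\Bigl[\bigl\|(\mathbb P_{n,r'}-P_0)G\bigr\|^2_{\mathcal H_{kl}}\,\Big|\,I_r\Bigr]\le \tfrac2n\,\|D_{\widehat\eta_r}-D_{\eta_0}\|^2_{L^2(P_0;\mathcal H_{kl})}.
\]
By \Cref{lem:foldwise_D_consistency} the right-hand side is $o_p(1)/n$. A conditional Chebyshev bound then shows the term is $o_p(n^{-1/2})$ unconditionally: for each $\epsilon$, bound the probability by $\mathbb E[\min(1,\epsilon^{-2}n\cdot\text{cond. var})]\to0$ using dominated convergence.

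\textbf{Weak convergence.} This gives $\sqrt n(\check\Psi_n-\Psi_0)=n^{-1/2}\sum_i\phi_0(o_i)+o_p(1)$. The $\phi_0(o_i)$ are i.i.d., mean zero, and satisfy $\mathbb E\|\phi_0\|^2<\infty$ by \Cref{thm:cco_eif}. The separable-Hilbert-space central limit theorem (e.g. via finite-dimensional convergence plus tightness from square integrability) gives convergence to a tight Gaussian $\mathbb H$ with the stated covariance. Slutsky's lemma in $\mathcal H_{kl}$ then handles the $o_p(1)$ remainder.
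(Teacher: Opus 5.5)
Your proposal is correct and follows the paper's proof. You use the same decomposition of $\check\Psi_n-\Psi_0$ into $\mathbb P_n\phi_0$, the foldwise empirical-process terms and the drift terms $R_0(\widehat\eta_r)$, handling the drift by \Cref{lem:foldwise_drift_bound} and the empirical-process term by \Cref{lem:foldwise_D_consistency} plus sample splitting. The only difference is that you write out the conditional Chebyshev argument, noting that $\widehat\mu_{X,r},\widehat\mu_{\xi,r}$ are built on the fitting fold, where the paper invokes this step in one line.
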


\begin{proof}[Proof of \Cref{thm:appendix_two_fold_cf_asymp_linearity}]
Set $R_0(\eta)\doteq P_0D_\eta-\Psi_0$. Since $D_{\eta_0}-\Psi_0=\phi_0$,
\[
\begin{aligned}
\check\Psi_n-\Psi_0
&=
\mathbb P_n\phi_0 \\
&\quad+
\frac12(\mathbb P_{n,1}-P_0)(D_{\widehat\eta_2}-D_{\eta_0})
+
\frac12(\mathbb P_{n,2}-P_0)(D_{\widehat\eta_1}-D_{\eta_0}) \\
&\quad+
\frac12\{R_0(\widehat\eta_1)+R_0(\widehat\eta_2)\}.
\end{aligned}
\]
By \Cref{lem:foldwise_D_consistency} and sample splitting, we can control the stochastic equicontinuity term as follows
\[
\begin{aligned}
&\frac12(\mathbb P_{n,1}-P_0)(D_{\widehat\eta_2}-D_{\eta_0})
+
\frac12(\mathbb P_{n,2}-P_0)(D_{\widehat\eta_1}-D_{\eta_0}) \\
&\qquad =
o_p(n^{-1/2}).
\end{aligned}
\]
By \Cref{lem:foldwise_drift_bound},
\[
R_0(\widehat\eta_1)+R_0(\widehat\eta_2)=o_p(n^{-1/2}).
\]
Therefore
\[
\check\Psi_n-\Psi_0
=
\mathbb P_n\phi_0
+
o_p(n^{-1/2}).
\]
The weak convergence to $\mathbb{H}$ follows from apply a Hilbert-valued CLT, using the fact that $\cH_{kl}$ is separable and $\phi_0\in L_0^2(P_0;\mathcal H_{kl})$.
\end{proof}

\section{Computation}
\label{sec: computation}

\begin{center}
\small
\setlength{\tabcolsep}{5pt}
\renewcommand{\arraystretch}{1.25}
\begin{tabularx}{\linewidth}{@{}>{\raggedright\arraybackslash}p{0.27\linewidth}>{\raggedright\arraybackslash}X@{}}
\toprule
Notation & Meaning \\
\midrule
$\widehat A_i,\widehat C_i,\widehat M_i$ & Centered $X$ feature, centered residual feature, and corrected residual feature in the sample-split formulas. \\
\midrule
$\widehat D_i$, $G_{ii'}$ & Tensor-product one-step summand and its Gram entry. \\
\midrule
$\mathcal D^k,\mathcal D^{-k},n_k$ & Fold $k$, its complement, and fold size in the cross-fitted computation. \\
\midrule
$\mathbb P_k,\mathbb P_k^{(b)}$ & Empirical law on fold $k$ and its bootstrap resample. \\
\midrule
$z=(w,y)$ & Abbreviation used for the residual-side arguments. \\
\midrule
$\mg K$, $\mg L^{(-k_1,-k_2)}$ & Stored Gram matrices for the $X$ kernel and fold-specific residual kernel. \\
\midrule
$\mg r_X,\mg r_\xi,\mg r_{\xi,v},\mg r_v$ & Stored centered $X$ term, residual-feature term, residual--$v$ cross term, and $v$-diagonal term. \\
\midrule
$\partial L$, $\Delta L$ & Stored first- and mixed-second-derivative tensors for the residual kernel. \\
\midrule
$\Xi^{-k}$, $W^{-k}$ & Stored residual-coordinate tensor and vvKRR weight tensor. \\
\midrule
$\mathfrak w_j^{-k}$ & Scalar vvKRR weight for label coordinate $j$ trained outside fold $k$. \\
\bottomrule
\end{tabularx}
\end{center}

We first explain the computations on the simpler setting instantiated in \Cref{sec: estimator}. Then we extend this example to the cross-fitted estimators, where the computations
are described in additional detail. Since our experiments focus on the setting where $X$ is $W$-measurable (\Cref{asst:nested_covariates}), we describe this procedure, to facilitate
the reproduction of our results, which follow this exact computational scheme. The extension to general $W$ primarily differs in the need to estimate the $\cH_{kl}$-valued $F_j$ instead of the $\cH_l$-valued $v_j$, which further leads to slightly different expansions of the inner products analysed below. 
\subsection{Sample-split computations}
 For \(i\in I_1\cup I_2\), define
\(
    \widehat\xi_i
    \doteq
    y_i-\widehat m(w_i).
\)
The residuals \(\widehat\xi_i\) are evaluated for all \(i\in I_1\cup I_2\). The mean embeddings are computed on \(I_2\):
\[
    \widehat\mu_X
    \doteq
    \frac{2}{n}\sum_{t\in I_2}\varphi_k(x_t),
    \qquad
    \widehat\mu_\xi
    \doteq
    \frac{2}{n}\sum_{t\in I_2}\varphi_l(\widehat\xi_t).
\]
Thus \(\widehat\xi_i\) is used both at evaluation points \(i\in I_1\) and at centering points \(t\in I_2\).
Let \(\widehat v_j\), \(j=1,\ldots,d_y\), be the remaining nuisance estimates fitted on \(I_2\). For \(i\in I_1\), define
\[
    \widehat A_i
    \doteq
    \varphi_k(x_i)-\widehat\mu_X,
    \qquad
    \widehat C_i
    \doteq
    \varphi_l(\widehat\xi_i)-\widehat\mu_\xi,
    \qquad
    \widehat M_i
    \doteq
    \widehat C_i-\sum_{j=1}^{d_y}\widehat\xi_{i,j}\widehat v_j(w_i).
\]
Then
\[
\widehat D_i\doteq\widehat D(o_i)=\widehat A_i\otimes\widehat M_i.
\]
Hence, the tensor-product inner product
factorizes:
\[
     G_{ii'}
     \doteq 
     \langle
        \widehat D_i,\widehat D_{i'}
    \rangle_{\mathcal H_{kl}}
    =
    \langle
        \widehat A_i,\widehat A_{i'}
    \rangle_{\mathcal H_k}
    \langle
        \widehat M_i,\widehat M_{i'}
    \rangle_{\mathcal H_l}.
\]

Thus \(\widehat Q_V= \|\widehat\Psi_n\|_{\mathcal H_{kl}}^2 = \frac4{n^2}\sum_{i,i'\in I_1}G_{ii'}\) can be computed using only inner products in
\(\mathcal H_k\) and \(\mathcal H_l\). The residual-side inner product is
\[
\begin{aligned}
    \left\langle
        \widehat M_i,\widehat M_{i'}
    \right\rangle_{\mathcal H_l}
    &=
    \left\langle
        \widehat C_i,\widehat C_{i'}
    \right\rangle_{\mathcal H_l}
    -
    \sum_{j=1}^{d_y}
    \widehat\xi_{i',j}
    \left\langle
        \widehat C_i,\widehat v_j(w_{i'})
    \right\rangle_{\mathcal H_l}
    \\
    &\quad
    -
    \sum_{j=1}^{d_y}
    \widehat\xi_{i,j}
    \left\langle
        \widehat v_j(w_i),\widehat C_{i'}
    \right\rangle_{\mathcal H_l}
    +
    \sum_{j,r=1}^{d_y}
    \widehat\xi_{i,j}\widehat\xi_{i',r}
    \left\langle
        \widehat v_j(w_i),\widehat v_r(w_{i'})
    \right\rangle_{\mathcal H_l}.
\end{aligned}
\]
The first factor is the centered \(k\)-Gram entry
\[
\begin{aligned}
    \left\langle
        \widehat A_i,\widehat A_{i'}
    \right\rangle_{\mathcal H_k}
    =
    k(x_i,x_{i'})
    -
    \frac{2}{n}\sum_{t\in I_2}k(x_i,x_t)
    -
    \frac{2}{n}\sum_{t\in I_2}k(x_{i'},x_t)
    +
    \frac{4}{n^2}
    \sum_{s,t\in I_2}
    k(x_s,x_t).
\end{aligned}
\]
Similarly,
\[
\begin{aligned}
    \left\langle
        \widehat C_i,\widehat C_{i'}
    \right\rangle_{\mathcal H_l}
    =
    l(\widehat\xi_i,\widehat\xi_{i'})
    -
    \frac{2}{n}\sum_{t\in I_2}l(\widehat\xi_i,\widehat\xi_t)
    -
    \frac{2}{n}\sum_{t\in I_2}l(\widehat\xi_{i'},\widehat\xi_t)
    +
    \frac{4}{n^2}
    \sum_{s,t\in I_2}
    l(\widehat\xi_s,\widehat\xi_t).
\end{aligned}
\]
It remains to compute the terms involving \(\widehat v_j\). Let
\(
    R_{j,t}
    \doteq
    (\partial_j\varphi_l)(\widehat\xi_t), t\in I_2.
\)
The vector-valued KRR estimator has the representer form \citep{carmeli2006vector}
\[
    \widehat v_j(w)
    =
    \sum_{t\in I_2}
    \omega_{j,t}(w)R_{j,t},
\]
where, writing \(Q=(q(w_s,w_t))_{s,t\in I_2}\) and
\(q_w=(q(w,w_t))_{t\in I_2}\),
\(
    \omega_j(w)^\top
    =
    q_w^\top
    \left(Q+\frac n2\lambda_{j,n}I\right)^{-1}.
\)
Therefore,
\[
\begin{aligned}
    \left\langle
        \widehat C_i,\widehat v_j(w)
    \right\rangle_{\mathcal H_l}
    &=
    \sum_{t\in I_2}
    \omega_{j,t}(w)
    \left[
        \partial_{2,j}l(\widehat\xi_i,\widehat\xi_t)
        -
        \frac{2}{n}
        \sum_{s\in I_2}
        \partial_{2,j}l(\widehat\xi_s,\widehat\xi_t)
    \right],
\end{aligned}
\]
where \(\partial_{2,j}l\) denotes the derivative of \(l\) with respect to the \(j\)th
coordinate of its second argument. Finally,
\(
    \left\langle
        \widehat v_j(w),\widehat v_r(w')
    \right\rangle_{\mathcal H_l}
    =
    \sum_{s,t\in I_2}
    \omega_{j,s}(w)\omega_{r,t}(w')
    \partial_{1,j}\partial_{2,r}
    l(\widehat\xi_s,\widehat\xi_t).
\)
Thus the statistic can be computed from evaluations of \(k\), \(l\), first derivatives
\(\partial_{2,j}l\), mixed second derivatives
\(\partial_{1,j}\partial_{2,r}l\), and the scalar KRR weight matrices associated with
the kernel \(q\).

\subsection{Cross-fitted estimators and confidence sets.}
We move on to explain how to compute the point estimates and confidence sets in full-generality for $K$-fold cross-fitted estimators.

\textbf{Usage of superscripts and subscripts.} For an object which depends on estimated nuisances, we always use \emph{superscripts} to denote the fold used to fit the nuisances. In addition, we always use \emph{subscripts} to denote data points at which an object is evaluated, or indices.

\textbf{Usage of colour.} \mb{blue} denotes input/data, \mr{red} denotes intermediate mathematical objects, and \mg{green} denotes quantities that are stored and actually computed. As a rule, we never express objects in \mr{red} in terms of objects in \mg{green}. For objects in \mr{red}, we denote evaluation at data points as function evaluation, namely we use $(\cdot)$ in normal size. In contrast, we view objects in \mg{green} as tensors, therefore we consider points at which they are evaluated as indices, which are written inside parentheses in subscript. When \mb{blue} and \mg{green} clash, \mg{green} takes precedence.

Data $\mb{\mathcal{D}}=(X_i, Z_i := (W_i, Y_i))_{i\in [n]}$, split into $K$ folds $\mb{\mathcal{D}^k}$ of sizes $n_k$ and with complements $\mb{\mathcal{D}^{-k}}$.
\begin{asst}[Equal fold sizes]
\label{asst: equal fold size}
    Assume $K$ divides $n$ and all folds are of size $n_k=n/K$.
\end{asst}
\begin{rem}
    This is assumed purely for simplicity of exposition. It suffices that fold sizes are such that, there exist constants $c<C$ for which $c<\frac{n_k}{n} <C$
    for all sufficiently large $n$.
\end{rem}
The cross-fitted estimator of $\Psi_0$ is,
\begin{align*}
    \mr{\check\Psi_n} \doteq K\inv \sum_{k=1}^{K} \P_{k}\left(\mr{\widehat{D}^{-k}}\right).
\end{align*}
where $\P_k$ is the empirical law of $\mb{\mathcal{D}^k}$ and $\mr{\widehat D^{-k}}$ is fitted entirely using $\mb{\mathcal{D}^{-k}}$.
Further, we will denote $\P_k^{(b)}$ to be empirical measures obtained by bootstrapping $\P_k$ \citep{efron1992bootstrap}.
We shall denote $\P_k^{(0)}\doteq \P_k$.
A minimal objective is to compute quantities of the form
\(\mg{\left\langle \P_{k_1}^{(b_1)}\widehat D^{-k_1},\P_{k_2}^{(b_2)}\widehat D^{-k_2}\right\rangle_{\cH_{kl}}}\), since all
quantities that we need to compute are expressed in terms of inner products of (possibly bootstrapped) estimators of $\Psi_0$.

WLOG we will work with $b_1=b_2=0$ for the rest of the derivations.
Write \(z=(w,y)\).
Recall that for a set of nuisances $\eta$,
\[
A_\eta(x)\doteq \varphi_k(x)-\mu_X,\qquad
M_\eta(z)\doteq \varphi_l(\xi(z))-\mu_\xi-\sum_{j=1}^{d_y}\xi_j(z)v_j(w),
\qquad
D_\eta (x,z) = A_\eta(x)\otimes M_\eta(z).
\]
Our estimator should be a plug-in version of these. That is,
\[
\mr{\widehat A^{-k}}(x)\doteq \varphi_k(x)-\mr{\widehat \mu_X^{-k}},
\qquad
\mr{\widehat M^{-k}}(z)\doteq
\varphi_l\left(\mr{\widehat\xi^{-k}}(z)\right)-\mr{\widehat{\mu}_\xi^{-k}}
-\sum_{j=1}^{d_y}\mr{\widehat{\xi}^{-k}_j}(z)\mr{\widehat{v}^{-k}_j}(w).
\]
\[
\mr{\widehat D^{-k}}(x,z)=\mr{\widehat A^{-k}}(x)\otimes\mr{\widehat M^{-k}}(z).
\]
Quantities, such as,
\begin{align*}
    \mg{\|\check\Psi_n\|_{\cH_{kl}}^2}
    = \frac{1}{K^2}\sum_{k_1, k_2}\mg{\lra{\P_{k_1}\widehat D^{-k_1},\P_{k_2}\widehat D^{-k_2}}_{\cH_{kl}}},
\end{align*}
expand as averages of the following V-statistics,
\begin{align}
\label{eq: v_stat_ip_decomposition}
    \mg{\lra{\P_{k_1}\widehat D^{-k_1},\P_{k_2}\widehat D^{-k_2}}_{\cH_{kl}}}
    &= \frac1{n_{k_1}n_{k_2}}\sum_{\mb{(x_1,z_1)\in\mathcal{D}^{k_1}}}\sum_{\mb{(x_2,z_2)\in\mathcal{D}^{k_2}}} \lra{\mr{\widehat D^{-k_1}}(\mb{x_1},\mb{z_1}),\mr{\widehat D^{-k_2}}(\mb{x_2},\mb{z_2})}_{\cH_{kl}}.
\end{align}
We will explain how to compute those summands.
Since $\langle f_1\otimes g_1, f_2\otimes g_2\rangle_{\cH_{kl}} = \lra{f_1,f_2}_{\cH_k}\lra{g_1,g_2}_{\cH_l}$ \citep{aubin2000applied}, we isolate two kinds of terms out of these inner products.
\subsection{$X$-terms}
We define the empirical kernel mean embedding
\begin{align*}
    \mr{\widehat\mu_X^{-k}}\doteq \frac1{n-n_k}\sum_{\mb{x\in\mathcal{X}^{-k}}} \varphi_k(\mb{x}).
\end{align*}
Thus, for any $x_1, x_2$, we have
\begin{align*}
    \mg{r_{X, (x_1, x_2)}^{(-k_1, -k_2)}} &\doteq \lra{\varphi_k(x_1)-\mr{\widehat\mu_X^{-k_1}},\varphi_k(x_2)-\mr{\widehat\mu_X^{-k_2}}}_{\cH_k}\\
    &= k(x_1,x_2)- \frac1{n-n_{k_1}}\sum_{\mb{x\in\mathcal{X}^{-{k_1}}}}k(\mb{x},x_2) -\frac1{n-n_{k_2}}\sum_{\mb{x\in\mathcal{X}^{-{k_2}}}}k(\mb{x},x_1) \\
    &+ \frac 1{(n-n_{k_1})(n-n_{k_2})}\sum_{\substack{\mb{x\in\mathcal{X}^{-{k_1}}}\\\mb{x'\in\mathcal{X}^{-k_2}}}} k(\mb{x}, \mb{x'})\\
    &= \mg{K_{(x_1, x_2)}} - \frac{1}{n-n_{k_1}}1^{\top}_{n-n_{k_1}}\mg{K_{(-k_1, x_2)}} - \frac{1}{n- n_{k_2}}\mg{K_{(x_1, -k_2)}}1_{n-n_{k_2}}\\ &+ \frac{1}{(n-n_{k_1})(n - n_{k_2})}1_{n-n_{k_1}}^{\top}\mg{K_{(-k_1, -k_2)}}1_{n-n_{k_2}},
\end{align*}
where $\mg{K}\in \mathbb{R}^{n\times n}$ denote the full-data kernel matrix on $X$, namely $\mg{K_{(x_1, x_2)}} = k(x_1, x_2)$, and $1_{d}$ denote the $d\times 1$ vector of all ones, for any $d\geq 1$. $\mg{K_{(-k_1, x_2)}} \in \mathbb{R}^{(n-n_{k_1}) \times 1}$ denotes the matrix of $k(x, x_2)$, where $x$ are the $X$ observations in fold \(\mathcal{X}^{-k_1}\). The remaining notations are defined analogously.

\subsection{$\xi$-terms}
We continue to write \(z=(w,y)\), e.g. \(z_1=(w_1,y_1)\).

Analogously to the $X$-setting, we define \[
\widehat\mu_\xi^{-k} \doteq \frac1{n-n_k}\sum_{z\in\cZ^{-k}}\varphi_l\left\{\widehat\xi^{-k}(z)\right\}.
\]
For any $z$, we partition \(\mr{\widehat M^{-k}}(z)\in \cH_l\) into two summands,
\begin{align*}
    \varphi_l(\mr{\widehat\xi^{-k}}(z))-\mr{\widehat{\mu}_\xi^{-k}}, \qquad -\sum_{j=1}^{d_y}\underbrace{\mr{\widehat{\xi}^{-k}_j}(z)}_{\mathbb{R}}\underbrace{\mr{\widehat{v}^{-k}_j}(w)}_{\cH_l}.
\end{align*}
That yields two kinds of ``diagonal terms'' and one type of ``cross-term'' in the inner product decomposition
\begin{align*}
    \left\langle \mr{\widehat M^{-k_1}}(z_1), \mr{\widehat M^{-k_2}}(z_2)
    \right\rangle_{\cH_l}.
\end{align*}

\textbf{$\varphi_l$-diagonal terms.} Let $\mg{L^{(-k_1,-k_2)}}\in\R^{n\times n}$,
have entries
\[
\mg{L^{(-k_1,-k_2)}_{(z_1, z_2)}} \doteq\lra{\varphi_l\left(\mr{\widehat\xi^{-k_1}}(z_1)\right),\varphi_l\left(\mr{\widehat\xi^{-k_2}}(z_2)\right)}_{\cH_l}=l\left(\mr{\widehat\xi^{-k_1}}(z_1),\mr{\widehat\xi^{-k_2}}(z_2)\right).
\]
Then exactly like for the $X$-terms, we have
\begin{align*}
    \mg{r_{\xi, (z_1, z_2)}^{(-k_1, -k_2)}} &\doteq \lra{\varphi_l\left(\mr{\widehat\xi^{-k_1}}(z_1)\right)-\mr{\widehat\mu_\xi^{-k_1}},\varphi_l\left(\mr{\widehat\xi^{-k_2}}(z_2)\right)-\mr{\widehat\mu_\xi^{-k_2}}}_{\cH_l} \\
    &=\mg{L^{(-k_1,-k_2)}_{(z_1,z_2)}}-\frac1{n-n_{k_1}} \1_{n-n_{k_1}}\Tt \mg{L^{(-k_1,-k_2)}_{(-k_1,z_2)}} - \frac1{n-n_{k_2}} \mg{L^{(-k_1,-k_2)}_{(z_1,-k_2)}}\1_{n-n_{k_2}}\\
    &+ \frac1{(n-n_{k_1})(n-n_{k_2})} \1_{n-n_{k_1}}\Tt\mg{L^{(-k_1,-k_2)}_{(-k_1,-k_2)}}\1_{n-n_{k_2}},
\end{align*}
where $\mg{r_{\xi}^{(-k_1, -k_2)}}\in \mathbb{R}^{n_{k_1}\times n_{k_2}}$.

\textbf{Cross-terms.} We define
\begin{align}
\label{eq: r_xi_v_defn}
    \mg{r_{\xi,v, (z_1,z_2) }^{(-k_1,-k_2)}}
    \doteq
    \lra{\varphi_l\left(\mr{\widehat\xi^{-k_1}}(z_1)\right)-\mr{\widehat\mu_{\xi}^{-k_1}},\sum_{j=1}^{d_y}\mr{\widehat\xi_j^{-k_2}}(z_2)\mr{\widehat v^{-k_2}_j}(w_2)}_{\cH_l},
\end{align}
and an analogous term with $1\leftrightarrow 2$.
Suppose that $\mr{\widehat{v}^{-k}_j}$ is the solution to a vector-valued kernel ridge regression with separable kernel \citep{li2024towards}
\[
    Q(w_1,w_2)=q(w_1,w_2)\mathrm{Id}_{\cH_l}.
\]
Then it can be written as
\begin{align}
\label{eq: vkrr_sol_gen}
    \mr{\widehat{v}_j^{-k}}(w)
    =
    \sum_{\mb{z'\in \mathcal{Z}^{-k}}}
    \mr{\mathfrak{w}_j^{-k}}(\mb{z'},w)
    (\partial_j\varphi_l)\!\left(\mr{\widehat{\xi}^{-k}}(\mb{z'})\right),
\end{align}
where $\mb{z'}=(\mb{w'},\mb{y'})$ and
\[
 \mr{\mathfrak{w}_j^{-k}}(\mb{z'},w)
 =
 \left[
   \left(Q^{-k}+(n-n_k)\lambda_{j,n}I\right)^{-1}
   {q}^{-k}(w)
 \right]_{\mb{z'}},
\]
with
\[
({Q}^{-k})_{\mb{z'},\mb{z''}}=q(\mb{w'},\mb{w''}),
\qquad
({q}^{-k}(w))_{\mb{z''}}=q(\mb{w''},w).
\]

\begin{rem}
The vvKRR fit above is only one way to estimate $v_j^{-k}$. Any regression method can be used
provided that the inner products appearing below can be computed. A particularly useful variant is to
center the regression targets. Let
\[
    S_j^{-k}(\mb{z'})
    :=
    (\partial_j\varphi_l)\!\left(\mr{\widehat{\xi}^{-k}}(\mb{z'})\right),
    \qquad
    \bar S_j^{-k}
    :=
    \frac{1}{n-n_k}\sum_{\mb{z'}\in\cZ^{-k}}S_j^{-k}(\mb{z'}).
\]
If vvKRR is fit to the centered labels $S_j^{-k}(\mb{z'})-\bar S_j^{-k}$, we add the mean back:
\[
    \mr{\widehat v_j^{-k}}(w)
    =
    \bar S_j^{-k}
    +
    \sum_{\mb{z'}\in\cZ^{-k}}
    \mr{\widetilde{\mathfrak w}_j^{-k}}(\mb{z'},w)
    \{S_j^{-k}(\mb{z'})-\bar S_j^{-k}\}.
\]
Equivalently,
\[
    \mr{\widehat v_j^{-k}}(w)
    =
    \sum_{\mb{z'}\in\cZ^{-k}}
    \mr{\mathfrak w_{j,\mathrm{cen}}^{-k}}(\mb{z'},w)S_j^{-k}(\mb{z'}),
\]
where
\[
    \mr{\mathfrak w_{j,\mathrm{cen}}^{-k}}(\mb{z'},w)
    =
    \mr{\widetilde{\mathfrak w}_j^{-k}}(\mb{z'},w)
    +
    \frac{1-\sum_{\mb{u}\in\cZ^{-k}}
    \mr{\widetilde{\mathfrak w}_j^{-k}}(\mb{u},w)}
    {n-n_k}.
\]
Thus the formulas below are unchanged after replacing $\mathfrak w_j^{-k}$ by
$\mathfrak w_{j,\mathrm{cen}}^{-k}$. Centering is particulary helpful under the $\Psi_0=0$ null in a setting where $W=X$. There $v_{0,j}$ is constant. The constant component is then estimated by the empirical
mean and is not penalized by the ridge regression.
\end{rem}

We define
\begin{align*}
    \mg{\partial L^{(-k_1,-k_2)}_{j, (z_1, z_2)}} = \left(\partial^1_j l\right)\left(\mr{\widehat{\xi}^{-k_1}}(z_1), \mr{\widehat{\xi}^{-k_2}}(z_2)\right),
\end{align*}
where $\mg{\partial L^{(-k_1, -k_2)}} \in \mathbb{R}^{d_y \times n \times n}$ is a tensor of rank $3$, and $\partial_j^{1}$ denotes the partial derivative with respect to the $j$-th component of the first $\mathbb{R}^{d_y}$-valued argument. It is not in general symmetric in its second and third coordinates. We also emphasize that $\partial L$ should be understood as a single symbol. Substituting Eq.~\eqref{eq: vkrr_sol_gen} into Eq.~\eqref{eq: r_xi_v_defn} yields
\begin{align*}
    &\mg{r_{\xi, v, (z_1, z_2)}^{(-k_1, -k_2)}}\\
    &= \lra{\varphi_l\left(\mr{\widehat\xi^{-k_1}}(z_1)\right)-\mr{\widehat\mu_{\xi}^{-k_1}}, \sum_{j=1}^{d_y}\sum_{\mb{z'\in \mathcal{Z}^{-k_2}}}\mr{\mathfrak{w}_j^{-k_2}}(\mb{z'},w_2)(\partial_j\varphi_l)\left(\mr{\widehat{\xi}^{-k_2}}(\mb{z'})\right)\mr{\widehat{\xi}_j^{-k_2}}(z_2)}_{\cH_l}\\
    &= \sum_{j=1}^{d_y}\sum_{\mb{z'\in \mathcal{Z}^{-k_2}}} \mr{\mathfrak{w}_j^{-k_2}}(\mb{z'},w_2)\mr{\widehat{\xi}_j^{-k_2}}(z_2) \lra{(\partial_j\varphi_l)\left(\mr{\widehat{\xi}^{-k_2}}(\mb{z'})\right), \varphi_l\left(\mr{\widehat\xi^{-k_1}}(z_1)\right)-\mr{\widehat\mu_{\xi}^{-k_1}}}_{\cH_l}\\
    &= \sum_{j=1}^{d_y}\sum_{\mb{z'\in \mathcal{Z}^{-k_2}}} \mr{\mathfrak{w}_j^{-k_2}}(\mb{z'},w_2)\mr{\widehat{\xi}_j^{-k_2}}(z_2) \left\{\mg{\partial L^{(-k_2,-k_1)}_{j, (\mb{z'}, z_1)}} - \frac{1}{n-n_{k_1}}\mg{\partial L^{(-k_2, -k_1)}_{j, (z', -k_1)}}^{\top}1_{n-n_{k_1}}\right\}.
\end{align*}
next, we define tensors storing the quantities needed to $\mr{\mathfrak{w}_j^{-k_2}}(\mb{z'},w_2)\mr{\widehat{\xi}_j^{-k_2}}(z_2)$. Define a rank $3$-tensor $\mg{W^{-k}}\in \mathbb{R}^{d_y\times (n-n_{k})\times n}$ and rank $2$-tensor $\mg{\Xi^{-k}}\in \mathbb{R}^{d_y\times n}$ by
\begin{align*}
    \mg{\Xi^{-k}_{j, z}} &\doteq \mr{\widehat{\xi}^{-k}_{j}}(z)\\
    \mg{W^{-k}_{j, (z',w)}} &\doteq \mr{\mathfrak{w}_j^{-k}}(\mb{z'},w).
\end{align*}
We can therefore write
\begin{align*}
    &\mg{r_{\xi, v, (z_1, z_2)}^{(-k_1, -k_2)}}\\
    &= \sum_{j=1}^{d_y}\sum_{\mb{z'\in \mathcal{Z}^{-k_2}}} \mg{W^{-k_2}_{j, (\mb{z'}, w_2)}}\mg{\Xi^{-k_2}_{j, z_2}}\left\{\mg{\partial L^{(-k_2,-k_1)}_{j, (z', z_1)}} - \frac{1}{n-n_{k_1}}\mg{\partial L^{(-k_2, -k_1)}_{j, (z', -k_1)}}^{\top}1_{n-n_{k_1}}\right\}.
\end{align*}
The above tensor contraction can be performed using one line of Einstein summation in NumPy or JAX.

\textbf{$v$-diagonal terms.} We define the rank-$4$ tensor $\mg{\Delta L^{(-k_1, -k_2)}} \in \mathbb{R}^{d_y\times d_y \times (n - n_{k_1})\times (n - n_{k_2})}$ as follows
\begin{align*}
    \mg{\Delta L^{(-k_1, -k_2)}_{j_1, j_2, (z', z'')}} = \partial_{j_1}^{1}\partial_{j_2}^2 l \left(\mr{\widehat{\xi}^{-k_1}(z')}, \mr{\widehat{\xi}^{-k_2}(z'')}\right).
\end{align*}
Therefore we have
\begin{align*}
    &\mg{r_{v, (z_1, z_2)}^{(-k_1, -k_2)}}\\
    &= \left\langle\sum_{j_1=1}^{d_y}\mr{\widehat\xi_{j_1}^{-k_1}}(z_1)\mr{\widehat v^{-k_1}_{j_1}}(w_1), \sum_{j_2=1}^{d_y}\mr{\widehat\xi_{j_2}^{-k_2}}(z_2)\mr{\widehat v^{-k_2}_{j_2}}(w_2) \right\rangle_{\mathcal{H}_l}\\
    &= \sum_{j_1=1}^{d_y}\sum_{j_2 = 1}^{d_y} \mg{\Xi_{j_1, z_1}^{-k_1} \Xi_{j_2, z_2}^{-k_2}}\sum_{\substack{\mb{z'\in \mathcal{Z}^{-k_1}}\\ \mb{z''\in \mathcal{Z}^{-k_2}}}}\mg{W^{-k_1}_{j_1, (z', w_1)} W^{-k_2}_{j_2, (z'', w_2)} \Delta L_{j_1, j_2, (z', z'')}^{(-k_1,-k_2)}},
\end{align*}
which can again be computed using Einstein summation.

For consistency with \Cref{sec: estimator}, define the fold-pair Gram entry
\[
\mg{G_{((x_1,z_1),(x_2,z_2))}^{(-k_1,-k_2)}}
\doteq
\mg{r_{X, (x_1, x_2)}^{(-k_1, -k_2)}}
\left\{
\mg{r_{\xi, (z_1, z_2)}^{(-k_1, -k_2)}}
- \mg{r_{\xi, v, (z_1, z_2)}^{(-k_1, -k_2)}}
- \mg{r_{\xi, v, (z_2, z_1)}^{(-k_2, -k_1)}}
+ \mg{r_{v, (z_1, z_2)}^{(-k_1, -k_2)}}
\right\}.
\]
Putting everything together, we have
\begin{align}
\label{eq: v_stat_all_green}
    &\mg{\lra{\P_{k_1}\widehat D^{-k_1},\P_{k_2}\widehat D^{-k_2}}_{\cH_{kl}}}\\
    &= \frac{1}{n_{k_1}n_{k_2}}\sum_{\substack{\mb{(x_1,z_1)\in \mathcal{D}^{k_1}}\\ \mb{(x_2,z_2)\in \mathcal{D}^{k_2}}}}\mg{G_{((x_1,z_1),(x_2,z_2))}^{(-k_1,-k_2)}}.
\end{align}

\section{Inference}
\label{sec:inference}

\begin{center}
\small
\setlength{\tabcolsep}{5pt}
\renewcommand{\arraystretch}{1.25}
\begin{tabularx}{\linewidth}{@{}>{\raggedright\arraybackslash}p{0.27\linewidth}>{\raggedright\arraybackslash}X@{}}
\toprule
Notation & Meaning \\
\midrule
$\mathcal O$, $\Omega_n$, $\Omega_0$ & Class of positive self-adjoint operators; empirical and population choices for the operator used to construct a confidence set. \\
\midrule
$w(h;\Omega)$, $\mathcal C_n(\zeta)$ & Quadratic form and resulting Hilbert-space confidence set. \\
\midrule
$\mathbb H$, $\mathbb H_n^{(b)}$ & Gaussian limit and bootstrap empirical-process draw. \\
\midrule
$\zeta_{1-\alpha}$, $\widehat\zeta_n$ & Population and bootstrap-estimated quantiles for the chosen confidence-set geometry. \\
\midrule
$\sigma^2$, $\widehat\sigma^2$ & Delta-method variance component and its plug-in estimator. \\
\midrule
$\check\Psi_n$ & Cross-fitted one-step estimator of \(\Psi_0\). \\
\midrule
$\widehat D^{-k}$ & Uncentered EIF estimate fitted without fold \(k\). \\
\midrule
$D_{\eta_0}$, $\phi_0$ & Population uncentered and centered EIFs, with \(\phi_0=D_{\eta_0}-\Psi_0\). \\
\midrule
$G_{ij}$ & Cross-fitted Gram entry $\langle\widehat D^{-k(i)}(O_i),\widehat D^{-k(j)}(O_j)\rangle_{\mathcal H_{kl}}$, where \(k(i)\) is the validation fold containing \(O_i\). \\
\midrule
$\widehat Q_V$, $\widehat Q_U$ & V-statistic and U-statistic center $\|\check\Psi_n\|_{\mathcal H_{kl}}^2$ and its diagonal-term-free analogue. \\
\bottomrule
\end{tabularx}
\end{center}

We build on \citet[Section 4.2]{luedtke2024one} to construct \((1-\alpha)\)-confidence sets centered at the one-step estimator. Let \(\mathcal O\) denote the set of continuous, self-adjoint, positive-definite linear operators from \(\mathcal H_{kl}\) to \(\mathcal H_{kl}\). For \(\Omega\in\mathcal O\), define
\[
    w(h;\Omega):=\langle h,\Omega h\rangle_{\mathcal H_{kl}}.
\]
Let \(\Omega_n\in\mathcal O\) be an empirical estimator of some possibly \(P_0\)-dependent operator \(\Omega_0\in\mathcal O\). For a threshold \(\zeta\geq 0\), define
\[
    \mathcal C_n(\zeta):=\left\{h\in\mathcal H_{kl}:w(\check\Psi_n-h;\Omega_n)\leq \frac{\zeta}{n}\right\}.
\]
A consistent estimator of the \((1-\alpha)\)-quantile \(\zeta_{1-\alpha}\) of \(w(\mathbb H;\Omega_0)\) is obtained by Efron's bootstrap \citep{efron1992bootstrap}. We instantiate the bootstrap in our notation. Here \(\mathbb P_k:=n_k^{-1}\sum_{i:k(i)=k}\delta_{O_i}\) is the empirical law of fold \(k\), and \(n_k=n/K\). For \(b\in[B]\) and \(k\in[K]\), draw
\[
    O_{1,k}^{(b)},\dots,O_{n_k,k}^{(b)}\overset{\mathrm{i.i.d.}}{\sim}\mathbb P_k.
\]
The bootstrap observations are sampled independently of everything else. Let \(\mathbb P_k^{(b)}\) be the empirical law of \(O_{1,k}^{(b)},\dots,O_{n_k,k}^{(b)}\). For \(b\in[B]\), define
\[
    \mathbb H_n^{(b)}:=\sqrt n\,\frac{1}{K}\sum_{k=1}^K(\mathbb P_k^{(b)}-\mathbb P_k)\widehat D^{-k}\in\mathcal H_{kl}.
\]
We define \(\widehat\zeta_n\) as the empirical \((1-\alpha)\)-quantile of \(w(\mathbb H_n^{(b)};\Omega_n)\) over \(b=1,\dots,B\), conditionally on the original sample. The uncentered EIF \(\widehat D^{-k}\) is not refitted inside each bootstrap iteration. The following proposition is a direct application of \citet[Theorem 4]{luedtke2024one}.

\begin{prop}
\label{prop: luedtke_bootstrap_consistency}
Suppose the conditions of \Cref{thm:appendix_two_fold_cf_asymp_linearity} hold. Further suppose that \(\Omega_n,\Omega_0\in\mathcal O\), \(\|\Omega_n-\Omega_0\|_{\mathrm{op}}=o_p(1)\), and
\[
    \max_{k\in[K]}\left\|\widehat D^{-k}-D_{\eta_0}\right\|_{L^2(P_0;\mathcal H_{kl})}=o_p(1).
\]
Then \(\widehat\zeta_n\overset{p}{\to}\zeta_{1-\alpha}\).
\end{prop}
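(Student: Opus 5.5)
The plan is to reduce \Cref{prop: luedtke_bootstrap_consistency} to \citet[Theorem 4]{luedtke2024one} by checking its hypotheses for the $K$-fold bootstrap process $\mathbb H_n^{(b)}$. The argument has three steps.

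\textbf{Step 1: replace the estimated EIF by the true one.} Conditionally on the data, write
\[
\mathbb H_n^{(b)}=\sqrt n\,\frac1K\sum_{k=1}^K(\mathbb P_k^{(b)}-\mathbb P_k)D_{\eta_0}+\sqrt n\,\frac1K\sum_{k=1}^K(\mathbb P_k^{(b)}-\mathbb P_k)(\widehat D^{-k}-D_{\eta_0}).
\]
For the second term, the conditional second moment of $\sqrt{n_k}(\mathbb P_k^{(b)}-\mathbb P_k)g$ in $\mathcal H_{kl}$ is at most $\mathbb P_k\|g\|^2_{\mathcal H_{kl}}$. Because $\widehat D^{-k}$ is fitted on $\mathcal D^{-k}$, we have
\[
\mathbb E\bigl[\mathbb P_k\|\widehat D^{-k}-D_{\eta_0}\|^2\mid\mathcal D^{-k}\bigr]=\|\widehat D^{-k}-D_{\eta_0}\|^2_{L^2(P_0;\mathcal H_{kl})}=o_p(1),
\]
so $\mathbb P_k\|\widehat D^{-k}-D_{\eta_0}\|^2=o_p(1)$. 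Conditional Markov then makes the second term $o_p(1)$ in conditional probability.

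\textbf{Step 2: bootstrap CLT for the leading term.} The leading term is a sum of $K$ conditionally independent fold-wise bootstrap empirical processes of the fixed function $D_{\eta_0}$, with $n_k=n/K$. Apply the Hilbert-space bootstrap CLT (Giné--Zinn in separable Hilbert spaces, using $D_{\eta_0}\in L^2(P_0;\mathcal H_{kl})$) fold by fold. Each term converges conditionally to an independent Gaussian with covariance of $\phi_0$, scaled by $K^{-1/2}$. Summing the $K$ terms gives exactly the law of $\mathbb H$ from \Cref{thm:appendix_two_fold_cf_asymp_linearity}. Hence $\mathbb H_n^{(b)}\rightsquigarrow\mathbb H$ conditionally in probability, for instance in bounded-Lipschitz distance.

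\textbf{Step 3: pass to the quantiles.} The map $h\mapsto w(h;\Omega_n)$ satisfies
\[
|w(h;\Omega_n)-w(h;\Omega_0)|\le\|\Omega_n-\Omega_0\|_{\mathrm{op}}\|h\|^2,
\]
and $\|\mathbb H_n^{(b)}\|$ is conditionally tight. Therefore $w(\mathbb H_n^{(b)};\Omega_n)$ converges conditionally in law to $w(\mathbb H;\Omega_0)$ by the continuous mapping theorem. Consistency of the quantile, $\widehat\zeta_n\overset{p}{\to}\zeta_{1-\alpha}$, then follows provided the CDF of $w(\mathbb H;\Omega_0)$ is continuous and strictly increasing at $\zeta_{1-\alpha}$. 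This holds when $\mathbb H$ is nondegenerate, since $w(\mathbb H;\Omega_0)$ is a weighted sum of independent $\chi^2_1$ variables with positive weights. I would assume nondegeneracy, as \citet{luedtke2024one} do.

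With $B\to\infty$, the empirical quantile over the $B$ draws adds only Monte Carlo error, which vanishes.

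\textbf{Main obstacle.} I expect the main difficulty to be Step 1 together with the multi-fold conditional CLT. The cross-fitted nuisances differ across folds, so one must condition on each $\mathcal D^{-k}$ separately and still combine the folds. The fold-wise conditional independence of the bootstrap draws is exactly what makes this combination work.
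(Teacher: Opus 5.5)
Your argument is correct. It is more self-contained than the paper's, which gives no argument beyond calling the proposition a direct application of \citet[Theorem 4]{luedtke2024one}. Despite your opening sentence, you do not verify that theorem's hypotheses and cite it. Instead you re-derive its content for the specific scheme used here:
\begin{itemize}
\item Conditioning on $\mathcal D^{-k}$ turns $\|\widehat D^{-k}-D_{\eta_0}\|_{L^2(P_0;\mathcal H_{kl})}=o_p(1)$ into $\mathbb P_k\|\widehat D^{-k}-D_{\eta_0}\|^2_{\mathcal H_{kl}}=o_p(1)$. A conditional Chebyshev bound then removes the nuisance part of $\mathbb H_n^{(b)}$.
\item The leading part equals $K^{-1/2}\sum_k\sqrt{n_k}(\mathbb P_k^{(b)}-\mathbb P_k)\phi_0$, since the constant $\Psi_0$ cancels. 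The Hilbert-space bootstrap CLT applies fold by fold, and conditional independence across folds recombines the covariances into that of $\mathbb H$.
\item The operator-norm perturbation bound for $w(\cdot;\Omega_n)$, plus a standard quantile lemma, finish the proof.
\end{itemize}

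The citation buys brevity. It leaves implicit that the bootstrap in Luedtke--Chung matches the fold-wise resampling with frozen nuisances used here; your version makes that check explicit. It also surfaces a point the statement glosses over: $\widehat\zeta_n$ must be read as the exact conditional quantile, or $B=B_n\to\infty$ must be assumed. For fixed $B$, the empirical quantile of $B$ conditionally i.i.d.\ draws converges in law to the corresponding order statistic of $B$ copies of $w(\mathbb H;\Omega_0)$. That limit is nondegenerate unless $\phi_0=0$, so it cannot converge in probability to $\zeta_{1-\alpha}$.

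One simplification: you do not need to assume nondegeneracy of $\mathbb H$.
\begin{itemize}
\item If $\phi_0=0$ almost surely, then $\mathbb H=0$ and $\zeta_{1-\alpha}=0$. Since $w(\mathbb H_n^{(b)};\Omega_n)\to 0$ in conditional probability and $\widehat\zeta_n\geq 0$, you get $\widehat\zeta_n\overset{p}{\to}0$ directly.
\item Otherwise, positive-definiteness of $\Omega_0$ guarantees at least one strictly positive weight in the $\chi^2_1$ expansion of $w(\mathbb H;\Omega_0)=\|\Omega_0^{1/2}\mathbb H\|^2_{\mathcal H_{kl}}$. Its distribution function is then continuous and strictly increasing on $(0,\infty)$, and $\zeta_{1-\alpha}>0$, which is all your quantile step requires.
\end{itemize}
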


In this paper, we focus on the case \(\Omega_n=\Omega_0=\mathrm{Id}_{\mathcal H_{kl}}\), for which
\[
    w(\mathbb H_n^{(b)};\Omega_n)=\|\mathbb H_n^{(b)}\|_{\mathcal H_{kl}}^2.
\]
This quantity is computed using the V-statistic in \Cref{sec: computation}, with \(\mathbb P_k\) replaced by \(\mathbb P_k^{(b)}\) in the validation folds. Equivalently, the same foldwise summands \(\langle\widehat D^{-k_1}(o_1),\widehat D^{-k_2}(o_2)\rangle_{\mathcal H_{kl}}\) are used, with bootstrap resampling applied only to the outer validation sums. The nuisance estimators \(\widehat m^{-k}\), \(\widehat\mu_X^{-k}\), \(\widehat\mu_\xi^{-k}\), and \(\widehat v_j^{-k}\) are not refitted inside bootstrap draws.

Our primary objective is to benchmark the proposed method against naive methods which are not robust to nuisance estimation error, and which may therefore be biased when flexible machine learning estimators are used. It remains an open question whether the procedure can be improved by data-driven choices of \(\Omega_n\). A particularly natural choice is the regularized Wald-type confidence set described in \citet{luedtke2024one}. We defer these investigations to future work.

\textbf{Confidence intervals based on triangle inequality.}
Let \(k(i)\) denote the validation fold containing \(O_i\), and define
\[
    G_{ij}
    :=
    \left\langle
        \widehat D^{-k(i)}(O_i),
        \widehat D^{-k(j)}(O_j)
    \right\rangle_{\mathcal H_{kl}} .
\]
Then the plug-in squared norm of the cross-fitted estimator is the V-statistic
\begin{equation}
\label{eq:qv_crossfit}
    \widehat Q_V
    :=
    \|\check\Psi_n\|_{\mathcal H_{kl}}^2
    =
    \frac{1}{n^2}
    \sum_{i,j=1}^n
    G_{ij}.
\end{equation}
The quantities \(G_{ij}\) are precisely the foldwise Gram entries computed in
\Cref{sec: computation}.

If \(N_i^{(b)}\) denotes the number of times \(O_i\) appears in the bootstrap draw of
its validation fold, then, for \(\Omega_n=\mathrm{Id}_{\mathcal H_{kl}}\),
\[
    \|\mathbb H_n^{(b)}\|_{\mathcal H_{kl}}^2
    =
    \frac{1}{n}
    \sum_{i,j=1}^n
    \{N_i^{(b)}-1\}\{N_j^{(b)}-1\}
    G_{ij}.
\]

For \(\Omega_n=\mathrm{Id}_{\mathcal H_{kl}}\), the confidence set is
\[
    \mathcal C_n(\zeta)
    =
    \left\{
        h\in\mathcal H_{kl}:
        \|\check\Psi_n-h\|_{\mathcal H_{kl}}^2
        \leq \frac{\zeta}{n}
    \right\}.
\]
Hence, by the reverse triangle inequality, it induces the conservative interval
\[
    \mathcal I_{\zeta}
    :=
    \left[
        \left\{
            \sqrt{\widehat Q_V}
            -
            \sqrt{\zeta/n}
        \right\}_+,
        \sqrt{\widehat Q_V}
        +
        \sqrt{\zeta/n}
    \right].
\]
In implementation we take \(\zeta=\widehat\zeta_n\). The induced test of
\(H_0:\Psi_0=0\) rejects when \(0\notin\mathcal I_{\widehat\zeta_n}\), equivalently when
\(n\widehat Q_V>\widehat\zeta_n\).

\subsection{CI based on the delta method}
\label{sec: delta_method}

Beyond testing \(\Psi_0=0\), we may be interested in the magnitude of the signal, namely \(\|\Psi_0\|_{\mathcal H_{kl}}\) or \(\|\Psi_0\|_{\mathcal H_{kl}}^2\). When \(\Psi_0\neq 0\), the functional delta method \citep[Chapter 3]{van2000asymptotic} gives an asymptotic normal approximation. Under the assumptions of \Cref{thm:appendix_two_fold_cf_asymp_linearity},
\[
    \sqrt n(\check\Psi_n-\Psi_0)\rightsquigarrow\mathbb H.
\]
Since \(h\mapsto\|h\|_{\mathcal H_{kl}}^2\) is Fréchet differentiable with derivative \(u\mapsto 2\langle h,u\rangle_{\mathcal H_{kl}}\), we have
\[
    \sqrt n\left(\|\check\Psi_n\|_{\mathcal H_{kl}}^2-\|\Psi_0\|_{\mathcal H_{kl}}^2\right)\rightsquigarrow 2\langle\Psi_0,\mathbb H\rangle_{\mathcal H_{kl}}.
\]
The limiting variance is \(4\sigma^2\), where
\[
    \sigma^2:=\mathbb E_0\left[\langle\Psi_0,\phi_0(O)\rangle_{\mathcal H_{kl}}^2\right]
    =
    \mathbb E_0\left[\langle\Psi_0,D_{\eta_0}(O)\rangle_{\mathcal H_{kl}}^2\right]-\|\Psi_0\|_{\mathcal H_{kl}}^4.
\]
This motivates the estimator
\begin{equation}
    \label{eq:sigma_hat_estimator_sq}
    \begin{aligned}
        \widehat\sigma^2
    &:=\frac{1}{K}\sum_{l=1}^K\frac{1}{n_l}\sum_{\nocolourb{o'\in\mathcal D^l}}\left\langle\check\Psi_n,\widehat D^{-l}(\nocolourb{o'})\right\rangle_{\mathcal H_{kl}}^2-\|\check\Psi_n\|_{\mathcal H_{kl}}^4\\
    &=\frac{1}{K}\sum_{l=1}^K\frac{1}{n_l}\sum_{\nocolourb{o'\in\mathcal D^l}}\left(\frac{1}{K}\sum_{k=1}^K\frac{1}{n_k}\sum_{\nocolourb{o\in\mathcal D^k}}\left\langle\widehat D^{-k}(\nocolourb{o}),\widehat D^{-l}(\nocolourb{o'})\right\rangle_{\mathcal H_{kl}}\right)^2\\
    &\quad-\left(\frac{1}{K^2}\sum_{k=1}^K\sum_{l=1}^K\frac{1}{n_kn_l}\sum_{\substack{\nocolourb{o\in\mathcal D^k}\\ \nocolourb{o'\in\mathcal D^l}}}\left\langle\widehat D^{-k}(\nocolourb{o}),\widehat D^{-l}(\nocolourb{o'})\right\rangle_{\mathcal H_{kl}}\right)^2.
    \end{aligned}
\end{equation}
The inner products \(\langle\widehat D^{-k}(\nocolourb{o}),\widehat D^{-l}(\nocolourb{o'})\rangle_{\mathcal H_{kl}}\) are precisely the quantities computed in \Cref{sec: computation}.

\begin{prop}[Consistency of the estimator of asymptotic variance]
\label{prop: sigma_consistency}
Suppose the assumptions of \Cref{thm:appendix_two_fold_cf_asymp_linearity} hold, and assume \(K\) is fixed and \(n_k=n/K\) for all \(k\in[K]\). Suppose further that
\[
    \max_{k\in[K]}\|\widehat D^{-k}-D_{\eta_0}\|_{L^2(P_0;\mathcal H_{kl})}=o_p(1),
\]
and that
\[
    \max_{k\in[K]}\int \|\widehat D^{-k}(o)\|_{\cH_{kl}}^4\,dP_0(o)=O_p(1),\qquad \int \|D_{\eta_0}(o)\|_{\cH_{kl}}^4\,dP_0(o)<\infty.
\]
Define
\begin{align*}
    \widehat\sigma^2
    :=
    \frac1K\sum_{l=1}^K\frac1{n_l}\sum_{\nocolourb{o'\in\mathcal D^l}}\langle \check\Psi_n,\widehat D^{-l}(\nocolourb{o'})\rangle_{\cH_{kl}}^2-\|\check\Psi_n\|_{\cH_{kl}}^4.
\end{align*}
Then
\[
    \widehat\sigma^2\overset{p}{\to}\sigma^2,
\]
where
\[
    \sigma^2:=\mathbb E_0[\langle \Psi_0,\phi_0(O)\rangle_{\cH_{kl}}^2]
    =
    \mathbb E_0[\langle \Psi_0,D_{\eta_0}(O)\rangle_{\cH_{kl}}^2]-\|\Psi_0\|_{\cH_{kl}}^4.
\]
\end{prop}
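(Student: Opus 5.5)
Since the second term of \(\widehat\sigma^2\) is \(\|\check\Psi_n\|_{\mathcal H_{kl}}^4\), and \(\check\Psi_n\to\Psi_0\) in probability in \(\mathcal H_{kl}\) by \Cref{thm:appendix_two_fold_cf_asymp_linearity} (indeed \(\check\Psi_n-\Psi_0=O_p(n^{-1/2})\)), the continuous mapping theorem gives \(\|\check\Psi_n\|^4_{\mathcal H_{kl}}\overset{p}{\to}\|\Psi_0\|^4_{\mathcal H_{kl}}\). It therefore suffices to show that, for each fixed fold \(l\in[K]\),
\[
T_l:=\frac1{n_l}\sum_{o'\in\mathcal D^l}\langle\check\Psi_n,\widehat D^{-l}(o')\rangle_{\mathcal H_{kl}}^2\overset{p}{\to}\mathbb E_0[\langle\Psi_0,D_{\eta_0}(O)\rangle^2_{\mathcal H_{kl}}].
\]
The conclusion then follows by averaging over the finitely many \(l\).

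\textbf{Step 1: decompose \(T_l\).} Write \(T_l-\mathbb E_0[\langle\Psi_0,D_{\eta_0}\rangle^2]\) as a telescoping sum of three terms:
\[
\mathrm{(a)}\;\; \mathbb P_l\bigl[\langle\check\Psi_n,\widehat D^{-l}\rangle^2-\langle\Psi_0,\widehat D^{-l}\rangle^2\bigr],
\]
\[
\mathrm{(b)}\;\; \mathbb P_l\bigl[\langle\Psi_0,\widehat D^{-l}\rangle^2-\langle\Psi_0,D_{\eta_0}\rangle^2\bigr],
\]
\[
\mathrm{(c)}\;\; (\mathbb P_l-P_0)\langle\Psi_0,D_{\eta_0}\rangle^2 .
\]
Term (c) is \(o_p(1)\) by the weak law of large numbers, since \(\mathbb E_0\langle\Psi_0,D_{\eta_0}\rangle^2\le\|\Psi_0\|^2\,\mathbb E_0\|D_{\eta_0}\|^2<\infty\).

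\textbf{Step 2: control (a) and (b) with \(a^2-b^2=(a-b)(a+b)\) and Cauchy--Schwarz.} For (a), this gives
\[
|\mathrm{(a)}|\le \|\check\Psi_n-\Psi_0\|\,\bigl(\|\check\Psi_n\|+\|\Psi_0\|\bigr)\,\mathbb P_l\|\widehat D^{-l}\|^2 ,
\]
so it suffices that \(\mathbb P_l\|\widehat D^{-l}\|^2=O_p(1)\). For (b), this gives
\[
|\mathrm{(b)}|\le\|\Psi_0\|^2\,\bigl(\mathbb P_l\|\widehat D^{-l}-D_{\eta_0}\|^2\bigr)^{1/2}\bigl(\mathbb P_l\|\widehat D^{-l}+D_{\eta_0}\|^2\bigr)^{1/2}.
\]

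\textbf{Step 3: pass from empirical to population averages by conditioning.} Because \(\widehat D^{-l}\) is fitted on \(\mathcal D^{-l}\), independently of \(\mathcal D^l\), conditional Markov gives
\[
\mathbb E\bigl[\mathbb P_l\|\widehat D^{-l}-D_{\eta_0}\|^2\mid\mathcal D^{-l}\bigr]=\|\widehat D^{-l}-D_{\eta_0}\|^2_{L^2(P_0)}=o_p(1),
\]
and conditional convergence in probability implies unconditional convergence. In the same way, \(\mathbb P_l\|\widehat D^{-l}\|^2=O_p(1)\) follows from \(\|\widehat D^{-l}\|_{L^2(P_0)}\le\|D_{\eta_0}\|_{L^2}+o_p(1)\).

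\textbf{Main obstacle.} The delicate point is that \(\check\Psi_n\) depends on all folds, including \(\mathcal D^l\), so it is not independent of the summands. The split in (a) is chosen precisely to handle this: \(\check\Psi_n\) is pulled out as a random constant through Cauchy--Schwarz, and only the independent \(\widehat D^{-l}\) remains inside the empirical average. With this arrangement the argument needs only second moments. The assumed fourth-moment bounds are a safe backup, for example for a variance-based Chebyshev argument on (c) or if a direct uniform integrability argument is preferred.
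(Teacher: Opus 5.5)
Your proof is correct and follows the paper's argument essentially step for step. You use the same three-term decomposition, the same $a^2-b^2=(a-b)(a+b)$ and Cauchy--Schwarz bounds that pull $\check\Psi_n$ out as a random constant, and the weak law for the last term. The one difference is how you pass from evaluation-fold averages to $L^2(P_0)$ norms: you use conditional Markov on nonnegative averages, where the paper uses conditional Chebyshev. Because upper bounds suffice here, your version needs only second moments, so the assumed fourth-moment conditions become unnecessary.
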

\begin{proof}
By \Cref{thm:appendix_two_fold_cf_asymp_linearity}, \(\check\Psi_n\overset{p}{\to}\Psi_0\). Hence, by the continuous mapping theorem,
\[
    \|\check\Psi_n\|_{\cH_{kl}}^4\overset{p}{\to}\|\Psi_0\|_{\cH_{kl}}^4.
\]
It remains to show that
\[
    \frac1K\sum_{l=1}^K\frac1{n_l}\sum_{\nocolourb{o'\in\mathcal D^l}}\langle \check\Psi_n,\widehat D^{-l}(\nocolourb{o'})\rangle_{\cH_{kl}}^2
    \overset{p}{\to}
    \mathbb E_0[\langle \Psi_0,D_{\eta_0}(O)\rangle_{\cH_{kl}}^2].
\]
We consider the decomposition
\begin{align}
    &\frac1K\sum_{l=1}^K\frac1{n_l}\sum_{\nocolourb{o'\in\mathcal D^l}}\langle \check\Psi_n,\widehat D^{-l}(\nocolourb{o'})\rangle_{\cH_{kl}}^2-\mathbb E_0[\langle \Psi_0,D_{\eta_0}(O)\rangle_{\cH_{kl}}^2]\nonumber\\
    &=\frac1K\sum_{l=1}^K\frac1{n_l}\sum_{\nocolourb{o'\in\mathcal D^l}}\left\{\langle \check\Psi_n,\widehat D^{-l}(\nocolourb{o'})\rangle_{\cH_{kl}}^2-\langle \Psi_0,\widehat D^{-l}(\nocolourb{o'})\rangle_{\cH_{kl}}^2\right\}\tag{A}\label{eq: sigma_consistency_term_A}\\
    &\quad+\frac1K\sum_{l=1}^K\frac1{n_l}\sum_{\nocolourb{o'\in\mathcal D^l}}\left\{\langle \Psi_0,\widehat D^{-l}(\nocolourb{o'})\rangle_{\cH_{kl}}^2-\langle \Psi_0,D_{\eta_0}(\nocolourb{o'})\rangle_{\cH_{kl}}^2\right\}\tag{B}\label{eq: sigma_consistency_term_B}\\
    &\quad+\frac1K\sum_{l=1}^K\frac1{n_l}\sum_{\nocolourb{o'\in\mathcal D^l}}\langle \Psi_0,D_{\eta_0}(\nocolourb{o'})\rangle_{\cH_{kl}}^2-\mathbb E_0[\langle \Psi_0,D_{\eta_0}(O)\rangle_{\cH_{kl}}^2].\tag{C}\label{eq: sigma_consistency_term_C}
\end{align}
For term \(\eqref{eq: sigma_consistency_term_A}\), we use \(a^2-b^2=(a-b)(a+b)\). We have
\begin{align*}
    |\eqref{eq: sigma_consistency_term_A}|
    &\leq \frac1K\sum_{l=1}^K\frac1{n_l}\sum_{\nocolourb{o'\in\mathcal D^l}}\left|\langle \check\Psi_n-\Psi_0,\widehat D^{-l}(\nocolourb{o'})\rangle_{\cH_{kl}}\right|\left|\langle \check\Psi_n+\Psi_0,\widehat D^{-l}(\nocolourb{o'})\rangle_{\cH_{kl}}\right|\\
    &\leq \|\check\Psi_n-\Psi_0\|_{\cH_{kl}}\|\check\Psi_n+\Psi_0\|_{\cH_{kl}}\frac1K\sum_{l=1}^K\frac1{n_l}\sum_{\nocolourb{o'\in\mathcal D^l}}\|\widehat D^{-l}(\nocolourb{o'})\|_{\cH_{kl}}^2.
\end{align*}
We now show that the final factor is \(O_p(1)\). Conditionally on the training sample \(\mathcal D^{-l}\), the observations in \(\mathcal D^l\) are i.i.d. under \(P_0\) and independent of \(\widehat D^{-l}\). Therefore, by Chebyshev's inequality,
\begin{align*}
    \frac1{n_l}\sum_{\nocolourb{o'\in\mathcal D^l}}\|\widehat D^{-l}(\nocolourb{o'})\|_{\cH_{kl}}^2-\int\|\widehat D^{-l}(o)\|_{\cH_{kl}}^2\,dP_0(o)=o_p(1),
\end{align*}
because
\[
    \int\|\widehat D^{-l}(o)\|_{\cH_{kl}}^4\,dP_0(o)=O_p(1),
\]
and \(n_l\to\infty\). Since \(K\) is fixed, averaging over \(l\) gives
\[
    \frac1K\sum_{l=1}^K\left\{\frac1{n_l}\sum_{\nocolourb{o'\in\mathcal D^l}}\|\widehat D^{-l}(\nocolourb{o'})\|_{\cH_{kl}}^2-\int\|\widehat D^{-l}(o)\|_{\cH_{kl}}^2\,dP_0(o)\right\}=o_p(1).
\]
On the other hand,
\begin{align*}
    \frac1K\sum_{l=1}^K\int\|\widehat D^{-l}(o)\|_{\cH_{kl}}^2\,dP_0(o)
    &\leq \frac2K\sum_{l=1}^K\|\widehat D^{-l}-D_{\eta_0}\|_{L^2(P_0;\cH_{kl})}^2+2\|D_{\eta_0}\|_{L^2(P_0;\cH_{kl})}^2\\
    &=O_p(1).
\end{align*}
Therefore
\[
    \frac1K\sum_{l=1}^K\frac1{n_l}\sum_{\nocolourb{o'\in\mathcal D^l}}\|\widehat D^{-l}(\nocolourb{o'})\|_{\cH_{kl}}^2=O_p(1).
\]
Since \(\check\Psi_n\overset{p}{\to}\Psi_0\), we have
\[
    |\eqref{eq: sigma_consistency_term_A}|=o_p(1).
\]

For term \(\eqref{eq: sigma_consistency_term_B}\), we again use \(a^2-b^2=(a-b)(a+b)\). We have
\begin{align*}
    |\eqref{eq: sigma_consistency_term_B}|
    &\leq \frac1K\sum_{l=1}^K\frac1{n_l}\sum_{\nocolourb{o'\in\mathcal D^l}}\|\Psi_0\|_{\cH_{kl}}^2\|\widehat D^{-l}(\nocolourb{o'})-D_{\eta_0}(\nocolourb{o'})\|_{\cH_{kl}}\|\widehat D^{-l}(\nocolourb{o'})+D_{\eta_0}(\nocolourb{o'})\|_{\cH_{kl}}\\
    &\leq \|\Psi_0\|_{\cH_{kl}}^2
    \left\{\frac1K\sum_{l=1}^K\frac1{n_l}\sum_{\nocolourb{o'\in\mathcal D^l}}\|\widehat D^{-l}(\nocolourb{o'})-D_{\eta_0}(\nocolourb{o'})\|_{\cH_{kl}}^2\right\}^{1/2}\\
    &\quad\times
    \left\{\frac1K\sum_{l=1}^K\frac1{n_l}\sum_{\nocolourb{o'\in\mathcal D^l}}\|\widehat D^{-l}(\nocolourb{o'})+D_{\eta_0}(\nocolourb{o'})\|_{\cH_{kl}}^2\right\}^{1/2}.
\end{align*}
Conditionally on \(\mathcal D^{-l}\), another application of Chebyshev's inequality gives
\begin{align*}
    &\frac1K\sum_{l=1}^K\left\{\frac1{n_l}\sum_{\nocolourb{o'\in\mathcal D^l}}\|\widehat D^{-l}(\nocolourb{o'})-D_{\eta_0}(\nocolourb{o'})\|_{\cH_{kl}}^2-\int\|\widehat D^{-l}(o)-D_{\eta_0}(o)\|_{\cH_{kl}}^2\,dP_0(o)\right\}=o_p(1),
\end{align*}
and similarly with \(+\) in place of \(-\). Hence
\begin{align*}
    \frac1K\sum_{l=1}^K\frac1{n_l}\sum_{\nocolourb{o'\in\mathcal D^l}}\|\widehat D^{-l}(\nocolourb{o'})-D_{\eta_0}(\nocolourb{o'})\|_{\cH_{kl}}^2
    &=\frac1K\sum_{l=1}^K\|\widehat D^{-l}-D_{\eta_0}\|_{L^2(P_0;\cH_{kl})}^2+o_p(1)=o_p(1),
\end{align*}
while
\begin{align*}
    \frac1K\sum_{l=1}^K\frac1{n_l}\sum_{\nocolourb{o'\in\mathcal D^l}}\|\widehat D^{-l}(\nocolourb{o'})+D_{\eta_0}(\nocolourb{o'})\|_{\cH_{kl}}^2=O_p(1).
\end{align*}
Therefore
\[
    |\eqref{eq: sigma_consistency_term_B}|=o_p(1).
\]

Finally, for term \(\eqref{eq: sigma_consistency_term_C}\), since \(D_{\eta_0}\in L^2(P_0;\cH_{kl})\), we have
\[
    \mathbb E_0[\langle \Psi_0,D_{\eta_0}(O)\rangle_{\cH_{kl}}^2]\leq \|\Psi_0\|_{\cH_{kl}}^2\mathbb E_0[\|D_{\eta_0}(O)\|_{\cH_{kl}}^2]<\infty.
\]
Therefore, by the weak law of large numbers,
\[
    \eqref{eq: sigma_consistency_term_C}
    =
    \mathbb P_n\{\langle \Psi_0,D_{\eta_0}(O)\rangle_{\cH_{kl}}^2\}-\mathbb E_0[\langle \Psi_0,D_{\eta_0}(O)\rangle_{\cH_{kl}}^2]=o_p(1).
\]
Combining the bounds for \(\eqref{eq: sigma_consistency_term_A}\), \(\eqref{eq: sigma_consistency_term_B}\), and \(\eqref{eq: sigma_consistency_term_C}\), we obtain
\[
    \frac1K\sum_{l=1}^K\frac1{n_l}\sum_{\nocolourb{o'\in\mathcal D^l}}\langle \check\Psi_n,\widehat D^{-l}(\nocolourb{o'})\rangle_{\cH_{kl}}^2
    \overset{p}{\to}
    \mathbb E_0[\langle \Psi_0,D_{\eta_0}(O)\rangle_{\cH_{kl}}^2].
\]
Together with \(\|\check\Psi_n\|_{\cH_{kl}}^4\overset{p}{\to}\|\Psi_0\|_{\cH_{kl}}^4\), this proves
\[
    \widehat\sigma^2\overset{p}{\to}\mathbb E_0[\langle \Psi_0,D_{\eta_0}(O)\rangle_{\cH_{kl}}^2]-\|\Psi_0\|_{\cH_{kl}}^4=\sigma^2.
\]
\end{proof}

\begin{theorem}[Validity and efficiency of the delta method]
\label{thm: delta_method_validity}
Suppose the assumptions of \Cref{thm:appendix_two_fold_cf_asymp_linearity} hold. Then, in \(\mathcal H_{kl}\),
\[
    \sqrt n(\check\Psi_n-\Psi_0)\rightsquigarrow \mathbb H,
\]
where \(\mathbb H\) is the centered Gaussian element in \Cref{thm:appendix_two_fold_cf_asymp_linearity}. Consequently,
\[
    \sqrt n\left(\|\check\Psi_n\|_{\cH_{kl}}^2-\|\Psi_0\|_{\cH_{kl}}^2\right)\rightsquigarrow 2\langle \Psi_0,\mathbb H\rangle_{\cH_{kl}}.
\]
Equivalently,
\[
    \sqrt n\left(\|\check\Psi_n\|_{\cH_{kl}}^2-\|\Psi_0\|_{\cH_{kl}}^2\right)\rightsquigarrow N(0,4\sigma^2),
\]
where
\[
    \sigma^2:=\mathbb E_0[\langle \Psi_0,\phi_0(O)\rangle_{\cH_{kl}}^2].
\]
The scalar efficient influence function of \(P\mapsto \|\Psi(P)\|_{\cH_{kl}}^2\) at \(P_0\) is
\[
    2\langle \Psi_0,\phi_0(O)\rangle_{\cH_{kl}}.
\]
Therefore \(\|\check\Psi_n\|_{\cH_{kl}}^2\) is regular and efficient for \(\|\Psi_0\|_{\cH_{kl}}^2\). If \(\Psi_0\neq 0\), then
\[
    \sqrt n\left(\|\check\Psi_n\|_{\cH_{kl}}-\|\Psi_0\|_{\cH_{kl}}\right)\rightsquigarrow N\left(0,\frac{\sigma^2}{\|\Psi_0\|_{\cH_{kl}}^2}\right),
\]
and the scalar efficient influence function of \(P\mapsto \|\Psi(P)\|_{\cH_{kl}}\) at \(P_0\) is
\[
    \frac{\langle \Psi_0,\phi_0(O)\rangle_{\cH_{kl}}}{\|\Psi_0\|_{\cH_{kl}}}.
\]
\end{theorem}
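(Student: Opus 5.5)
The plan is to treat Theorem~\ref{thm: delta_method_validity} as a sequence of continuous-mapping and delta-method consequences of Theorem~\ref{thm:appendix_two_fold_cf_asymp_linearity}, plus a chain-rule argument for the efficiency claims. The first display is just the weak-convergence conclusion of Theorem~\ref{thm:appendix_two_fold_cf_asymp_linearity}, which is quoted directly.

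For the squared norm, I will not appeal abstractly to the functional delta method. Instead I will use the exact algebraic identity \eqref{eq:delta_decomp_main} in its cross-fitted form:
\[
\sqrt n\left(\|\check\Psi_n\|_{\cH_{kl}}^2-\|\Psi_0\|_{\cH_{kl}}^2\right)=2\langle\Psi_0,\sqrt n(\check\Psi_n-\Psi_0)\rangle_{\cH_{kl}}+n^{-1/2}\|\sqrt n(\check\Psi_n-\Psi_0)\|_{\cH_{kl}}^2 .
\]
The second term is $n^{-1/2}O_p(1)=o_p(1)$ by the continuous mapping theorem, since $\|\cdot\|^2$ is continuous and $\sqrt n(\check\Psi_n-\Psi_0)$ converges weakly. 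The first term is a continuous linear functional of $\sqrt n(\check\Psi_n-\Psi_0)$, so it converges weakly to $2\langle\Psi_0,\mathbb H\rangle_{\cH_{kl}}$. Slutsky's lemma then gives the limit. By the covariance characterization of $\mathbb H$ in Theorem~\ref{thm:appendix_two_fold_cf_asymp_linearity}, with $h=\Psi_0$, this limit is $N(0,4\sigma^2)$. Note the argument also covers $\Psi_0=0$, where the limit degenerates to $0$.

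For efficiency, the goal is to show that $P\mapsto\|\Psi(P)\|^2_{\cH_{kl}}$ is pathwise differentiable with EIF $2\langle\Psi_0,\phi_0\rangle$.
\begin{itemize}
\item Along a regular submodel with score $s$, Theorem~\ref{thm:cco_eif} gives $\|\Psi(P_t)-\Psi_0-t\dot\Psi_0(s)\|=o(t)$.
\item Expanding the square, the derivative at $t=0$ is $2\langle\Psi_0,\dot\Psi_0(s)\rangle$.
\item Writing $\dot\Psi_0(s)=\mathbb E_0[\phi_0 s]$ (a Bochner integral) and interchanging the inner product with the expectation gives $2\langle\Psi_0,\dot\Psi_0(s)\rangle=\mathbb E_0[2\langle\Psi_0,\phi_0(O)\rangle s(O)]$.
\item The function $2\langle\Psi_0,\phi_0\rangle$ lies in $L^2_0(P_0)$, since $\phi_0$ is mean zero and square integrable. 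Because the tangent space is all of $L^2_0(P_0)$, this gradient is the EIF.
\end{itemize}

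Asymptotic linearity of $\|\check\Psi_n\|^2$ with this influence function follows from the decomposition above together with $\check\Psi_n-\Psi_0=\mathbb P_n\phi_0+o_p(n^{-1/2})$. Regularity and efficiency then follow from the standard characterization: an estimator that is asymptotically linear in the EIF of a pathwise differentiable parameter is regular and efficient \citep[Theorem 25.23/25.47]{van2000asymptotic}. I expect the main subtlety here to be regularity at $\Psi_0=0$. In that case the EIF vanishes, and the limit is degenerate but trivially regular, since the linear term is zero and the remainder is $O_p(n^{-1})$ uniformly along local alternatives by contiguity (Le Cam's third lemma).

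For the unsquared norm with $\Psi_0\neq0$, I will apply the ordinary scalar delta method with $g(u)=\sqrt u$, which is differentiable at $\|\Psi_0\|^2>0$ with derivative $1/(2\|\Psi_0\|)$. This gives the $N(0,\sigma^2/\|\Psi_0\|^2)$ limit. The same chain rule applied to the pathwise derivative yields the stated EIF $\langle\Psi_0,\phi_0\rangle/\|\Psi_0\|$, and efficiency follows as before.
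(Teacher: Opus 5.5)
Your proposal is correct and follows the same route as the paper. The weak convergence is quoted from Theorem~\ref{thm:appendix_two_fold_cf_asymp_linearity}. The squared-norm limit comes from the delta method, which you unpack via the exact identity $\|a\|^2-\|b\|^2=2\langle b,a-b\rangle+\|a-b\|^2$ instead of citing the functional delta method. The EIFs come from the chain rule, and the unsquared norm is handled by a further delta step when $\Psi_0\neq 0$.

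The paper only asserts regularity and efficiency. Your explicit check of pathwise differentiability, followed by asymptotic linearity in the EIF (van der Vaart's Lemma 25.23), makes that step rigorous, including the degenerate case $\Psi_0=0$.
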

\begin{proof}
The first display is exactly \Cref{thm:appendix_two_fold_cf_asymp_linearity}. The map \(T:\mathcal H_{kl}\to\mathbb R\) defined by \(T(h)=\|h\|_{\cH_{kl}}^2\) is Fréchet differentiable at every \(h\in\mathcal H_{kl}\), with derivative
\[
    \dot T_h[u]=2\langle h,u\rangle_{\cH_{kl}}.
\]
The functional delta method therefore gives
\[
    \sqrt n\left(\|\check\Psi_n\|_{\cH_{kl}}^2-\|\Psi_0\|_{\cH_{kl}}^2\right)\rightsquigarrow 2\langle \Psi_0,\mathbb H\rangle_{\cH_{kl}}.
\]
Since \(\mathbb H\) is centered Gaussian and \(\langle\Psi_0,\cdot\rangle_{\cH_{kl}}\) is a continuous linear functional,
\[
    2\langle \Psi_0,\mathbb H\rangle_{\cH_{kl}}\sim N(0,4\sigma^2),
\]
where
\[
    \sigma^2=\mathbb E_0[\langle \Psi_0,\phi_0(O)\rangle_{\cH_{kl}}^2].
\]
The efficient influence function follows by the chain rule for pathwise differentiable Hilbert-valued parameters:
\[
    \phi_0^\Theta(O)=\dot T_{\Psi_0}[\phi_0(O)]=2\langle \Psi_0,\phi_0(O)\rangle_{\cH_{kl}}.
\]
Since \(\phi_0\) is the efficient influence function for \(\Psi\), this scalar influence function is efficient for \(P\mapsto \|\Psi(P)\|_{\cH_{kl}}^2\).

If \(\Psi_0\neq 0\), the map \(S:\mathcal H_{kl}\to\mathbb R\) defined by \(S(h)=\|h\|_{\cH_{kl}}\) is Fréchet differentiable at \(\Psi_0\), with derivative
\[
    \dot S_{\Psi_0}[u]=\frac{\langle \Psi_0,u\rangle_{\cH_{kl}}}{\|\Psi_0\|_{\cH_{kl}}}.
\]
Another application of the functional delta method gives
\[
    \sqrt n\left(\|\check\Psi_n\|_{\cH_{kl}}-\|\Psi_0\|_{\cH_{kl}}\right)\rightsquigarrow \frac{\langle \Psi_0,\mathbb H\rangle_{\cH_{kl}}}{\|\Psi_0\|_{\cH_{kl}}},
\]
which is centered Gaussian with variance \(\sigma^2/\|\Psi_0\|_{\cH_{kl}}^2\). The efficient influence function for \(P\mapsto \|\Psi(P)\|_{\cH_{kl}}\) follows by the same chain rule.
\end{proof}

\textbf{Delta-method implementation with reduced finite-sample bias.}
We contrast the plug-in V-statistic defined in \Cref{eq:qv_crossfit} with the corresponding U-statistic $\widehat{Q}_U$ by removing the
self-pairs \citep{serfling2009approximation}.
\[\widehat Q_U
	\doteq  \frac{1}{n(n-1)}\sum_{i\neq j}
	G_{ij}.\]
Under the assumptions of \Cref{thm:appendix_two_fold_cf_asymp_linearity}, we have $\widehat Q_U - \widehat Q_V =
	O_p(n^{-1})$. Therefore, when $\Psi_0\neq 0$, both centers have the same delta-method limit since $\sqrt{n}(\widehat Q_U - \widehat Q_V)\tod 0$. In
terms of asymptotic validity of the confidence sets, we can use whichever one we prefer. Since the positive bias of $\widehat Q_V$ is not negligible
in small sample settings, we suggest to center the delta-method confidence sets on $\widehat{Q}_U$, which leads to empirically better performance. However, the same argument does not allow for a substitution of $\widehat{Q}_U$ into the triangle-inequality-based interval. This is because it relies on the approximation
\[
	n\lr{\sqrt{\widehat{Q}_V}-\|\Psi_0\|_{\mathcal H_{kl}}}^2\tod \|\mathbb H\|_{\mathcal H_{kl}}^2.
\]
The $O_p(n^{-1})$ error is not negligible at this scale. We therefore propose to rely on $\widehat{Q}_V$ in the construction of the triangle-inequality-based interval. We
	believe that constructing analogous confidence sets with better small-sample behaviour and analysing them, is an interesting direction for future
	work.
\section{Experiment details}
\label{sec:experiment_details}

\begin{center}
\small
\setlength{\tabcolsep}{5pt}
\renewcommand{\arraystretch}{1.25}
\begin{tabularx}{\linewidth}{@{}>{\raggedright\arraybackslash}p{0.27\linewidth}>{\raggedright\arraybackslash}X@{}}
\toprule
Notation & Meaning \\
\midrule
$n$ & Sample size used in experiment figures and tables. \\
\midrule
$s_m$, $s_\epsilon$ & Fourier-decay parameters controlling smoothness of the mean and noise-scale functions. \\
\midrule
$\rho$ & Strength of residual heterogeneity; $\rho=0$ is homogeneous noise. \\
\midrule
$a_k,b_k,\phi_{m,k},\phi_{\sigma,k}$ & Random Fourier coefficients and phases in the synthetic data-generating mechanisms. \\
\midrule
$c_m,c_\epsilon,\sigma_0$ & Scaling constants for the mean/noise series and the baseline noise scale. \\
\midrule
$g(x),h(x),c$ & Auxiliary functions and scale constant used in heteroscedastic noise examples. \\
\midrule
$m_{Y\mid X}$, $m_{X\mid Y}$ & Direction-specific regressions in the causal-arrow experiment. \\
\midrule
$T$, $\sigma(0)$, $\sigma(1)$ & Extra covariate and group-specific residual scales in the covariate-adjusted experiment. \\
\bottomrule
\end{tabularx}
\end{center}

Throughout our experiments, the core nuisance functions $\hat m$ and $\hat v$ were obtained via $5$-fold cross-fitting,
using kernel ridge regression (vector-valued where appropriate). We found that Mat\'ern kernels perform slightly better than smoother
alternatives at learning $v$. Permutation tests are calibrated using 1000 permutations. Bootstrap-based procedures use 1000 bootstrap samples. 
All experiments can be performed on a single machine with a single NVIDIA RTX 4500 GPU and 64GB of RAM. To accelerate the evaluation of our 
method across a range of settings and random seeds, we parallelised the computations across multiple machines. Each one had 64GB of RAM, and multiple different GPUs were used, but
the GPU capacity is not a core bottleneck. We did not evaluate how quickly our procedures can be run on a CPU. On the aforementioned RTX 4500 GPU, fitting a 5-fold cross-fitted confidence set 
in $d_x=d_y=1$ and with 1000 data points took on the order of 1 to 2 minutes. The core complexity of the procedures scales quadratically with sample size, as is typical for kernel methods.
We did not investigate ways of mitigating this empirically; however, since the implementation is expressed through Gram matrices and their derivative analogues, standard kernel-acceleration 
ideas such as random Fourier features, Nystr{\"o}m approximations, and incomplete or block $U$-statistics are natural candidates for future scalability work \citep{rahimi2007random,williams2001using,zhang2018large,schrab2022efficient}. 

\subsection{Datasets used in the synthetic calibration-evaluation experiments}
\label{sec:experiment_details_synthetic_data}

This subsection gives the scalar data-generating mechanism used for the reported calibration-evaluation experiments.
The same random Fourier construction can be extended componentwise to multivariate $X$ or $Y$, but the reported main experiments keep $d_x=d_y=1$ to focus on residual-estimation difficulty.

The data-generating process is\[
Y=m(X) + \sigma(X)Z,
\]
where $Z\sim \cN(0,1)$ is sampled independently of $X$.
The structural function used to construct the synthetic datasets is \[
 m(x) = c_m \sum_{k=1}^{20} a_k k^{-s_m} \sin(k x + \phi_{m,k}),
\]
with $a_k\stackrel{\iid}{\sim}\cN(0,1)$, $\phi_{m,k}\stackrel{\iid}{\sim}\mathrm{Uniform}[-\pi,\pi]$, and $c_m$ chosen post-hoc so that $\Var(m(X))=1$.
Similarly,
\begin{align*}
    g(x)& = \sum_{k=1}^{20} b_k k^{-s_\epsilon} \sin(k x + \phi_{\sigma, k}),\\
    \sigma(x) &= \sigma_0 \exp (\rho c_\epsilon g(x)),
\end{align*}
where $\rho\ge0$ (a value of 0 corresponds to homogeneous noise with standard deviation $\sigma_0$). Weights $b_k$ and phases $\phi_{\sigma,k}$ are sampled as above.
$c_\epsilon$ is chosen to ensure $\Var(g(X)c_\epsilon)=1$, and in all our experiments $\sigma_0=0.35$.

The HSIC kernels are Gaussian with bandwidths tuned by the median heuristic, applied to full $X$ for the $\cH_k$ kernel,
and $\hat\xi$, the estimated residuals, on the $\cH_l$ kernel.
Example datasets are shown in \Cref{fig:synthetic_dataset_examples}.
We use the classical permutation HSIC test \citep{gretton2007kernel} as the plug-in baseline because it is the canonical
residual-independence diagnostic in ANM experiments.
Improvements to the underlying raw independence test, such as \citet{schrab2022efficient}, are unrelated to the debiasing problem studied here, though it would be an interesting
direction for future work to extend our estimation methodology to functionals that aggregate over cross-covariance operators in multiple RKHSs.
\begin{figure}[t]
    \centering
    \begin{subfigure}[t]{0.49\linewidth}
        \centering
        \includegraphics[width=\linewidth,trim=0mm 0mm 0mm 7mm,clip]{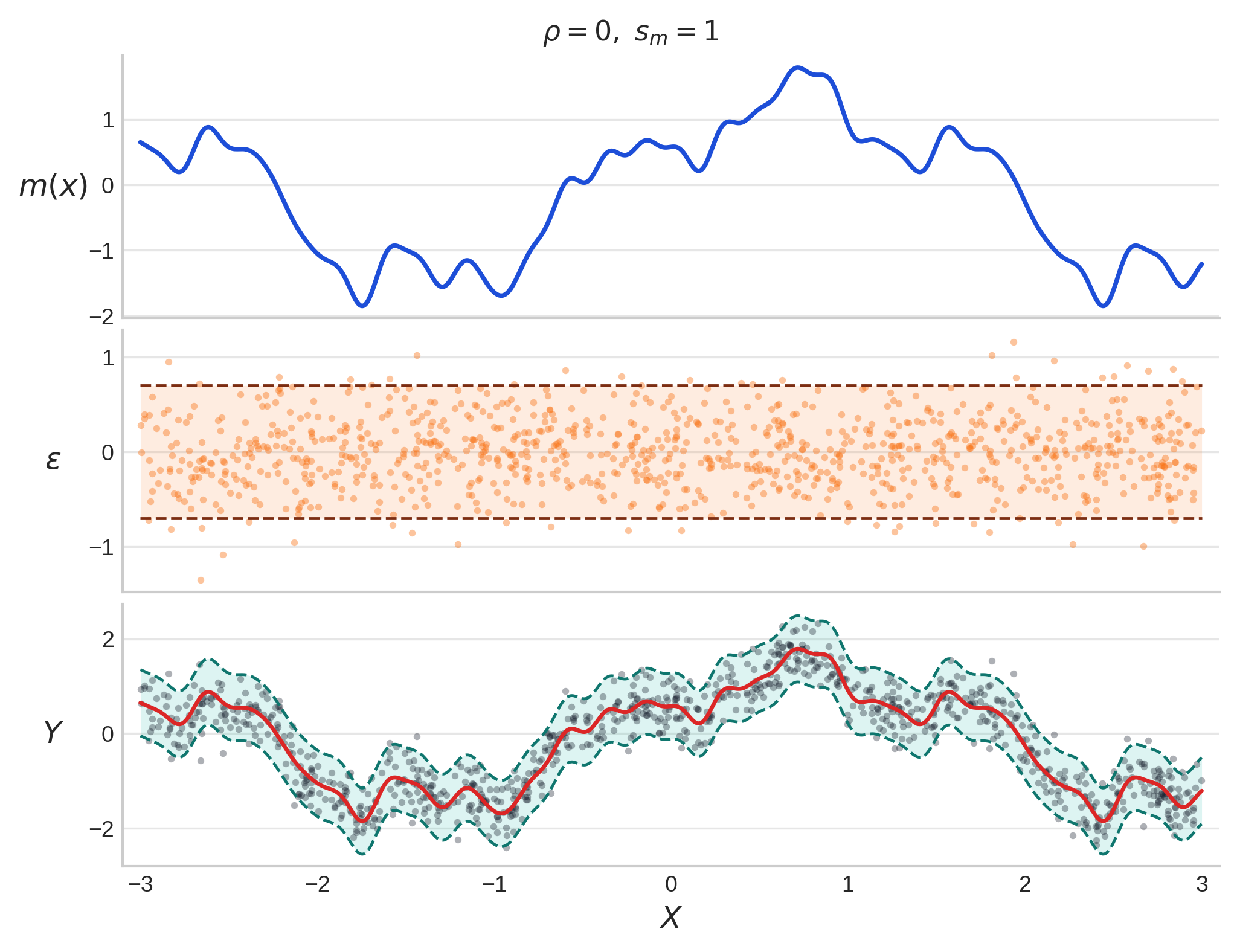}
        \caption{$s_m=1$, $\rho=0$}
        \label{fig:synthetic_dataset_example_sm1_null}
    \end{subfigure}
    \hfill
    \begin{subfigure}[t]{0.49\linewidth}
        \centering
        \includegraphics[width=\linewidth,trim=0mm 0mm 0mm 7mm,clip]{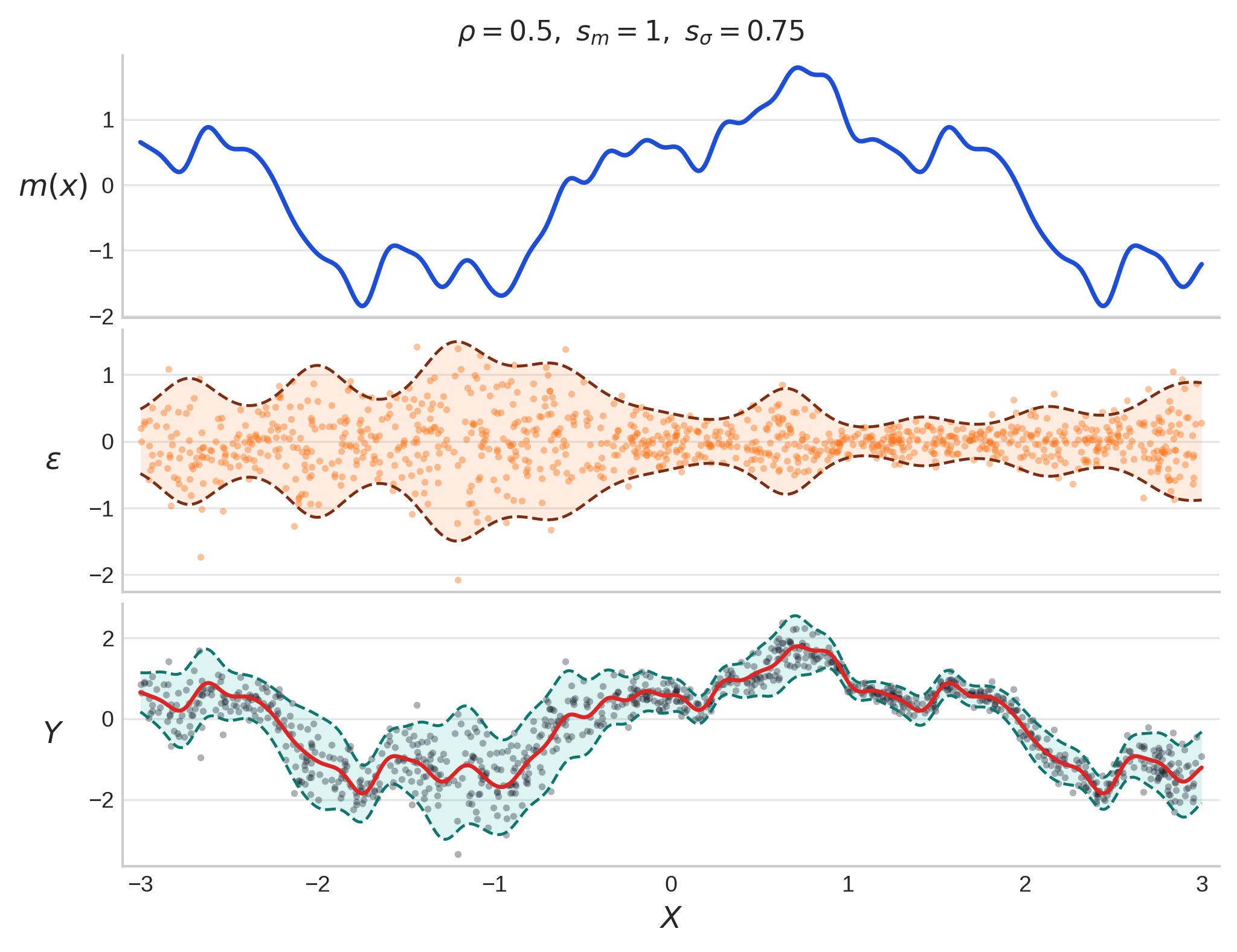}
        \caption{$s_m=1$, $\rho=0.5$, $s_\epsilon=0.75$}
        \label{fig:synthetic_dataset_example_sm1_alt}
    \end{subfigure}
    \par\smallskip
    \begin{subfigure}[t]{0.49\linewidth}
        \centering
        \includegraphics[width=\linewidth,trim=0mm 0mm 0mm 7mm,clip]{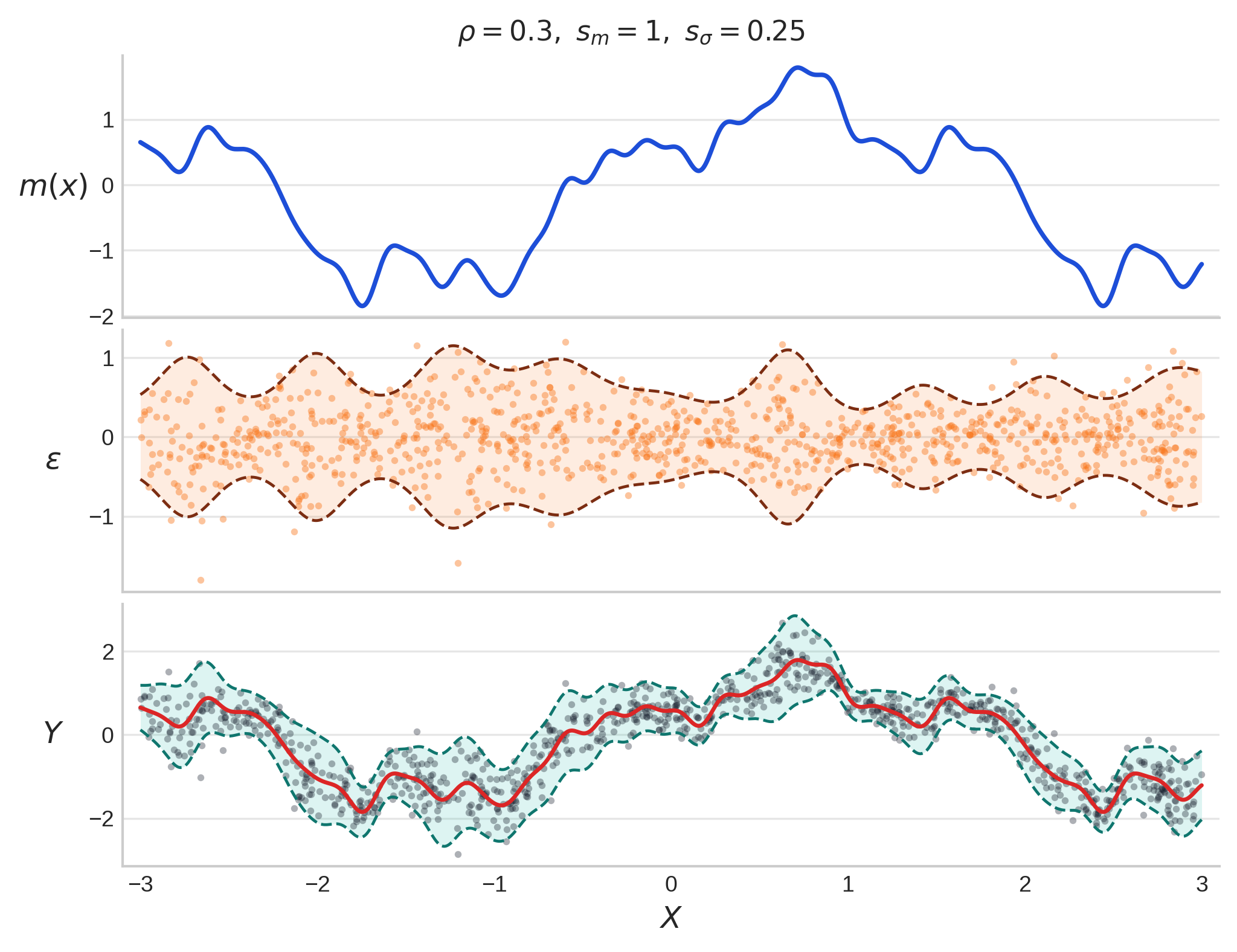}
        \caption{$s_m=1$, $\rho=0.3$, $s_\epsilon=0.25$}
        \label{fig:synthetic_dataset_example_sm2_alt_rough}
    \end{subfigure}
    \hfill
    \caption{Example synthetic datasets. Panel (a) is a homogeneous-noise null setting; panels (b)--(c) are alternatives with $X$-dependent noise scale.}
    \label{fig:synthetic_dataset_examples}
    \label{fig:synthetic_dataset_examples_sm1}
    \label{fig:synthetic_dataset_examples_sm2}
\end{figure}

\subsection{Normal convergence and coverage of the triangle inequality confidence sets.}
\label{sec:experiment_details_coverage}
We report empirical coverage for the nominal $95\%$ confidence intervals in the same synthetic setting used for the main calibration plot, namely $s_m=1$ and $s_\epsilon=0.75$.
The reverse-triangle-inequality intervals are valid at the null and remain conservative under alternatives.
This is visible in \Cref{tab:experiment_details_reverse_triangle_coverage_main}: coverage is close to nominal when $\rho=0$,
but becomes essentially one as the signal increases. The delta-method intervals are not guaranteed to be valid at $\rho=0$, where the delta method does not apply, but under alternatives they are much less conservative.
For the debiased estimator, \Cref{tab:experiment_details_delta_coverage_main} shows coverage generally close to nominal, with the weakest performance at the smallest sample size and weakest alternative.

The QQ plots in \Cref{fig:experiment_details_qq_calibration} tell the same story. Away from $\rho=0$, the debiased statistic tracks the Gaussian reference line reasonably well, especially in the central quantiles.
The plug-in statistic exhibits visible bias and slope distortions, while the debiased statistic is substantially better aligned with the target.
At $\rho=0$, the normal approximation is not expected to hold because of the norm functional's vanishing derivative; the departures in that panel are therefore a diagnostic of the nonregular null rather than a failure of the alternative-regime delta method.

\begin{figure}[p]
    \centering
    \includegraphics[
        width=\textwidth,
        height=0.72\textheight,
        keepaspectratio
    ]{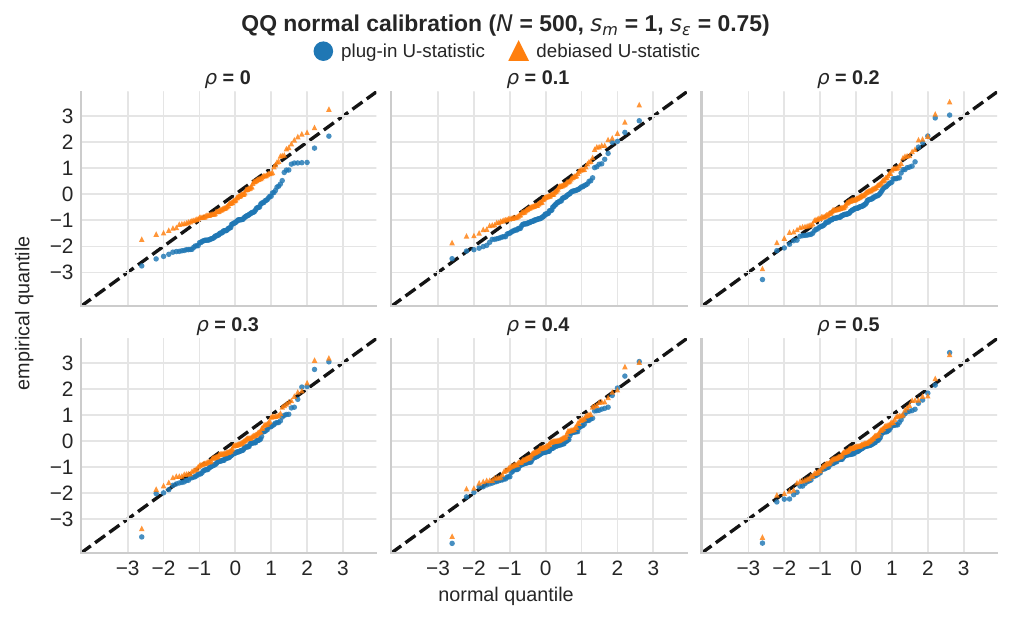}

    \par\vspace{0.5em}
    {\small\emph{Note.} At $\rho=0$ there is no asymptotic normality under the null, so the QQ plot is not expected to fit the Gaussian reference line in that panel.}

    \caption{QQ normal calibration of the plug-in and debiased estimators for the $X$-dependent residual experiment at $n=500$, across noise-dependence levels $\rho\in\{0,0.1,0.2,0.3,0.4,0.5\}$. The estimators are centered at the population-level $\|\Psi\|$ and scaled to have unit second moment.}
    \label{fig:experiment_details_qq_calibration}
\end{figure}

\vspace{1.25\baselineskip}

\begin{center}
    \begin{minipage}{\linewidth}
        \centering
        \captionof{table}{Coverage of the reverse-triangle-inequality confidence intervals for $s_m=1$ and $s_\epsilon=0.75$. Parentheses give 95\% error bars.}
        \label{tab:experiment_details_reverse_triangle_coverage_main}
        \scriptsize
        \resizebox{0.82\linewidth}{!}{%
        \begin{tabular}{@{}cccc@{}}
            \toprule
            $\rho$ & $n=250$ & $n=500$ & $n=1000$ \\
            \midrule
            0 & 0.930 (-0.050, +0.050) & 0.955 (-0.039, +0.039) & 0.916 (-0.053, +0.053) \\
            0.1 & 0.970 (-0.033, +0.030) & 0.982 (-0.025, +0.018) & 0.991 (-0.018, +0.009) \\
            0.2 & 0.990 (-0.020, +0.010) & 0.991 (-0.018, +0.009) & 1.000 (-0.000, +0.000) \\
            0.3 & 1.000 (-0.000, +0.000) & 1.000 (-0.000, +0.000) & 1.000 (-0.000, +0.000) \\
            0.4 & 1.000 (-0.000, +0.000) & 1.000 (-0.000, +0.000) & 1.000 (-0.000, +0.000) \\
            0.5 & 1.000 (-0.000, +0.000) & 1.000 (-0.000, +0.000) & 1.000 (-0.000, +0.000) \\
            \bottomrule
        \end{tabular}}
    \end{minipage}
\end{center}

\vspace{1.25\baselineskip}

\begin{center}
    \begin{minipage}{\linewidth}
        \centering
        \captionof{table}{Coverage of the debiased delta-method confidence intervals for $s_m=1$ and $s_\epsilon=0.75$. The delta-method interval is only reported for $\rho>0$. Parentheses give 95\% error bars.}
        \label{tab:experiment_details_delta_coverage_main}
        \scriptsize
        \resizebox{0.82\linewidth}{!}{%
        \begin{tabular}{@{}cccc@{}}
            \toprule
            $\rho$ & $n=250$ & $n=500$ & $n=1000$ \\
            \midrule
            0.1 & 0.911 (-0.083, +0.083) & 0.973 (-0.037, +0.027) & 0.929 (-0.055, +0.055) \\
            0.2 & 0.935 (-0.061, +0.061) & 0.970 (-0.034, +0.030) & 0.952 (-0.041, +0.041) \\
            0.3 & 0.938 (-0.053, +0.053) & 0.981 (-0.026, +0.019) & 0.943 (-0.044, +0.044) \\
            0.4 & 0.953 (-0.045, +0.045) & 0.981 (-0.026, +0.019) & 0.943 (-0.044, +0.044) \\
            0.5 & 0.933 (-0.052, +0.052) & 0.963 (-0.036, +0.036) & 0.945 (-0.042, +0.042) \\
            \bottomrule
        \end{tabular}}
    \end{minipage}
\end{center}

\subsection{Causal arrow experiment details}
\label{sec:experiment_details_causal_arrow_violin}

The evaluated dataset is a simple example similar to the ones evaluated in the synthetic experiments of \emph{$X$-dependent residuals} in  \Cref{sec:experiments}
The structural function is \[
m(x) = 0.55\,x + 0.95\,\sin(1.65\,x) +0.38\,\sin(3.6\,x) + 0.1 \, x^3.
\]
The conditional standard deviation of the normally distributed noise is
\begin{align*}
    \sigma(x) & = 0.45 \exp(c\, \rho\{h(x)-\E[h(X)]\}),\\
    h(x) & = \sin(1.6\, x) + 0.35\cos(2.4\, x),\\
    c &= \sqrt{1/\Var(h(X))}.
\end{align*}

\begin{figure}[ht]
    \centering
    \includegraphics[width=\linewidth,trim=0mm 0mm 0mm 0mm,clip]{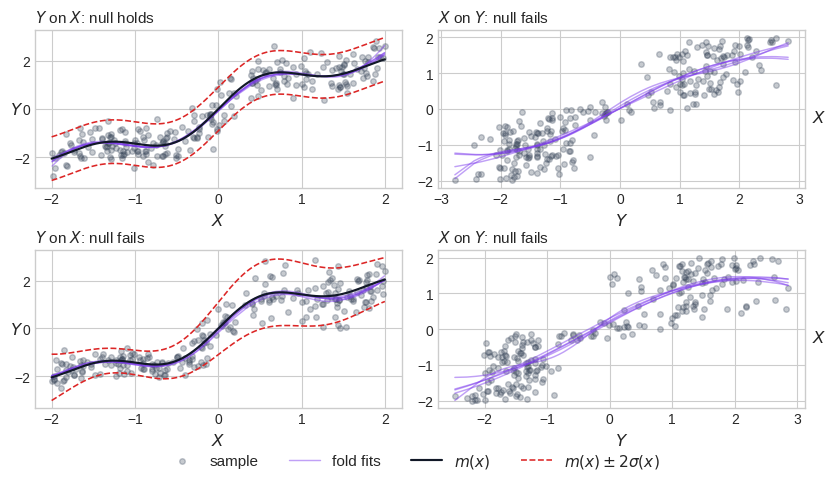}
    \caption{Example data from the causal ANM experiment, where the additive-noise direction is $X\to Y$.}
    \label{fig:causal_arrow_data}
\end{figure}

We observe that the only procedures that are properly calibrated under the null are the debiased bootstrap procedure and
the full-data cross-fitted permutation test, which was in fact very conservative, with its $p$-value distribution
far from the uniform law that would correspond to perfect calibration. Among the tests that correctly fail to reject the null,
only ours and the full-sample cross-fitted permutation baseline attain non-trivial power, with the former performing significantly better.
The split-data tests that were calibrated attain very poor power under the alternative because the testing sample is too small.
\begin{figure*}[t]
    \centering
    \includegraphics[
        width=0.98\textwidth,
        height=0.42\textheight,
        keepaspectratio,
        trim=3mm 2mm 3mm 1mm,
        clip
    ]{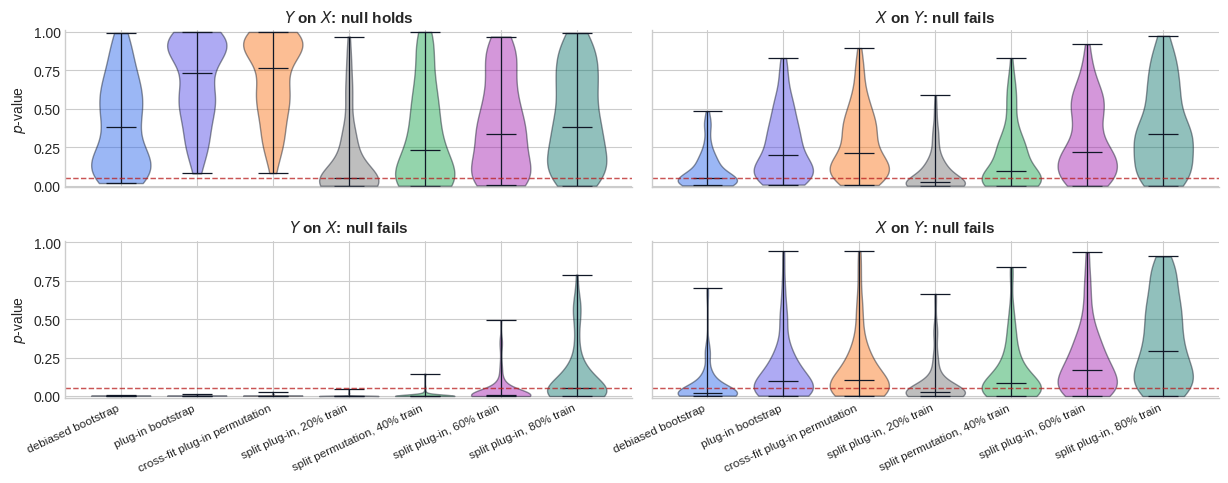}
    \caption{Distribution of causal-arrow scores across repetitions of the two-way goodness-of-fit experiment.}
    \label{fig:experiment_details_causal_arrow_violin}
\end{figure*}

\subsection{Covariate-adjusted residual diagnostics}
\label{sec:experiment_details_covariate_diagnostics}
Each simulated dataset has \(n=120\) observations, with \(X\sim\mathrm{Bernoulli}(1/2)\), \(T\sim\mathrm{Unif}[-\pi,\pi]\), and

\[
Y=m(X,T)+\sigma(X)\varepsilon,\qquad \varepsilon\sim \cN(0,1).
\]
The mean is

\[
m(x,t)=1.5\left\{0.2x+\sum_{k=1}^5\left[
\frac{0.30}{k}\sin(kt)+\frac{0.25}{k}\cos(kt)
+\frac{0.22}{k}x\sin(kt)+\frac{0.18}{k}x\cos(kt)
\right]\right\}.
\]
The null uses \(\sigma(0)=\sigma(1)=0.45\), while the alternative uses \(\sigma(0)=0.35,\sigma(1)=0.55\).

The experiment uses 4-fold cross-fitting, with nuisances obtained via kernel ridge regression. Kernel bandwidths and ridge terms are obtained by cross-validation
within the training folds.
\begin{center}
    \begin{minipage}{\linewidth}
        \centering
        \includegraphics[width=\linewidth,trim=0mm 0mm 0mm 0mm,clip]{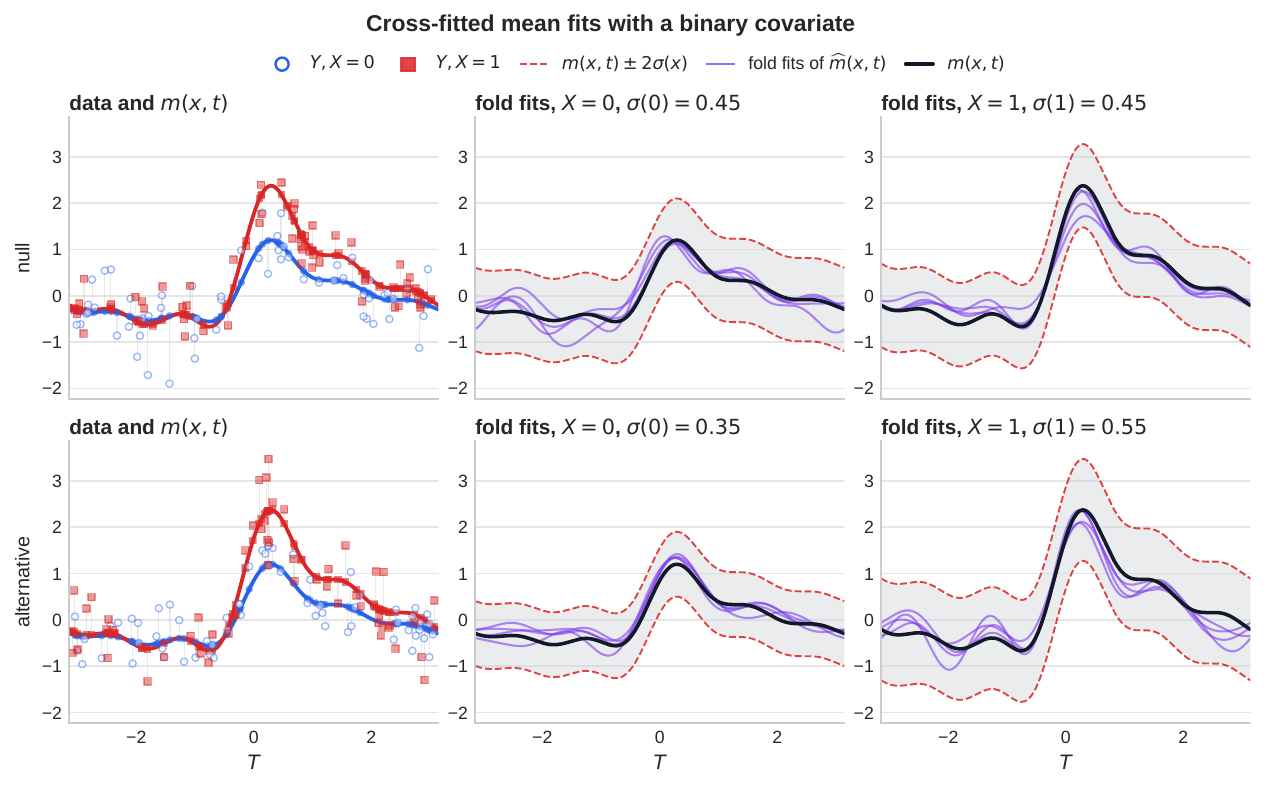}
        \captionof{figure}{Fold-specific structural function fits in the covariate-adjusted residual experiment.}
        \label{fig:experiment_details_covariate_structural_fold_fits}
    \end{minipage}
\end{center}

\begin{center}
    \begin{minipage}{\linewidth}
        \centering
        \includegraphics[width=\linewidth,trim=0mm 0mm 0mm 0mm,clip]{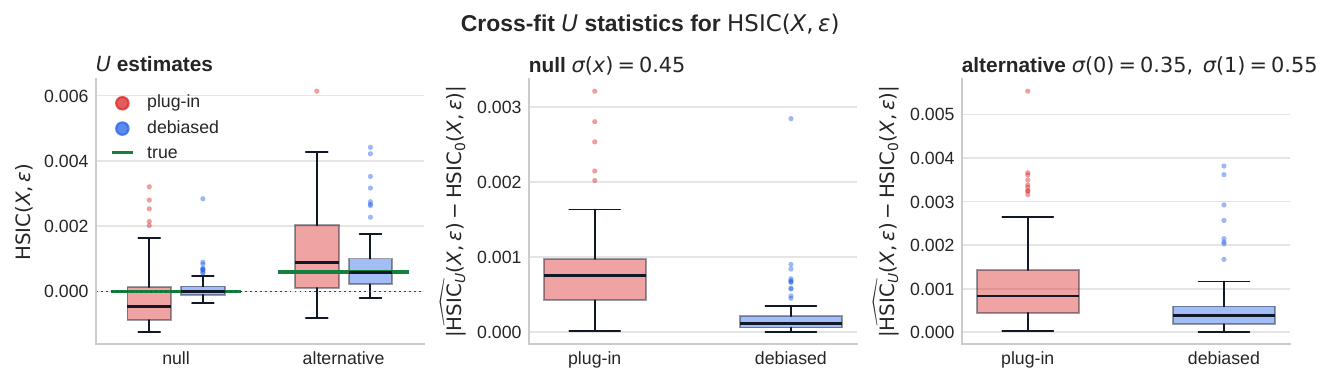}
        \captionof{figure}{Comparison of the feasible $U$-statistic estimator with the oracle target in the covariate-adjusted residual experiment.}
        \label{fig:experiment_details_covariate_u_statistic_oracle_comparison}
    \end{minipage}
\end{center}

\subsection{Ranges of empirical coverage point estimates}
\label{sec:experiment_details_additional_coverage_tables}
We report ranges of observed empirical coverage means across the indicated noise-dependence levels. 
\begin{center}
    \begin{minipage}{\linewidth}
        \centering
        \renewcommand{\arraystretch}{1.08}
        \captionof{table}{Ranges of empirical coverage point estimates over $\rho$ for the remaining synthetic calibration settings at $n=250$, excluding the main setting $s_m=1$ and $s_\epsilon=0.75$. The reverse-triangle range is over $\rho\in\{0,0.1,\ldots,0.5\}$; the delta-method range is over $\rho\in\{0.1,\ldots,0.5\}$.}
        \label{tab:experiment_details_additional_coverage_summary_N250}
        \begin{tabular}{@{}cccc@{}}
            \toprule
            $s_m$ & $s_\epsilon$ & Reverse-triangle & Debiased delta method \\
            \midrule
            1.0 & 0.25 & 0.940--0.990 & 0.838--0.941 \\
            1.0 & 0.5 & 0.930--1.000 & 0.881--0.951 \\
            1.5 & 0.25 & 0.920--1.000 & 0.886--0.987 \\
            1.5 & 0.5 & 0.890--1.000 & 0.900--0.989 \\
            1.5 & 0.75 & 0.890--1.000 & 0.940--0.989 \\
            2.0 & 0.25 & 0.920--1.000 & 0.887--0.972 \\
            2.0 & 0.5 & 0.910--0.990 & 0.893--0.975 \\
            2.0 & 0.75 & 0.910--1.000 & 0.931--0.973 \\
            \bottomrule
        \end{tabular}
    \end{minipage}
\end{center}

\vspace{0.75\baselineskip}

\begin{center}
    \begin{minipage}{\linewidth}
        \centering
        \renewcommand{\arraystretch}{1.08}
        \captionof{table}{Ranges of empirical coverage point estimates over $\rho$ for the remaining synthetic calibration settings at $n=500$. The entries are ranges of observed means, not confidence intervals.}
        \label{tab:experiment_details_additional_coverage_summary_N500}
        \begin{tabular}{@{}cccc@{}}
            \toprule
            $s_m$ & $s_\epsilon$ & Reverse-triangle & Debiased delta method \\
            \midrule
            1.0 & 0.25 & 0.938--1.000 & 0.966--0.989 \\
            1.0 & 0.5 & 0.955--1.000 & 0.969--0.991 \\
            1.5 & 0.25 & 0.933--1.000 & 0.959--0.982 \\
            1.5 & 0.5 & 0.964--1.000 & 0.943--0.985 \\
            1.5 & 0.75 & 0.964--1.000 & 0.953--0.974 \\
            2.0 & 0.25 & 0.929--1.000 & 0.960--0.978 \\
            2.0 & 0.5 & 0.982--1.000 & 0.944--0.989 \\
            2.0 & 0.75 & 0.982--1.000 & 0.943--0.986 \\
            \bottomrule
        \end{tabular}
    \end{minipage}
\end{center}

\vspace{0.75\baselineskip}

\begin{center}
    \begin{minipage}{\linewidth}
        \centering
        \renewcommand{\arraystretch}{1.08}
        \captionof{table}{Ranges of empirical coverage point estimates over $\rho$ for the remaining synthetic calibration settings at $n=1000$. The entries are ranges of observed means, not confidence intervals.}
        \label{tab:experiment_details_additional_coverage_summary_N1000}
        \begin{tabular}{@{}cccc@{}}
            \toprule
            $s_m$ & $s_\epsilon$ & Reverse-triangle & Debiased delta method \\
            \midrule
            1.0 & 0.25 & 0.918--0.991 & 0.938--0.970 \\
            1.0 & 0.5 & 0.918--1.000 & 0.920--0.963 \\
            1.5 & 0.25 & 0.914--0.991 & 0.900--0.955 \\
            1.5 & 0.5 & 0.896--1.000 & 0.919--0.943 \\
            1.5 & 0.75 & 0.907--1.000 & 0.915--0.951 \\
            2.0 & 0.25 & 0.895--1.000 & 0.917--0.960 \\
            2.0 & 0.5 & 0.879--1.000 & 0.933--0.945 \\
            2.0 & 0.75 & 0.882--1.000 & 0.925--0.943 \\
            \bottomrule
        \end{tabular}
    \end{minipage}
\end{center}

\clearpage
\subsection{\texorpdfstring{Additional QQ calibration plots at $n=500$}{Additional QQ calibration plots at n=500}}
\label{sec:experiment_details_additional_qq_plots}

\begin{center}
    \begin{minipage}{\linewidth}
        \centering
        \captionsetup{font=small,skip=2pt}
        \includegraphics[
            width=0.92\linewidth,
            keepaspectratio,
            trim=0mm 0mm 0mm 0mm,
            clip
        ]{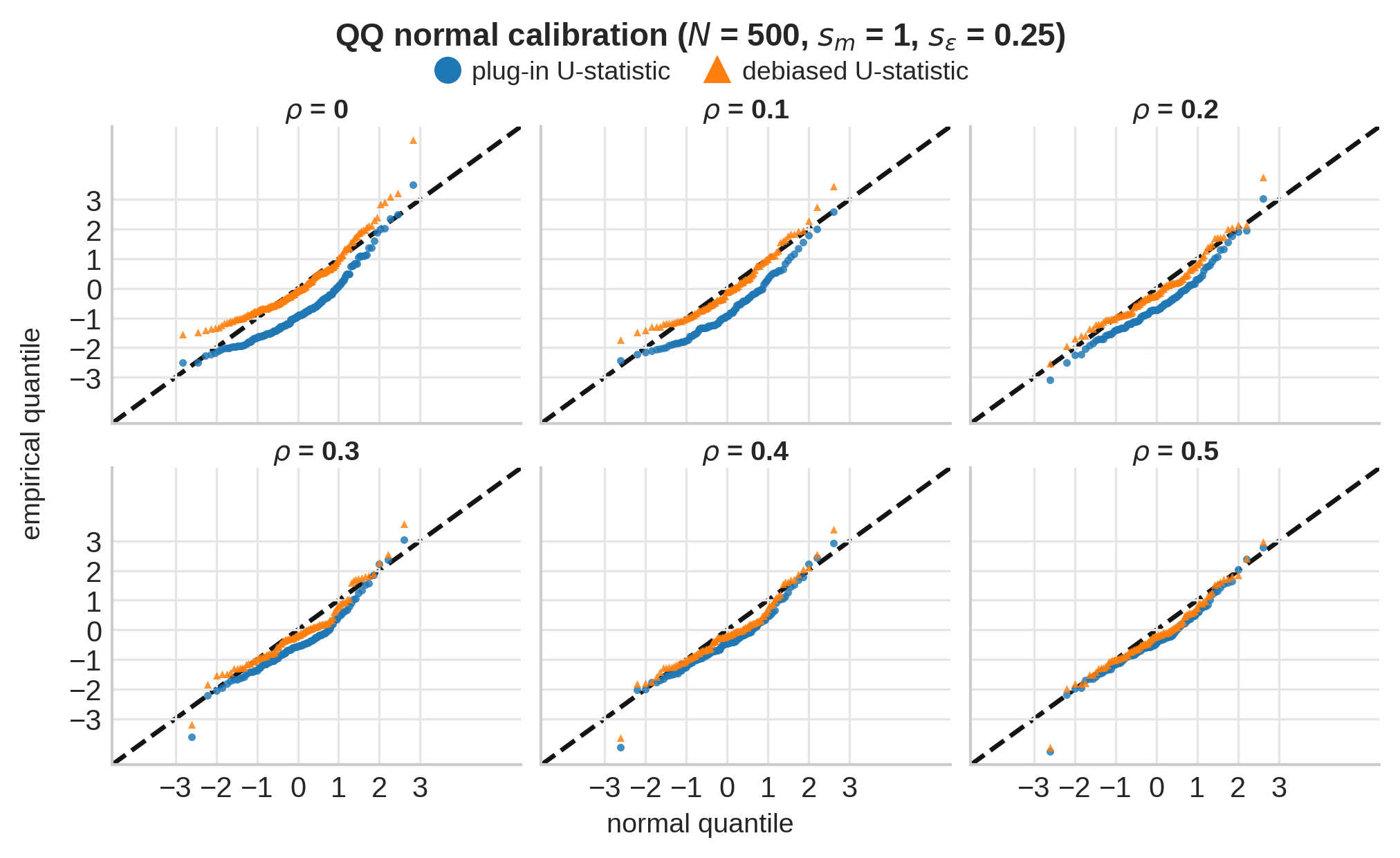}
        \captionof{figure}{QQ calibration at $n=500$, $s_m=1$, and $s_\epsilon=0.25$.}
        \label{fig:experiment_details_qq_N500_sm1_seps025}
        \par\vspace{0.45\baselineskip}
        \includegraphics[
            width=0.92\linewidth,
            keepaspectratio,
            trim=0mm 0mm 0mm 0mm,
            clip
        ]{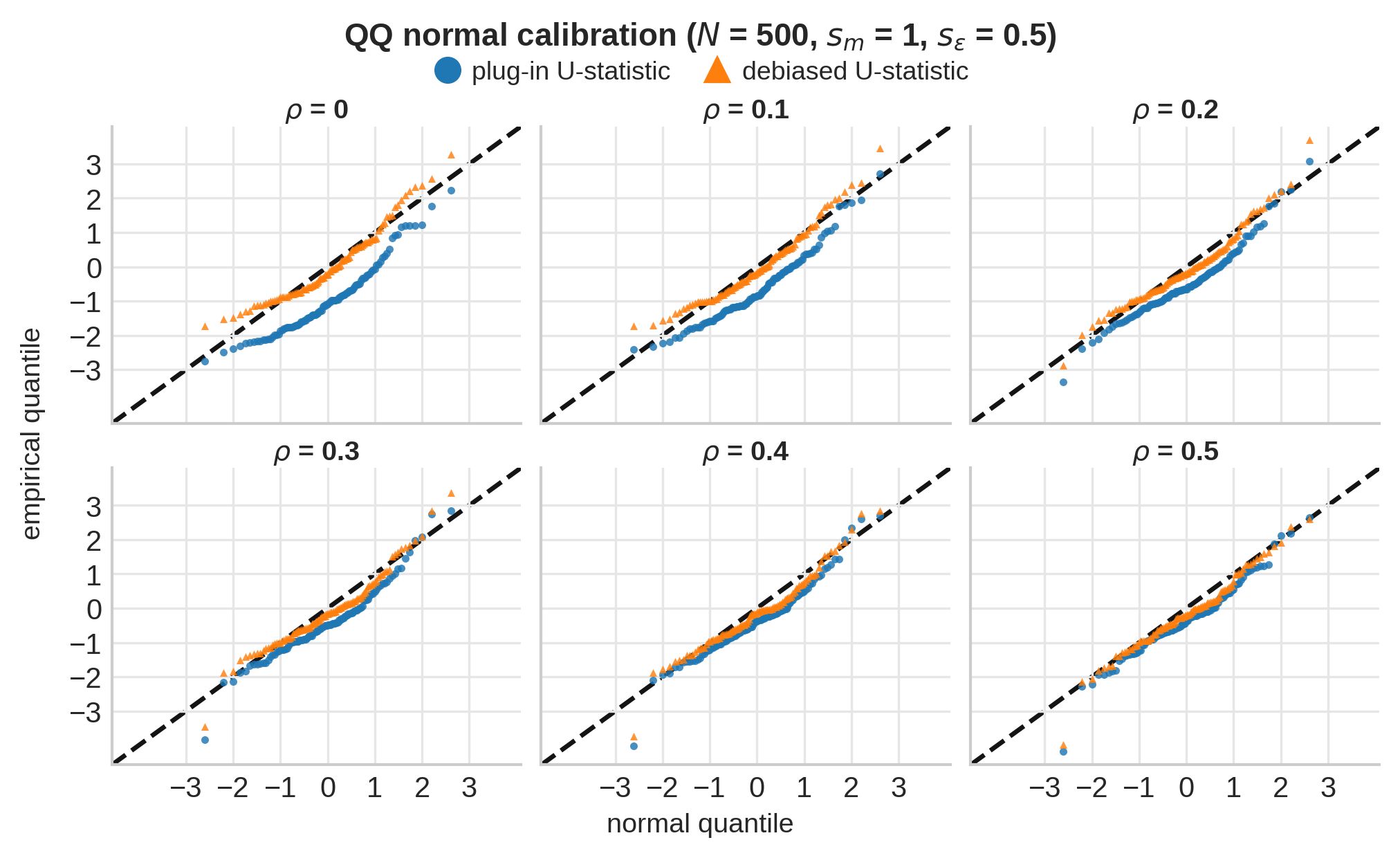}
        \captionof{figure}{QQ calibration at $n=500$, $s_m=1$, and $s_\epsilon=0.5$.}
        \label{fig:experiment_details_qq_N500_sm1_seps05}
    \end{minipage}
\end{center}

\begin{figure}[p]
    \centering
    \includegraphics[width=0.92\linewidth,trim=0mm 0mm 0mm 0mm,clip]{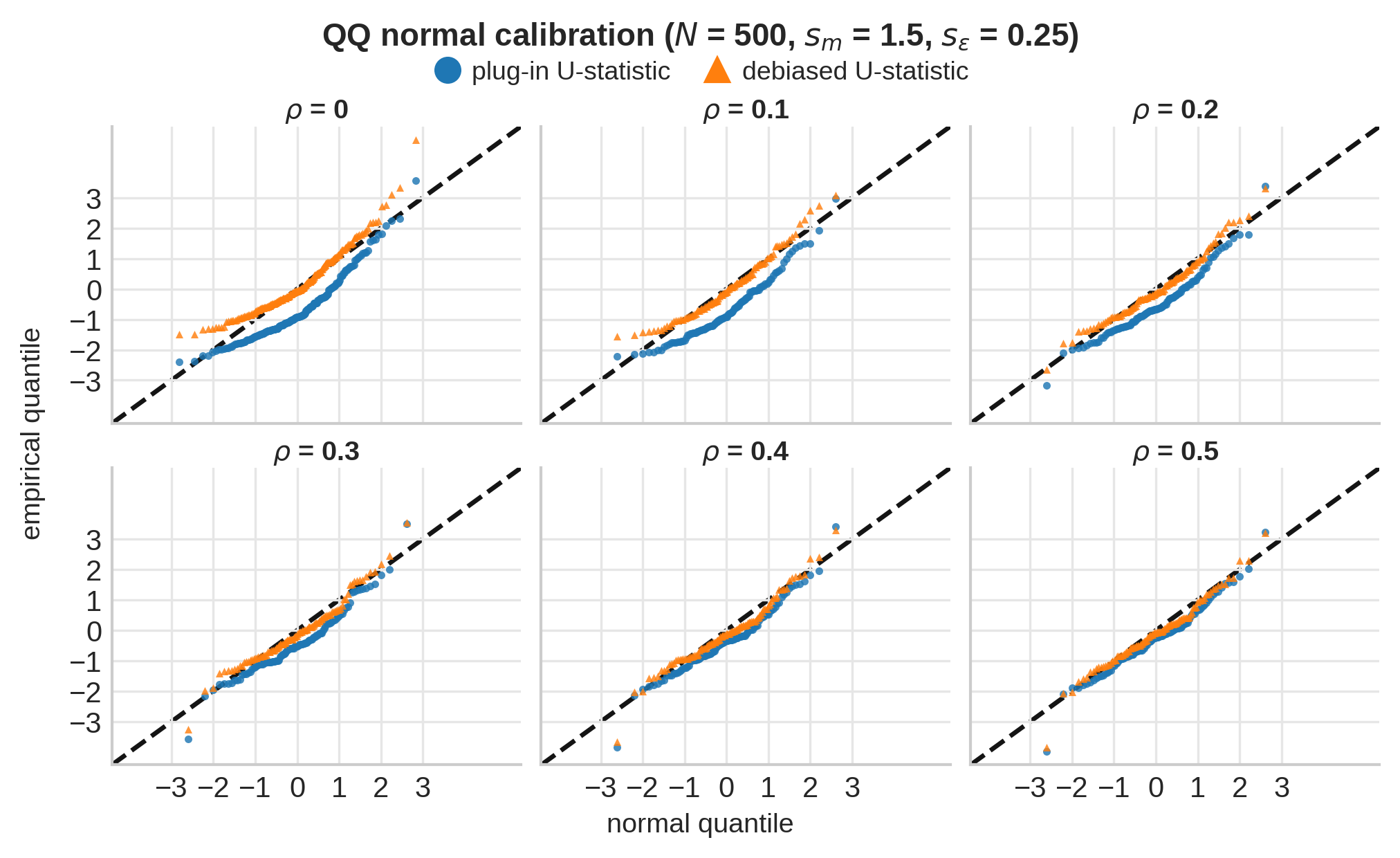}
    \caption{QQ calibration at $n=500$, $s_m=1.5$, and $s_\epsilon=0.25$.}
    \label{fig:experiment_details_qq_N500_sm15_seps025}
\end{figure}

\begin{figure}[p]
    \centering
    \includegraphics[width=0.92\linewidth,trim=0mm 0mm 0mm 0mm,clip]{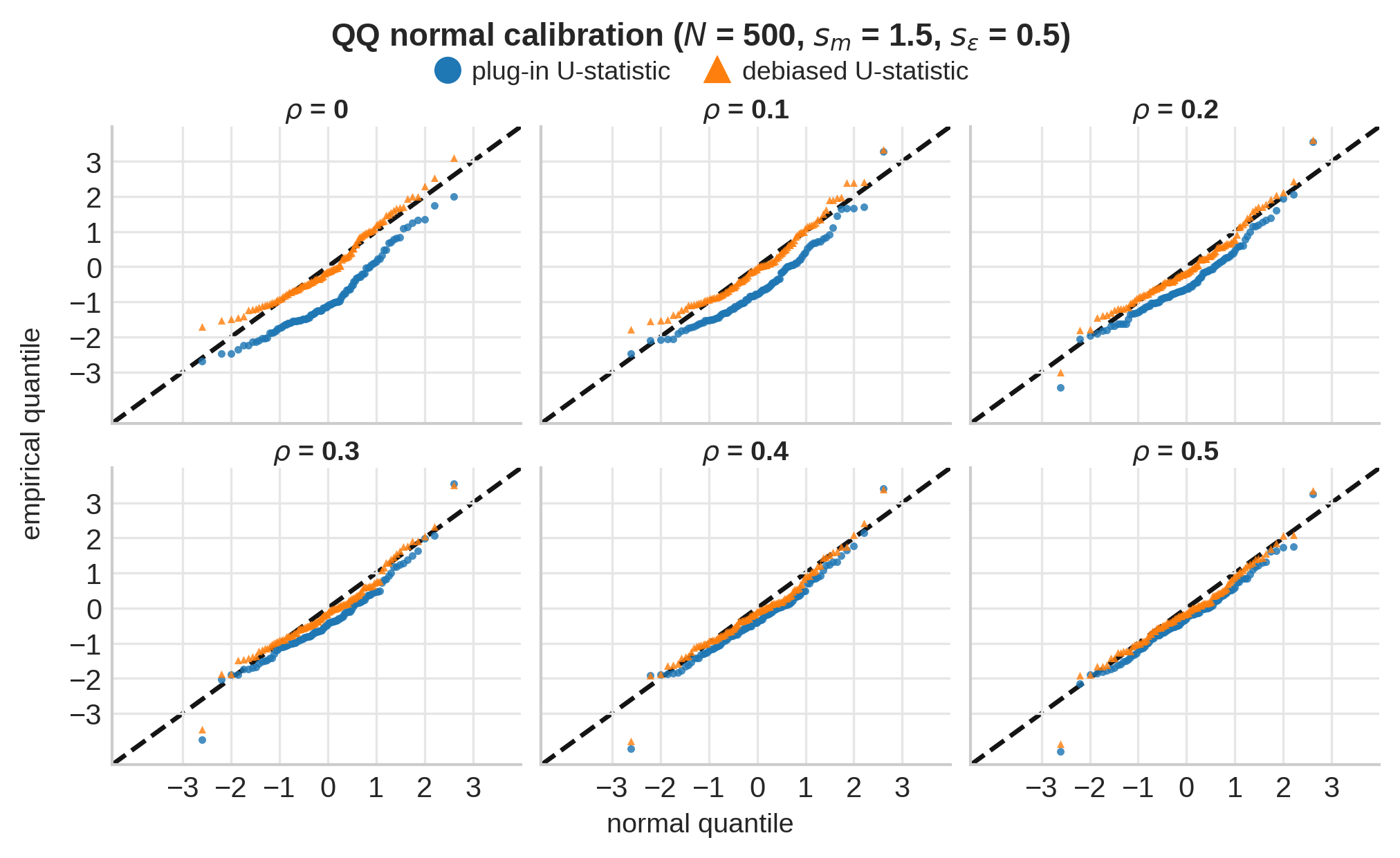}
    \caption{QQ calibration at $n=500$, $s_m=1.5$, and $s_\epsilon=0.5$.}
    \label{fig:experiment_details_qq_N500_sm15_seps05}
\end{figure}

\begin{figure}[p]
    \centering
    \includegraphics[width=0.92\linewidth,trim=0mm 0mm 0mm 0mm,clip]{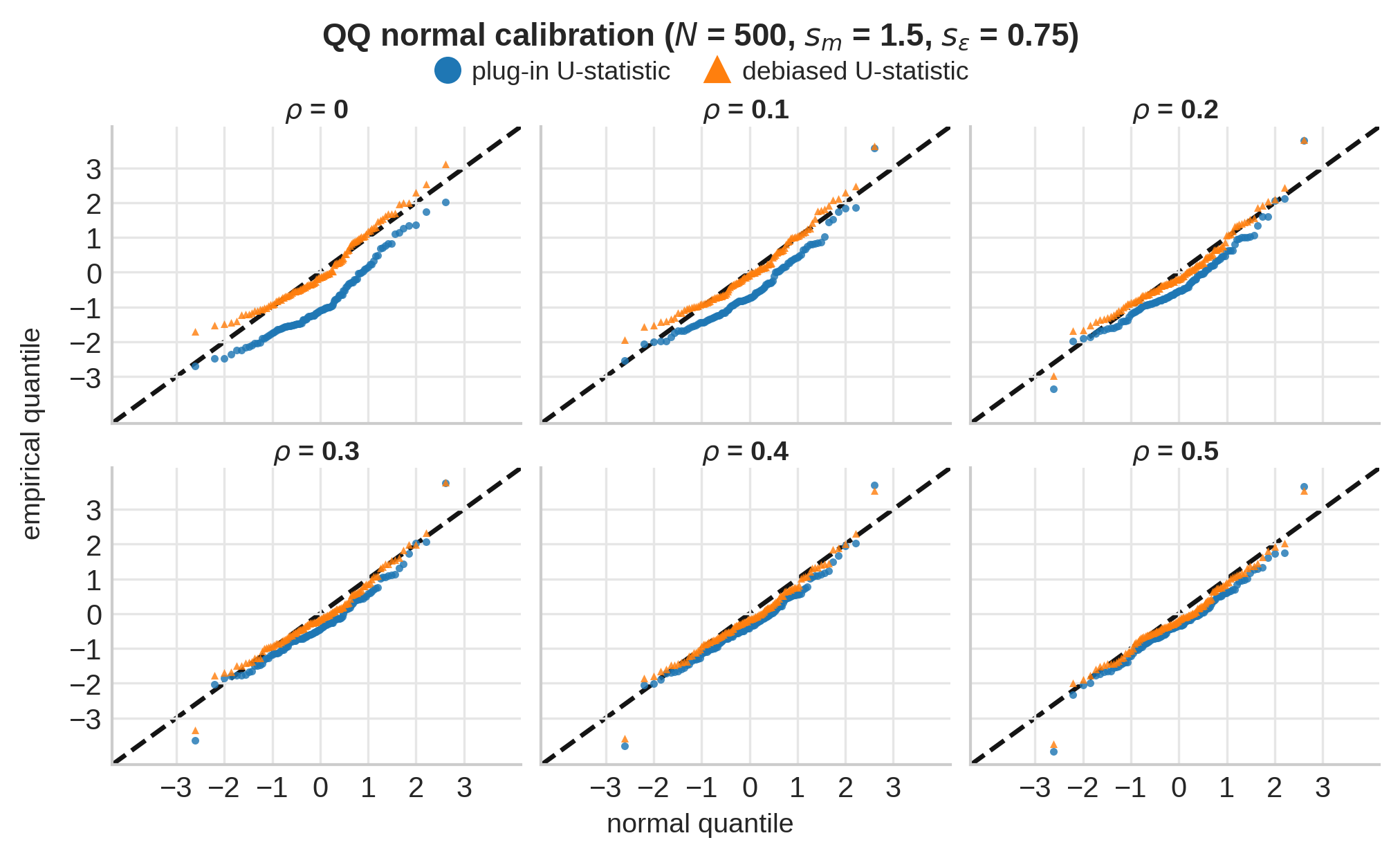}
    \caption{QQ calibration at $n=500$, $s_m=1.5$, and $s_\epsilon=0.75$.}
    \label{fig:experiment_details_qq_N500_sm15_seps075}
\end{figure}

\begin{figure}[p]
    \centering
    \includegraphics[width=0.92\linewidth,trim=0mm 0mm 0mm 0mm,clip]{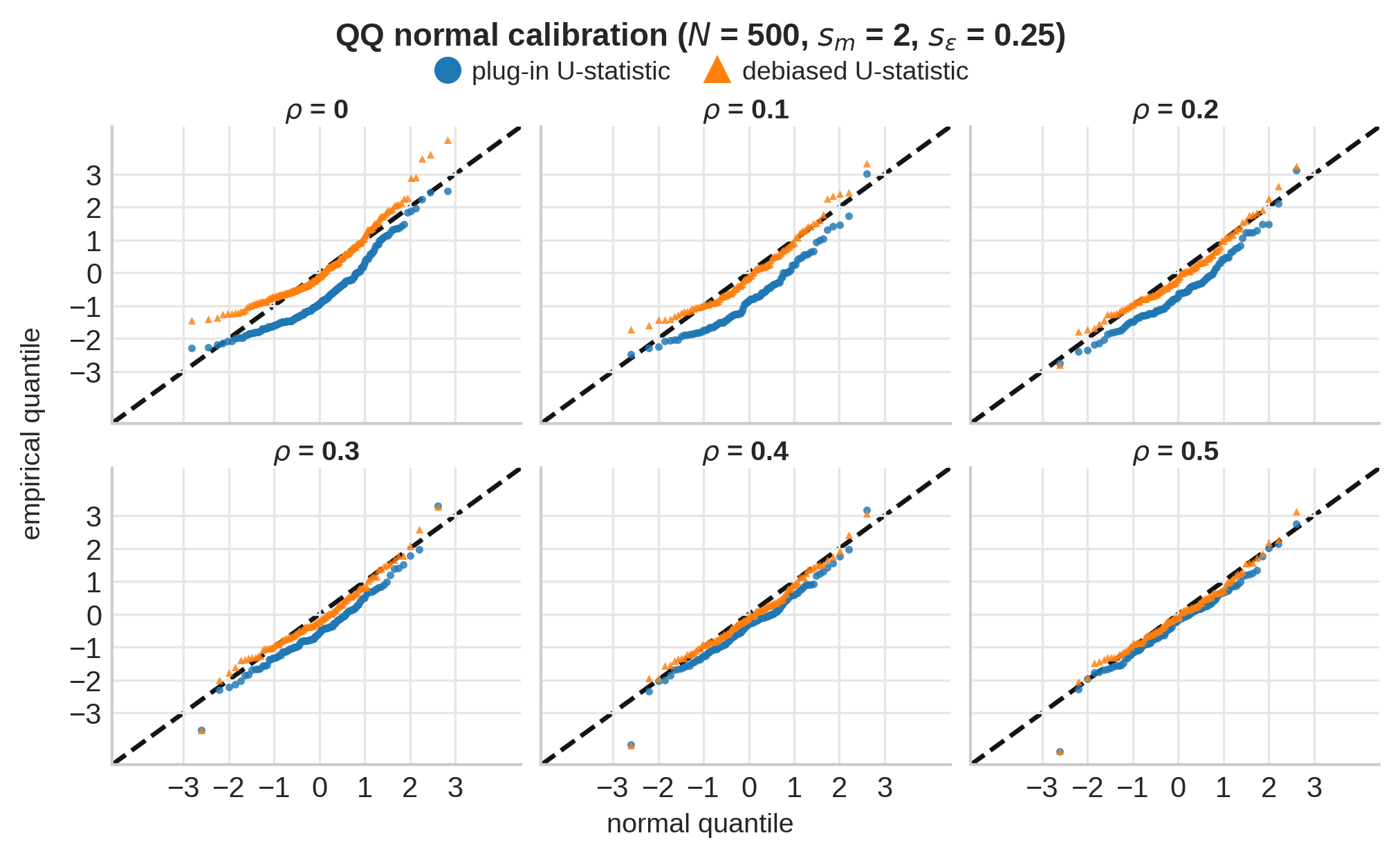}
    \caption{QQ calibration at $n=500$, $s_m=2$, and $s_\epsilon=0.25$.}
    \label{fig:experiment_details_qq_N500_sm2_seps025}
\end{figure}

\begin{figure}[p]
    \centering
    \includegraphics[width=0.92\linewidth,trim=0mm 0mm 0mm 0mm,clip]{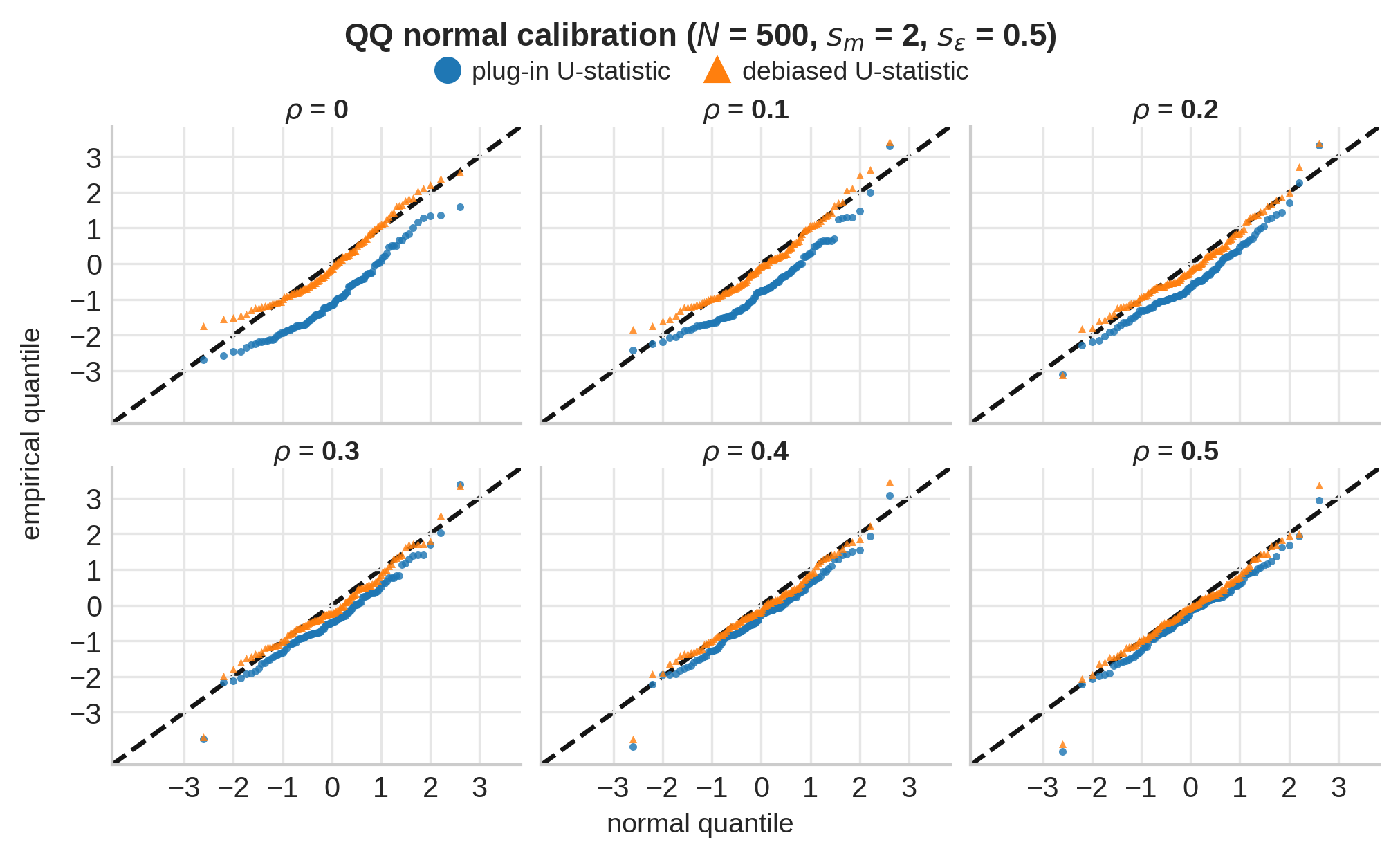}
    \caption{QQ calibration at $n=500$, $s_m=2$, and $s_\epsilon=0.5$.}
    \label{fig:experiment_details_qq_N500_sm2_seps05}
\end{figure}

\begin{figure}[p]
    \centering
    \includegraphics[width=0.92\linewidth,trim=0mm 0mm 0mm 0mm,clip]{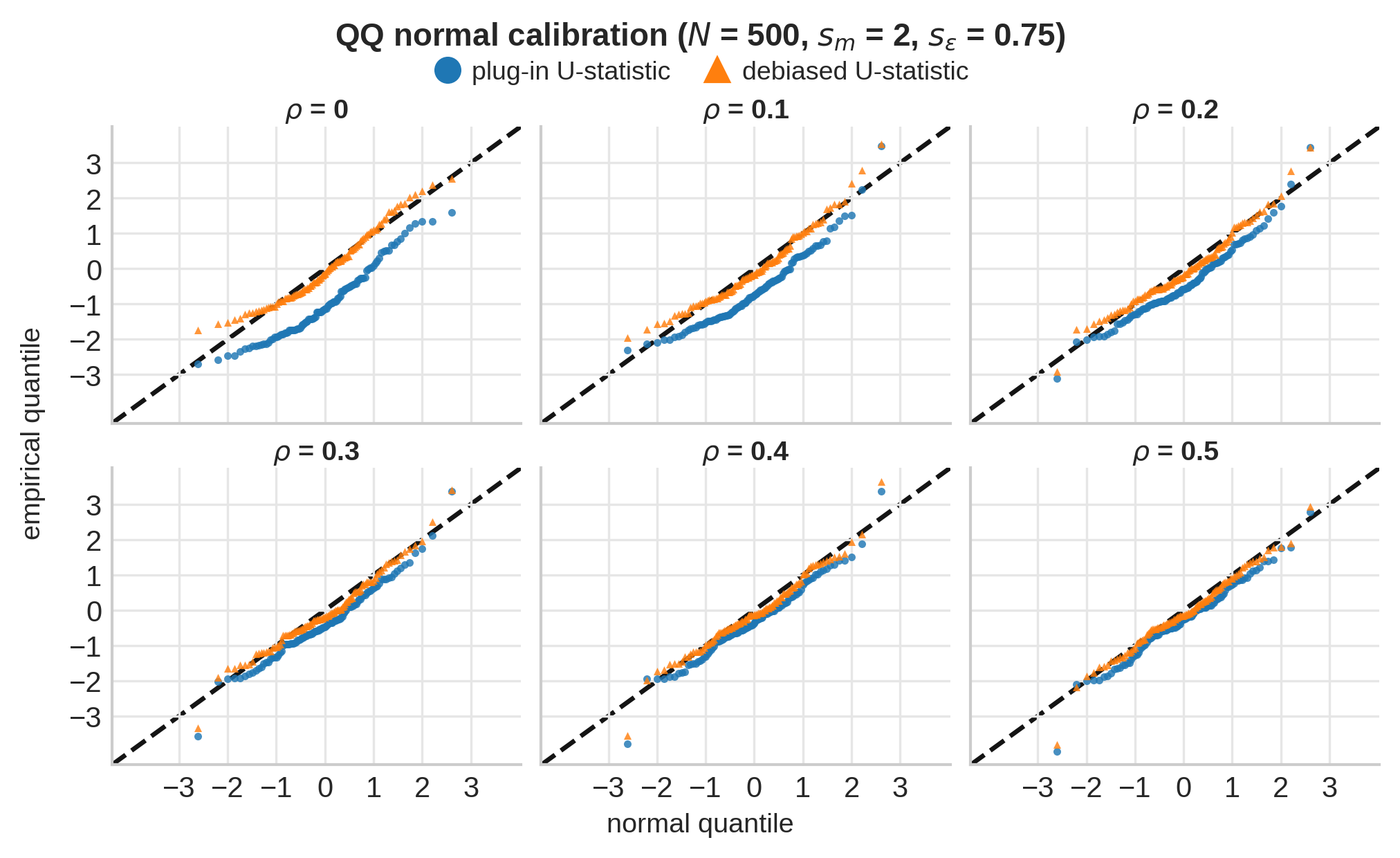}
    \caption{QQ calibration at $n=500$, $s_m=2$, and $s_\epsilon=0.75$.}
    \label{fig:experiment_details_qq_N500_sm2_seps075}
\end{figure}

\clearpage

\section{Validity of asymptotically normal confidence sets}
\label{sec:finite_sample_delta_validity}

\begin{center}
\small
\setlength{\tabcolsep}{5pt}
\renewcommand{\arraystretch}{1.25}
\begin{tabularx}{\linewidth}{@{}>{\raggedright\arraybackslash}p{0.27\linewidth}>{\raggedright\arraybackslash}X@{}}
\toprule
Notation & Meaning \\
\midrule
$\check\Psi_n$ & Cross-fitted one-step estimator of $\Psi_0$ used in the delta-method diagnostic. \\
\midrule
$\widehat\zeta_n$ & Bootstrap quantile for $\|\mathbb H\|_{\mathcal H_{kl}}^2$. \\
\midrule
$\hat\sigma^2$ & Plug-in estimate of the delta-method variance component. \\
\midrule
$z_{1-\beta/2}$ & Standard normal quantile used for the diagnostic comparison. \\
\midrule
$\widehat{\mathfrak D}_n$ & Ratio of estimated quadratic remainder scale to estimated linear-term scale. \\
\bottomrule
\end{tabularx}
\end{center}

\textbf{Delta method.} We have the algebraic identity
\begin{align*}
    &\sqrt{n}\left(\|\check\Psi_n\|^2_{\cH_{kl}} - \|\Psi_0\|^2_{\cH_{kl}}\right)\\
    &=\sqrt{n}\left\langle \check\Psi_n - \Psi_0, \check\Psi_n + \Psi_0\right\rangle_{\cH_{kl}}\\
    &= \sqrt{n}\|\check\Psi_n - \Psi_0\|^2_{\cH_{kl}} + 2\sqrt{n}\langle \check\Psi_n - \Psi_0, \Psi_0\rangle_{\cH_{kl}}.
\end{align*}
We refer to $2\sqrt{n}\langle \check\Psi_n - \Psi_0, \Psi_0\rangle_{\cH_{kl}}$ as the \textit{linear term} and $\sqrt{n}\|\check\Psi_n - \Psi_0\|^2_{\cH_{kl}}$ as the \textit{quadratic remainder}. For a fixed $\Psi_0\neq 0$, we derive a distributional limit for $\sqrt{n}\left(\|\check\Psi_n\|^2_{\cH_{kl}} - \|\Psi_0\|^2_{\cH_{kl}}\right)$. Suppose the conclusion of \Cref{thm:appendix_two_fold_cf_asymp_linearity} holds, namely
\begin{align*}
    \sqrt{n}(\check\Psi_n - \Psi_0) \rightsquigarrow \mathbb{H}.
\end{align*}
By the Continuous Mapping Theorem, $n\|\check\Psi_n - \Psi_0\|^2_{\cH_{kl}}\rightsquigarrow \|\sH\|_{\cH_{kl}}^2$.
Thus $\sqrt{n}\|\check\Psi_n - \Psi_0\|^2_{\cH_{kl}}\rightsquigarrow 0.$
By another application of the Continuous Mapping Theorem, $\sqrt{n}\lra{\check\Psi_n-\Psi_0,\Psi_0}\tod \lra{\sH,\Psi_0}_{\cH_{kl}}$.
Thus
\begin{align*}
    \sqrt{n}\left(\|\check\Psi_n\|^2_{\cH_{kl}} - \|\Psi_0\|^2_{\cH_{kl}}\right) \rightsquigarrow 2\langle \mathbb{H}, \Psi_0\rangle_{\cH_{kl}}.
\end{align*}
Since $2\langle \mathbb{H}, \Psi_0\rangle_{\cH_{kl}}\sim \mathcal{N}(0, 4\mathbb{E}[\langle \phi(W,X,Y), \Psi_0\rangle_{\cH_{kl}}^2])$, this yields a closed form asymptotic variance that can be used to construct a calibrated confidence interval for $\sqrt{n}\left(\|\check\Psi_n\|^2_{\cH_{kl}} - \|\Psi_0\|^2_{\cH_{kl}}\right)$.

However, in finite samples, the quality of the approximation heavily depends on the size of $ \sqrt{n}\|\check\Psi_n - \Psi_0\|^2_{\cH_{kl}}$ relative to $2\sqrt{n}\langle \check\Psi_n - \Psi_0, \Psi_0\rangle_{\cH_{kl}}$. We therefore propose a data-driven diagnostic via the bootstrapped empirical measures $\mathbb{P}^{(b)}_n$. Recall in \Cref{sec: estimator}, we defined $\widehat{\zeta}_n$ as the $(1-\alpha)$-quantile of $\|\mathbb{H}_n^{(b)}\|^2_{\cH_{kl}}$, conditionally on the original sample $(O_1, \dots, O_n)$. The confidence interval $\mathcal{C}_n(\widehat{\zeta}_n)$ contains $0$ if and only if $n\|\check\Psi_n\|^2_{\cH_{kl}}\leq \widehat{\zeta}_n$.

If the linear term has reached its asymptotic regime, then
\[
    2\sqrt n \left\langle \check\Psi_n-\Psi_0,\Psi_0\right\rangle_{\mathcal H_{kl}}
    \overset{d}{\approx}
    N(0,4\sigma^2),
    \qquad
    \sigma^2
    =
    \mathrm{Var}\!\left(
        \left\langle \Psi_0,\mathbb H\right\rangle_{\mathcal H_{kl}}
    \right).
\]
Thus the $(1-\beta)$ quantile of the absolute value of the linear term is approximated by
\[
    2\hat\sigma z_{1-\beta/2},
\]
where $\hat\sigma^2$ is the plug-in variance estimator from
\Cref{prop: sigma_consistency}. On the other hand, since $\hat\zeta_n$ estimates the
$(1-\alpha)$ quantile of $\|\mathbb H\|^2_{\mathcal H_{kl}}$, the quantity
$n^{-1/2}\hat\zeta_n$ estimates the $(1-\alpha)$ quantile of the quadratic
remainder
\[
    \sqrt n\|\check\Psi_n-\Psi_0\|^2_{\mathcal H_{kl}}.
\]
This gives the diagnostic
\[
    \widehat{\mathfrak D}_n
    =
    \frac{n^{-1/2}\hat\zeta_n}
    {2\hat\sigma z_{1-\beta/2}}.
\]

A simple warning sign is that the triangle-inequality-based confidence set (\Cref{eq: curlyI_zetan}) contains zero:
\[
    0\in\mathcal C_n(\hat\zeta_n)
    \quad\Longleftrightarrow\quad
    n\|\check\Psi_n\|^2_{\mathcal H_{kl}}\le \hat\zeta_n .
\]
In this case the estimated signal is no larger than the bootstrap uncertainty
for $\check\Psi_n$ itself. This is precisely the near-null regime where the
derivative of $h\mapsto \|h\|^2_{\mathcal H_{kl}}$ at $\Psi_0$ is too small for the
linear approximation to dominate.

\section{Statistical learning theory}
\label{app:statistical_learning_theory}

\begin{center}
\small
\setlength{\tabcolsep}{5pt}
\renewcommand{\arraystretch}{1.25}
\begin{tabularx}{\linewidth}{@{}>{\raggedright\arraybackslash}p{0.28\linewidth}>{\raggedright\arraybackslash}X@{}}
\toprule
Notation & Meaning \\
\midrule
$\mathcal S_j$, $\mathcal S$ & $(\partial_j\varphi_l)\{Y-m_0(W)\}\in\mathcal H_l$; $\mathcal{S}_j$ with coordinate index suppressed in the derivations. \\
\midrule
$v_{0,j}$, $v$ & $\mathbb{E}_0[\mathcal{S}_j\mid W=w]$ vector-valued true regression function, where subscript $0$ denotes dependence on true population distribution; $v_{0,j}$ with coordinate index suppressed in the derivations. \\
\midrule
$q$, $\mathcal H_q$, $\kappa_q$ & reproducing kernel on $\mathcal W$, its RKHS, and its uniform kernel bound. \\
\midrule
$I_w$, $S_w$, $L_W$, $C_W$ & $L^2$-embedding of $\cH_{q}$, adjoint of $I_w$, integral operator $I_w\circ S_w$, covariance operator $S_w \circ I_w$. \\
\midrule
$I$, $\mu_i$, $e_i$ & Index set, eigenvalues, and eigenfunctions of spectral decomposition of $L_W$ used to define interpolation spaces. (see \Cref{eq:SVD}) \\
\midrule
$[\mathcal H_q]^\alpha$, $[\mathcal G]^\alpha$ & Scalar and vector-valued interpolation spaces with index $\alpha$ \\
\midrule
$\mathcal G$, $S_2(H,H')$ & Vector-valued RKHS (see \Cref{def:inter_ope_norm}) and Hilbert--Schmidt operators from $H$ to $H'$. \\
\midrule
$\beta$, $C_0$ & Exponent for $v_0$ in \Cref{asst:src} (measures smoothness), and operator representer for $v_0$. \\
\midrule
$p$ & Exponent for $\mu_i$ in \Cref{asst:evd} (measures capacity of $\cH_q$) \\
\midrule
$p_{\mathcal S\mid W}$, $\sigma$, $R$ & Conditional law of the label $\mathcal{S}$ and constants in \Cref{asst:mom}. \\
\midrule
$\widehat v_\lambda$, $\widetilde v_\lambda$ & empirical vvKRR solution with estimated labels (i.e. plug-in regression function $\widehat{m}$) in \Cref{eq:empirical_v_lambda}, empirical vvKRR with oracle labels $\mathcal{S}_j$ in \Cref{eq:empirical_v_oracle_lambda}.\\
\midrule
$\widehat C_W$, $C_{W,\lambda}$, $\widehat C_{W,\lambda}$ & Empirical covariance operator $\frac{2}{n}\sum_{i=1}^{n/2}\varphi_q(w_i)\otimes \varphi_q(w_i)$, population regularized covariance operator $\mathbb{E}_0[\varphi_q(W)\otimes \varphi_q(W)] + \lambda I$, empirical regularized covariance operators $\widehat{C}_W + \lambda I$. \\
\bottomrule
\end{tabularx}
\end{center}

For the fixed coordinate $j$, define
\[
    \mathcal S_j
    :=
    (\partial_j\varphi_l)\{Y-m_0(W)\}
    \in \mathcal H_l,
\]
and let
\[
    v_{0,j}(w)
    :=
    \mathbb E_0[\mathcal S_j\mid W=w].
\]
We use $v_{0,j}$ both for a fixed measurable version of this conditional expectation
and for its $L^2(P_{W,0};\mathcal H_l)$-equivalence class.  The analysis is identical for each $j$, so we fix $j$ and suppress it from the notation. In other words, we write
\[
    \mathcal S:=\mathcal S_j,
    \qquad
    v_0:=v_{0,j},
\]
where
\[
    v_0(w)
    :=
    \mathbb E_0[\mathcal S\mid W=w].
\]
We use $v_0$ both for a fixed measurable version of this conditional expectation and for its $L^2(P_{W,0};\mathcal H_l)$-equivalence class.

We make the following assumptions on the kernel $q$ and its RKHS $\cH_q\subseteq \mathbb{R}^{\mathcal{W}}$. 
\begin{enumerate}[itemsep=0em]
    \item \label{assump:separable}
    $\cH_q$ is separable, this is satisfied
    if $q$ is continuous, given that $\mathcal{W}$ is separable;\footnote{This follows from \citet[Lemma 4.33]{steinwart2008support}. Note that the Lemma requires separability of $\mathcal{W}$, which is satisfied since we assume that $\mathcal{W}$ is a second countable locally compact Hausdorff space.}
    \item \label{assump:measurable}
    $q(\cdot,w)$ is measurable for $P_{W,0}$-almost all $w \in \mathcal{W}$;
    \item \label{assump:bounded}
    $q(w,w) \leq \kappa_q^2$ for $P_{W,0}$-almost all $w\in \mathcal{W}$.
\end{enumerate}
The above assumptions are not restrictive in practice, as well-known kernels such as the Gaussian, Laplace and Mat{\'e}rn kernels satisfy them on $\mathcal{W} \subseteq \mathbb{R}^{d_w}$ \citep{sriperumbudur2011universality}.

We now introduce some facts about the interplay between $\cH_q$ and $L^2(P_{W,0}),$ which has been extensively studied by \cite{smale2004shannon,smale2005shannon}, \cite{de2006discretization} and \cite{steinwart2012mercer}. We first define the embedding $I_{w}: \cH_q \rightarrow L^2(P_{W,0})$, mapping a function $f \in \cH_q$ to its $P_{W,0}$-equivalence class $[f]$. Since the kernel $q$ is bounded, we know that this embedding is a well-defined Hilbert-Schmidt operator. Indeed, by \citep[Lemma 2.2 \& 2.3]{steinwart2012mercer}, its Hilbert-Schmidt norm can be computed as
\begin{align*}
    \left\|I_{w}\right\|_{S_{2}\left(\cH_q, L^2(P_{W,0})\right)}^2 = \int_{\mathcal{W}} q(w,w) \mathrm{d}P_{W,0}(w) \leq \kappa_q^2. 
\end{align*}
The adjoint operator $S_{w}:=I_{w}^{*}: L^2(P_{W,0}) \rightarrow \cH_q$ is an integral operator with respect to the kernel $q$, i.e. for $f \in L^2(P_{W,0})$ and $w \in \mathcal{W}$ we have \citep[Theorem 4.27]{steinwart2008support}
$$
\left(S_{w} f\right)(w)=\int_{\mathcal{W}} q\left(w, w'\right) f\left(w'\right) \mathrm{d} P_{W,0}\left(w'\right).
$$
Next, we define the self-adjoint, positive semi-definite and trace class integral operators
$$
L_{W}:=I_{w} S_{w}: L^2(P_{W,0}) \rightarrow L^2(P_{W,0}) \quad \text { and } \quad C_{W}:=S_{w} I_{w}: \cH_q \rightarrow \cH_q.
$$

We now introduce the background required in order to  characterize the smoothness of the target function $v_0$. We review the results of 
\cite{steinwart2012mercer} and \cite{fischer2020sobolev} in constructing scalar-valued interpolation spaces, and \cite{lietal2022optimal} in defining vector-valued interpolation spaces.

{\bf Real-valued Interpolation Space:} By the spectral theorem for self-adjoint compact operators, there exists an at most countable index set $I$, a non-increasing sequence $(\mu_i)_{i\in I} > 0$, and a family $(e_i)_{i \in I} \in \cH_q$, such that $\left([e_i]\right)_{i \in I}$ 
\footnote{We recall that the bracket $[\cdot]$ denotes the embedding that maps $f$ to its equivalence class $I_{w}(f) \in L^2(P_{W,0})$.} is an orthonormal basis (ONB) of $\overline{\text{ran}~I_{w}} \subseteq L^2(P_{W,0})$ and $(\mu_i^{1/2}e_i)_{i\in I}$ is an ONB of $\left(\operatorname{ker} I_{w}\right)^{\perp} \subseteq \cH_q$, and we have 
\begin{equation} \label{eq:SVD}
    L_{W} = \sum_{i\in I} \mu_i \langle\cdot, [e_i] \rangle_{L^2(P_{W,0})}[e_i], \qquad C_{W} = \sum_{i \in I} \mu_i \langle\cdot, \mu_i^{\frac{1}{2}}e_i \rangle_{\cH_q} \left(\mu_i^{\frac{1}{2}}e_i\right).
\end{equation}

For $\alpha \geq 0$, the $\alpha$-interpolation space  \cite{steinwart2012mercer} is defined by
\[[\cH_q]^{\alpha}:=\left\{\sum_{i \in I} a_{i} \mu_{i}^{\alpha / 2}\left[e_{i}\right]:\left(a_{i}\right)_{i \in I} \in \ell_{2}(I)\right\} \subseteq L^2(P_{W,0}),\]
equipped with the inner product
 \[\left\langle \sum_{i \in I}a_i(\mu_i^{\alpha/2}[e_i]), \sum_{i \in I}b_i(\mu_i^{\alpha/2}[e_i]) \right\rangle_{[\cH_q]^{\alpha}} = \sum_{i \in I} a_i b_i,\] 
for $\left(a_{i}\right)_{i\in I}, \left(b_{i}\right)_{i\in I} \in \ell_{2}(I)$. The $\alpha$-interpolation space defines a Hilbert space. Moreover, $\left(\mu_{i}^{\alpha / 2}\left[e_{i}\right]\right)_{i \in I}$ forms an ONB of $[\cH_q]^{\alpha}$ and consequently $[\cH_q]^{\alpha}$ is a separable Hilbert space. In the following, we use the abbreviation $\|\cdot\|_{\alpha}:=\|\cdot\|_{[\cH_q]^{\alpha}}$.

{\bf Vector-valued Reproducing Kernel Hilbert Space.} Let $K: \mathcal{W} \times \mathcal{W} \rightarrow \mathcal{L}(\cH_l)$ be an operator valued positive-semidefinite (psd) kernel. Fix $K$, $w \in \mathcal{W}$, and $h \in \cH_l$, then $\left(K_w h\right)(\cdot):=K(\cdot, w) h$
defines a function from $\mathcal{W}$ to $\cH_l$. The completion of 
\begin{align*}
    \mathcal{G}_{\text {pre }}:=\operatorname{span}\left\{K_w h \mid w \in \mathcal{W}, h \in \cH_l\right\},
\end{align*}
with inner product on $\mathcal{G}_{\text {pre }}$ defined on the elementary elements as $\left\langle K_w h, K_{w'} h^{\prime}\right\rangle_{\mathcal{G}}:=\left\langle h, K\left(w, w'\right) h^{\prime}\right\rangle_{\cH_l}$, defines a vRKHS denoted as $\mathcal{G}$. For a more complete overview of the vector-valued reproducing kernel Hilbert space, we refer the reader to \cite{carmeli2006vector}, \cite{carmeli2010vector} and \cite[][Section 2]{li2024towards}. In the following, we will denote $\mathcal{G}$ as the vRKHS induced by the kernel $K: \mathcal{W} \times \mathcal{W} \rightarrow \mathcal{L}(\cH_l)$ with 
\begin{equation} \label{eq:v_kernel}
K(w, w') := q(w, w')\operatorname{Id}_{\cH_l}, \quad w,w' \in \mathcal{W}.
\end{equation}

\begin{theorem}[vRKHS isomorphism]\label{theo:operep}
For every function $F\in \mathcal{G}$ there exists a unique operator $C \in S_2(\cH_q, \cH_l)$ such that $F(\cdot) = C\varphi_q(\cdot) \in \cH_l$ with $\|C\|_{S_2(\cH_q, \cH_l)} = \|F\|_{\mathcal{G}}$ and vice versa. Hence $\cG \simeq S_2(\cH_q, \cH_l)$ and we denote the isometric isomorphism between $S_2(\cH_q, \cH_l)$ and $\mathcal{G}$ as $\mathfrak{I}$. It follows that $\cG$ can be written as $\mathcal{G}= \left\{F: \mathcal{W} \rightarrow \cH_l \mid F = C\varphi_q(\cdot),\, C \in S_2(\cH_q, \cH_l)\right\}$.
\end{theorem}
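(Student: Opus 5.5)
The plan is to construct the isometry explicitly from the feature-map representation. Given $C\in S_2(\cH_q,\cH_l)$, define $F_C(w):=C\varphi_q(w)$. We first check that $F_C\in\mathcal G$ with $\|F_C\|_{\mathcal G}=\|C\|_{S_2}$. Conversely, every $F\in\mathcal G$ should arise in this way. The natural route is through rank-one operators. For $h\in\cH_l$ and $w'\in\mathcal W$, the elementary element $K_{w'}h=q(\cdot,w')h$ equals $C\varphi_q(\cdot)$ with $C=h\otimes\varphi_q(w')$, the rank-one operator $g\mapsto \langle g,\varphi_q(w')\rangle_{\cH_q}h$. Indeed, $(h\otimes\varphi_q(w'))\varphi_q(w)=q(w,w')h$ by the reproducing property.

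Next we compare inner products on these generators. In $\mathcal G$ we have $\langle K_{w}h,K_{w'}h'\rangle_{\mathcal G}=q(w,w')\langle h,h'\rangle_{\cH_l}$. In $S_2$ we have $\langle h\otimes\varphi_q(w),h'\otimes\varphi_q(w')\rangle_{S_2}=\langle h,h'\rangle_{\cH_l}\langle\varphi_q(w),\varphi_q(w')\rangle_{\cH_q}$, and these agree. Hence the linear map $\mathfrak I_0:\mathrm{span}\{h\otimes\varphi_q(w)\}\to\mathcal G_{\rm pre}$, $C\mapsto C\varphi_q(\cdot)$, is well defined and isometric. It is well defined because, by isometry, a zero operator maps to a zero-norm element, and zero norm in $\mathcal G$ means the zero function. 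The map $\mathfrak I_0$ is also onto $\mathcal G_{\rm pre}$.

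We then extend by continuity, and this is the key density step. The span of $\{h\otimes\varphi_q(w)\}$ is dense in $S_2(\cH_q,\cH_l)\cong\cH_l\otimes\cH_q$. This holds because $\mathrm{span}\{\varphi_q(w)\}$ is dense in $\cH_q$: any $g$ orthogonal to all $\varphi_q(w)$ satisfies $g(w)=0$ for every $w$. Elementary tensors of dense sets then span a dense subspace of the Hilbert tensor product. The isometry therefore extends uniquely to an isometry $\mathfrak I:S_2(\cH_q,\cH_l)\to\mathcal G$. Its range is closed, since it is the image of a complete space under an isometry, and it contains the dense subspace $\mathcal G_{\rm pre}$, so $\mathfrak I$ is surjective.

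The main obstacle is to check that the abstract extension is still given pointwise by $F(w)=C\varphi_q(w)$. Elements of the completion $\mathcal G$ must be identified with functions, so we need pointwise evaluation to be continuous on both sides. On the $\mathcal G$ side, $\|F(w)\|_{\cH_l}\le\|K(w,w)\|^{1/2}\|F\|_{\mathcal G}=q(w,w)^{1/2}\|F\|_{\mathcal G}$, by the vector-valued reproducing property $\langle F(w),h\rangle_{\cH_l}=\langle F,K_wh\rangle_{\mathcal G}$. On the operator side, $\|C\varphi_q(w)\|\le\|C\|_{\mathrm{op}}q(w,w)^{1/2}\le\|C\|_{S_2}q(w,w)^{1/2}$. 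Now take $C_n\to C$ in $S_2$ with $C_n$ in the span. Then $\mathfrak I C_n(w)=C_n\varphi_q(w)\to C\varphi_q(w)$, and also $\mathfrak IC_n(w)\to(\mathfrak IC)(w)$. The two limits coincide, so $\mathfrak IC=C\varphi_q(\cdot)$.

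Uniqueness of $C$ follows from injectivity of the isometry. It also follows directly: if $C\varphi_q(w)=0$ for all $w$, then $C$ vanishes on a dense set and hence $C=0$. Together these steps give the stated characterization of $\mathcal G$.
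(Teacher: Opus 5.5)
The paper does not prove this statement. It quotes it as background from the vector-valued RKHS literature (\citet{carmeli2006vector}, \citet{li2024towards}), so there is no in-paper argument to match. Your proof is correct and self-contained.

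The standard route in those references goes through the feature-map characterization of a vRKHS. One takes the feature space $S_2(\mathcal H_q,\mathcal H_l)$ and feature operators $\Gamma_w:\mathcal H_l\to S_2(\mathcal H_q,\mathcal H_l)$, $\Gamma_w h=h\otimes\varphi_q(w)$. Then $\Gamma_w^*\Gamma_{w'}=q(w,w')\mathrm{Id}_{\mathcal H_l}$ and $\Gamma_w^*C=C\varphi_q(w)$. The general theorem gives $\mathcal G=\{C\varphi_q(\cdot):C\in S_2(\mathcal H_q,\mathcal H_l)\}$ with $\|F\|_{\mathcal G}=\inf\{\|C\|_{S_2}:F=C\varphi_q(\cdot)\}$. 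Injectivity of $C\mapsto C\varphi_q(\cdot)$, which follows from density of $\mathrm{span}\{\varphi_q(w)\}$, turns the infimum into an equality.

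You instead rebuild the ingredients of that theorem by hand:
\begin{itemize}
\item the isometry on rank-one generators;
\item density of their span in $S_2$;
\item closed range of the extended isometry, which gives surjectivity;
\item continuity of point evaluation on both sides, which identifies the abstract extension with $w\mapsto C\varphi_q(w)$.
\end{itemize}
The citation route is shorter and packages the function-space identification into one general result. Yours is elementary and makes visible that nothing beyond $q$ being a positive-definite kernel is needed; in particular, no boundedness or separability of $q$ is used.

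One small remark. Because you define $\mathfrak I_0$ directly on operators, $C\mapsto C\varphi_q(\cdot)$ is automatically well defined. What your inner-product computation actually needs is that the $\mathcal G$-norm of $\sum_k K_{w_k}h_k$ depends only on the function, not on the chosen representation. That is part of the standard construction of $\mathcal G_{\mathrm{pre}}$ via the reproducing property. Your zero-norm justification targets the formal-sum version of the map; it is harmless, just aimed at a slightly different point.
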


{\bf Vector-valued Interpolation Space:} Introduced in \cite{lietal2022optimal}, vector-valued interpolation spaces generalize the notion of scalar-valued interpolation spaces to vRKHS with a separable kernel of the form \eqref{eq:v_kernel}. We refer the reader to \citet{steinwart2012mercer, fischer2020sobolev} for an introduction on scalar-valued interpolation spaces. For two Hilbert space $H,H'$, we use $S_2(H,H')$ to denote the space of Hilbert-Schmidt operators from $H$ to $H'$. 

\begin{definition}[Vector-valued interpolation space]\label{def:inter_ope_norm} Let $q$ be a real-valued kernel with associated RKHS $\cH_q$ and let $[\cH_q]^{\alpha}$ be the real-valued interpolation space associated to $\cH_q$ with some $\alpha \geq 0$. The vector-valued interpolation space $[\mathcal{G}]^{\alpha}$ is defined as
\[[\mathcal{G}]^{\alpha} 
:= {\mathfrak{I}}\left(S_2([\cH_q]^{\alpha},\cH_l) \right) = \{F \mid F = {\mathfrak{I}}(C), ~C \in S_2([\cH_q]^{\alpha},\cH_l)\}.\]
The space $[\mathcal{G}]^{\alpha}$ is a Hilbert space equipped with the inner product $$\left\langle F, G \right\rangle_{\alpha} := \left\langle C, L\right\rangle_{S_2([\cH_q]^{\alpha}, \cH_l)} \qquad (F,G \in [\mathcal{G}]^{\alpha}),$$ where $C = {\mathfrak{I}}^{-1}(F),$ $L = {\mathfrak{I}}^{-1}(G)$. 
\end{definition}

\begin{rem}[Well-specified versus misspecified setting]\label{rem:well_specified} We say that we are in the well-specified setting if $v_0 \in [\cG]^{1}$. In this case, there exists $\bar{F} \in \cG$ such that $v_0 = \bar{F}$ $P_{W,0}-$almost surely and $\|v_0\|_{1} = \|\bar{F}\|_{\cG}$, i.e. $v_0$ admits a representer in $\cG$. 
\end{rem}

\Cref{def:inter_ope_norm} and \Cref{rem:well_specified} motivate the use of following assumption on the smoothness of the target function:
there exists $\beta > 0$ and a constant $B \geq 0$ such that $v_0 \in [\mathcal{G}]^{\beta}$
\begin{equation}\label{asst:src}
\|v_0\|_{\beta} \leq B. \tag{SRC}
\end{equation}
We let $C_0 := {\mathfrak{I}}^{-1}(v_0) \in S_2([\cH_q]^{\beta}, \cH_l)$. \eqref{asst:src} directly generalizes the notion of a so-called Hölder-type source condition in the learning literature \citep{caponnetto2007optimal,fischer2020sobolev,lin2018optimal,lin2020optimal} and allows to characterize the misspecified learning scenario.

In addition to \eqref{asst:src}, we require standard assumptions to obtain the precise learning rate for kernel learning algorithms. We list them below. Recall $\mu_i$ is defined in Eq.~\eqref{eq:SVD}. For constants $D_2 >0$ and $p \in (0,1]$ and for all $i \in I$,
\begin{equation}\label{asst:evd}
    \mu_i \leq D_2i^{-1/p}.\tag{EVD}
\end{equation}
\eqref{asst:evd} is a standard assumption on the \textit{eigenvalue decay} of the integral operator: they describe the interplay between the marginal distribution $P_{W,0}$ and the RKHS $\cH_q$ (see more details in \citealp{caponnetto2007optimal,fischer2020sobolev}). 

Let $p_{\mathcal S\mid W}(w,dz)$ denote a regular conditional distribution of
$\mathcal S$ given $W=w$. We assume that there exist constants
$\sigma,R>0$ such that, for $P_{W,0}$-almost every $w\in\mathcal W$
and every integer $r\geq 2$,
\begin{equation}\label{asst:mom}
\int_{\mathcal H_l}
\|z-v_0(w)\|_{\mathcal H_l}^r
p_{\mathcal S\mid W}(w,dz)
\leq
\frac12 r!\sigma^2R^{r-2}.
\tag{MOM}
\end{equation}
The \eqref{asst:mom} condition on the Markov kernel
$p_{\mathcal S\mid W}(w,dz)$ is a \textit{Bernstein moment condition} used to control the Hilbert-valued regression noise $\mathcal S-v_0(W)$. If $\|\mathcal S\|_{\mathcal H_l}\leq M$ almost surely, then
\eqref{asst:mom} is satisfied with constants depending only on $M$. In particular,
this holds if the derivative feature map $\partial_j\varphi_l$ is uniformly bounded.

\textbf{Two-stage vvKRR rate.}
We now give the rate used to verify the nuisance conditions in the main
asymptotic linearity theorem.  The statement is written for a generic
training fold $I$ of size $N$.  In the two-fold split of the main text,
$N=n/2$; in $K$-fold cross-fitting with fixed $K$, $N\asymp n$.
Throughout this paragraph, the coordinate $j\in[d_y]$ is fixed and the
constants are uniform in $j$.

We overload the $\widehat{\cdot}$ notation to denote both quantities dependent on estimated regression function $\widehat{m}$ and the empirical measure $P^{I}_{W,N}:=\frac1N\sum_{i\in I}\delta_{W_i}$. 
For $i\in I$, write
\[
    \widehat{\mathcal S}_i
    := (\partial_j\varphi_l)\{Y_i-\widehat m(W_i)\},
    \qquad
    \mathcal S_i
    := (\partial_j\varphi_l)\{Y_i-m_0(W_i)\}.
\]
Define the empirical and population covariance operators
\[
    \widehat C_W
    := \frac1N\sum_{i\in I}\varphi_q(W_i)\otimes\varphi_q(W_i),
    \qquad
    C_W
    := \mathbb E_0\{\varphi_q(W)\otimes\varphi_q(W)\},
\]
and their regularized counterparts
\begin{align*}
    \widehat C_{W,\lambda}:=\widehat C_W+\lambda I_{\mathcal H_q},
    \qquad
    C_{W,\lambda}:=C_W+\lambda I_{\mathcal H_q}.
\end{align*}
We recall the definition of the two-stage vvKRR estimator $\widehat{v}_{\lambda}$, and present the analogous oracle learner $\widetilde{v}_{\lambda}$:
\begin{align}
    \widehat v_\lambda
    &= \argmin_{v\in\mathcal G}
    \left\{\frac1N\sum_{i\in I}
    \|\widehat{\mathcal S}_i-v(W_i)\|_{\mathcal H_l}^2
    +\lambda\|v\|_{\mathcal G}^2\right\},
    \label{eq:empirical_v_lambda}\\
    \widetilde v_\lambda
    &:= \argmin_{v\in\mathcal G}
    \left\{\frac1N\sum_{i\in I}
    \|\mathcal S_i-v(W_i)\|_{\mathcal H_l}^2
    +\lambda\|v\|_{\mathcal G}^2\right\}.
    \label{eq:empirical_v_oracle_lambda}
\end{align}
Here $\mathcal G$ is the vector-valued RKHS induced by
$K(w,w')=q(w,w')I_{\mathcal H_l}$.

We define the effective dimension
\begin{align*}
    \mathcal N(\lambda):=\Tr\{C_W(C_W+\lambda I)^{-1}\}.
\end{align*}
We now establish an upper bound for $\mathcal{N}(\lambda)$ and use it to derive an useful inequality. We define the event with respect to the randomness in $P^{I}_{W,N}$
\begin{align*}
    \mathfrak{E} = \left\{\left\|C_{W,\lambda}^{-1/2}(C_W-\widehat C_W)
    C_{W,\lambda}^{-1/2}\right\|_{\mathrm{op}}
    \le \frac23\right\}.
\end{align*}
\begin{lemma}
\label{lem:leverage_bound}
Under \eqref{asst:evd}, there is a constant $D<\infty$ such that
the effective dimension $\mathcal{N}(\lambda) \le D\lambda^{-p}$. On the event $\mathfrak{E}$, we have
\begin{equation}
\label{eq:leverage_bound}
    \frac1N\sum_{i\in I}
    \left\|C_W^{1/2}\widehat C_{W,\lambda}^{-1}\varphi_q(W_i)
    \right\|_{\mathcal H_q}^2
    \le 3D\lambda^{-p}.
\end{equation}
Furthermore, for every $\tau>0$, if 
\begin{equation}
\label{eq:slt_lb_N}
    N \ge
    8\kappa_q^2\tau\lambda^{-1}
    \log\left(
        2e\lambda^{-p}
        \frac{\|C_W\|_{\mathrm{op}}+\lambda}{\|C_W\|_{\mathrm{op}}}
    \right),
\end{equation}
then we have $\mathbb{P}(\mathfrak{E}) \geq 1 - 2e^{-\tau}$.
\end{lemma}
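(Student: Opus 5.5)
The lemma has three parts, and I would prove them in order: a deterministic eigenvalue-sum bound, a deterministic operator sandwich argument on $\mathfrak E$, and a standard concentration bound for $\mathfrak E$.

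\textbf{Effective dimension.} Using the spectral decomposition \eqref{eq:SVD}, $\mathcal N(\lambda)=\sum_{i\in I}\mu_i/(\mu_i+\lambda)$. By \eqref{asst:evd}, $\mu_i\le D_2 i^{-1/p}$, and $t\mapsto t/(t+\lambda)$ is increasing, so
\[
\mathcal N(\lambda)\le \sum_{i\ge1}\frac{1}{1+\lambda D_2^{-1}i^{1/p}}.
\]
Bound the sum by an integral: $\int_0^\infty (1+\lambda D_2^{-1}t^{1/p})^{-1}\,dt$. Substituting $t=(D_2/\lambda)^p s$ gives $(D_2/\lambda)^p\int_0^\infty(1+s^{1/p})^{-1}ds$. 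This integral is finite only for $p<1$. For $p=1$, or to have a uniform constant, I would instead split the sum at $i^\ast\asymp\lambda^{-p}$, bounding the terms $i\le i^\ast$ by $1$ and the tail by $\lambda^{-1}\sum_{i>i^\ast}\mu_i$. For $p=1$ the tail only yields a logarithmic factor. I therefore expect the intended reading to be either $p<1$ or that $\mathcal N(\lambda)\le\lambda^{-1}\Tr C_W\le\kappa_q^2\lambda^{-1}$ covers $p=1$; I will state $D$ accordingly.

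\textbf{Leverage bound on $\mathfrak E$.} First write
\[
\frac1N\sum_i\|C_W^{1/2}\widehat C_{W,\lambda}^{-1}\varphi_q(W_i)\|^2=\Tr\bigl(C_W^{1/2}\widehat C_{W,\lambda}^{-1}\widehat C_W\widehat C_{W,\lambda}^{-1}C_W^{1/2}\bigr).
\]
Since $\widehat C_W\preceq\widehat C_{W,\lambda}$, this is at most $\Tr(C_W^{1/2}\widehat C_{W,\lambda}^{-1}C_W^{1/2})$.

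On $\mathfrak E$, the identity $C_{W,\lambda}^{-1/2}\widehat C_{W,\lambda}C_{W,\lambda}^{-1/2}=I-C_{W,\lambda}^{-1/2}(C_W-\widehat C_W)C_{W,\lambda}^{-1/2}\succeq\tfrac13 I$ gives $\widehat C_{W,\lambda}^{-1}\preceq 3C_{W,\lambda}^{-1}$. Hence the trace is at most $3\Tr(C_WC_{W,\lambda}^{-1})=3\mathcal N(\lambda)\le3D\lambda^{-p}$.

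\textbf{Probability of $\mathfrak E$.} This is the main obstacle. I would apply a Bernstein inequality for sums of i.i.d. self-adjoint operators in the intrinsic-dimension form (Tropp/Minsker, as used in \citet{fischer2020sobolev}, Lemma 17, or \citet{lin2020optimal}) to
\[
\xi_i=C_{W,\lambda}^{-1/2}\varphi_q(W_i)\otimes\varphi_q(W_i)C_{W,\lambda}^{-1/2}.
\]
The required inputs are:
\begin{itemize}
\item the a.s. bound $\|\xi_i\|\le\kappa_q^2/\lambda$;
\item the variance proxy $\mathbb E\xi_i^2\preceq(\kappa_q^2/\lambda)\,C_WC_{W,\lambda}^{-1}$, with trace at most $(\kappa_q^2/\lambda)\mathcal N(\lambda)$ and operator norm $\|C_W\|/(\|C_W\|+\lambda)$.
\end{itemize}
The intrinsic-dimension factor is then $\mathcal N(\lambda)(\|C_W\|+\lambda)/\|C_W\|\le D\lambda^{-p}(\|C_W\|+\lambda)/\|C_W\|$. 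This produces the logarithm in \eqref{eq:slt_lb_N} and the deviation bound
\[
\frac{4\kappa_q^2 B}{3N\lambda}+\sqrt{\frac{2\kappa_q^2 B}{N\lambda}},\qquad B=\log\frac{4\mathcal N(\lambda)(\|C_W\|+\lambda)}{\tau'\|C_W\|},
\]
which holds with probability $1-\tau'$. Under \eqref{eq:slt_lb_N}, with $\tau'=2e^{-\tau}$, both terms are at most a small constant each and sum to at most $2/3$. Matching the exact constant $8$ and the $2e$ inside the logarithm is bookkeeping; I would follow Fischer–Steinwart's Theorem 16 proof essentially verbatim, absorbing $D$ into the constant.
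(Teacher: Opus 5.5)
Your proposal is correct and follows the paper's proof. The steps match: the trace rewriting with $\widehat C_W\preceq\widehat C_{W,\lambda}$, then a factor $3$ on $\mathfrak E$, where your Loewner bound $\widehat C_{W,\lambda}^{-1}\preceq 3C_{W,\lambda}^{-1}$ is equivalent to the paper's $\Tr(R_\lambda T_\lambda)\le\|R_\lambda\|_{\mathrm{op}}\Tr(T_\lambda)$ plus a Neumann series. For $\mathcal N(\lambda)$ and $\mathbb P(\mathfrak E)$ you simply unpack Fischer--Steinwart's Lemmas 11 and 17, which the paper cites.

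One caveat you share with the paper: the concentration step really needs $\mathcal N(\lambda)$ (i.e.\ $D\lambda^{-p}$) inside the logarithm, and it needs $\tau\ge1$. So ``absorbing $D$'' while keeping the constant $8$ in \eqref{eq:slt_lb_N} only works when $D\le1$; otherwise that constant must be enlarged.
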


\begin{proof}
The existence of $D$ such that inequality $\mathcal{N}(\lambda)\leq D\lambda^{-p}$ holds follows from \citet[Lemma 11]{fischer2020sobolev} and \eqref{asst:evd}. We have
\begin{align*}
    &\frac1N \sum_{i\in I}
    \left\|C_W^{1/2}\widehat C_{W,\lambda}^{-1}\varphi_q(W_i)
    \right\|_{\mathcal H_q}^2\\
    &= \Tr\!
    \left\{C_W^{1/2}\widehat C_{W,\lambda}^{-1}
    \widehat C_W\widehat C_{W,\lambda}^{-1}C_W^{1/2}\right\}\\
    &= \Tr\!
    \left\{C_W^{1/2}\widehat C_{W,\lambda}^{-\frac{1}{2}}\widehat C_{W,\lambda}^{-\frac{1}{2}}
    \widehat C_W\widehat C_{W,\lambda}^{-\frac{1}{2}}\widehat C_{W,\lambda}^{-\frac{1}{2}}C_W^{1/2}\right\}\\
    &\leq \Tr\!
    \left\{C_W^{1/2}\widehat C_{W,\lambda}^{-\frac{1}{2}}\widehat C_{W,\lambda}^{-\frac{1}{2}}C_W^{1/2}\right\}\\
    &= \Tr\!
    \left\{\widehat C_{W,\lambda}^{-1}C_W\right\}
\end{align*}
where we use
\[
    0\le
    \widehat C_{W,\lambda}^{-1/2}\widehat C_W
    \widehat C_{W,\lambda}^{-1/2}
    \le I_{\mathcal H_q},
\]
by the spectral calculus, and the cyclicity of the trace in the last step.
Now set
\[
    T_\lambda:=C_{W,\lambda}^{-1/2}C_WC_{W,\lambda}^{-1/2},
    \qquad
    R_\lambda:=C_{W,\lambda}^{1/2}\widehat C_{W,\lambda}^{-1}
    C_{W,\lambda}^{1/2}.
\]
The operator $T_\lambda$ is positive trace class and $R_\lambda$ is
positive bounded, whence
\begin{align*}
    \Tr(\widehat C_{W,\lambda}^{-1}C_W)
    &= \Tr(R_\lambda T_\lambda)
     \le \|R_\lambda\|_{\mathrm{op}}\Tr(T_\lambda) \\
    &= \left\|C_{W,\lambda}^{1/2}\widehat C_{W,\lambda}^{-1}
    C_{W,\lambda}^{1/2}\right\|_{\mathrm{op}}
    \Tr\{C_W(C_W+\lambda I)^{-1}\}\\
    &\leq D\lambda^{-p}\left\|C_{W,\lambda}^{1/2}\widehat C_{W,\lambda}^{-1}
    C_{W,\lambda}^{1/2}\right\|_{\mathrm{op}}. 
\end{align*}
We have the algebraic identity
\begin{align*}
    C_{W,\lambda}^{1/2}\widehat C_{W,\lambda}^{-1}C_{W,\lambda}^{1/2}
    =\left[I_{\mathcal H_q}
    -C_{W,\lambda}^{-1/2}(C_W-\widehat C_W)
    C_{W,\lambda}^{-1/2}\right]^{-1}.
\end{align*}
Writing the RHS as a Von Neumann series, we see on the event $\mathfrak{E}$, we have
\begin{align*}
    \left\|C_{W,\lambda}^{1/2}\widehat C_{W,\lambda}^{-1}
    C_{W,\lambda}^{1/2}\right\|_{\mathrm{op}} \leq 3. 
\end{align*}
Therefore \cref{eq:leverage_bound} is proved. The last statement of the lemma on the lower bound of $\mathbb{P}(\mathfrak{E})$ under \cref{eq:slt_lb_N} follows from \citet[Lemma 17]{fischer2020sobolev}.
\end{proof}

\begin{theorem}[Two-stage vvKRR rate]
\label{thm:appendix_vkrr_rate}
Assume \Cref{assump:separable,assump:measurable,assump:bounded,asst:src,asst:evd,asst:mom}.
Suppose also that
$r\mapsto(\partial_j\varphi_l)(r)$ is $L$-Lipschitz as a map
$\mathbb R^{d_y}\to\mathcal H_l$.  Let
\[
    \lambda_N= N^{-1/(\beta+p)}.
\]
Then, for every $\tau>0$ and all sufficiently large $N$ satisfying
\cref{eq:slt_lb_N} with $\lambda=\lambda_N$, there are constants
$D,J<\infty$, independent of $N$ and $\tau$, such that, with probability at
least $1-7e^{-\tau}$ under the joint law of the training fold,
\begin{equation}
\label{eq:appendix_vkrr_rate}
    \big\|[\widehat v_{\lambda_N}]-v_0\big\|_{L^2(P_{W,0};\mathcal H_l)}
    \le
    L\sqrt{3D}\,\lambda_N^{-p/2}
    \|\widehat m-m_0\|_{L^2(P^{I}_{W,N})}
    +\tau\sqrt J\,N^{-\beta/\{2(\beta+p)\}}.
\end{equation}
Plugging in $\lambda_N= N^{-1/(\beta+p)}$ yields
\begin{equation}
\label{eq:appendix_vkrr_rate_simplified}
    \big\|[\widehat v_{\lambda_N}]-v_0\big\|_{L^2(P_{W,0};\mathcal H_l)}
    \le
    C_\tau\left\{
        N^{p/\{2(\beta+p)\}}
        \|\widehat m-m_0\|_{L^2(P^{I}_{W,N})}
        +N^{-\beta/\{2(\beta+p)\}}
    \right\},
\end{equation}
where $C_\tau$ is independent of $N$.
\end{theorem}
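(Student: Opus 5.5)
We split the error through the oracle learner $\widetilde v_\lambda$ of \cref{eq:empirical_v_oracle_lambda}, which is trained on the true labels $\mathcal S_i$:
\[
\big\|[\widehat v_{\lambda}]-v_0\big\|_{L^2(P_{W,0};\mathcal H_l)}
\le
\big\|[\widehat v_{\lambda}]-[\widetilde v_\lambda]\big\|_{L^2(P_{W,0};\mathcal H_l)}
+\big\|[\widetilde v_\lambda]-v_0\big\|_{L^2(P_{W,0};\mathcal H_l)}.
\]
The second term is a standard one-stage vector-valued regularised least-squares error. Under \eqref{asst:src}, \eqref{asst:evd} and \eqref{asst:mom} we invoke the $L^2$-norm ($\gamma=0$) case of the rate in \citet{li2024towards} (the vector-valued version of \citet{fischer2020sobolev}). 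With $\lambda_N=N^{-1/(\beta+p)}$, for $N$ satisfying \cref{eq:slt_lb_N}, it gives a bound $\tau\sqrt J\,N^{-\beta/\{2(\beta+p)\}}$ with probability at least $1-5e^{-\tau}$, say. One caveat: for $\beta<1$ this may need the usual embedding-type assumption, so we absorb whatever constants appear into $J$. We also reserve $2e^{-\tau}$ of the probability for the event $\mathfrak E$ of \Cref{lem:leverage_bound}. A union bound then gives the stated probability $1-7e^{-\tau}$.

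The first term is the propagation of label error, and this is where the two-stage structure enters. Both estimators are linear in their labels. Via the isomorphism of \Cref{theo:operep}, $\widehat v_\lambda=\widehat C\varphi_q(\cdot)$ with $\widehat C=\frac1N\sum_i \widehat{\mathcal S}_i\otimes(\widehat C_{W,\lambda}^{-1}\varphi_q(W_i))$, and similarly for $\widetilde v_\lambda$ with labels $\mathcal S_i$. Hence the difference is given by the operator $\Delta=\frac1N\sum_i(\widehat{\mathcal S}_i-\mathcal S_i)\otimes \widehat C_{W,\lambda}^{-1}\varphi_q(W_i)$. Its $L^2(P_{W,0};\mathcal H_l)$ norm equals $\|\Delta C_W^{1/2}\|_{S_2(\mathcal H_q,\mathcal H_l)}$.

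We bound this by the triangle inequality and then Cauchy--Schwarz over $i$:
\[
\|\Delta C_W^{1/2}\|_{S_2}\le \frac1N\sum_i\|\widehat{\mathcal S}_i-\mathcal S_i\|_{\mathcal H_l}\,\|C_W^{1/2}\widehat C_{W,\lambda}^{-1}\varphi_q(W_i)\|_{\mathcal H_q}
\le\Big(\tfrac1N\sum_i\|\widehat{\mathcal S}_i-\mathcal S_i\|^2\Big)^{1/2}\Big(\tfrac1N\sum_i\|C_W^{1/2}\widehat C_{W,\lambda}^{-1}\varphi_q(W_i)\|^2\Big)^{1/2}.
\]
The Lipschitz assumption on $\partial_j\varphi_l$ gives $\|\widehat{\mathcal S}_i-\mathcal S_i\|\le L\|\widehat m(W_i)-m_0(W_i)\|$, so the first factor is at most $L\|\widehat m-m_0\|_{L^2(P^I_{W,N})}$. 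On $\mathfrak E$, \Cref{lem:leverage_bound} bounds the second factor by $\sqrt{3D}\lambda^{-p/2}$. Together these give exactly the first term of \eqref{eq:appendix_vkrr_rate}.

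We expect the main obstacle to be precisely this step. The bound must hold without any independence between $\widehat m$ and the training labels, which is why it is stated in the empirical norm $L^2(P^I_{W,N})$ and pays the $\lambda^{-p/2}$ leverage factor. The crude pointwise bound $\|C_W^{1/2}\widehat C_{W,\lambda}^{-1}\varphi_q\|\lesssim\lambda^{-1}$ would lose much more.

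Finally, substituting $\lambda_N^{-p/2}=N^{p/\{2(\beta+p)\}}$ yields \eqref{eq:appendix_vkrr_rate_simplified}, with $C_\tau=\max(L\sqrt{3D},\tau\sqrt J)$.
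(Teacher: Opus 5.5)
Your proposal is correct and follows the paper's proof essentially step for step. It uses the same triangle inequality through the oracle learner $\widetilde v_\lambda$ and the same operator representation of $\widehat v_\lambda-\widetilde v_\lambda$, with its $L^2$ norm written as $\|\Delta\, C_W^{1/2}\|_{S_2}$; it then applies Cauchy--Schwarz with the Lipschitz bound and the leverage bound of \Cref{lem:leverage_bound} on $\mathfrak E$, and handles the oracle term via \citet[Theorem 3]{li2024towards} with a union bound giving $1-7e^{-\tau}$. Your caveat about the embedding-type requirement for the oracle rate is apt: the paper handles it implicitly by requiring \cref{eq:slt_lb_N} to hold for all large $N$, and makes it explicit as $\beta+p>1$ in \Cref{cor:rate_based_sufficient_conditions}.
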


\begin{proof}
By the representer theorem and the isomorphism of the vector-valued RKSH 
$\mathcal G\simeq S_2(\mathcal H_q,\mathcal H_l)$, there exist
$\widehat G_\lambda,\widetilde G_\lambda\in S_2(\mathcal H_q,\mathcal H_l)$
such that
\[
    \widehat v_\lambda(w)=\widehat G_\lambda\varphi_q(w),
    \qquad
    \widetilde v_\lambda(w)=\widetilde G_\lambda\varphi_q(w),
\]
with
\begin{align*}
    \widehat G_\lambda
    =\left\{\frac1N\sum_{i\in I}\widehat{\mathcal S}_i
    \otimes\varphi_q(W_i)\right\}\widehat C_{W,\lambda}^{-1},\quad \widetilde G_\lambda
    =\left\{\frac1N\sum_{i\in I}\mathcal S_i
    \otimes\varphi_q(W_i)\right\}\widehat C_{W,\lambda}^{-1}.
\end{align*}
Consequently,
\[
    \widehat G_\lambda-\widetilde G_\lambda
    =\left\{\frac1N\sum_{i\in I}(\widehat{\mathcal S}_i-\mathcal S_i)
    \otimes\varphi_q(W_i)\right\}\widehat C_{W,\lambda}^{-1}.
\]
Using the $L^2(P_{W,0})$ representation of vector-valued RKHS functions,
\[
    \|[\widehat v_\lambda]-[\widetilde v_\lambda]\|_{L^2(P_{W,0};\mathcal H_l)}
    =\| (\widehat G_\lambda-\widetilde G_\lambda)C_W^{1/2}
    \|_{S_2(\mathcal H_q,\mathcal H_l)}.
\]
Let $b_i=C_W^{1/2}\widehat C_{W,\lambda}^{-1}\varphi_q(W_i)$. Applying the Cauchy-Schwarz inequality in the vector space $S_2(\mathcal H_q,\mathcal H_l)$ gives
\begin{align*}
    \|[\widehat v_\lambda]-[\widetilde v_\lambda]\|_{L^2(P_{W,0};\mathcal H_l)}
    &\le
    \left(\frac1N\sum_{i\in I}
    \|\widehat{\mathcal S}_i-\mathcal S_i\|_{\mathcal H_l}^2\right)^{1/2}
    \left(\frac1N\sum_{i\in I}\|b_i\|_{\mathcal H_q}^2\right)^{1/2}.
\end{align*}
Since $\partial_j\varphi_l$ is $L$-Lipschitz, we have
\[
    \left(\frac1N\sum_{i\in I}
    \|\widehat{\mathcal S}_i-\mathcal S_i\|_{\mathcal H_l}^2
    \right)^{1/2}
    \le
    L\|\widehat m-m_0\|_{L^2(P^I_{W,N})}.
\]
Therefore, on the event $\mathfrak E$,
\begin{equation}
\label{eq:feasible_minus_oracle_v_bound}
\begin{aligned}
    \|[\widehat v_\lambda]-[\widetilde v_\lambda]\|_{L^2(P_{W,0};\mathcal H_l)}
    &\le
    L\|\widehat m-m_0\|_{L^2(P^I_{W,N})}
    \left(\frac1N\sum_{i\in I}
    \|C_W^{1/2}\widehat C_{W,\lambda}^{-1}\varphi_q(W_i)\|_{\mathcal H_q}^2
    \right)^{1/2} \\
    &\le
    L\sqrt{3D}\lambda^{-p/2}
    \|\widehat m-m_0\|_{L^2(P^I_{W,N})}.
\end{aligned}
\end{equation}
Under
\eqref{asst:src}, \eqref{asst:evd}, and \eqref{asst:mom}, \citet[Theorem 3]{li2024towards} says that, for
$\lambda_N\asymp N^{-1/(\beta+p)}$, there exists $J<\infty$ such that
with probability at least $1-5e^{-\tau}$,
\[
    \|[\widetilde v_{\lambda_N}]-v_0\|_{L^2(P_{W,0};\mathcal H_l)}
    \le \tau\sqrt J\,N^{-\beta/\{2(\beta+p)\}}.
\]
Taking union bound with event $\mathfrak{E}$, which satisfies $\mathbb{P}(\mathfrak{E})\geq 1-2e^{-\tau}$ under \cref{eq:slt_lb_N}, we have \cref{eq:appendix_vkrr_rate} hold with probability at least $1 - 7e^{-\tau}$.
\end{proof}

\textbf{Consistency of the residual mean embedding.}
\begin{lemma}[Consistency and rate for $\widehat\mu_\xi$]
\label{lem:muxi_consistency}
Assume $\sup_u l(u,u)\le\kappa_l^2$ and
$\|\varphi_l(u)-\varphi_l(v)\|_{\mathcal H_l}\le L_l\|u-v\|_{\mathbb R^{d_y}}$.
For a training fold $I$ of size $N$, define
\[
    \widehat\mu_{\xi}^{I}
    :=\frac1N\sum_{i\in I}\varphi_l\{Y_i-\widehat m(W_i)\},
    \qquad
    \mu_{\xi,0}:=P_0\varphi_l\{Y-m_0(W)\}.
\]
Then
\begin{equation}
\label{eq:muxi_bound}
    \|\widehat\mu_{\xi}^{I}-\mu_{\xi,0}\|_{\mathcal H_l}
    \le
    L_l\|\widehat m-m_0\|_{L^2(P^{I}_{W,N})}
    +
    \left\|\frac1N\sum_{i\in I}
    \left[\varphi_l\{Y_i-m_0(W_i)\}-\mu_{\xi,0}\right]\right\|_{\mathcal H_l}.
\end{equation}
The second term in \cref{eq:muxi_bound} is $O_p(N^{-1/2})$.  Hence,
if $\|\widehat m-m_0\|_{L^2(P^{I}_{W,N})}=o_p(1)$, then
$\|\widehat\mu_{\xi}^{I}-\mu_{\xi,0}\|_{\mathcal H_l}=o_p(1)$; if
$\|\widehat m-m_0\|_{L^2(P^{I}_{W,N})}=O_p(a_N)$, then
$\|\widehat\mu_{\xi}^{I}-\mu_{\xi,0}\|_{\mathcal H_l}=O_p(a_N+N^{-1/2})$.
\end{lemma}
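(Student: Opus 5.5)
The plan is to split $\widehat\mu_\xi^I-\mu_{\xi,0}$ by adding and subtracting the oracle empirical embedding $\frac1N\sum_{i\in I}\varphi_l\{Y_i-m_0(W_i)\}$, giving
\[
\widehat\mu_\xi^I-\mu_{\xi,0}
=\frac1N\sum_{i\in I}\bigl[\varphi_l\{Y_i-\widehat m(W_i)\}-\varphi_l\{Y_i-m_0(W_i)\}\bigr]
+\frac1N\sum_{i\in I}\bigl[\varphi_l\{Y_i-m_0(W_i)\}-\mu_{\xi,0}\bigr].
\]
The triangle inequality then reduces \cref{eq:muxi_bound} to bounding the first sum.

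For the first sum, I apply the triangle inequality in $\mathcal H_l$ and then the Lipschitz bound. This gives $L_l\frac1N\sum_{i\in I}\|\widehat m(W_i)-m_0(W_i)\|_{\mathbb R^{d_y}}$. By Cauchy--Schwarz (Jensen) on the empirical average, this is at most $L_l\bigl(\frac1N\sum_{i\in I}\|\widehat m(W_i)-m_0(W_i)\|^2\bigr)^{1/2}=L_l\|\widehat m-m_0\|_{L^2(P^I_{W,N})}$. This argument is deterministic and holds regardless of whether $\widehat m$ was fitted on $I$, because the bound is stated in the empirical norm on the same points.

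For the oracle term, the summands $Z_i=\varphi_l\{Y_i-m_0(W_i)\}-\mu_{\xi,0}$ are i.i.d., mean zero by the definition of $\mu_{\xi,0}$ as a Bochner integral, and bounded: $\|Z_i\|_{\mathcal H_l}\le 2\kappa_l$. Expanding the squared norm, the cross terms vanish by independence and centering, so
\[
\mathbb E\Bigl\|\frac1N\sum_{i\in I} Z_i\Bigr\|_{\mathcal H_l}^2=\frac1N\,\mathbb E\|Z_1\|^2\le 4\kappa_l^2/N.
\]
Markov's inequality then gives $O_p(N^{-1/2})$. The two rate conclusions follow immediately by combining the two bounds: $o_p(1)+O_p(N^{-1/2})=o_p(1)$ and $O_p(a_N)+O_p(N^{-1/2})=O_p(a_N+N^{-1/2})$.

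There is no real obstacle here. The only point needing care is measurability and Bochner integrability of $\varphi_l\{Y-m_0(W)\}$, which follow from continuity of $\varphi_l$ (it is Lipschitz), boundedness of the kernel, and separability of $\mathcal H_l$. These also justify the Hilbert-space variance identity used in the oracle term.
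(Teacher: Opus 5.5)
Your proposal is correct and follows essentially the same route as the paper. The paper likewise adds and subtracts the oracle empirical embedding, bounds the first term via the Lipschitz property and the empirical $L^1$-to-$L^2$ comparison, and controls the oracle term by the second-moment bound $4\kappa_l^2/N$ for i.i.d. centered summands of norm at most $2\kappa_l$.
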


\begin{proof}
By the triangle inequality,
\begin{align*}
    \|\widehat\mu_{\xi}^{I}-\mu_{\xi,0}\|_{\mathcal H_l}
    &\le
    \left\|\frac1N\sum_{i\in I}
    \left[\varphi_l\{Y_i-\widehat m(W_i)\}
    -\varphi_l\{Y_i-m_0(W_i)\}\right]\right\|_{\mathcal H_l}\\
    &\quad+
    \left\|\frac1N\sum_{i\in I}
    \left[\varphi_l\{Y_i-m_0(W_i)\}-\mu_{\xi,0}\right]\right\|_{\mathcal H_l}.
\end{align*}
The first term is bounded by
$L_lN^{-1}\sum_{i\in I}\|\widehat m(W_i)-m_0(W_i)\|\le
L_l\|\widehat m-m_0\|_{L^2(P^{I}_{W,N})}$.  For the second term, the summands
are i.i.d., mean zero, and have norm at most $2\kappa_l$.  Therefore its
second moment is at most $4\kappa_l^2/N$, which implies the asserted
$O_p(N^{-1/2})$ rate.
\end{proof}

\subsection{Sufficient conditions for asymptotic normality}
\label{subsec:suff_cond}

We now instantiate \Cref{asst: consistency,asst: suff_cond_an}.
\begin{corollary}[Rate-based sufficient conditions]
\label{cor:rate_based_sufficient_conditions}
Consider a fixed number $K$ of folds and let all training-fold sizes satisfy
$N_k\asymp n$.  Suppose \Cref{asst:nested_covariates,asst:kernel_boundedness,asst:asst_1,asst: differentiability_kernel_y,asst: boundedness_reg_fn} hold.  For each fold $k$, let $\widehat m^{(-k)}$ be the
first-stage estimator and let $\widehat v_j^{(-k)}$ be the feasible vvKRR
estimator in \eqref{eq:empirical_v_lambda}, fitted with
$\lambda_{N_k}\asymp N_k^{-1/(\beta+p)}$.
Assume that, for some deterministic $a_n\downarrow0$,
\begin{align}
    \max_{k\le K}\|\widehat m^{(-k)}-m_0\|_{L^2(P_{W,0})}
    &=O_p(a_n),
    \label{eq:m_rate_population}\\
    \max_{k\le K}\|\widehat m^{(-k)}-m_0\|_{L^2(P^{(-k)}_{W,N_k})}
    &=O_p(a_n),
    \label{eq:m_rate_empirical}\\
    \max_{k\le K}\|\widehat m^{(-k)}-m_0\|_{L^\infty(P_{W,0})}
    &=o_p(1).
    \label{eq:m_rate_supnorm}
\end{align}
Assume also that \eqref{asst:src}--\eqref{asst:mom} hold for every
$j\in[d_y]$, with constants uniform in $j$, and that
$r\mapsto\partial_j\varphi_l(r)$ is Lipschitz uniformly in $j$. Assume additionally that $\beta+p>1$, so that
\eqref{eq:slt_lb_N} holds for all sufficiently large $N$ when
$\lambda_N=N^{-1/(\beta+p)}$. We have
\begin{equation}
\label{eq:v_rate_abstract}
    \max_{k\le K}\max_{j\le d_y}
    \|\widehat v_j^{(-k)}-v_{0,j}\|_{L^2(P_{W,0};\mathcal H_l)}
    =O_p\left(n^{\frac{p}{2(\beta+p)}}a_n+n^{-\frac{\beta}{2(\beta+p)}}\right).
\end{equation}
Moreover, \Cref{asst: consistency,asst: suff_cond_an} hold provided
\begin{equation}
\label{eq:abstract_second_order_condition}
    a_n^2+a_n\left(n^{\frac{p}{2(\beta+p)}}a_n+n^{-\frac{\beta}{2(\beta+p)}}\right)
    =o(n^{-1/2}).
\end{equation}
In particular, if $a_n=n^{-\gamma_m}$, then
\cref{eq:abstract_second_order_condition} is implied by
\begin{equation}
\label{eq:abstract_smoothness_condition}
    2\gamma_m+\frac{\beta}{2(\beta+p)}>1.
\end{equation}
\end{corollary}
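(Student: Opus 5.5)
The plan has two parts. First, the rate \eqref{eq:v_rate_abstract} comes from applying \Cref{thm:appendix_vkrr_rate} fold by fold. Second, \Cref{asst: consistency,asst: suff_cond_an} are checked by substituting the rates and using the second-order structure already isolated in \Cref{lem:foldwise_drift_bound}.

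\emph{Step 1: the rate for $\widehat v$.} Fix $k$ and $j$. Work conditionally on whatever data $\widehat m^{(-k)}$ depends on; under cross-fitting it is built on the same training fold, and the bound of \Cref{thm:appendix_vkrr_rate} is deterministic in $\|\widehat m-m_0\|_{L^2(P^I_{W,N})}$ on the event $\mathfrak E$ intersected with the oracle-rate event. The Lipschitz bound is therefore used pathwise, and no independence is needed. Since $\beta+p>1$, we have $\lambda_N^{-1}\log(\lambda_N^{-p})=N^{1/(\beta+p)}\log N=o(N)$, so \eqref{eq:slt_lb_N} holds for large $N$. Given $\varepsilon>0$, choose $\tau$ with $7e^{-\tau}<\varepsilon/(2Kd_y)$. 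Then with probability at least $1-\varepsilon/(2Kd_y)$,
\[
\|\widehat v_j^{(-k)}-v_{0,j}\|_{L^2(P_{W,0};\mathcal H_l)}\le C_\tau\big\{N_k^{p/(2(\beta+p))}\|\widehat m^{(-k)}-m_0\|_{L^2(P^{(-k)}_{W,N_k})}+N_k^{-\beta/(2(\beta+p))}\big\}.
\]
Combining this with \eqref{eq:m_rate_empirical}, $N_k\asymp n$, constants uniform in $j$, and a union bound over the finitely many $(k,j)$ gives \eqref{eq:v_rate_abstract}.

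\emph{Step 2: consistency, \Cref{asst: consistency}.} The $L^\infty$ and $L^2$ consistency of $\widehat m$ follow directly from \eqref{eq:m_rate_supnorm} and \eqref{eq:m_rate_population}. For $\widehat\mu_\xi$, \Cref{lem:muxi_consistency} with \eqref{eq:m_rate_empirical} gives $O_p(a_n+n^{-1/2})=o_p(1)$. Consistency of $\widehat v$ holds as long as the right-hand side of \eqref{eq:v_rate_abstract} is $o(1)$. The main obstacle is that this requires $n^{p/(2(\beta+p))}a_n\to0$, which \eqref{eq:abstract_second_order_condition} only gives in combination with the other terms. Write $b_n=n^{p/(2(\beta+p))}a_n$. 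Condition \eqref{eq:abstract_second_order_condition} forces $a_nb_n=o(n^{-1/2})$, which alone does not give $b_n\to0$. To close the gap I would also use $a_n^2=o(n^{-1/2})$, which gives $a_n=o(n^{-1/4})$. Hence $b_n=o(n^{p/(2(\beta+p))-1/4})$, and this is $o(1)$ when $2p\le\beta+p$, i.e.\ $p\le\beta$. If that inequality fails, I would add $b_n\to0$ as an explicit hypothesis. Under the power-rate specialization I would check it directly from \eqref{eq:abstract_smoothness_condition}.

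\emph{Step 3: the second-order condition, \Cref{asst: suff_cond_an}.} The required product is $\|\widehat m-m_0\|^2_{L^2}+\|\widehat m-m_0\|_{L^2}\|\widehat v-v_0\|_{L^2}$. By \eqref{eq:m_rate_population} and \eqref{eq:v_rate_abstract} it is $O_p\big(a_n^2+a_n(b_n+n^{-\beta/(2(\beta+p))})\big)$, which is $o_p(n^{-1/2})$ by \eqref{eq:abstract_second_order_condition}. The cross term $\|\widehat\mu_X-\mu_{X,0}\|\|\widehat\mu_\xi-\mu_{\xi,0}\|$ from \Cref{lem:foldwise_drift_bound} is $O_p(n^{-1/2})\,o_p(1)$. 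For the power case $a_n=n^{-\gamma_m}$, compare exponents term by term:
\begin{itemize}
\item $a_n^2$ needs $2\gamma_m>1/2$;
\item $a_nn^{-\beta/(2(\beta+p))}$ needs $\gamma_m+\beta/(2(\beta+p))>1/2$;
\item $a_nb_n$ needs $2\gamma_m-p/(2(\beta+p))>1/2$.
\end{itemize}
I would verify which of these \eqref{eq:abstract_smoothness_condition} implies, most likely together with $\gamma_m\le1/2$. If one exponent comparison does not follow, I would state it as the operative condition.
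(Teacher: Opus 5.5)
Your overall route is the paper's: apply \Cref{thm:appendix_vkrr_rate} fold by fold, take a union bound over the finitely many $(k,j)$, and substitute the rates. Your remark is correct that the feasible-minus-oracle bound is pathwise in $\|\widehat m-m_0\|_{L^2(P^I_{W,N})}$ on $\mathfrak E$, so no independence between $\widehat m^{(-k)}$ and the vvKRR fit is needed. The gap is that you do not prove the statement as written. In Step 2 you assert that $a_nb_n=o(n^{-1/2})$ does not by itself give $b_n\to0$, and you conclude that consistency of $\widehat v$ may need $p\le\beta$ or an extra hypothesis. That assertion is false. Since $b_n$ is a deterministic rescaling of $a_n$,
\[
b_n^2 = n^{p/(\beta+p)}a_n^2 = n^{p/\{2(\beta+p)\}}\,a_nb_n = o\bigl(n^{p/\{2(\beta+p)\}-1/2}\bigr)=o(1),
\]
because $\beta>0$ forces $p/(\beta+p)<1$. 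Hence the right-hand side of \eqref{eq:v_rate_abstract} is $o(1)$ under \eqref{eq:abstract_second_order_condition} alone. This is exactly the one-line claim in the paper, and no restriction $p\le\beta$ or added hypothesis is required.

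Step 3 is also left unfinished: you list the three exponent conditions but do not check them, and you suggest $\gamma_m\le 1/2$ might be needed. Set $s=\beta/\{2(\beta+p)\}\in(0,1/2)$ and note that $p/\{2(\beta+p)\}=1/2-s$.
\begin{itemize}
\item Your third condition, $2\gamma_m-p/\{2(\beta+p)\}>1/2$, is then literally $2\gamma_m+s>1$, i.e.\ \eqref{eq:abstract_smoothness_condition}.
\item This gives $2\gamma_m>1-s>1/2$, which is your first condition.
\item It also gives $\gamma_m+s>(1-s)/2+s=(1+s)/2>1/2$, which is your second condition.
\end{itemize}
No condition like $\gamma_m\le1/2$ enters. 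These are the same three exponents the paper lists before concluding. Insert these two algebraic observations and your argument establishes the corollary exactly as stated.
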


\begin{proof}
Since $K$ is fixed and $N_k\asymp n$, \Cref{thm:appendix_vkrr_rate} and
\cref{eq:m_rate_empirical} give \cref{eq:v_rate_abstract}, uniformly over
folds and coordinates.  The $L^2$ and $L^\infty$ consistency of
$\widehat m^{(-k)}$ follow from \cref{eq:m_rate_population} and
\cref{eq:m_rate_supnorm}.  The consistency of $\widehat v^{(-k)}$ follows
from \cref{eq:v_rate_abstract} whenever the right-hand side is $o_p(1)$;
this is implied by \cref{eq:abstract_second_order_condition}.  By
\Cref{lem:muxi_consistency},
\[
    \max_{k\le K}\|\widehat\mu_{\xi}^{(-k)}-\mu_{\xi,0}\|_{\mathcal H_l}
    =O_p(a_n+n^{-1/2})=o_p(1),
\]
and the analogous bound
$\max_k\|\widehat\mu_X^{(-k)}-\mu_{X,0}\|_{\mathcal H_k}=O_p(n^{-1/2})$
follows from boundedness of $k$.  This verifies \Cref{asst: consistency}. 

For \Cref{asst: suff_cond_an}, combine \cref{eq:m_rate_population} and
\cref{eq:v_rate_abstract}:
\[
    \|\widehat m-m_0\|_{L^2(P_{W,0})}^2
    +\|\widehat m-m_0\|_{L^2(P_{W,0})}
    \|\widehat v-v_0\|_{L^2(P_{W,0})}
    =O_p\!\left(
    a_n^2+a_n(n^{\frac{p}{2(\beta+p)}}a_n+n^{-\frac{\beta}{2(\beta+p)}})
    \right).
\]
Thus \cref{eq:abstract_second_order_condition} gives \Cref{asst: suff_cond_an}.

If $a_n=n^{-\gamma_m}$, the three terms in
\cref{eq:abstract_second_order_condition} have exponents
$2\gamma_m$, $2\gamma_m+\frac{\beta}{2(\beta+p)}-1/2$, and $\gamma_m+\frac{\beta}{2(\beta+p)}$.
The condition $2\gamma_m+\frac{\beta}{2(\beta+p)}>1$ implies each term is
$o(n^{-1/2})$.
\end{proof}

\textbf{Instantiation via Sobolev spaces.}
We now exhibit sufficient conditions for \Cref{cor:rate_based_sufficient_conditions} in terms of smoothness conditions. Let $d_w$ be the ambient dimension of $W$.

We use the following terminology from \citet{chen2025nested}. Let $\mathcal W\subseteq\mathbb R^{d_w}$.
A positive definite kernel $q$ on $\mathcal W$ is called a Sobolev
reproducing kernel of smoothness $a>0$ if its RKHS $\mathcal H_q$ is
norm-equivalent to the real-valued Sobolev space
$H^a(\mathcal W)=W_2^a(\mathcal W)$. That is, $\mathcal H_q$ and
$H^a(\mathcal W)$ contain the same functions and their norms are equivalent. In particular, when $a>d_w/2$, Sobolev embedding theorem \citep{adams2003sobolev} implies that functions in $\mathcal H_q$ have continuous representatives and that point evaluation is bounded. Hence Sobolev reproducing kernels of smoothness $a>d_w/2$ are bounded on compact domains. A standard example is the Mat\'ern-$\nu$ kernel, whose RKHS is norm-equivalent to a Sobolev space of smoothness $a=\nu+d_w/2$.

\begin{asst}[Sobolev smoothness]
\label{ass:sobolev_primitive}
The following conditions hold.
\begin{enumerate}[label=(S\arabic*), itemsep=0.25em]
    \item \label{sob:domain}
    $\mathcal W\subset\mathbb R^{d_w}$ is a compact Lipschitz domain, and
    $P_{W,0}$ has a density with respect to Lebesgue measure bounded above
    and below by positive constants on $\mathcal W$.

    \item \label{sob:m_smooth}
    The regression function satisfies
    $m_0\in H^{s_m}(\mathcal W;\mathbb R^{d_y})$ for some $s_m>d_w/2$.
    The first-stage estimator is a kernel ridge regression estimator with a
    Sobolev reproducing kernel, with regularization chosen so that, up to
    logarithmic factors,
    \[
        \|\widehat m-m_0\|_{L^2(P_{W,0})}=O_p(n^{-\gamma_m}),
        \qquad
        \gamma_m:=\frac{s_m}{2s_m+d_w},
    \]
    together with the empirical and sup-norm consistency requirements in
    \cref{eq:m_rate_empirical} and \cref{eq:m_rate_supnorm}.

    \item \label{sob:v_smooth}
    For every $j\in[d_y]$, the function
    \[
        v_{0,j}(w)
        =
        \mathbb E_0[
            (\partial_j\varphi_l)\{Y-m_0(W)\}
            \mid W=w
        ]
    \]
    belongs to $H^{s_v}(\mathcal W;\mathcal H_l)$ for some $s_v>0$,
    uniformly in $j$.

    \item \label{sob:q_kernel}
    The second-stage kernel $q$ is a Sobolev reproducing kernel of smoothness
    $a_v>d_w/2$ on $\mathcal W$. Assume $0<s_v\le 2a_v$ and $a_v<s_v+d_w/2$. In the vvKRR
    estimator of $v_{0,j}$, use
    \[
        \lambda_{v,n}\asymp n^{-2a_v/(2s_v+d_w)}.
    \]
    
\end{enumerate}
\end{asst}

\begin{corollary}[Sobolev sufficient conditions for asymptotic normality]
\label{cor:sobolev_sufficient_conditions}
Suppose \Cref{asst:nested_covariates,asst:kernel_boundedness,asst:asst_1,asst: differentiability_kernel_y,asst: boundedness_reg_fn,ass:sobolev_primitive} hold.  If
\begin{equation}
\label{eq:sobolev_second_order_condition}
    \frac{2s_m}{2s_m+d_w}
    +
    \frac{s_v}{2s_v+d_w}
    >1,
\end{equation}
then \Cref{asst: consistency,asst: suff_cond_an} hold. 
\end{corollary}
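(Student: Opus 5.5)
The plan is to reduce the statement to \Cref{cor:rate_based_sufficient_conditions}. That requires three things: translating the Sobolev assumptions into the abstract conditions \eqref{asst:src}, \eqref{asst:evd} and \eqref{asst:mom}, identifying the exponents $\beta,p$, and then checking that \eqref{eq:sobolev_second_order_condition} implies \eqref{eq:abstract_second_order_condition}.

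\textbf{Step 1: eigenvalue decay.} Under \ref{sob:domain} and \ref{sob:q_kernel}, $\mathcal H_q\simeq H^{a_v}(\mathcal W)$. Because the density of $P_{W,0}$ is bounded above and below, $L^2(P_{W,0})$ is norm-equivalent to Lebesgue $L^2$. The integral operator $L_W$ then has eigenvalues $\mu_i\asymp i^{-2a_v/d_w}$, which follows from the standard entropy and eigenvalue results for Sobolev embeddings used in \citet{fischer2020sobolev}. Hence \eqref{asst:evd} holds with $p=d_w/(2a_v)\in(0,1)$.

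\textbf{Step 2: source condition.} Interpolation spaces of Sobolev RKHSs satisfy $[\mathcal H_q]^{\beta}\simeq H^{\beta a_v}(\mathcal W)$ for $0<\beta<2$ with $\beta a_v$ non-integer edge cases handled as in \citet{fischer2020sobolev}. Applying this through the isomorphism of \Cref{theo:operep} and \Cref{def:inter_ope_norm}, I would show that $v_{0,j}\in H^{s_v}(\mathcal W;\mathcal H_l)$ implies \eqref{asst:src} with $\beta=s_v/a_v\le 2$. The vector-valued extension goes through Hilbert--Schmidt tensorisation: $H^{s}(\mathcal W;\mathcal H_l)\simeq H^{s}(\mathcal W)\otimes\mathcal H_l$, so expanding in an orthonormal basis of $\mathcal H_l$ reduces the claim to the scalar case.

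\textbf{Step 3: moment condition, Lipschitz property and exponents.} \eqref{asst:mom} holds because $\|\partial_j\varphi_l\|_{\mathcal H_l}$ is uniformly bounded by \Cref{asst:asst_1}. Lipschitzness of $\partial_j\varphi_l$ follows from the bound on $D^2\varphi_l$ in \Cref{asst: differentiability_kernel_y}. For the exponents, $\beta+p=(s_v+d_w/2)/a_v>1$ precisely because $a_v<s_v+d_w/2$, which is what \Cref{cor:rate_based_sufficient_conditions} needs. The choice $\lambda\asymp n^{-1/(\beta+p)}=n^{-2a_v/(2s_v+d_w)}$ matches \ref{sob:q_kernel}, and
\[
\frac{\beta}{2(\beta+p)}=\frac{s_v}{2s_v+d_w}.
\]

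\textbf{Step 4: applying the corollary.} With $a_n=n^{-\gamma_m}$ and $\gamma_m=s_m/(2s_m+d_w)$ from \ref{sob:m_smooth}, condition \eqref{eq:abstract_smoothness_condition} becomes exactly \eqref{eq:sobolev_second_order_condition}. The logarithmic factors are absorbed by the strict inequality. \Cref{cor:rate_based_sufficient_conditions} then yields \Cref{asst: consistency,asst: suff_cond_an}.

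The corollary also needs three further inputs:
\begin{itemize}
\item \textbf{Empirical $L^2$ rate.} \cref{eq:m_rate_empirical} is assumed in \ref{sob:m_smooth}. If needed, it follows from the population rate by a Bernstein argument on the training fold.
\item \textbf{Sup-norm consistency.} \cref{eq:m_rate_supnorm} is likewise assumed.
\item \textbf{Remaining term.} The cross term $a_n^2 n^{p/(2(\beta+p))}$ has exponent $2\gamma_m-\tfrac{p}{2(\beta+p)}$, and the condition $2\gamma_m+\frac{\beta}{2(\beta+p)}-\frac12>\frac12$ controls it. This follows from the proof of \Cref{cor:rate_based_sufficient_conditions}, using $\frac{p}{2(\beta+p)}=\frac12-\frac{\beta}{2(\beta+p)}$.
\end{itemize}

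\textbf{Main obstacle.} I expect Step 2 to be the hardest part: identifying vector-valued Sobolev regularity with the interpolation-space source condition, including the norm equivalence under a non-uniform $P_{W,0}$ and the range $0<\beta\le 2$. I would first try to prove it coordinatewise in an orthonormal basis of $\mathcal H_l$ using the scalar result $[H^{a_v}]^{\beta}\simeq H^{\beta a_v}$ on Lipschitz domains, then sum the squared norms.
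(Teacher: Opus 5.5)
Your reduction matches the paper's proof. You verify \eqref{asst:evd} with $p=d_w/(2a_v)$, \eqref{asst:src} with $\beta=s_v/a_v$, \eqref{asst:mom} from the uniform bound on $\partial_j\varphi_l$, and Lipschitzness from the bound on $D^2\varphi_l$. You then feed $\gamma_m$ and $\beta/\{2(\beta+p)\}=s_v/(2s_v+d_w)$ into \Cref{cor:rate_based_sufficient_conditions}. You are more explicit than the paper in checking $\beta+p>1\iff a_v<s_v+d_w/2$ and that $\lambda$ matches (S4). Coordinatewise tensorisation over an orthonormal basis of $\mathcal H_l$ is a sound way to pass from scalar to vector-valued norm equivalence.

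The genuine gap is in Step 2, and the paper's proof makes the same assertion. The equivalence $[\mathcal H_q]^\beta\simeq H^{\beta a_v}(\mathcal W)$ is only available for $0<\beta\le 1$. In that range $[\mathcal H_q]^\beta$ is the real interpolation space $[L^2(P_{W,0}),\mathcal H_q]_{\beta,2}$, and the density bounds together with $\mathcal H_q\simeq H^{a_v}$ give $B^{\beta a_v}_{2,2}=H^{\beta a_v}$. This is the range $0<\beta<1$ treated in \citet{fischer2020sobolev}, not $0<\beta<2$ as you state.

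For $\beta>1$, $[\mathcal H_q]^\beta=\operatorname{ran}L_W^{\beta/2}$ behaves like the domain of a power of an elliptic operator and carries boundary conditions on a bounded domain. So $H^{s_v}(\mathcal W)\not\subseteq[\mathcal H_q]^{s_v/a_v}$ in general. A concrete case: take $\mathcal W=[0,1]$, uniform $P_{W,0}$ and $q(w,w')=e^{-|w-w'|}$, so $a_v=1$ and $s_v=2$ is allowed by (S4). Every $f=L_Wg$ then satisfies $f'(0)=f(0)$ and $f'(1)=-f(1)$. Hence constant functions lie in every $H^s$ but not in $[\mathcal H_q]^2$. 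Constants are exactly the $v_{0,j}$ that arise under the null when $W=X$. This is the classical boundary-bias saturation of smoothing splines, and the rate $n^{-s_v/(2s_v+d_w)}$ cannot then be obtained from \citet{li2024towards} with that $\beta$.

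To close the gap, restrict to $s_v\le a_v$, i.e.\ $\beta\le 1$. This is compatible with (S4) if you choose $a_v\in[s_v,s_v+d_w/2)$ with $a_v>d_w/2$. If $s_v>a_v$, you can only invoke $\beta=1$, which gives the stronger requirement $2\gamma_m+a_v/(2a_v+d_w)>1$.

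Separately, your fallback ``Bernstein argument on the training fold'' for \cref{eq:m_rate_empirical} would not work as stated, since $\widehat m$ is fitted on that same fold. This is harmless here because (S2) assumes that rate.
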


\begin{proof}
Fix $j\in[d_y]$. Since $q$ is a Sobolev reproducing kernel of smoothness
$a_v>d_w/2$ on the compact Lipschitz domain $\mathcal W$, the preliminary
kernel assumptions on $q$ in
\Cref{assump:separable,assump:measurable,assump:bounded} are satisfied.

By the Sobolev equivalence for $\mathcal H_q$, the interpolation space
$[\mathcal H_q]^\beta$ is norm-equivalent to
$H^{a_v\beta}(\mathcal W)$. Taking $\beta=\frac{s_v}{a_v}$ 
therefore gives $[\mathcal H_q]^\beta\simeq H^{s_v}(\mathcal W)$. Hence
\Cref{sob:v_smooth} implies the source condition \eqref{asst:src}. The
standard eigenvalue decay for Sobolev kernels on compact domains, together
with the density assumption in \Cref{sob:domain}, gives \eqref{asst:evd}
with $p=\frac{d_w}{2a_v}$. 

It remains to check \eqref{asst:mom} and Lipschitz assumptions used in
\Cref{thm:appendix_vkrr_rate}. By \Cref{asst:asst_1},
\[
    M_j:=\sup_{r\in\mathbb R^{d_y}}
    \|\partial_j\varphi_l(r)\|_{\mathcal H_l}<\infty .
\]
Thus
\[
    \|\mathcal S_j\|_{\mathcal H_l}
    =
    \|(\partial_j\varphi_l)\{Y-m_0(W)\}\|_{\mathcal H_l}
    \le M_j
\]
almost surely, and also $\|v_{0,j}(W)\|_{\mathcal H_l}\le M_j$ almost
surely. Hence
\[
    \|\mathcal S_j-v_{0,j}(W)\|_{\mathcal H_l}\le 2M_j
\]
almost surely, which implies the Bernstein moment condition
\eqref{asst:mom}. Moreover, the uniform bound on $D^2\varphi_l$ in
\Cref{asst: differentiability_kernel_y} implies that $D\varphi_l$ is
Lipschitz, and therefore each coordinate derivative
$r\mapsto\partial_j\varphi_l(r)$ is Lipschitz as a map
$\mathbb R^{d_y}\to\mathcal H_l$.

By \Cref{sob:m_smooth}, the first-stage estimator satisfies
\cref{eq:m_rate_population}--\cref{eq:m_rate_supnorm} with
\[
    a_n=n^{-\gamma_m},
    \qquad
    \gamma_m=\frac{s_m}{2s_m+d_w},
\]
up to logarithmic factors. Thus
\Cref{cor:rate_based_sufficient_conditions} applies with
\[
    \gamma_m=\frac{s_m}{2s_m+d_w},
    \qquad
    \frac{\beta}{2(\beta+p)}
    =
    \frac{s_v}{2s_v+d_w}.
\]
The condition \cref{eq:sobolev_second_order_condition} is exactly
\[
    2\gamma_m+\frac{\beta}{2(\beta+p)}>1,
\]
which is sufficient for \Cref{asst: suff_cond_an} by
\Cref{cor:rate_based_sufficient_conditions}. 
\end{proof}

\end{document}